\setlist[itemize]{leftmargin=2.5em}
\setlist[itemize]{leftmargin=2.5em}
\setlist[enumerate]{leftmargin=2.5em}
\newtheorem{lemma}{Lemma}
\newtheorem{definition}{Definition}
\newtheorem{assumption}{Assumption}
\newtheorem{corollary}{Corollary}
\newtheorem{proposition}{Proposition}
\crefname{hypothesis}{Hypothesis}{Hypothesis}
\crefname{condition}{Condition}{Conditions}
\crefname{distribution}{Distribution}{Distributions}
\crefname{thm}{Theorem}{Theorems}
\def\1{\bm{1}}
\def\ve{{\bm{e}}}
\def\vo{{\bm{o}}}
\def\vt{{\bm{t}}}
\def\vx{{\bm{x}}}
\def\mE{{\bm{E}}}
\def\mO{{\bm{O}}}
\def\mX{{\bm{X}}}
\def\mZ{{\bm{Z}}}
\DeclareMathAlphabet{\mathsfit}{\encodingdefault}{\sfdefault}{m}{sl}
\SetMathAlphabet{\mathsfit}{bold}{\encodingdefault}{\sfdefault}{bx}{n}
\def\gI{{\mathcal{I}}}
\def\gZ{{\mathcal{Z}}}
\newcommand{\R}{\mathbb{R}}
\newcommand{\Z}{\mathbb{Z}}
\newcommand*{\E}{\mathbb{E}}
\newcommand*{\prob}{\mathbb{P}}
\newcommand{\restatableeq}[3]{\label{#3}#2\gdef#1{#2\tag{\ref{#3}}}}
\newcommand{\dyck}{\mathsf{Dyck}}
\newcommand{\depth}{\mathrm{depth}}
\newcommand{\distrib}{\mathcal{D}_{q,k,D,N}}
\newcommand{\Loss}{\mathcal{L}}
\newcommand{\RegLoss}{\mathcal{L}^{\text{reg}}}
\newcommand{\Proj}{\mathrm{g}}
\newcommand{\param}{\mathrm{param}}
\newcommand{\LN}{\mathrm{LN}}
\newcommand{\proj}{\mathcal{P}_\bot}
\newcommand{\basis}{\mathrm{b}}
\newcommand{\Transformer}{\mathcal{T}}
\newcommand{\LargeTransformer}{\mathcal{T}_{\text{large}}}
\newcommand{\PrunedLargeTransformer}{\tilde{\mathcal{T}}_{\text{large}}}
\newcommand{\tildeLargeTransformer}{\tilde{\mathcal{T}}_{\text{large}}}
\newcommand{\lnConst}{C_{LN}}
\newcommand{\gdepth}{d}
\newcommand{\tokenNoDepth}[1]{\tau_{#1}} 
\newcommand{\token}[2]{\tau_{#1,#2}} 
\newcommand{\embed}{\ve}
\newcommand{\embedDim}{m}
\newcommand{\width}{w}
\newcommand{\mask}{\mathcal{C}}
\newcommand{\update}{u}
\newcommand{\numPairs}{\beta}
\newcommand{\ReLU}{\mathrm{ReLU}}
\newcommand{\relu}{\ReLU}
\newcommand{\variation}{\mathrm{Variation}}
\newcommand{\concat}{\oplus}
\newcommand{\TV}{\text{TV}}
\newcommand{\BV}{\beta}
\newcommand{\lwidth}{\width_{\mathrm{large}}}
\newcommand{\lembedDim}{\embedDim_{\mathrm{large}}}
\newcommand{\lembedDima}{\embedDim_{\mathrm{large},a}}
\newcommand{\lattn}{a_{\mathrm{large}}}
\newcommand{\largerm}{\mathrm{large}}
\newcommand{\lProj}{\Proj_{\mathrm{large}}}
\newcommand{\zerovec}{0}
\newcommand{\start}{t_\mathcal{S}}
\newcommand{\paddedX}{\Bar{\mX}}
\newcommand{\nexttoken}{\mathcal{P}_{\text{next}}}
\title{
Transformers are uninterpretable with myopic methods: a case study with bounded Dyck grammars
}
\author{Kaiyue Wen \\
Tsinghua University\\
\texttt{wenky20@mails.tsinghua.edu.cn} 
\And
Yuchen Li \\
Carnegie Mellon University\\
\texttt{yuchenl4@cs.cmu.edu}
\And
Bingbin Liu \\
Carnegie Mellon University\\
\texttt{bingbinl@cs.cmu.edu}
\And
Andrej Risteski \\
Carnegie Mellon University\\
\texttt{aristesk@andrew.cmu.edu}
}
\begin{document}

\maketitle

\begin{abstract}
Interpretability methods aim to understand the algorithm implemented by a trained model (e.g., a Transofmer) by examining various aspects of the model, such as the weight matrices or the attention patterns.
In this work, through a combination of theoretical results and carefully controlled experiments on synthetic data, we take a critical view
of methods that exclusively focus on individual parts of the model, rather than consider the network as a whole.
We consider a simple synthetic setup of learning a (bounded) Dyck language. Theoretically, we show that the set of models that (exactly or approximately) solve this task satisfy a structural characterization derived from ideas in formal languages (the pumping lemma).
We use this characterization to show that the set of optima is qualitatively rich; in particular, the attention pattern of a single layer can be ``nearly randomized'', while preserving the functionality of the network.
We also show via extensive experiments that these constructions are not merely a theoretical artifact: even after severely constraining the architecture of the model, vastly different solutions can be reached via standard training. Thus, interpretability claims based on inspecting individual heads or weight matrices in the Transformer can be misleading. 
\end{abstract}

\section{Introduction}

Transformer-based models
power many leading approaches to natural language processing.
With their growing deployment in various applications, it is increasingly essential to understand the inner working of these models.
Towards addressing this, there have been great advancement in the field of interpretability presenting various types of evidence 
~\citep{clark2019bert,vig2019analyzing,wiegreffe2019attention,nanda2023progress,wang2023interpretability},
some of which, however, can be misleading despite being highly intuitive
~\citep{jain2019attention, serrano2019attention, rogers2020primer,grimsley2020attention, brunner2020on, meister2021sparse}.

In this work, we aim to understand the theoretical limitation of a family of interpretability methods by characterizing the set of viable solutions.
We focus on ``myopic'' interpretability methods, i.e. methods based on examining individual components only.
We adopt a particular toy setup in which Transformers are trained to generate \emph{Dyck grammars}, a classic type of formal language grammar consisting of balanced parentheses of multiple types. 
Dyck is a useful sandbox, as it captures properties like long-range dependency and hierarchical tree-like structure that commonly appear in natural and programming language syntax, and has been an object of interest in many theoretical studies of Transofmers~\citep{Hahn20,yao2021self,liu2022same,liu2023Transformers}.
Dyck is canonically parsed using a stack-like data structure.
Such stack-like patterns (Figure~\ref{fig:dyck_attn_examples}) have been observed in the attention heads~\citep{ebrahimi2020self},
which was later bolstered by mathematical analysis in  \citet{yao2021self}.

From a representational perspective and via explicit constructions of Transformer weights, recent work \citep{liu2023Transformers, li2023Transformers} show that Transformers are sufficiently expressive to admit very different solutions that perform equally well on the training distribution.
Thus, the following questions naturally arise: 
\begin{enumerate}[leftmargin=2em]
    \item[(Q1)]
    Do Transformer solutions found empirically match the theoretical constructions given in these representational results (\Cref{fig:dyck_attn_examples})?
    In particular, are interpretable stack-like pattern in~\cite{ebrahimi2020self} the norm or the exception in practice?
    
    \item[(Q2)] More broadly, can we understand in a principled manner the fundamental obstructions to reliably ``reverse engineering'' the algorithm implemented by a Transformer by looking at individual attention patterns?
    
    \item[(Q3)] Among models that perform (near-)optimally on the training distribution, 
    even if we cannot fully reverse engineer the algorithm implemented by the learned solutions, 
    can we identify properties that characterize performance beyond the training distribution?

\end{enumerate}
\paragraph{Our contributions.}
We first prove several theoretical results to provide evidence for why individual components (e.g. attention patterns or weights) of a Transformer should not be expected to be interpretable.
In particular, we prove:
\begin{itemize}[leftmargin=*]
\setlength\itemsep{-0.1ex}
    \item A \textbf{perfect balance} condition (Theorem~\ref{thm:balance_condition}) on the attention pattern that is sufficient and necessary for 2-layer Transformers with a \emph{minimal first layer} (Assumption~\ref{assump:min_first_layer}) to predict optimally on Dyck of \emph{any} length.
    We then show that this condition permits abundant \emph{non-stack-like} attention patterns that do not necessarily reflect any structure of the task, including \emph{uniform} attentions (\Cref{cor:uniform_attention}).
    
    \item An \textbf{approximate balance} condition (Theorem~\ref{thm:lipschitz_balance}), the 
    \emph{near-optimal} counterpart of the condition above, for predicting on \emph{bounded}-length Dyck. Likewise, non-stack-like attention patterns exist.
    
    \item \textbf{Indistinguishability from a single component} (Theorem~\ref{thm:random_Transformer}), proved via a \emph{Lottery Ticket Hypothesis} style argument that any Transformer can be approximated by pruning a larger random Transformer, implying that interpretations based exclusively on local components may be unreliable.
\end{itemize}
\begin{figure}[t]
  \centering
  \begin{subfigure}[b]{0.24\textwidth}
    \includegraphics[width=\textwidth]{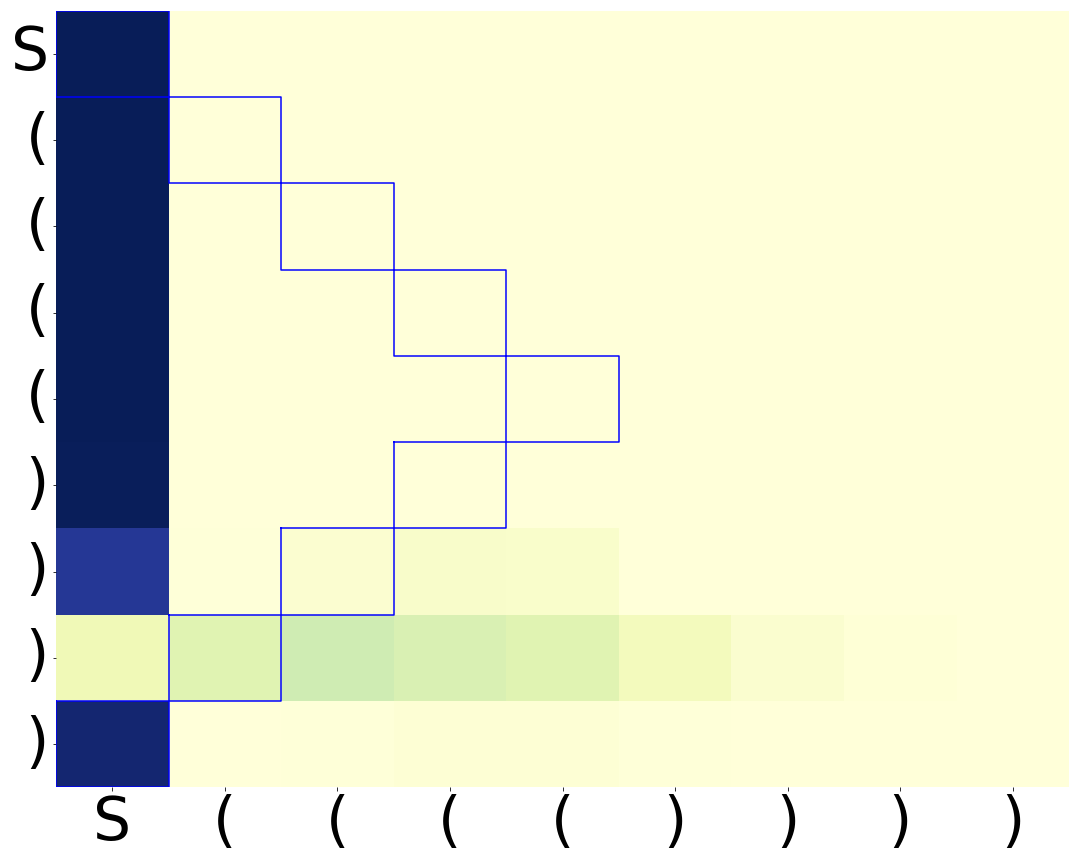}
    \caption{With Position\\ Embedding}
    \label{subfig:with_pos_1}
  \end{subfigure}
  \begin{subfigure}[b]{0.24\textwidth}
    \includegraphics[width=\textwidth]{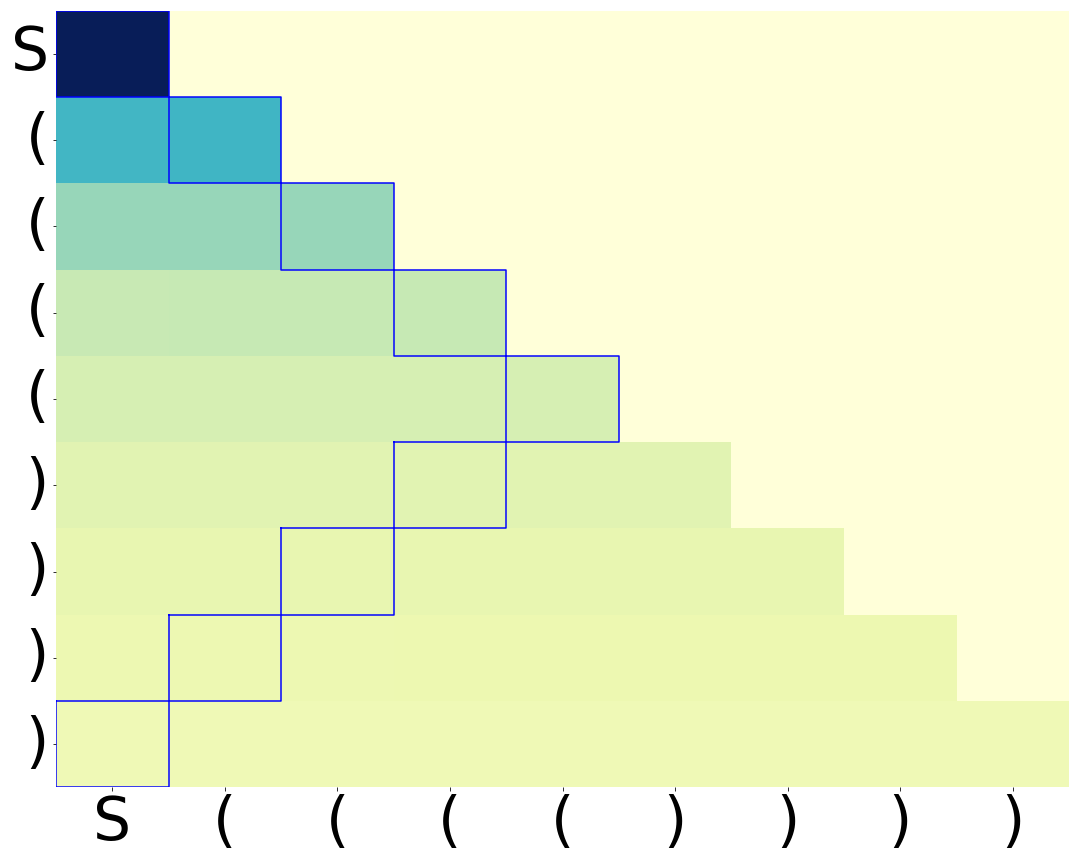}
    \caption{With Position\\ Embedding}
    \label{subfig:with_pos_2}
  \end{subfigure}
  \begin{subfigure}[b]{0.24\textwidth}
    \includegraphics[width=\textwidth]{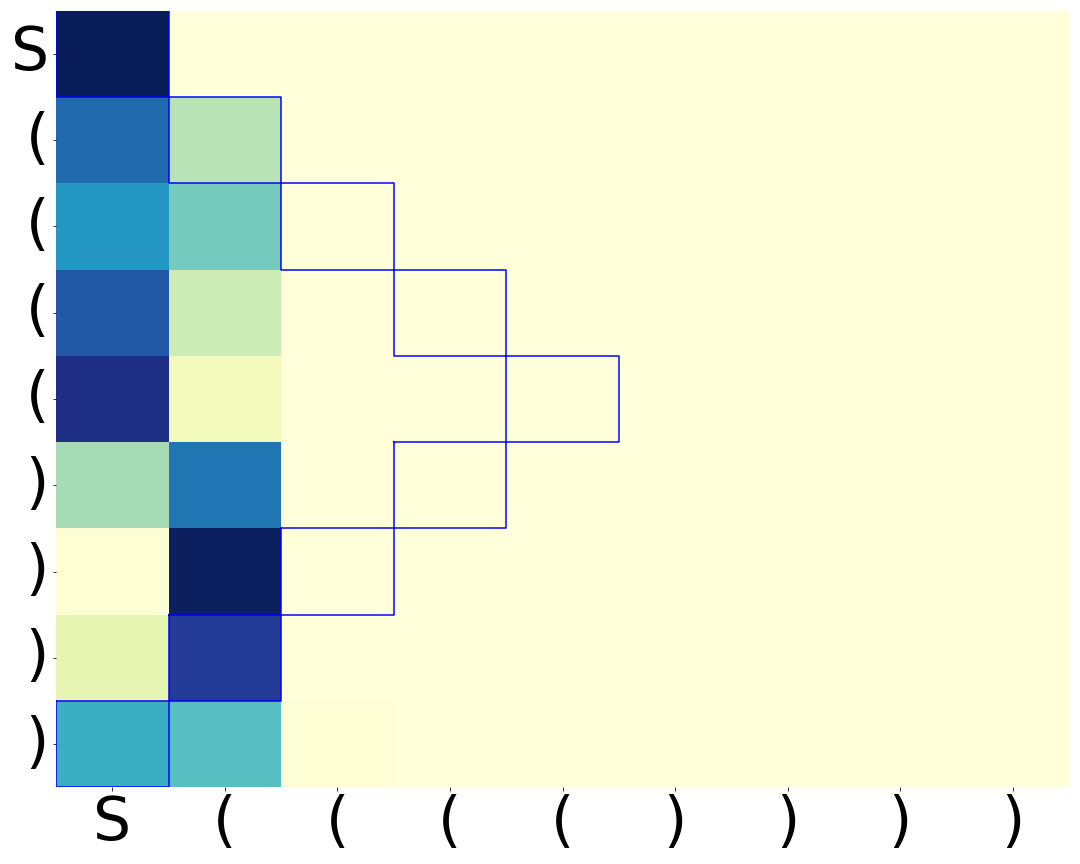}
    \caption{Without Position\\ Embedding}
    \label{subfig:no_pos_1}
  \end{subfigure}
  \begin{subfigure}[b]{0.24\textwidth}
    \includegraphics[width=\textwidth]{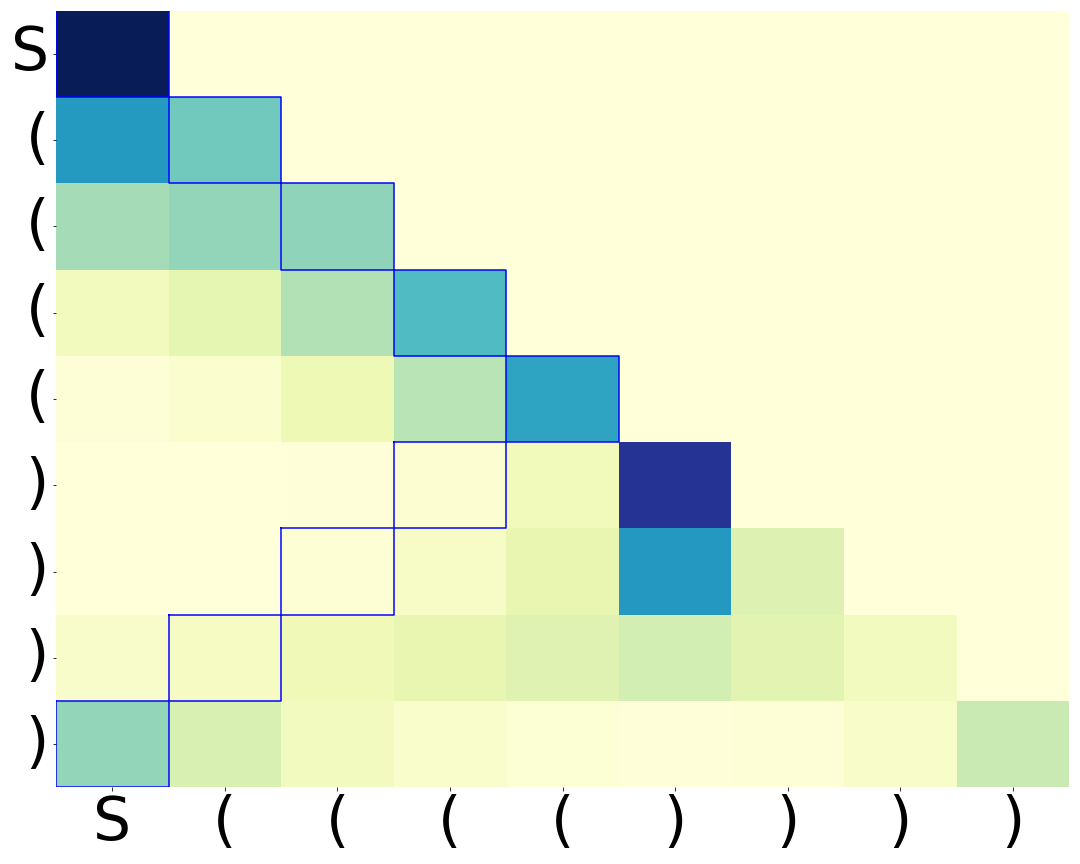}
    \caption{Without Position\\ Embedding}
    \label{subfig:no_pos_2}
  \end{subfigure}
  \caption{ \textbf{Second-layer attention patterns of two-layer Transformers on Dyck}:
  typical attention patterns do \textit{not} exactly match the intuitively interpretable stack-like pattern prescribed in \citet{ebrahimi2020self, yao2021self}.
  The blue boxes indicate the locations of the last unmatched open brackets, as they would appear in a stack-like pattern.
  All models reach $\geq 97\%$ accuracy (defined in~\Cref{sec:experiment_attention}). In the heatmap, darker color indicates larger value.
  }
\label{fig:dyck_attn_examples}
\end{figure}

We further accompany these theoretical findings with an extensive set of empirical investigations.

\textit{Is standard training biased towards interpretable solutions?}
While both stack-like and non-stack like patterns can 
process Dyck theoretically,
the inductive biases of the architecture or the optimization process may prefer one solution over the other in practice.
In Section~\ref{sec:experiment_attention}, based on a wide range of Dyck distributions and model architecture ablations, we find that Transformers that generalize near-perfectly in-distribution (and reasonably well out-of-distribution) do \emph{not} typically produce stack-like attention patterns, showing that the results reported in prior work~\citep{ebrahimi2020self} should not be expected from standard training.

\textit{Do non-interpretable solutions perform well in practice?}
Our theory predicts that balanced (or even uniform) attentions suffice for good in- and out-of-distribution generalization.
In Section~\ref{sec:experiment_balance},
we empirically verify that with standard training, the extent to which attentions are balanced is positively correlated with generalization performance.
Moreover,
we can guide Transformers to learn more balanced attention by regularizing for the balance condition, leading to better length generalization.

\subsection{Related Work}

There has been a flourishing line of work on interpretability in natural language processing.
Multiple ``probing'' tasks have been designed to extract syntactic or semantic information from the learned representations~\citep{raganato2018analysis, liu2019linguistic, hewitt2019structural, clark2019bert}.
However, the effectiveness of probing often intricately depend on the architecture choices and task design, and sometimes may even result in misleading conclusions~\citep{jain2019attention, serrano2019attention,rogers2020primer, brunner2020on,prasanna2020bert,meister2021sparse}.
While these challenges do not completely invalidate existing approaches~\citep{wiegreffe2019attention}, it does highlight the need for more rigorous understanding of interpretability.

Towards this, we choose to focus on the synthetic setup of Dyck whose solution space is easier to characterize than natural languages, allowing us to identify a set of feasible solutions.
While similar representational results have been studied in prior work~\citep{yao2021self,liu2023Transformers,zhao2023Transformers}, our work emphasizes that theoretical constructions do not resemble the solutions found in practice.
Moreover, the multiplicity of valid constructions suggest that understanding Transformer solutions require analyzing the optimization process, which a number of prior work has made progress on~\citep{lu2021on, jelassi2022vision,li2023Transformers}.

Finally, it is worth noting that the challenges highlighted in our work do not contradict the line of prior work that aim to improve \emph{mechanistic interpretability} into a trained model or the training process \citep{elhage2021mathematical, olsson2022context, nanda2023progress, chughtai2023toy,li2023Transformers}, which aim to develop circuit-level understanding of a particular model or the training process.

We defer discussion on additional related work to Appendix~\ref{appendix:related_work}.

\section{Setup and notation}
\label{sec:setup}

\paragraph{Dyck languages}
A Dyck language~\citep{SCHUTZENBERGER1963246} is generated by a context-free grammar,
where the valid strings consist of balanced brackets of different types (for example, ``$[()]$'' is valid but ``$([)]$'' is not).
$\dyck_{k}$ denote the Dyck language defined on $k$ types of brackets. 
The alphabet of $\dyck_k$ is denoted as $[2k] \equiv \{1,2,\cdots,2k\}$, where for each type $t \in [k]$, tokens $2t - 1$ and $2t$ are a pair of corresponding open and closed brackets.
Dyck languages can be recognized by a push-down automaton --- by pushing open brackets onto a stack and and popping open brackets when it encounters matching closed brackets.
For a string $w$ and $i \le j \in \Z_+$, we use $w_{i:j}$ to denote the substring of $w$ between position $i$ and position $j$ (both ends included).
For a valid prefix $w_{1:i}$, the \emph{grammar depth} of $w_{1:i}$ is defined as the depth of the stack after processing $w_{1:i}$:
\begin{align*}
    \depth(w_{1:i}) = \# \text{Open Brackets in }w_{1:i} -  \# \text{Closed Brackets in }w_{1:i}.
\end{align*}
We overload $\depth(w_{1:i})$ to also denote
the grammar depth of the bracket at position $i$.
For example, in each pair of matching brackets, the closing bracket is one depth smaller than the open bracket.
We will use $\token{i}{\gdepth}$
to denote a token of type $i \in [2k]$ placed at grammar depth $\gdepth \in \mathbb{N}$.

We consider \textit{bounded-depth} Dyck languages following~\citet{yao2021self}.
Specifically, $\dyck_{k,D}$ is a subset of $\dyck_{k}$ such that the depth of any prefix of a word is bounded by $D$,
\begin{equation}
    \label{eq:dyck_k_D}
    \dyck_{k,D} := \{ w_{1:n} \in \dyck_k \mid \max_{i \in [n]} ~\depth{(w_{1:i})} \le D\}.
\end{equation}
While a bounded grammar depth might seem restrictive, it suffices to capture many practical settings.
For example, the level of recursion occurring in natural languages
is typically bounded by a small constant~\citep{karlsson2007constraints,jin2018unsupervised}.
We further define the \emph{length-$N$ prefix set} of $\dyck_{k,D}$ as
\begin{equation}
    \label{eq:prefix}
    \dyck_{k,D,N} = \{ w_{1:N} \mid \exists n \geq N, w_{N+1:n} \in [2k]^{n-N}, s.t.\ w_{1:n} \in \dyck_{k,D}\}.
\end{equation}

Our theoretical setup uses the following data distribution $\distrib$:
\begin{definition}[Dyck distribution]
\label{def:distribution}
The distribution $\distrib$, specified by $q \in (0,1)$, is defined over $\dyck_{k,D,N}$ such that $\forall w_{1:N} \in \dyck_{k,D,N}$,
    \begin{align}
    \prob(w_{1:N})
    \propto& 
    (1/k)^{\#\{i \mid w_i \text{ is open, } \depth(w_{1:i}) = 1\}}
    \cdot (q/k)^{\#\{i \mid w_i \text{ is open, } \depth(w_{1:i}) > 1\}}
    \\
    &\cdot (1 - q)^{\#\{i \mid w_i \text{ is closed, } \depth(w_{1:i}) < D - 1\}}.
    \notag
    \end{align}    
\end{definition}
That is, $q \in (0,1)$ denote the probability of seeing an open bracket at the next position, except for two corner cases:
1) the next bracket has to be open if the current grammar depth is 0 (1 after seeing the open bracket);
2) the next bracket has to be closed if the current grammar depth is $D$.
Note that at any position, there is at most one valid closing bracket.

\paragraph{Training Objectives.}
Given a model $f_\theta$ parameterized by $\theta$, we train with a \emph{next-token prediction} language modeling objective on a given $\distrib$.
Precisely,
given
a loss function $l(\cdot, \cdot) \to \R$,
$f_\theta$ is trained to minimize the loss function $\min_{\theta} \Loss(\theta; \distrib)$ with 
\begin{align}
\label{eq:objective}
    \Loss(\theta; \distrib) = \E_{w_{1:N} \sim \distrib} \left[\frac{1}{N}\sum_{i = 1}^N l(f_{\theta}(w_{1:i - 1}), z(w_i))\right]
\end{align}
in which $z(w_i) \in \{0, 1\}^{2k}$ denotes the one-hot embedding of token $w_i$.
We will omit the distribution $\distrib$ when it is clear from the context.
We will also consider a $\ell_2$-regularized version $\RegLoss(\theta) = \Loss(\theta) + \lambda \frac{\| \theta\|_2^2}{2}$ with parameter $\lambda > 0$.

For our theory, we will consider the mean squared error as the loss function:
\footnote{
The challenge of applying our theory to cross-entropy loss is that for some prefixes, their grammatical immediate continuations strictly exclude certain tokens in the vocabulary (e.g. ``]" cannot immediately follow ``\{"), so the optimal cross-entropy loss can only be attained if some parameters are set to infinity.
However, when label smoothing is added, the optima is finite again, and analysis similar to ours could plausibly apply.
}
\begin{align}
\label{eq:obj-square}
    l \coloneqq l_{sq}(x, z_i) = \| x - z_i \|_2^2.
\end{align}

In our experiments, we apply the cross entropy loss following common practice.

\paragraph{Transformer Architecture.}
We consider a general formulation of Transformer in this work:
the $l$-th layer is parameterized by $\theta^{(l)} := \{W_Q^{(l)}, W_K^{(l)}, W_V^{(l)}, \param(\Proj^{(l)})\} \in \Theta$,
where $W_K^{(l)}, W_Q^{(l)} \in \R^{\embedDim_a \times \embedDim}$, and $W_V^{(l)} \in \R^{\embedDim \times \embedDim}$ are the key, query, and value matrices of the attention module;
$\param(\Proj^{(l)})$ are parameters of a feed-forward network $\Proj^{(l)}$, consisting of fully connected layers, (optionally) LayerNorms and residual links.
Given $X \in \R^{\embedDim \times N}$, the matrix of $\embedDim$-dimensional features on a length-$N$ sequence,
the $l$-th layer of a Transformer computes the function
\begin{align}
\label{eq:layer}
    f_l(X; \theta^{(l)}) =& \Proj^{(l)}\Big(\LN\Big(W_V^{(l)} X \underbrace{\sigma\Big(\mask + {(W_K^{(l)}X)^\top (W_Q^{(l)} X)}\Big)}_{\text{attention pattern}}\Big) + X \Big),
\end{align}
where $\sigma$ is the column-wise softmax operation defined as $\sigma(A)_{i,j} =\frac{\exp(A_{i,j})}{\sum_{k = 1}^N \exp(A_{k,j})}$,
$\mask$ is the causal mask matrix defined as
$\mask_{i,j} = - \inf \cdot \mathbbm{1}[i > j]$ where $\inf$ denotes infinity.
We call $\sigma\left(\mask + {(W_K^{(l)}X)^\top (W_Q^{(l)} X)}\right)$ the \emph{Attention Pattern} of the Transformer layer $l$.
$\LN$ represents column-wise LayerNorm operation, whose $j_{th}$ output column is defined as:
\begin{equation}
\label{eq:layernorm_def}
    \LN_{\lnConst}(A)_{:,j} = \frac{\proj A_{:,j}}{\max\{\| \proj A_{:,j} \|_2, \lnConst\}}, \proj = \mathcal{I}_{\embedDim} - \frac{1}{\embedDim}\mathbf{1}\mathbf{1}^\top.
\end{equation}
Here $\proj$ denotes the projection orthogonal to the $\mathbf{1}\mathbf{1}^\top$ subspace 
\footnote{this is just a compact way to write the standard mean subtraction operation} and $\lnConst$ is called the normalizing constant for LayerNorm.

We will further define the \emph{attention output} at the $l$-th layer as
\begin{align}
    a_l(X; \theta^{(l)}) =& W_V^{(l)} X \sigma\Big(\mask + {(W_K^{(l)}X)^\top (W_Q^{(l)} X)}\Big).
\end{align}
When $\lnConst = 0$,
we will also consider the \emph{unnormalized attention output} as
\begin{align}
\label{eq:unnormalized}
    \tilde{a}_l(X; \theta^{(l)}) =& W_V^{(l)} X \tilde \sigma\Big(\mask + {(W_K^{(l)}X)^\top (W_Q^{(l)} X)}\Big).
\end{align}
where $\tilde \sigma(A)_{i,j} = \exp(A_{i,j})$ and it holds by definition that $\LN_0(\tilde a_l(X; \theta^{(l)})) = \LN_0(a_l(X; \theta^{(l)}))$.

An $L$-layer Transformer $\Transformer_L$ consists of a composition of $L$ of the above layers, along with a word embedding matrix $W_E \in \R^{\embedDim \times 2k}$ and a linear decoding head with weight $W_{\mathrm{Head}} \in \R^{2k \times \width}$. When inputting a sequence of tokens into Transformer, we will append a \emph{starting token} $\start$ that is distinct from any token in the language at the beginning of the sequence.
Let $\gZ \in \R^{2k \times (N + 1)}$ denote the one-hot embedding of a length-$N$ sequence,
then $\Transformer_L$ computes for $\gZ$ as 
\begin{align}
\label{eq:Transformer}
    \Transformer(\mathcal{Z}) = W_{\mathrm{Head}}\Big[ f_L(\cdots (f_1\left(W_E\mathcal{Z}\right))) \Big]_{1:2k, (N + 1)}
\end{align}

\section{Theoretical Analyses}
\label{sec:theory}

Many prior works have looked for intuitive interpretations of Transformer solutions by studying the attention patterns of particular heads or some individual components of a Transformer~\citep{clark2019bert,vig2019analyzing,dar2022analyzing}.
However, we show in this section why this methodology can be insufficient even for the simple setting of Dyck.
Namely, for Transformers that generalize well on Dyck (both in-distribution and out-of-distribution),
neither attention patterns nor individual local components are guaranteed to encode structures specific for parsing Dyck.
We further argue that the converse is also insufficient: when a Transformer does produce interpretable attention patterns (suitably formalized), there could be limitations of such interpretation as well, as discussed in Appendix~\ref{appendix:discussion}.
Together, our results provide theoretical evidence that careful analyses (beyond heuristics) are required when interpreting the components of a learned Transformer.

\subsection{Interpretability Requires Inspecting More Than Attention Patterns}
\label{sec:attention_not_enough}

This section focuses on Transformers with 2 layers, which are representationally sufficient for processing Dyck~\citep{yao2021self}.
We will show that even under this simplified setting, attention patterns alone are not sufficient for interpretation.
In fact, we will further restrict the set of 2-layer Transformers by requiring the first-layer outputs to only depend on information necessary for processing Dyck:
\begin{assumption}[Minimal First Layer]
\label{assump:min_first_layer}
    We consider 2-layer Transformers with a \emph{minimal first layer} $f_1$.
    That is, if $\mZ \in \R^{2k \times (N + 1)}$ denotes the one-hot embeddings of an input sequence $\start, t_1,\ldots,t_N \in [2k]$,
    then %
    we assume the $(j + 1)_{th}$ column of the output $f_1(W^E \mZ)$ only depends on the type and depth of $t_j$, $\forall j \in [N]$. 
\end{assumption}
\Cref{assump:min_first_layer} requires the first layer output to depend only on the bracket type and depth, disregarding any other information such as positions;
an example of such a layer is given by~\cite{yao2021self}.
The construction of a minimal first layer can vary,
hence we \emph{directly parameterize its output} instead:
\begin{definition}[Minimal first layer embeddings]
    \label{def:first_layer_embeddings}
    Given a minimal first layer,
    $\embed(\token{t}{\gdepth}) \in \R^\embedDim$ denotes its output embedding of $\token{t}{\gdepth}$ for $t \in [2k]$, $\gdepth \in [D]$.
    $\embed(\start) \in \R^\embedDim$ is the embedding of the starting token.
\end{definition}

It is important to note that while the minimal first layer is a strong condition, it does not weaken our results:
We will show that the function class allows for a rich set of solutions, none of which are necessarily interpretable.
Relaxing to more complex classes will only expand the solution set, and hence our conclusion will remain valid.
See \Cref{sec:app:extension} for more technical details.

\subsubsection{Perfect Balance Condition: Ideal Generalization of Unbounded Length
}
\label{subsubsec:perfect_balance}

Some prior works have tried to understand the model by inspecting the attention patterns~\citep{ebrahimi2020self,clark2019bert,vig2019analyzing}.
However, we will show that the attention patterns alone are too flexible to be helpful, even for the restricted class of a 2-layer Transformer with a minimal first layer (Assumption \ref{assump:min_first_layer}) and even on a language as simple as Dyck.
In particular,
the Transformer only needs to satisfy what we call the \textit{balanced condition}:
\begin{definition}[Balance condition]
\label{def:balance_condition}
    A 2-layer Transformer (\Cref{eq:Transformer}) with a minimal first layer (\Cref{assump:min_first_layer} and \Cref{def:first_layer_embeddings}) is said to satisfy the \emph{balance condition},
    if for any $i,j_1, j_2 \in [k]$ and $ \gdepth', \gdepth_1, \gdepth_2 \in [D]$,
    \begin{align}
    \label{eq:balance_condition} 
    \left(\embed(\token{2i-1}{\gdepth'}) - \embed(\token{2i}{\gdepth'-1})\right)^\top (W_K^{(2)})^\top W_Q^{(2)} \left(\embed(\token{2j_1}{\gdepth_1}) - \embed(\token{2j_2}{\gdepth_2})\right)= 0.
    \end{align}
\end{definition}

The following result shows that 
under minor conditions
the balance condition is both necessary and sufficient:
\begin{restatable}[Perfect Balance]{thm}{perfectbalance}
\label{thm:balance_condition}
Consider a two-layer Transformer $\Transformer$ (\Cref{eq:Transformer})
with a minimal first layer (Assumption \ref{assump:min_first_layer})
and $\lnConst=0$ (\Cref{eq:layernorm_def}).
Let $\mathcal{O}$ denote the \emph{optimal prediction} scenario, that is,
when the first layer embeddings $\{\embed(\token{i}{\gdepth})\}_{\gdepth\in[D], i \in [2k]}$ 
(\Cref{def:first_layer_embeddings})
and second layer parameters $\theta^{(2)}$ satisfy
$$
    \theta := \{\embed(\token{i}{\gdepth})\}_{\gdepth\in[D], i \in [2k]}, \theta^{(2)} \}
    = \arg\min_{\tilde\theta} \Loss(\tilde\theta; \distrib), \forall N,
$$
where the objective $\Loss$ is defined in \Cref{eq:objective}.
Then,
\vspace{-0.3em}
\begin{itemize}[leftmargin=*]
    \item \Cref{eq:balance_condition} is a necessary condition of $\mathcal{O}$, if $W_V^{(2)}$ satisfies $\proj W_V^{(2)} \embed(\token{t}{\gdepth}) \neq 0, \forall t \in [k], \gdepth \in [D]$.

    \item \Cref{eq:balance_condition} is a sufficient condition of $\mathcal{O}$, for a construction in which the set of $2k + 1$ encodings $\{\embed(\token{2i-1}{\gdepth}), \embed(\token{2i}{\gdepth})\}_{i \in [k]} \cup \{\embed(\start)\}$ are linearly independent for any $d \in [D]$
    and the projection function $\Proj^{(2)}$ is a 6-layer MLP
    \footnote{In the construction, we first use 4 layers to convert the input of the projection function to a triplet indicating the type and depth of the last token and the type of the last unmatched bracket when the last token is a closed bracket.
    We then use another $2$ layers to predict the next token probability based on the triplet. This construction is likely improvable.}
    with $O(k^2D^2)$ width.
\end{itemize}
\end{restatable}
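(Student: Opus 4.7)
The plan is to prove the two directions of~\Cref{thm:balance_condition} separately, both exploiting the fact that under $\distrib$, the optimal next-token distribution given any valid Dyck prefix $w$ depends only on its \emph{semantic state} $(d,\tau)$, where $d=\depth(w)$ and, if $d>0$, $\tau$ is the type of the top unmatched open bracket on the stack.

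\textbf{Necessity.} Any two length-$N$ prefixes $w,w'$ sharing the same semantic state must yield identical Transformer outputs at the final position. Under Assumption~\ref{assump:min_first_layer}, the first-layer representations (and hence the second-layer keys, queries, values, and attention logits) depend only on the (type, depth) of each token, so the attention outputs $a_2(w)$ and $a_2(w')$ need not coincide, but after the LayerNorm (with $\lnConst=0$) they must produce the same direction of $\proj a_2$, where $\proj$ is the mean-subtraction projector. I would construct a rich family of semantic-state-preserving perturbations: given any $w$, insert or duplicate a matched pair $\token{2i-1}{d'}\,\token{2i}{d'-1}$ at valid interior positions without changing the last token or the final stack. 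Writing out the resulting parallelism constraint over this family and invoking the non-degeneracy $\proj W_V^{(2)} \embed(\token{t}{\gdepth}) \neq 0$, one extracts that the attention logit difference between $\token{2i-1}{d'}$ and its matching $\token{2i}{d'-1}$ must be independent of the closed-bracket query, which is precisely the bilinear identity~\Cref{eq:balance_condition}.

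\textbf{Sufficiency.} Assuming~\Cref{eq:balance_condition}, the attention weights of matched pairs have fixed ratios depending only on $(i,d')$, so the unnormalized attention output $\tilde a_2$ at the final position decomposes into contributions whose direction (after projection and normalization by $\LN$) is determined only by the triplet (type of the last token, its depth, type of the last unmatched open bracket when the last token is closed). The assumed linear independence of $\{\embed(\token{2i-1}{\gdepth}), \embed(\token{2i}{\gdepth})\}_{i\in[k]}\cup\{\embed(\start)\}$ ensures that distinct triplets yield distinguishable $\LN$ outputs. I would then explicitly construct the 6-layer projection $\Proj^{(2)}$ as sketched in the footnote: four ReLU layers of width $O(k^2 D^2)$ use the linearly-independent coordinates as separating hyperplanes to extract and one-hot encode this triplet, and two further layers convert the one-hot code into the optimal next-token probability vector prescribed by $\distrib$.

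\textbf{Main obstacle.} The necessity direction is the main hurdle. Since $\LN$ with $\lnConst=0$ controls only the direction of $\proj a_2$ and not its magnitude, one must assemble enough independent semantic-state-preserving pairs $(w,w')$ to convert a collection of directional constraints into the full bilinear identity~\Cref{eq:balance_condition}; the non-degeneracy of $W_V^{(2)}$ is essential for translating directional equalities back into equalities among attention-weight ratios, and thereby onto the bilinear form $(W_K^{(2)})^\top W_Q^{(2)}$. Sufficiency, by contrast, is chiefly a bookkeeping exercise once the decoding triplet is identified.
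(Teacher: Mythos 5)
Your sufficiency sketch follows the paper's plan (extract the triplet via the first four MLP layers, decode with the last two), but it elides the one step where the balance condition is actually used. The condition only makes the attention-logit gap between $\token{2i-1}{\gdepth'}$ and $\token{2i}{\gdepth'-1}$ query-independent, say equal to $a_{i,\gdepth'}$; to make matched pairs drop out of the attention output you must additionally \emph{choose} $W_V^{(2)}$ so that $\proj W_V^{(2)}\embed(\token{2i}{\gdepth'-1}) = -e^{a_{i,\gdepth'}}\,\proj W_V^{(2)}\embed(\token{2i-1}{\gdepth'})$; the paper does this by sending the $2kD$ linearly independent embeddings to suitably scaled orthonormal directions. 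Without that choice the output still depends on the number of matched pairs, so ``bookkeeping'' understates the work; still, this is fixable within your plan.

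The necessity direction is where the real gap is, and your route differs from the paper's in a way that makes it harder to close. First, from $\Transformer(w)=\Transformer(w')$ you cannot conclude that the normalized directions of $\proj a_2$ agree: the theorem places no injectivity assumption on $\Proj^{(2)}$ or $W_{\mathrm{Head}}$, so equal output logits do not invert to equal LayerNorm outputs. Second, even granting that step, a single matched-pair insertion only forces the perturbation $S=\update(\token{2j}{\gdepth},\token{2i}{\gdepth'-1})+\update(\token{2j}{\gdepth},\token{2i-1}{\gdepth'})$ to be positively parallel to the base attention vector $v$, not to vanish; upgrading parallelism to $S=0$ by varying the base prefix requires the attainable $v$'s to span non-collinear directions, which you cannot guarantee a priori (if all the value vectors $\proj W_V^{(2)}\embed(\cdot)$ were collinear, every parallelism constraint would hold vacuously). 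You flag this as the main obstacle but do not resolve it. The paper sidesteps both issues with a pumping argument: insert $\numPairs$ copies of the pair and let $\numPairs\to\infty$; if $S\neq 0$ the normalized attention output converges to $S/\|S\|_2$, so by continuity of $\Proj^{(2)}$ the predictions on two pumped families that differ only in the type of the last unmatched open bracket converge to a common limit, contradicting exact optimality at every length (their required outputs are two distinct constants). To make your necessity argument go through you would need to replace the single-insertion/parallelism step with this limiting argument, or else supply the missing injectivity and spanning arguments.
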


\textit{Remark}:
Recall from \Cref{eq:layernorm_def} that $\proj$ projects to the subspace orthogonal to $\mathbf{1}\mathbf{1}^\top$.
The assumption in the ``necessary condition'' part of the theorem can be intuitively understood as requiring all tokens to have nonzero contributions to the prediction after the LayerNorm.

Recall that $\embed(\token{2i-1}{\gdepth'}), \embed(\token{2i}{\gdepth' - 1})$ denote the first-layer outputs for a matching pair of brackets.
Intuitively, \Cref{eq:balance_condition} says that since matching brackets should not affect future predictions, their embeddings should balance out each other.
The balance condition \Cref{eq:balance_condition} is ``perfect'' in the sense that for the theorem, the model is required to minimize the loss for any length $N$; we will see an approximate version which relaxes this  in~\Cref{thm:lipschitz_balance}. 

\begin{proof}[Proof of the necessity of the balance condition]
The key idea is reminiscent of the pumping lemma for regular languages.
For any prefix $p$ ending with a closed bracket $\token{2j}{\gdepth}$ for $\gdepth \ge 1$ and containing brackets of all depths in $[D]$,
let $p_\numPairs$ be the prefix obtained by inserting $\numPairs$ pairs
of $\{\token{2i-1}{\gdepth'}, \token{2i}{\gdepth' - 1}\}$ for arbitrary $i \in [k]$ and $\gdepth' \in [D]$.
Denote the \textit{projection of the unnormalized attention output} by 
\begin{align}
\label{eq:update}
    \update(\token{t_1}{\gdepth_1}, \token{t_2}{\gdepth_2}) := \proj \exp\left(\embed \big(\token{t_1}{\gdepth_1}\big)^\top (W_K^{(2)})^\top W_Q^{(2)} \embed \big(\token{t_2}{\gdepth_2}\big)\right) W_V^{(2)} \embed\big(\token{t_1}{\gdepth_1}\big).
\end{align}
We ignored the normalization in softmax above, since the attention output will be normalized directly by LayerNorm according to~\Cref{eq:layer}.

By \Cref{eq:layer}, for any $X \in \R^{m \times (N + 1)}$ we have that
\begin{align}
 \tilde a_2(X; \theta^{(2)}) &= \sum_{i = 1}^{N + 1} \proj \exp\left(X_{1:m, i}^\top (W_K^{(2)})^\top W_Q^{(2)}X_{1:m, (N+1)}\right) W_V^{(2)} X_{1:m, (N+1)}. \notag 
\end{align}

Choosing $X$ as the output of the first layer when the input is $p_{\beta}$, it holds that there exists a vector $v \in \R^{\embedDim}$ such that for any $\numPairs \in \mathbb{N}$,
the next-token logits given by Transformer $\Transformer$ are
\begin{align}
\restatableeq{\eqbalanceproof}{
\Transformer(p_\numPairs)
&= W_{\mathrm{Head}} g^{(2)}\left(\frac{v + \numPairs \left( \update(\token{2j}{\gdepth}, \token{2i}{\gdepth' - 1}) + \update(\token{2j}{\gdepth}, \token{2i-1}{\gdepth'}) \right)}{\left\|v + \numPairs \left( \update(\token{2j}{\gdepth}, \token{2i}{\gdepth' - 1}) + \update(\token{2j}{\gdepth}, \token{2i-1}{\gdepth'}) \right) \right \|_2} + \embed(\token{2j}{\gdepth})\right)}{eq:balance_proof}.
\end{align}

The proof proceeds by showing a contradiction.
Suppose $\update(\token{2j}{\gdepth}, \token{2i}{\gdepth' - 1}) + \update(\token{2j}{\gdepth}, \token{2i-1}{\gdepth'}) \neq 0$.
Based on the continuity of the projection function and the LayerNorm Layer, we can show that
$\lim_{\numPairs \to \infty} \Transformer(p_\numPairs)$ depend only on the depths $\gdepth, \gdepth'$ and 
types $2j, 2i - 1, 2i$.
However, these are not sufficient to determine the next-token probability from $p_\numPairs$, since the latter depends on the type of the last unmatched open bracket in $p$.
This contradicts the assumption that the model can minimize the loss for any length $N$.
Hence we must have
\begin{align}
\restatableeq{\eqbalance}{
    \update(\token{2j}{\gdepth}, \token{2i}{\gdepth' - 1}) + \update(\token{2j}{\gdepth}, \token{2i-1}{\gdepth'})  = 0.}{eq:balance}
\end{align}
Finally, as we assumed that $\proj W_V^{(2)} e\left(\token{t}{\gdepth}\right) \neq 0$, we conclude that
\begin{align*}
    \left(e\left(\token{2i-1}{\gdepth'}\right) - e\left(\token{2i}{\gdepth' - 1}\right) \right)^\top (W_K^{(2)})^\top W_Q^{(2)} e\left(\token{2j+1}{\gdepth}\right)  &=  \ln \left(\frac{\| \proj W_V e\left(\token{2i}{\gdepth'-1} \right)  \|_2}{\| \proj W_V e\left(\token{2i-1}{\gdepth'}\right)  \|_2} \right),
\end{align*}
where the right hand side is independent of $j, \gdepth$, concluding the proof for necessity.
The proof of sufficiency are given in Appendix~\ref{sec:app:balance_condition}.
\end{proof}

Note that the perfect balance condition is an orthogonal consideration to interpretability.
For example, even the uniform attention satisfies the condition and can solve Dyck:
\footnote{
This is verified empirically: the uniform-attention models have attention weights fix to 0 and are to fit the distribution almost perfectly ($> 99\%$ accuracy).
}
\begin{restatable}{cor}{coruniformattention}
    \label{cor:uniform_attention}
    There exists a 2-layer Transformer with uniform attention and no positional embedding (but with causal mask and a starting token  
    \footnote{Here the starting token is necessary because otherwise, the Transformer with uniform attention will have the same outputs for prefix $p$ and prefix $p \oplus p$, in which $\oplus$ denotes concatenation, i.e. $p \oplus p$ means the same string $p$ repeated twice.} )
    that generates the Dyck language of arbitrary length.
\end{restatable}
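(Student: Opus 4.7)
The plan is to specialise the sufficient direction of \Cref{thm:balance_condition} by setting the second-layer key matrix $W_K^{(2)} = 0$; an identical argument works for $W_Q^{(2)} = 0$. With this choice, the pre-softmax attention logits $(W_K^{(2)} X)^\top (W_Q^{(2)} X)$ vanish identically, so after the column-wise softmax combined with the causal mask $\mask$, the second-layer attention at each query position is uniform over all positions up to and including itself. The bilinear form on the left-hand side of \Cref{eq:balance_condition} also vanishes identically when $W_K^{(2)} = 0$, so the balance condition is trivially satisfied, independent of any choice of first-layer embeddings.

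To invoke the sufficient direction of \Cref{thm:balance_condition} I still need to supply (i) a minimal first layer (\Cref{assump:min_first_layer}) whose outputs $\{\embed(\token{2i-1}{\gdepth}), \embed(\token{2i}{\gdepth})\}_{i \in [k]} \cup \{\embed(\start)\}$ are linearly independent at each depth $\gdepth \in [D]$, and (ii) the $O(k^2 D^2)$-width, 6-layer MLP $\Proj^{(2)}$ explicitly constructed in that theorem. For (ii) I would use the construction verbatim, and for the embeddings in (i) I would take a subset of the standard basis in a sufficiently high ambient dimension. For the minimal first layer itself, I would adapt the construction of \citet{yao2021self}, substituting its positional embedding by the positional information already available to the model through the causal mask together with the distinguished starting token $\start$: causal attention can accumulate the running signed count of open-minus-closed brackets in the prefix up to the query position, which is enough to determine $\depth(w_{1:j})$ and hence to output a representation that depends only on the type and depth of the current token, as required by \Cref{assump:min_first_layer}. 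Plugging all of these ingredients into the sufficient direction of \Cref{thm:balance_condition} yields a two-layer Transformer with uniform second-layer attention and no positional embedding that attains the optimum $\mathcal{O}$ for every $N$, so autoregressive sampling from its outputs generates $\dyck_{k,D}$ of arbitrary length.

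The only remaining subtlety is the degeneracy flagged in the corollary's footnote. Without a starting token, the second-layer uniform average of token embeddings would coincide for a prefix $p$ and for its concatenation $p \oplus p$, since the two share identical empirical token distributions and hence would yield identical predictions, which is incorrect. Prepending $\start$ breaks the tie because the starting token contributes a $1/(N+1)$ term to the uniform average at the query position, which depends on the total prefix length $N$ and therefore distinguishes the two prefixes. I expect the main technical obstacle to be the careful construction of a minimal first layer without positional embedding, so that its output genuinely depends only on the type and depth of the current token and not on any other positional cue; once that is in place, everything downstream is a direct consequence of \Cref{thm:balance_condition} applied with $W_K^{(2)} = 0$.
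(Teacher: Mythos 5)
Your proposal is correct and follows essentially the same route as the paper: make the second-layer attention uniform by zeroing $W_K^{(2)}$ (so \Cref{eq:balance_condition} holds trivially) and invoke the sufficiency direction of \Cref{thm:balance_condition}, while the first layer uses uniform causal attention to accumulate the signed open-minus-closed count (yielding depth) with the starting token serving as the reference that survives LayerNorm and breaks the $p$ versus $p \oplus p$ degeneracy. The paper simply makes your sketched first layer explicit — a $W_V^{(1)}$ extracting the signed count and the start-token indicator, followed by a width-$O(k^2D^2)$ MLP mapping the result to the linearly independent one-hot embeddings of \Cref{eq:embed_large} — rather than adapting \citet{yao2021self}.
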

Since uniform attention patterns are hardly reflective of any structure of $\dyck$,
\Cref{cor:uniform_attention} proves that attention patterns can be oblivious about the underlying task, violating the ``faithfulness'' criteria for an interpretation~\citep{jain2019attention}.
We will further show in~\Cref{appendix:misleading} that empirically, seemingly structured attention patterns may not accurately represent the natural structure of the task.

\ifthenelse{\boolean{ArXiv}}
{
\ifthenelse{\boolean{ArXiv}}
{\subsubsection{Approximate Balance Condition For Finite Length Training Data}
}
{\subsection{Approximate Balance Condition For Finite Length Training Data}}
\label{sec:approx_balance}

\Cref{thm:balance_condition} assumes the model reaches the optimal loss for Dyck prefixes of any length.
However, in practice, due to finite samples and various sources of randomness,
training often does not end exactly at a population optima.
In this case, the condition in~\Cref{thm:balance_condition} is not precisely met.
However, even for models that \emph{approximately} meet those conditions, 
we will prove that 
when the second-layer projection function $\Proj^{(2)}$ is Lipschitz,
a similar condition as in~\Cref{eq:balance} is still necessary.

We will show this by bounding the amount of deviations from the perfect balance.
The idea is that for two long prefixes that differ in only the last open bracket, correct next token prediction requires the Transformer outputs on these prefixes to be sufficiently different, hence the part irrelevant to the prediction (i.e. matched brackets) should not have a large contribution.

To formalize this intuition, we define two quantities: 1) $S_{\gdepth, \gdepth', i, j }$ which measures the effect from one matching pair, and 2) $P_{\gdepth, j}$ which measures the effect on the last position from all tokens in a prefix.

Let $u$ be defined as in \Cref{eq:update}.
$S_{\gdepth, \gdepth', i, j }$ is defined as
\begin{align}
\label{eq:balance_violation_S}
    &S_{\gdepth, \gdepth', i, j }[\theta^{(2)}] = \update(\token{2j}{\gdepth}, \token{2i}{\gdepth' - 1}) + \update(\token{2j}{\gdepth}, \token{2i-1}{\gdepth'}),
\end{align}
which measures how much a matching pair of brackets 
$(\token{2i}{\gdepth'-1}, \token{2i-1}{\gdepth'})$ 
changes the input to the LayerNorm upon seeing the last token $\token{2j}{\gdepth}$.
Note that under the perfect balance condition, $S_{\gdepth, \gdepth', i, j }[\theta^{(2)}] = 0$ by \Cref{eq:balance}.

The second quantity $P_{\gdepth, j}[\theta^{(2)}]$ is defined via an intermediate quantity $Q(2j, \gdepth, \tilde\vt)$: 
for any $i \in [k], \gdepth\in [D]$ and a length-$(d-1)$ prefix $\tilde\vt \in [2k]^{d - 1}$,
$Q(i, \gdepth, \tilde\vt)$ is defined as
\begin{align}
\label{eq:def_Q}
    Q(i, \gdepth, \tilde\vt)
    :=&~\update(\token{2i}{\gdepth-1}, \start)
    + \sum_{1 \leq \gdepth' < \gdepth} \update(\token{2i}{\gdepth-1}, \token{\tilde\vt_{\gdepth'}}{\gdepth'})
    \\
    &+ \update(\token{2i}{\gdepth-1}, \token{2i-1}{\gdepth}) + \update(\token{2i}{\gdepth-1}, \token{2i}{\gdepth-1}), \notag
\end{align}
where $\tilde{\vt}_{\gdepth'}$ denotes the $\gdepth'_{th}$ entry of $\tilde\vt$.
Intuitively, $Q(i, \gdepth, \tilde\vt)$ denotes the unnormalized second-layer attention output at the last position, given the input sequence $\tilde\vt \concat \token{2i-1}{\gdepth} \token{2i}{\gdepth-1}$,
\footnote{We use $s \concat t$ to denote the concatenation of two strings $s, t$, same as in \Cref{eq:embed_large}-(\ref{eq:embed_theory}),
and use $\tokenNoDepth{i}\tokenNoDepth{j}$ to denote the concatenation of two tokens $\tokenNoDepth{i},\tokenNoDepth{j}$.
}

For results in this subsection, it suffices to consider prefixes consisting only of open brackets.
Let $\vt := \arg \min_{\tilde\vt \in {\{2i-1\}_{i\in[k]}^{\gdepth-1}} } \| Q(2j, \gdepth, \tilde\vt) \|_2$,
and let $\vt'$ denote the prefix that minimizes $\|Q(2j, d, \tilde{\vt})\|_2$ subject to the constraint that $\vt'$ differs from $\vt$ at the last (i.e. ${(d - 1)}_{th}$) position,
i.e. 
\begin{align*}
    \vt' &= \arg \min_{\tilde\vt' \in \{2i-1\}_{i\in[k]}^{\gdepth-1}, \vt'_{d - 1} \neq \vt_{d - 1}} Q(2j, d, \tilde{\vt'}).
\end{align*}
Such choices of $\vt, \vt'$ guarantees that the two prefixes differ at the last open bracket and hence must have different next-word distributions.
Finally, define
\begin{align}
\label{eq:balance_violation_P}
    P_{\gdepth, j}[\bar\theta^{(2)}] = \|Q(2j, \gdepth, \vt')\|_2.
\end{align}

In the following theorem, $P_{d,j}$ 
will be used as a quantity that will denote an upper bound on $S_{d, d', i, j }[\theta^{(2)}]$,
meaning that the model should not be sensitive to the insertion of a matching pair of brackets.

\begin{restatable}[Approximate Balance]{thm}{approximatebalance}
\label{thm:lipschitz_balance}
Consider a 2-layer Transformer $\Transformer$ (\Cref{eq:Transformer}) with a minimal first layer (\Cref{assump:min_first_layer}) and a $\gamma$-Lipschitz $\Proj^{(2)}$ for $\gamma > 0$, 
trained on sequences of length $N$ with the mean squared loss (\Cref{eq:obj-square}).

Suppose the loss is approximately optimal, precisely,
the set of second-layer weights $\bar\theta_N^{(2)}$ satisfies $\Loss(\Transformer[\bar\theta_N^{(2)}], \distrib) \le (\frac{q(1-q)}{k^2})^{N}\epsilon$,
for every positive integer $N > 8D$ and sufficiently small $\epsilon > 0$.
Then, there exists a constant $C_{\gamma,\epsilon,D}$, such that $\forall 0 \le \gdepth' \le D, 1 \le \gdepth \le D, i, j \in [k]$, it holds that
\begin{align}
\label{eq:balance-lipschitz}
    \|S_{\gdepth, \gdepth', i, j }[\bar\theta_N^{(2)}]\| &\le \frac{C_{\gamma, \epsilon,D}}{N} P_{\gdepth, j}[\bar \theta_N^{(2)}].
\end{align}
\end{restatable}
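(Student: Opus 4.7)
The plan is to convert the qualitative pumping argument behind Theorem~\ref{thm:balance_condition} into a quantitative rate, using the Lipschitz constant of $\Proj^{(2)}$ to translate near-optimality of the loss into a bound on the balance violation $S_{d,d',i,j}$. Fix $d, d', i, j$ and let $\vt, \vt'$ be the minimizers appearing in the definition of $P_{d,j}$, so that $\vt'$ differs from $\vt$ only at position $d{-}1$. Build two length-$N$ prefixes $p_\beta, p'_\beta$ from the base sequences associated with $\vt$ and $\vt'$ (as in the definition of $Q$) by inserting $\beta = \Theta(N/D)$ copies of the matching pair $(\token{2i-1}{d'}, \token{2i}{d'-1})$ before the final position. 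Both prefixes lie in $\dyck_{k,D,N}$ (the hypothesis $N > 8D$ is what leaves room for $\beta = \Theta(N)$ insertions alongside the base sequence and final closed bracket), and both carry probability at least $(\tfrac{q(1-q)}{k^2})^{\Theta(N)}$ under $\distrib$. The loss hypothesis then forces the per-prefix squared error to be $O(\epsilon)$ on each, so for sufficiently small $\epsilon$ one obtains
\begin{align*}
\|\Transformer(p_\beta) - \Transformer(p'_\beta)\|_2 \ge c(k, q, D) > 0,
\end{align*}
because the two correct next-token distributions differ by a strictly positive constant (the types of the last unmatched open brackets differ, so the matching closed brackets differ).

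For the matching upper bound, apply \Cref{eq:balance_proof}: the input to $\Proj^{(2)}$ at the final position of $p_\beta$ is
\begin{align*}
\frac{Q(2j, d, \vt) + \beta\, S_{d,d',i,j}}{\|Q(2j, d, \vt) + \beta\, S_{d,d',i,j}\|_2} + \embed(\token{2j}{d}),
\end{align*}
and analogously for $p'_\beta$ with $\vt'$ in place of $\vt$. The $\embed(\token{2j}{d})$ residual cancels in the input difference, so by $\gamma$-Lipschitzness of $\Proj^{(2)}$ and the elementary bound $\|x/\|x\| - y/\|y\|\| \le 2\|x-y\|/\max(\|x\|, \|y\|)$,
\begin{align*}
\|\Transformer(p_\beta) - \Transformer(p'_\beta)\|_2 \le \frac{2\gamma\,\|W_{\mathrm{Head}}\|_{\mathrm{op}}\,\|Q(2j, d, \vt) - Q(2j, d, \vt')\|_2}{\max\!\bigl(\|Q(2j, d, \vt) + \beta\, S_{d,d',i,j}\|_2,\; \|Q(2j, d, \vt') + \beta\, S_{d,d',i,j}\|_2\bigr)}.
\end{align*}
The numerator is at most $2P_{d,j}$ by the triangle inequality and the fact that $\|Q(2j, d, \vt)\| \le \|Q(2j, d, \vt')\| = P_{d, j}$ (since $\vt$ is the unconstrained minimizer). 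In the regime $\beta\|S_{d,d',i,j}\| \gtrsim \|Q(2j, d, \vt)\|$ the denominator is $\Theta(\beta\|S_{d,d',i,j}\|)$, so combining with the lower bound from the previous paragraph and rearranging gives $\beta\,\|S_{d,d',i,j}\| \le \tilde{C}(\gamma, k, q, D)\, P_{d,j}$, and substituting $\beta = \Theta(N/D)$ yields the claim.

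The main obstacle is the opposite regime $\beta\|S\| \lesssim \|Q\|$, in which the denominator cannot be replaced by $\Theta(\beta\|S\|)$ and the upper bound above becomes vacuous. In that case one argues separately: either $\|S_{d,d',i,j}\| \le O(\|Q\|/N) \le O(P_{d,j}/N)$ already holds (which is the conclusion) and no work is needed, or one reselects a smaller $\beta = \Theta(\|Q\|/\|S\|)$ inside the allowed range for which the denominator becomes proportional to $\beta\|S\|$ and the previous argument applies; making the two subcases glue uniformly in all index choices is the delicate point. Peripheral items --- verifying that inserting $\beta = \Theta(N)$ matching pairs keeps the construction inside $\dyck_{k,D,N}$ when $N > 8D$, computing the exact per-prefix probability under $\distrib$ so that the average loss bound $(q(1-q)/k^2)^N \epsilon$ yields an $O(\epsilon)$ loss on each $p_\beta$ after absorbing the $1/N$ averaging in \Cref{eq:objective}, and using $\lnConst = 0$ so that LayerNorm is exactly the normalization assumed in \Cref{eq:balance_proof} --- follow the template of the proof of Theorem~\ref{thm:balance_condition} and contribute only constants that can be absorbed into $C_{\gamma, \epsilon, D}$.
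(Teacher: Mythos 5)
Your core strategy coincides with the paper's: build two long prefixes whose ground-truth next-token distributions differ by a constant, use the per-prefix probability lower bound $(\frac{q(1-q)}{k^2})^N$ to turn the aggregate loss bound into an $O(\sqrt\epsilon)$ per-prefix error, and then exploit the $\gamma$-Lipschitzness of $\Proj^{(2)}$ together with the geometry of normalization (your inequality $\|x/\|x\|-y/\|y\|\|\le 2\|x-y\|/\max(\|x\|,\|y\|)$ is exactly the content of the paper's Lemma~\ref{lem:geometric}, used in contrapositive form) to conclude that the repeated matched-pair contribution cannot dominate $Q$. Where you genuinely diverge is the combinatorics. The paper places all $\Theta(N)$ repeated pairs at a single excursion depth inside a common prefix $p$ that returns to depth $0$ before $\vt$ begins; the resulting $A_p$ then mixes singleton contributions from every depth $\gdepth''\le\gdepth'$, which forces the paper into a telescoping comparison between consecutive values of $\gdepth'$ (dividing by $\lfloor\frac{N-2\gdepth'-2\gdepth-2}{2}\rfloor\ge N/8$) to isolate each individual $S_{\gdepth,\gdepth',i,j}$ --- this telescoping is where the extra factor of $D$ in $C_{\gamma,\epsilon,D}$ comes from. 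You instead repeat exactly the pair $(\token{2i-1}{\gdepth'},\token{2i}{\gdepth'-1})$ being bounded, so the repeated term is $\beta\,S_{\gdepth,\gdepth',i,j}$ on the nose and no telescoping is needed. Also, the ``opposite regime'' you flag as the main obstacle is not one: if $\beta\|S\|\lesssim\|Q(2j,\gdepth,\vt)\|\le P_{\gdepth,j}$, then $\|S\|\le O(P_{\gdepth,j}/\beta)=O(DP_{\gdepth,j}/N)$ is immediate, so the two regimes already glue with no re-selection of $\beta$.

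The genuine gap is in the claim that the input to $\Proj^{(2)}$ is $(Q+\beta S)/\|Q+\beta S\|+\embed(\token{2j}{\gdepth})$. For the inserted pairs to sit at depth $\gdepth'-1$ without adding anything else to the last position's attention output, depth $\gdepth'-1$ must already be visited by the ascent of $\vt\concat\tokenNoDepth{2j-1}\tokenNoDepth{2j}$, which only happens when $1\le\gdepth'\le\gdepth$. The theorem quantifies over all $0\le\gdepth'\le D$, and for $\gdepth'>\gdepth$ you must build a separate excursion up to depth $\gdepth'$ and back; a minimal such excursion consists of $\gdepth'$ nested matched pairs and therefore contributes $\sum_{\gdepth''\le\gdepth'}S_{\gdepth,\gdepth'',i,j}$-type terms to the attention output. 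These terms cancel in the numerator $\|Q(2j,\gdepth,\vt)-Q(2j,\gdepth,\vt')\|$ (the excursion is common to both prefixes) but not in the denominator, and they are not a priori controlled by $P_{\gdepth,j}$; bounding them requires an induction over $\gdepth'$ that essentially reconstructs the paper's telescoping. So your construction is a clean shortcut for $\gdepth'\le\gdepth$ but does not, as written, cover the full range of indices in the statement.
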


Intuitively, \Cref{thm:lipschitz_balance} states that when the loss $\Loss(\theta)$ is sufficiently small, $S_{d, d', i, j }[\theta^{(2)}]$ must be small relative to $P_{\gdepth, j}[\bar \theta_N^{(2)}]$.
Inequality~\ref{eq:balance-lipschitz} can be interpreted as a relaxation of~\Cref{eq:balance}, which is equivalent to $S_{d, d', i, j }[\theta^{(2)}] = 0$.
The proof of~\Cref{thm:lipschitz_balance} shares a similar intuition as~\Cref{thm:balance_condition} and is given in~\Cref{sec:app:lipschitz_balance}.

A direct corollary of~\Cref{thm:lipschitz_balance} additionally considers weight decay as well,
in which case approximate balance condition still holds, as the regularization strength goes to 0:
\begin{restatable}{cor}{corwd}
\label{cor:wd}
Consider the setting where a Transformer with a fixed minimal first layer
is trained to minimize $\RegLoss_\lambda =  \Loss_\theta(x) + \lambda \frac{\| \theta\|_2^2}{2}$, which is the squared loss with $\lambda$ weight decay.
Suppose $\Proj^{(1)}$ of the Transformer is a $2$-layer fully connected network and $\Proj^{(2)}$ of the Transformer is a $6$-layer fully connected network.
Then,
there exists constant $C > 0$,
such that if a set of parameters $\theta_{\lambda,N}$ minimizes $\RegLoss_\lambda$,
then it holds $\forall 0 \le d' \le D, 1 \le d \le D, i, j \in [k]$ that,
\begin{align*}
    \forall N, \exists \lambda_N, \text{\, such that \,} \forall \lambda \in [0, \lambda_N], 
    {S_{d, d', i, j }[\theta_{\lambda,N}^{(2)}]} &\le \frac{C}{N}P_{d, i}[\theta_{\lambda,N}^{(2)}].
\end{align*}
\end{restatable}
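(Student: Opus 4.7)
The plan is to reduce \Cref{cor:wd} to \Cref{thm:lipschitz_balance} by showing that the weight-decay-regularized minimizer $\theta_{\lambda,N}$ automatically satisfies both hypotheses of that theorem --- sufficiently small loss, and a Lipschitz $\Proj^{(2)}$ with a uniform constant --- once $\lambda$ is chosen small enough. The leverage comes from comparing the regularized cost of $\theta_{\lambda,N}$ against an explicit ``witness'' parameter $\theta^{*}$ obtained from the sufficiency half of \Cref{thm:balance_condition}.

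Concretely, I would proceed in four short steps. First, apply the sufficiency half of \Cref{thm:balance_condition}, whose architectural hypotheses (fixed minimal first layer, $6$-layer $\Proj^{(2)}$) are exactly those of the present corollary, to produce a finite-norm $\theta^{*}$ that achieves the minimum loss (equivalently, outputs the Bayes-optimal next-token distribution and hence has zero excess squared loss). Second, exploit $\RegLoss_\lambda(\theta_{\lambda,N}) \le \RegLoss_\lambda(\theta^{*}) = (\lambda/2)\|\theta^{*}\|^2$; combined with non-negativity of $\Loss$ and of $\|\cdot\|^2$, this simultaneously yields $\Loss(\theta_{\lambda,N}) \le (\lambda/2)\|\theta^{*}\|^2$ and $\|\theta_{\lambda,N}\| \le \|\theta^{*}\|$. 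Third, because $\Proj^{(2)}$ is a fixed-architecture fully-connected network with Lipschitz nonlinearities, its Lipschitz constant is bounded by a polynomial in the operator norms of its weight matrices, so the uniform norm bound from the previous step furnishes a Lipschitz constant $\gamma$ independent of $\lambda$ and $N$. Fourth, set $\lambda_N := 2(q(1-q)/k^2)^N \epsilon / \|\theta^{*}\|^2$, so that for $\lambda \in [0, \lambda_N]$ the precondition $\Loss(\theta_{\lambda,N}) \le (q(1-q)/k^2)^N \epsilon$ of \Cref{thm:lipschitz_balance} is met, and invoke that theorem to conclude with $C := C_{\gamma,\epsilon,D}$, which depends only on fixed quantities.

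The main technical obstacle is step three --- certifying that $\gamma$ is genuinely uniform as $\lambda$ varies --- but this is fairly routine once the weight-decay bound $\|\theta_{\lambda,N}\| \le \|\theta^{*}\|$ is in hand, since the Lipschitz constant of a fixed-depth MLP with Lipschitz activations is a continuous function of the weights, bounded by the product of their operator norms. A secondary subtlety is ensuring the witness $\theta^{*}$ really drives the loss used by \Cref{thm:lipschitz_balance} to zero: the sufficiency construction outputs the true conditional next-token distribution, so the squared loss is Bayes-minimized, and any constant Bayes floor can be subtracted throughout without affecting the inequality chain. With both points settled, the five-step plan yields the claimed $\lambda$-independent constant $C$.
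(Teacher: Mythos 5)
Your proposal is correct and follows essentially the same route as the paper's proof: use the sufficiency construction of \Cref{thm:balance_condition} as a finite-norm witness, compare regularized objectives to bound both the loss and the norm of $\theta_{\lambda,N}$, deduce a uniform Lipschitz constant for $\Proj^{(2)}$ from the norm bound, and invoke \Cref{thm:lipschitz_balance}. The paper's version is terser (it does not spell out the explicit choice of $\lambda_N$ or the Bayes-floor subtlety you flag), but the argument is the same.
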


The proof relies on the continuity of $\RegLoss_\lambda$ w.r.t. $\lambda$,
whose detail is at the end of \Cref{sec:app:lipschitz_balance}.

}
{\textit{Extension to approximate balance condition}:
\Cref{thm:balance_condition} assumes the model reaches the optimal loss for Dyck prefixes of any length.
However, in practice, due to finite samples and various sources of randomness,
training often does not end exactly at a population optima.
In this case, the condition in~\Cref{thm:balance_condition} is not precisely met.
However, even for models that \emph{approximately} meet those conditions, 
we will prove that 
when the second-layer projection function $\Proj^{(2)}$ is Lipschitz,
a similar condition as in~\Cref{eq:balance} is still necessary.
Details are deferred to \Cref{sec:approx_balance}.
}
\subsection{Interpretability Requires Inspecting More Than Any Single Weight Matrix}

Another line of interpretability works involves inspecting the weight matrices of the model~\citep{li2016visualizing, dar2022analyzing, eldan2023tinystories}.
Some of the investigations are done locally, neglecting the interplay between different parts of the model. Our result in this section shows that from a representational perspective, isolating single weights can also be misleading for interpretability.
For this section only, we will assume the linear head $W_{\mathrm{Head}}$ is identity for simplicity.
To consider the effect of pruning, we will also extend the parameterization of LayerNorm module (\Cref{eq:layernorm_def}) as 
\begin{align*}
    \LN_{\lnConst}[b] (A)_{:,j} = b \frac{\proj A_{:,j} }{\max\{\| \proj A_{:,j} \|_2, \epsilon\}} + (1 - b)A_{:,j},
\end{align*}
which corresponds to a weighted residual branch;
note that the original LayerNorm corresponds to $\LN_C[1]$. \footnote{This residue link is added for the ease of proof because it is hard to ``undo'' a LayerNorm. We also note that in standard architecture like GPT-2, there is typically a residual link after LayerNorm similar to here.}
Let $\hat \theta$ denote the set of parameters of this extended parameterization.

We define the \emph{nonstructural pruning}
\footnote{This is as opposed to \emph{structural pruning}, which prunes entire rows/columns of weight matrices.}
as:
\begin{definition}[Nonstructural pruning]
    \label{def:pruning}
    Under the extended parameterization,
    a \emph{nonstructural pruning} of a Transformer with parameters $\hat \theta$ is a Transformer with the same architecture and parameters $\hat \theta'$,
    so that for any weight matrix $W$ in $\hat \theta$, the corresponding matrix $W'$ in $\hat \theta'$ satisfies $W'_{i,j} \in \{W_{i,j}, 0 \}$, $\forall i,j$.
\end{definition}

To measure the quality of the pruning, define the $\epsilon$-approximation: 
\begin{definition}[$\epsilon$-approximation]
\label{def:approximate}
Given two metric spaces $A,B$ with the same metric $\| \cdot \|$, a function $f: A \to B$ is an $\epsilon$-approximation of function $g$ with respect to that metric, if and only if,
    \begin{align*}
        \forall x \in A, \| f(x) - g(x) \| \le \epsilon \|x\|.
    \end{align*}
\end{definition}
The metric, unless otherwise specified, will be the $2$-norm for vectors and the $1,2$-norm for matrices:
\begin{definition}
\label{def:norm}
    The $1,2$-norm of a matrix $A$
    is the max row norm, i.e.
    $
        \|A\|_{1,2} = \max_{i \in [d']} \|A_{:,i} \|_2.
    $%
\end{definition}

With these definitions, we are ready to state the main result of this section:

\begin{restatable}[Indistinguishability From a Single Component]{thm}{randomTransformer}
    \label{thm:random_Transformer}
    Consider any $L$-layer Transformer $\Transformer$ (\Cref{eq:Transformer}) with embedding dimension $\embedDim$, attention dimension $\embedDim_a$, and projection function $\Proj^{(l)}$ as 2-layer ReLU MLP
    with width $w$, for $l \in [L]$.
    \footnote{For notational convenience, we assume all layers share the same dimensions and projection functions. The proof can be trivially extended to cases where the dimensions and projection functions are different.}
    For any $\delta \in (0,1)$ and $N \in \mathbb{N}^+$, 
    consider a $4L$-layer \emph{random} Transformer $\LargeTransformer$ with embedding dimension {$\lembedDim = O(\embedDim \log(L\embedDim/\delta))$}, attention dimension $\lembedDima = O(\embedDim_a L \log{\frac{\embedDim_a\embedDim LN}{\epsilon\delta}})$,
    and projection function $\lProj$ as 4-layer ReLU MLP with width $\lwidth = O(\max\{\embedDim, \width\} L \log{\frac{\width\embedDim LN}{\epsilon\delta}})$.
    
    Assume that $\|W\|_2 \le 1$ for every weight matrix $W$ in $\Transformer$,
    and suppose the weights are randomly sampled as $W_{i,j} \sim U(-1,1)$ for every $W \in \LargeTransformer$.
    Then, with probability $1 - \delta$ over the randomness of $\LargeTransformer$,
    there exists a nonstructural pruning (\Cref{def:pruning}) of $\LargeTransformer$, denoted as $\tildeLargeTransformer$, which $\epsilon$-approximates $\Transformer$ with respect to $\|\cdot\|_{1,2}$ for any input $X \in \R^{\embedDim \times N}$ satisfying $\|\mX\|_{1,2} \leq 1$.
    \footnote{Here the input and output dimension of $\tildeLargeTransformer$ is actually $\lembedDim$ which is larger than $\embedDim$; additional dimensions are padded with zeroes.
    The norm constraint can be easily extended to an arbitrary constant.}
    
\end{restatable}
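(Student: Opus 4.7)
The plan is to prove this theorem via a strong ``lottery ticket'' style argument: construct, for each of the $L$ target layers, a block of $4$ large-Transformer layers whose pruned weights simulate the target layer up to an error budget of order $\epsilon/L$. In the auxiliary simulation layers within a block, set the extended LayerNorm weight $b=0$ so the residual stream passes through without spurious normalization; the one active LayerNorm per block matches the target's. Pad the embedding dimension from $\embedDim$ to $\lembedDim$ so the first $\embedDim$ coordinates carry the ``real'' computation while the extra coordinates act as scratch storage for independent pruning channels. The factor-$4$ depth expansion arises naturally: one of the four large layers handles the attention approximation, and the remaining three accommodate the doubled-depth lottery-ticket representation of the target's $2$-layer projection MLP together with an alignment layer that routes channels cleanly.

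The central technical tool is a random matrix approximation lemma of the following form: if $R \in \R^{m \times nk}$ has i.i.d.\ entries uniform on $[-1,1]$ with $k = O(\log(mn/\delta)/\epsilon)$, then with probability at least $1-\delta$, every target matrix $W \in \R^{m \times n}$ with $\|W\|_2 \le 1$ admits a nonstructural pruning of $R$ (\Cref{def:pruning}) whose associated linear map differs from $W$ by at most $\epsilon$ in $\|\cdot\|_{1,2}$. The proof is a union bound over entries: each $W_{ij} \in [-1,1]$ can be matched within $O(1/k)$ by choosing the closest of its $k$ i.i.d.\ $U(-1,1)$ replicas, and $k$ logarithmic in $mn/\delta$ suffices to succeed simultaneously over all entries. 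Applied to each of $W_Q^{(l)}, W_K^{(l)}, W_V^{(l)}$, this yields an attention module whose output is within $O(\epsilon)$ of the target's, via Lipschitzness of the row-stochastic softmax on inputs bounded by $\|X\|_{1,2} \le 1$. For the projection MLP, apply a Malach-style strong lottery ticket construction: each target weight is realized through a product of two pruned entries gated by consecutive ReLUs, doubling the depth and producing an $\epsilon$-approximation of $\Proj^{(l)}$ as a pruning of $\lProj$.

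Random weights used for distinct blocks are independent, so the per-block failure probabilities of $\delta/L$ union-bound to $\delta$ overall. The width scalings stated in the theorem ($\lembedDim$ only logarithmic in $L$ and $\delta$; $\lembedDima$ and $\lwidth$ logarithmic in $L, N, 1/\epsilon, 1/\delta$) drop out of these calculations: the embedding dimension only needs enough channels for scratch storage, while the attention and MLP widths scale logarithmically in $1/\epsilon$ and linearly in $L$ to absorb per-layer sparsity overhead. Cumulative error is controlled by propagating an invariant of the form $\|\tilde X^{(l)} - X^{(l)}\|_{1,2} \le \epsilon \cdot l / L$ layer by layer, using the Lipschitzness of attention, LayerNorm on norm-bounded inputs, and ReLU; keeping every intermediate activation within a fixed-norm ball ensures the Lipschitz constants stay bounded throughout.

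The main obstacle is the interaction between the nonlinearities (softmax, LayerNorm, ReLU) and the $\|\cdot\|_{1,2}$ error metric. LayerNorm's Lipschitz constant can blow up when the projected norm of its input approaches the threshold in the denominator of \Cref{eq:layernorm_def}, so the construction must maintain a uniform lower bound on that projected norm within each simulation block; bypassing normalization via $b=0$ in the auxiliary layers is exactly what enables this. A second subtlety is that softmax error must be bounded with respect to the induced change in the full attention pattern, not merely in the pre-softmax logits; this calls for an estimate of the form $\|\sigma(A+E) - \sigma(A)\|_1 \le O(\|E\|_\infty)$ for bounded $A$, together with a corresponding bound on the $W_V X \sigma(\cdot)$ differences that exploits $\|X\|_{1,2} \le 1$. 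Once both subtleties are resolved, the pieces compose into the claimed high-probability existence of a pruning $\tildeLargeTransformer$ that $\epsilon$-approximates $\Transformer$ uniformly on the input ball.
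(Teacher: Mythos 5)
Your high-level architecture (blockwise simulation of each target layer by a group of random layers, a lottery-ticket lemma for the MLPs, a softmax perturbation bound, and a layer-by-layer error-propagation invariant) matches the paper's strategy. But the core random-matrix lemma you rely on is both quantitatively too weak and structurally mismatched to the pruning model, and this is a genuine gap rather than a bookkeeping issue.

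First, the quantitative problem: your lemma matches each target entry $W_{ij}$ to the closest of $k$ i.i.d.\ $U(-1,1)$ replicas, which gives entrywise error $\Theta(1/k)$ and hence requires $k = \Omega(1/\epsilon)$ (and an additional dimension factor to convert entrywise error into $\|\cdot\|_{1,2}$ or spectral error). This yields widths polynomial in $1/\epsilon$, contradicting the $\log(1/\epsilon)$ dependence claimed in the theorem — a dependence you yourself assert ``drops out'' of the calculation, but which your lemma cannot deliver. The paper instead invokes Theorem~1 of \citet{pensia2020optimal}, whose logarithmic overhead comes from a subset-sum argument: each target weight is expressed as a \emph{sum} over a pruned subset of random two-layer paths $w_{2,i}\,\sigma(w_{1,i}x)$, and a random subset-sum instance of size $O(\log(1/\epsilon))$ hits any target in $[-1,1]$ within $\epsilon$. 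Second, the structural problem: a nonstructural pruning of a \emph{single} random matrix only allows $W'_{ij} \in \{W_{ij}, 0\}$, so there is no ``closest replica'' to choose within a fixed matrix position; the replica/subset-sum freedom only exists across a two-matrix product with an intermediate fan-out, which is exactly why Pensia et al.'s lemma is stated for $W_2\sigma(W_1 x)$. Applying your lemma ``to each of $W_Q^{(l)}, W_K^{(l)}, W_V^{(l)}$'' separately therefore does not type-check. The paper resolves this by (i) approximating only the product $(W_K^{(l)})^\top W_Q^{(l)}$, using the pruned pair $(\tilde W_K, \tilde W_Q)$ as the two random layers (its Lemma on approximating attention patterns), and (ii) handling $W_V^{(l)}$ by pruning the random value matrix down to a well-conditioned diagonal-like submatrix $W$ and delegating the job of computing $W^{-1}\proj W_V^{(l)}\mX$ to the \emph{preceding} pruned MLP layer — which is the real reason each target layer consumes two large layers. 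Without these two devices (subset-sum for logarithmic width, and routing the value/attention approximation through the two-matrix product structure), your construction neither achieves the stated width bounds nor produces a valid nonstructural pruning. The remaining discrepancies — your linear error invariant $\epsilon l/L$ versus the paper's geometric budget $\epsilon_l = (C/(1000L^2))^{4L-3-l}\epsilon$ needed to absorb the per-layer Lipschitz factor $(1+200L^2/C)$, and your worry about LayerNorm blow-up (moot, since $\LN_C$ is globally $2/C$-Lipschitz by the $\max$ in the denominator) — are fixable details by comparison.
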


\textbf{Proof sketch: connection to Lottery Tickets.}
\Cref{thm:random_Transformer} can be interpreted as a lottery ticket hypothesis~\citep{frankle2018lottery,malach2020proving} for randomly initialized Transformers, which can be of independent interest.
The proof repeatedly uses an extension of Theorem 1 of~\citet{pensia2020optimal},
where it
1) first prunes the $(2l-1)$-th and $2l$-th layers of $\LargeTransformer$ to approximate $\Transformer^{(l)}$ for each $l \in [L]$ (\Cref{lem:approximate_relu}),
and 2) then prunes the remaining $2L + 1$ to $4L$-th layers of $\LargeTransformer$ to approximate the identity function.
The full proof is deferred to \Cref{sec:app:random_Transformer}.

Noting that the layers used to approximate the identity can appear at arbitrary depth in $\LargeTransformer$,
a direct corollary of \Cref{thm:random_Transformer} is that one cannot distinguish between two functionally different Transformers by inspecting any single weight matrix only:
\begin{restatable}{cor}{corprune}
    Let $\Transformer_1, \Transformer_2$ and $\LargeTransformer$ follow the same definition and assumptions as $\Transformer$ and $\LargeTransformer$ in \Cref{thm:random_Transformer}.
    Pick any weight matrix $W$ in $\LargeTransformer$,
    then with probability $1 - \delta$ over the randomness of $\LargeTransformer$,
    there exist two Transformers $\Transformer_{\text{Large},1}, \Transformer_{\text{Large},2}$ pruned from $\LargeTransformer$, such that
    $\Transformer_{\text{Large},i}$ $\epsilon$-approximate $\Transformer_i$, $\forall i \in \{1,2\}$,
    and $\Transformer_{\text{Large},1}$, $\Transformer_{\text{Large},2}$ coincide on the pruned versions of $W$.

\end{restatable}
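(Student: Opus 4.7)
The plan is to apply Theorem~\ref{thm:random_Transformer} simultaneously to $\Transformer_1$ and $\Transformer_2$, using the same random draw of $\LargeTransformer$, and to exploit the remark made immediately after the theorem: among the $4L$ layers of $\LargeTransformer$, only $2L$ are used to implement the target network while the remaining $2L$ approximate the identity, and these identity layers can be placed at arbitrary depths.

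Concretely, first I would locate the chosen weight matrix $W$ inside some specific layer $\ell \in [4L]$ of $\LargeTransformer$. Then I would invoke Theorem~\ref{thm:random_Transformer} twice, once per $\Transformer_i$, each time choosing a layer schedule in which layer $\ell$ is designated as one of the $2L$ identity-approximation layers. Such a schedule is always available because we have $4L$ slots and only $2L$ of them must be used to implement target layers, so a slot at position $\ell$ can always be allocated to the identity regardless of where the target layers sit. Theorem~\ref{thm:random_Transformer} then guarantees that each resulting pruning $\Transformer_{\text{Large},i}$ is an $\epsilon$-approximation of $\Transformer_i$, each with probability $1-\delta/2$; a union bound (absorbable into the constants of the theorem) yields the joint success event with probability $1-\delta$.

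The crucial point, and also the main obstacle I expect, is that the pruning rule applied at an identity layer must depend only on the random weights of $\LargeTransformer$ in that layer and not on the target $\Transformer_i$; otherwise the two prunings need not agree on $W$. Inspecting the proof sketch of Theorem~\ref{thm:random_Transformer}, the identity layers are handled through an extension of Theorem~1 of~\citet{pensia2020optimal} applied to the identity map itself, which is a target-independent procedure: attention weights are pruned so as to route each token through unchanged, while the MLP and the LayerNorm residue parameter $b$ are pruned by the subset-sum lottery-ticket argument to approximate the identity from the random weights alone. Once this target-independence is verified carefully, both $\Transformer_{\text{Large},1}$ and $\Transformer_{\text{Large},2}$ apply exactly the same pruning to layer $\ell$, and in particular to $W$, so they coincide on the pruned version of $W$ by construction and the corollary follows.
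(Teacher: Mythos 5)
Your proposal is correct and follows essentially the same route as the paper, which proves the corollary by scheduling the layer containing $W$ as one of the identity-approximating layers in both prunings and noting that these identity layers "can appear at arbitrary depth." You in fact make explicit the one point the paper leaves implicit — that the identity-layer pruning (value matrix zeroed, MLP and the residual LayerNorm parameter pruned via the subset-sum argument for the fixed target $I$) depends only on that layer's random weights and not on $\Transformer_i$, so the two prunings agree on $W$ by construction.
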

Hence, one should be cautious when using methods based solely on individual components to interpret the overall function of a Transformer.

\section{Experiments}
\label{sec:experiment}
\vspace{-0.1in}

Our theory in Section~\ref{sec:theory} proves the existence of abundant \emph{non-stack-like} attention patterns,
all of which suffice for (near-)optimal generalization on $\dyck$.
However, could it be that stack-like solutions are more frequently discovered empirically, due to potential \emph{implicit biases} in the architecture and the training procedure?
In this section, we show there is no evidence for such implicit bias in standard training (Section~\ref{sec:experiment_attention}).
Additionally, we propose a regularization term based on the balance condition (\Cref{thm:balance_condition}), which leads to better length generalization  (Section~\ref{sec:experiment_balance}).

\subsection{Different Attention Patterns Can Be Learned To Generate Dyck}
\label{sec:experiment_attention}

We empirically verify our theoretical findings that Dyck solutions can give rise to a variety of attention patterns,
by evaluating the accuracy of predicting the last bracket of a prefix (Equation~\ref{eq:prefix}) given the rest of the prefix.
We only consider prefixes ending with a closing bracket,
so that there exists a unique correct closing bracket which a correct parser should be able to determine.
The experiments in this section are based on Transformers with $2$ layers and $1$ head, hidden dimension $50$ and embedding dimension $50$, trained using Adam.
Additional results for three-layer Transformers are provided in~\Cref{sec:extended}.
The training data consists of valid $\dyck_{2,4}$ sequences of length less than $28$ generated with $q = 0.5$.
When tested in-distribution, all models are able to achieve $\geq 97\%$ accuracy.

\paragraph{Variation in attention patterns}
First, as a response to (Q1), we observe that attention patterns of Transformers trained on Dyck are not always stack-like (\Cref{fig:dyck_attn_examples}).
In fact, the attention patterns differ even across different random initialization.
Moreover, while \Cref{thm:balance_condition} implies that position encoding is not necessary for a Transformer to generate Dyck,
\footnote{This is verified empirically, as Transformers with no positional encoding achieve $\ge 97\%$ accuracy.}
adding the position encoding
\footnote{We use the linear positional encoding following~\cite{yao2021self}: for the $i_{th}$ position, the encoding is defined to be $\ve_{p}(i) := i/T_{\max}$ for some $T_{\max}$.}
does affect the attention patterns (\Cref{subfig:no_pos_1,subfig:no_pos_2}).

\begin{figure}[t]
    \centering
    \begin{subfigure}[b]{0.24\textwidth}
      \includegraphics[width=\textwidth]{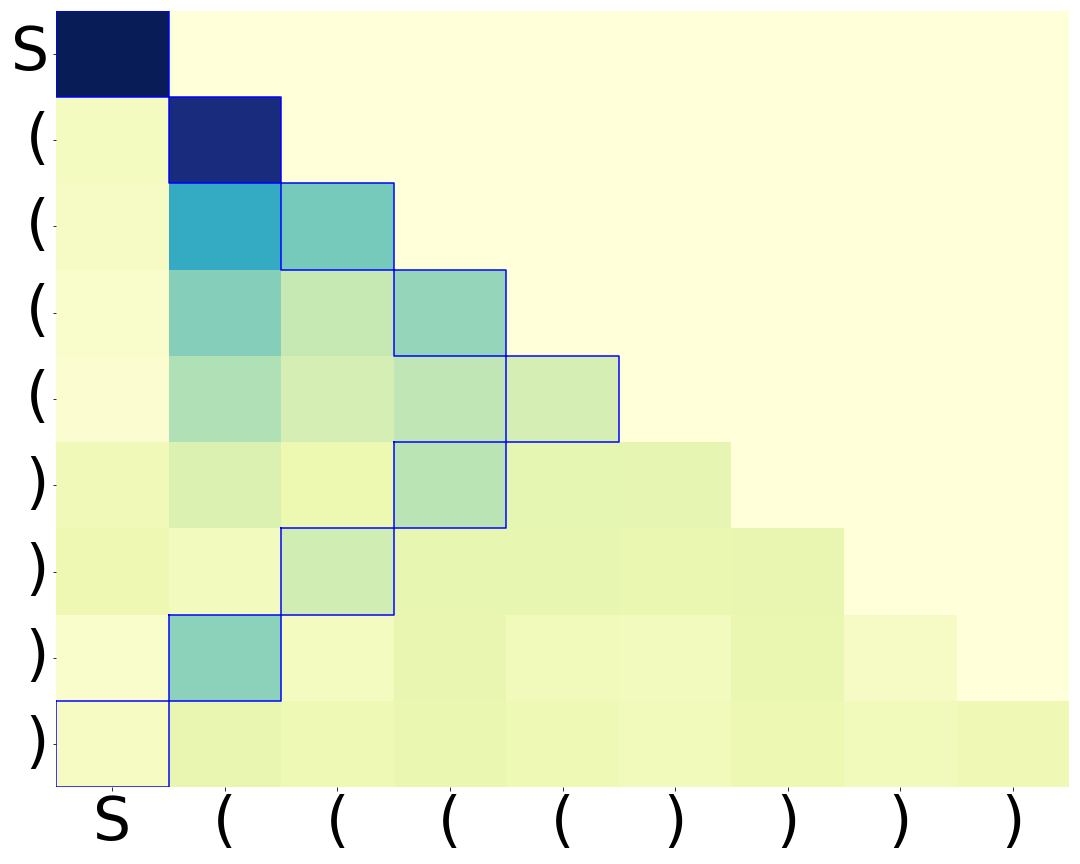}
      \caption{Embedding~\ref{eq:embed_large}, run 1}
    \end{subfigure}
    \begin{subfigure}[b]{0.24\textwidth}
      \includegraphics[width=\textwidth]{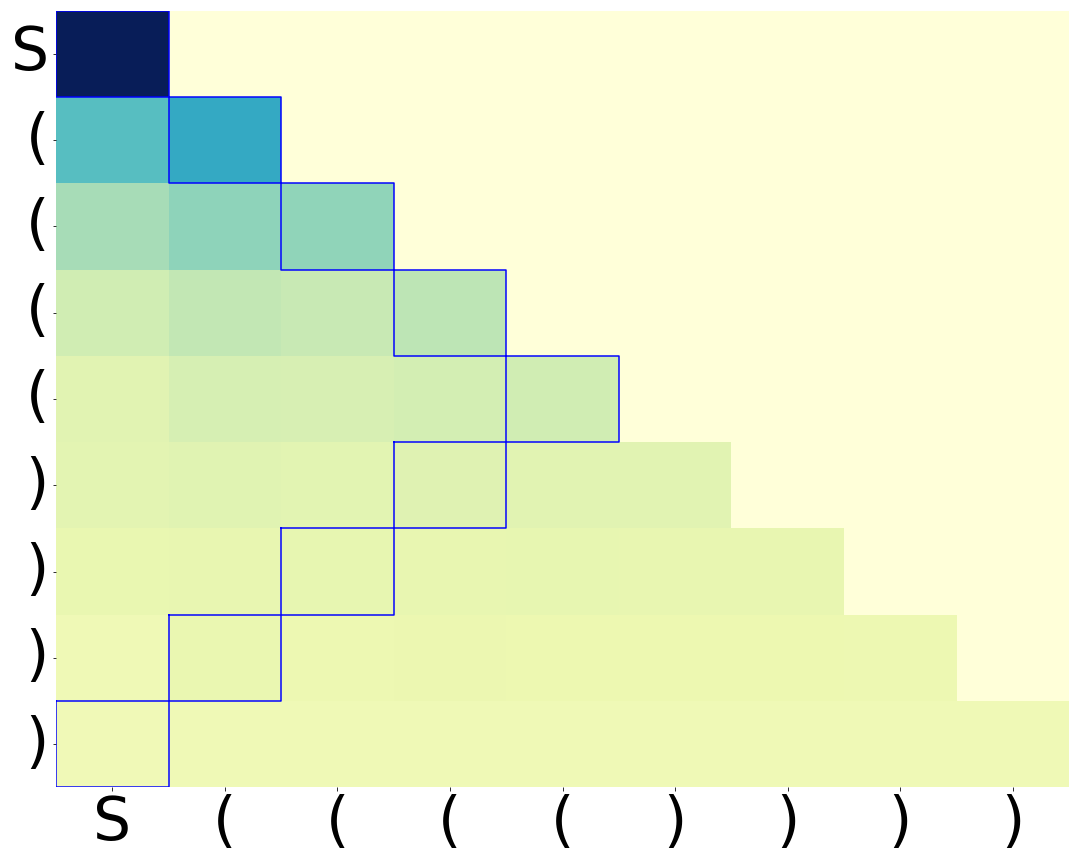}
      \caption{Embedding~\ref{eq:embed_large}, run 2}
    \end{subfigure}
    \begin{subfigure}[b]{0.24\textwidth}
        \includegraphics[width=\textwidth]{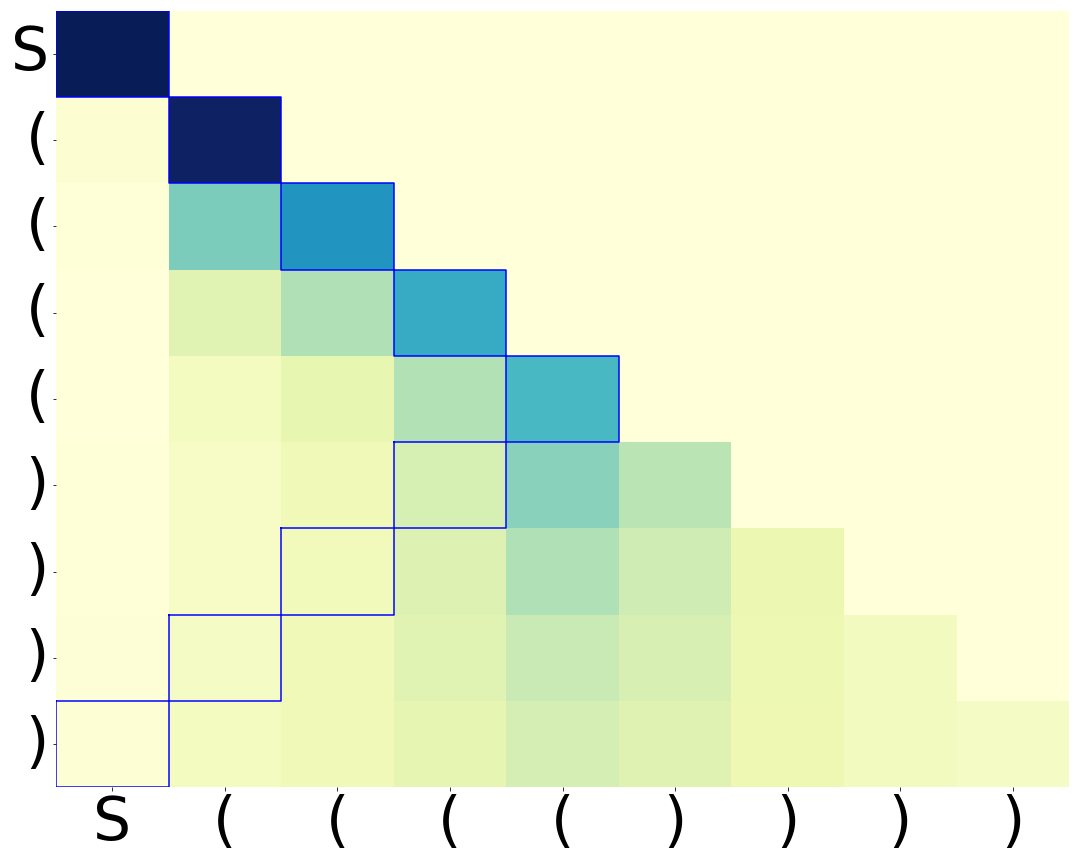}
        \caption{Embedding~\ref{eq:embed_default}}
      \end{subfigure}
      \begin{subfigure}[b]{0.24\textwidth}
        \includegraphics[width=\textwidth]{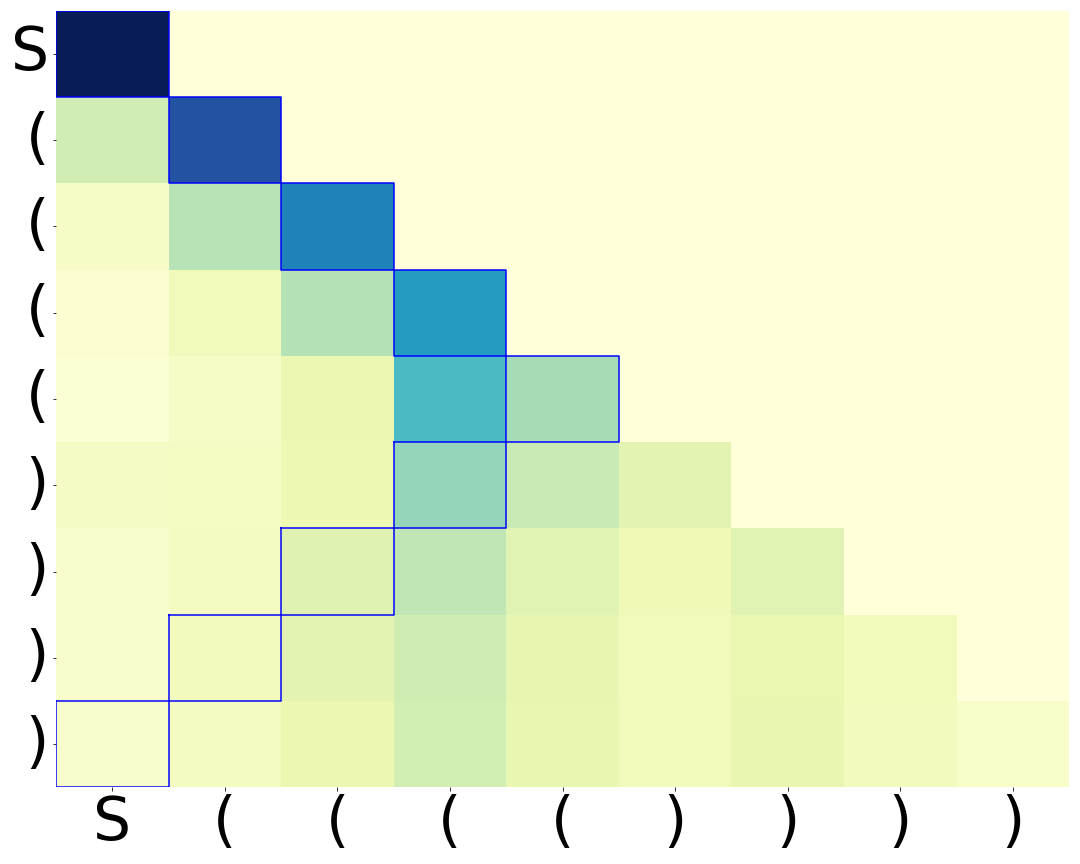}
        \caption{Embedding~\ref{eq:embed_theory}}
      \end{subfigure} 

    \caption{\textbf{Second-layer attention patterns of two-layer Transformers with a minimal first layer}:
    (a), (b) are based on embedding \ref{eq:embed_large} with different learning rates, where the attention patterns show much variance as~\Cref{thm:balance_condition} predicts.
    (c), (d) are based on  embedding \ref{eq:embed_default} and \ref{eq:embed_theory}.
    Different embedding functions lead to diverse attention patterns, most of which are not stack-like.}
\label{fig:dyck_attn_examples_minimal_first_layer}
\end{figure}

\begin{figure}[t]
    \centering
    \begin{minipage}[c]{0.45\textwidth}
        \centering
        \includegraphics[width=1.1\textwidth]{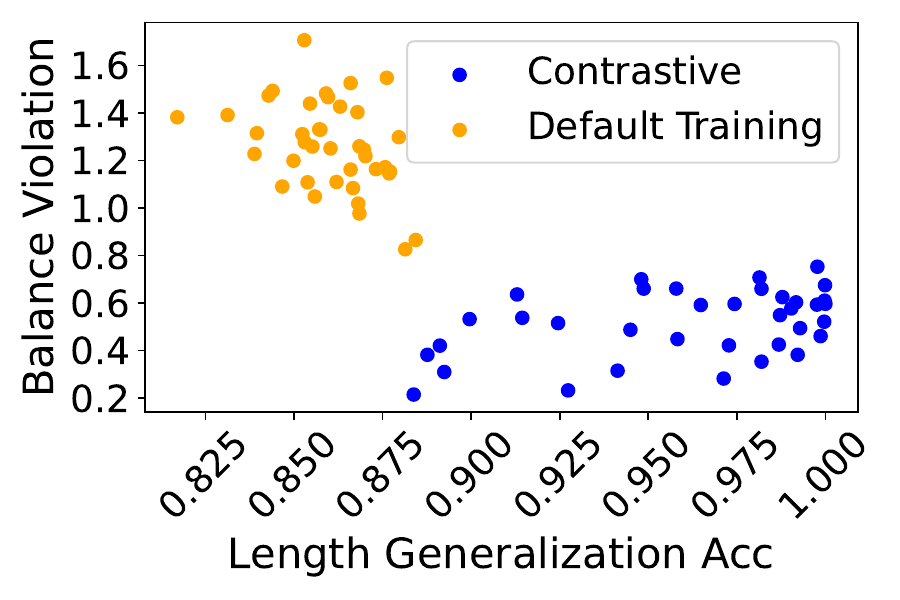}
    \end{minipage}
    \quad
    \begin{minipage}[c]{0.450\textwidth}
       \centering
       \caption{\textbf{Relationship Between Balance Violation and Length Generalization.}
       Accuracy from Transformers with minimal first layer with embedding~\ref{eq:embed_large}, using both standard training and contrastive regularization (\Cref{eq:contrastive_loss}). 
       Standard training leas to high balance violations which negatively correlate with length generalization performance.
       Contrastive regularization helps reduce the balance violation and improve the length generalization performance.
       }
       \label{fig:balance_acc} 
   \end{minipage}
\vspace{-0.1in}
\end{figure}

Specifically, for 2-layer Transformers with a minimal first layer,
we experiment with three different types of embeddings $\embed$:
let $\vo_t$ denote the one-hot embedding where $\vo_t[t] = 1$,
\begin{align}
    &\embed \big(\token{t}{\gdepth}\big) = \vo_{(t-1)D + \gdepth} \in \R^{2kD}, \label{eq:embed_large}
    \\
    &\embed \big(\token{t}{\gdepth}\big) = \vo_{t} \oplus \vo_{\gdepth} \in \R^{2k+D}, \label{eq:embed_theory}
    \\
    &\embed \big(\token{t}{\gdepth}\big) = \vo_{t} \oplus [\cos\left(\theta_\gdepth\right), \sin\left(\theta_\gdepth\right)] \in \R^{2k+2},
    \theta_\gdepth = \arctan \left(\gdepth/(D + 2- \gdepth)\right),\label{eq:embed_default}
\end{align}
where $\oplus$ denotes vector concatenation.
~\Cref{eq:embed_large} is the standard one-hot embedding for $\token{t}{\gdepth}$;
~\Cref{eq:embed_theory} is the concatenation of one-hot embedding of types and depths. 
Finally,~\Cref{eq:embed_default} is the embedding constructed in~\citet{yao2021self}.
As shown in~\Cref{fig:dyck_attn_examples_minimal_first_layer}, the attention patterns learned by Transformers exhibit large variance between different choices of architectures and learning rates, and most learned attention patterns are not stack-like.

\paragraph{Quantifying the variation}
We now quantify the variation in attention by comparing across multiple random initializations.
We define the \textit{attention variation} between two attention patterns $A_1, A_2$ as $\variation(A_1, A_2) = \| A_1 - A_2 \|_F^2$,
for $A_1,A_2 \in \R^{N \times N}$ over an length-$N$ input sequence.
We report the \textit{average attention variation} of each architecture based on $40$ random initializations.

On the prefix $[[[[]]]](((())))$
\footnote{This prefix contains brackets of all types and depths. Results with different prefixes are provided in~\Cref{sec:extended}.},
we observe that for standard two layer training, the average attention variation is $2.20$ with linear position embedding, and is $2.27$ without position embedding.
Both numbers are close to the random baseline value of $2.85$
\footnote{The random baseline is calculated by generating purely random attention patterns (from the simplex, i.e. random square matrices s.t. each row sums up to 1) and calculate the average attention variation between them.},
showing that the attention head learned by different initializations indeed tend to be very different.
We also experiment with Transformer with a minimal first layer and the embedding in ~\Cref{eq:embed_large}, where the average variation is reduced to $0.24$.
We hypothesize that the structural constraints in this setting provide sufficiently strong inductive bias that limit the variation.

\subsection{Guiding The Transformer To Learn Balanced Attention}
\label{sec:experiment_balance}

In our experiments, we observe that although models learned via standard training that can generalize well in distribution,
the \textit{length generalization} performance is far from optimal.
This implies that the models do not correctly identify the parsing algorithm for $\dyck$ when learning from finite samples.
A natural question is:
can we guide Transformers towards correct algorithms, as evidenced by improved generalization performance on longer $\dyck$ sequences?

In the following, we measure length generalization performance by the model accuracy on valid $\dyck$ prefixes with length randomly sampled from $400$ to $500$, 
which corresponds to around 16 times the length of the training sequences.
Inspired by results in \Cref{sec:theory},
we propose a regularization term to encourage more balanced attentions, which leads to better length generalization.

\paragraph{Regularizing for balance violation improves length generalization accuracy}
We denote the \emph{balance violation} of a Transformer as $\BV := \E_{\gdepth,\gdepth',i,j}\left[ S_{\gdepth,\gdepth',i,j} / P_{\gdepth,j}\right]$ for $S,P$ defined in~\Cref{eq:balance_violation_S,eq:balance_violation_P}.
\Cref{thm:balance_condition} predicts that for models with a minimal first layer, perfect length generalization requires $\BV$ to be zero.
Inspired by this observation, we design a contrastive training objective to reduce the balance violation, which ideally would lead to improved length generalization.
Specifically, let $p_r$ denote a prefix of $r$ nested pairs of brackets of for $r \sim U([D])$,
and let $\Transformer(s \mid p_r \oplus s)$ denote the logits for $s$ when $\Transformer$ takes as input the concatenation of $p_r$ and $s$.
We define the \emph{contrastive regularization term} $R_{\textrm{contrastive}}(s)$ as the mean squared error between the logits of $\Transformer(s)$ and $\Transformer(s \mid p_r \oplus s)$, taking expectation over $r$ and $p_r$:
\begin{align}
\label{eq:contrastive_loss}
     &\E_{r \sim U([D]), p_r} \left[\| \Transformer(s \mid p_r \oplus s) - \Transformer(s) \|_F^2 \right].
\end{align}

Following the same intuition as in the proof of~\Cref{thm:balance_condition}, if the model can perfectly length-generalize, then the contrastive loss will be zero.
Models trained with contrastive loss show reduced balance violation as well as improved length generalization performance, as shown in \Cref{fig:balance_acc}.

\section{Conclusion}
\label{sec:conclusion}

Why interpreting individual components sometimes leads to misconceptions?
Through a case study of the Dyck grammar,
we provide theoretical and empirical evidence that even in this simple and well-understood setup,
Transformers can implement a rich set of ``myopically'' non-interpretable solutions.
This is reflected both by diverse attention patterns and by the absence of task-specific structures in local components.
Our results directly imply similar conclusions for more complex Transformer models; see \Cref{sec:app:extension} for technical details.
Together, this work provides definite proof that myopic interpretability, i.e. methods based on examining individual components only, are not sufficient for understanding the functionality of a trained Transformer.

Our results do not preclude that interpretable attention patterns can emerge; however, they do suggest that interpretable patterns can be infrequent.
We discuss the implications for multi-head, overparameterized Transformers trained on more complex data distributions in \Cref{appendix:discussion}.
Moreover, our current results pertain to the existence of solutions; an interesting next step is to study how ``inductive biases'' given by the synergy of the optimization algorithm and the architecture affect the solutions found.

\section*{Acknowledgement}

We thank Valerie Chen for providing helpful feedback.
This work was in part supported by NSF awards IIS-2211907, CCF-2238523, and Amazon Research Award.

\bibliography{all,ref}

\begin{thebibliography}{79}
\providecommand{\natexlab}[1]{#1}
\providecommand{\url}[1]{\texttt{#1}}
\expandafter\ifx\csname urlstyle\endcsname\relax
  \providecommand{\doi}[1]{doi: #1}\else
  \providecommand{\doi}{doi: \begingroup \urlstyle{rm}\Url}\fi

\bibitem[Belinkov(2022)]{belinkov2022probing}
Yonatan Belinkov.
\newblock Probing classifiers: Promises, shortcomings, and advances.
\newblock \emph{Computational Linguistics}, 48\penalty0 (1):\penalty0 207--219, March 2022.
\newblock \doi{10.1162/coli_a_00422}.
\newblock URL \url{https://aclanthology.org/2022.cl-1.7}.

\bibitem[Bhattamishra et~al.(2020{\natexlab{a}})Bhattamishra, Ahuja, and Goyal]{bhattamishra2020ability}
Satwik Bhattamishra, Kabir Ahuja, and Navin Goyal.
\newblock On the {A}bility and {L}imitations of {T}ransformers to {R}ecognize {F}ormal {L}anguages.
\newblock In \emph{Proceedings of the 2020 Conference on Empirical Methods in Natural Language Processing (EMNLP)}, pp.\  7096--7116, Online, November 2020{\natexlab{a}}. Association for Computational Linguistics.
\newblock \doi{10.18653/v1/2020.emnlp-main.576}.
\newblock URL \url{https://aclanthology.org/2020.emnlp-main.576}.

\bibitem[Bhattamishra et~al.(2020{\natexlab{b}})Bhattamishra, Patel, and Goyal]{bhattamishra2020computational}
Satwik Bhattamishra, Arkil Patel, and Navin Goyal.
\newblock On the computational power of transformers and its implications in sequence modeling.
\newblock In \emph{Proceedings of the 24th Conference on Computational Natural Language Learning}, pp.\  455--475, Online, November 2020{\natexlab{b}}. Association for Computational Linguistics.
\newblock \doi{10.18653/v1/2020.conll-1.37}.
\newblock URL \url{https://aclanthology.org/2020.conll-1.37}.

\bibitem[Bilodeau et~al.(2022)Bilodeau, Jaques, Koh, and Kim]{bilodeau2022impossibility}
Blair Bilodeau, Natasha Jaques, Pang~Wei Koh, and Been Kim.
\newblock Impossibility theorems for feature attribution.
\newblock \emph{arXiv preprint arXiv:2212.11870}, 2022.

\bibitem[Bolukbasi et~al.(2021)Bolukbasi, Pearce, Yuan, Coenen, Reif, Viégas, and Wattenberg]{bolukbasi2021interpretability}
Tolga Bolukbasi, Adam Pearce, Ann Yuan, Andy Coenen, Emily Reif, Fernanda Viégas, and Martin Wattenberg.
\newblock An interpretability illusion for bert.
\newblock \emph{arXiv preprint arXiv: Arxiv-2104.07143}, 2021.

\bibitem[Brunner et~al.(2020)Brunner, Liu, Pascual, Richter, Ciaramita, and Wattenhofer]{brunner2020on}
Gino Brunner, Yang Liu, Damian Pascual, Oliver Richter, Massimiliano Ciaramita, and Roger Wattenhofer.
\newblock On identifiability in transformers.
\newblock In \emph{International Conference on Learning Representations}, 2020.
\newblock URL \url{https://openreview.net/forum?id=BJg1f6EFDB}.

\bibitem[Cammarata et~al.(2020)Cammarata, Carter, Goh, Olah, Petrov, Schubert, Voss, Egan, and Lim]{cammarata2020thread}
Nick Cammarata, Shan Carter, Gabriel Goh, Chris Olah, Michael Petrov, Ludwig Schubert, Chelsea Voss, Ben Egan, and Swee~Kiat Lim.
\newblock Thread: Circuits.
\newblock \emph{Distill}, 2020.
\newblock \doi{10.23915/distill.00024}.
\newblock https://distill.pub/2020/circuits.

\bibitem[Chen et~al.(2022)Chen, Li, Kim, Plumb, and Talwalkar]{chen2022interpretable}
Valerie Chen, Jeffrey Li, Joon~Sik Kim, Gregory Plumb, and Ameet Talwalkar.
\newblock Interpretable machine learning: Moving from mythos to diagnostics.
\newblock \emph{Commun. ACM}, 65\penalty0 (8):\penalty0 43–50, jul 2022.
\newblock ISSN 0001-0782.
\newblock \doi{10.1145/3546036}.
\newblock URL \url{https://doi.org/10.1145/3546036}.

\bibitem[Chughtai et~al.(2023)Chughtai, Chan, and Nanda]{chughtai2023toy}
B.~Chughtai, Lawrence Chan, and Neel Nanda.
\newblock A toy model of universality: Reverse engineering how networks learn group operations.
\newblock \emph{International Conference on Machine Learning}, 2023.
\newblock \doi{10.48550/arXiv.2302.03025}.

\bibitem[Clark et~al.(2019)Clark, Khandelwal, Levy, and Manning]{clark2019bert}
Kevin Clark, Urvashi Khandelwal, Omer Levy, and Christopher~D. Manning.
\newblock What does {BERT} look at? an analysis of {BERT}{'}s attention.
\newblock In \emph{Proceedings of the 2019 ACL Workshop BlackboxNLP: Analyzing and Interpreting Neural Networks for NLP}, pp.\  276--286, Florence, Italy, August 2019. Association for Computational Linguistics.
\newblock \doi{10.18653/v1/W19-4828}.
\newblock URL \url{https://aclanthology.org/W19-4828}.

\bibitem[Dar et~al.(2022)Dar, Geva, Gupta, and Berant]{dar2022analyzing}
Guy Dar, Mor Geva, Ankit Gupta, and Jonathan Berant.
\newblock Analyzing transformers in embedding space, 2022.

\bibitem[Deng et~al.(2023)Deng, Li, and Song]{deng2023attention}
Yichuan Deng, Zhihang Li, and Zhao Song.
\newblock Attention scheme inspired softmax regression, 2023.

\bibitem[Ebrahimi et~al.(2020)Ebrahimi, Gelda, and Zhang]{ebrahimi2020self}
Javid Ebrahimi, Dhruv Gelda, and Wei Zhang.
\newblock How can self-attention networks recognize {D}yck-n languages?
\newblock In \emph{Findings of the Association for Computational Linguistics: EMNLP 2020}, pp.\  4301--4306, Online, November 2020. Association for Computational Linguistics.
\newblock \doi{10.18653/v1/2020.findings-emnlp.384}.
\newblock URL \url{https://aclanthology.org/2020.findings-emnlp.384}.

\bibitem[Edelman et~al.(2022)Edelman, Goel, Kakade, and Zhang]{edelman2022inductive}
Benjamin~L Edelman, Surbhi Goel, Sham Kakade, and Cyril Zhang.
\newblock Inductive biases and variable creation in self-attention mechanisms.
\newblock In Kamalika Chaudhuri, Stefanie Jegelka, Le~Song, Csaba Szepesvari, Gang Niu, and Sivan Sabato (eds.), \emph{Proceedings of the 39th International Conference on Machine Learning}, volume 162 of \emph{Proceedings of Machine Learning Research}, pp.\  5793--5831. PMLR, 17--23 Jul 2022.
\newblock URL \url{https://proceedings.mlr.press/v162/edelman22a.html}.

\bibitem[Eldan \& Li(2023)Eldan and Li]{eldan2023tinystories}
Ronen Eldan and Yuanzhi Li.
\newblock Tinystories: How small can language models be and still speak coherent english?, 2023.

\bibitem[Elhage et~al.(2021)Elhage, Nanda, Olsson, Henighan, Joseph, Mann, Askell, Bai, Chen, Conerly, DasSarma, Drain, Ganguli, Hatfield-Dodds, Hernandez, Jones, Kernion, Lovitt, Ndousse, Amodei, Brown, Clark, Kaplan, McCandlish, and Olah]{elhage2021mathematical}
Nelson Elhage, Neel Nanda, Catherine Olsson, Tom Henighan, Nicholas Joseph, Ben Mann, Amanda Askell, Yuntao Bai, Anna Chen, Tom Conerly, Nova DasSarma, Dawn Drain, Deep Ganguli, Zac Hatfield-Dodds, Danny Hernandez, Andy Jones, Jackson Kernion, Liane Lovitt, Kamal Ndousse, Dario Amodei, Tom Brown, Jack Clark, Jared Kaplan, Sam McCandlish, and Chris Olah.
\newblock A mathematical framework for transformer circuits.
\newblock \emph{Transformer Circuits Thread}, 2021.
\newblock https://Transformer-circuits.pub/2021/framework/index.html.

\bibitem[Foret et~al.(2020)Foret, Kleiner, Mobahi, and Neyshabur]{foret2020sharpness}
Pierre Foret, Ariel Kleiner, Hossein Mobahi, and Behnam Neyshabur.
\newblock Sharpness-aware minimization for efficiently improving generalization.
\newblock \emph{arXiv preprint arXiv:2010.01412}, 2020.

\bibitem[Frankle \& Carbin(2018)Frankle and Carbin]{frankle2018lottery}
Jonathan Frankle and Michael Carbin.
\newblock The lottery ticket hypothesis: Finding sparse, trainable neural networks.
\newblock \emph{International Conference On Learning Representations}, 2018.

\bibitem[Gao et~al.(2023)Gao, Mahadevan, and Song]{gao2023overparameterized}
Yeqi Gao, Sridhar Mahadevan, and Zhao Song.
\newblock An over-parameterized exponential regression, 2023.

\bibitem[Gers \& Schmidhuber(2001)Gers and Schmidhuber]{gers2001lstm}
F.~Gers and J.~Schmidhuber.
\newblock Lstm recurrent networks learn simple context-free and context-sensitive languages.
\newblock \emph{IEEE transactions on neural networks}, 12 6:\penalty0 1333--40, 2001.

\bibitem[Grimsley et~al.(2020)Grimsley, Mayfield, and R.S.~Bursten]{grimsley2020attention}
Christopher Grimsley, Elijah Mayfield, and Julia R.S.~Bursten.
\newblock Why attention is not explanation: Surgical intervention and causal reasoning about neural models.
\newblock In \emph{Proceedings of the Twelfth Language Resources and Evaluation Conference}, pp.\  1780--1790, Marseille, France, May 2020. European Language Resources Association.
\newblock ISBN 979-10-95546-34-4.
\newblock URL \url{https://aclanthology.org/2020.lrec-1.220}.

\bibitem[Haab et~al.(2023)Haab, Deutschmann, and Mart{\'i}nez]{haab2023attention}
Jonathan Haab, Nicolas Deutschmann, and Mar{\'i}a~Rodr{\'i}guez Mart{\'i}nez.
\newblock Is attention interpretation? a quantitative assessment on sets.
\newblock In \emph{Machine Learning and Principles and Practice of Knowledge Discovery in Databases}, pp.\  303--321, Cham, 2023. Springer Nature Switzerland.
\newblock ISBN 978-3-031-23618-1.

\bibitem[Hahn(2020)]{Hahn20}
Michael Hahn.
\newblock Theoretical limitations of self-attention in neural sequence models.
\newblock \emph{Trans. Assoc. Comput. Linguistics}, 8:\penalty0 156--171, 2020.
\newblock \doi{10.1162/tacl\_a\_00306}.
\newblock URL \url{https://doi.org/10.1162/tacl\_a\_00306}.

\bibitem[Hewitt \& Liang(2019)Hewitt and Liang]{hewitt2019designing}
John Hewitt and Percy Liang.
\newblock Designing and interpreting probes with control tasks.
\newblock In \emph{Proceedings of the 2019 Conference on Empirical Methods in Natural Language Processing and the 9th International Joint Conference on Natural Language Processing (EMNLP-IJCNLP)}, pp.\  2733--2743, Hong Kong, China, November 2019. Association for Computational Linguistics.
\newblock \doi{10.18653/v1/D19-1275}.
\newblock URL \url{https://aclanthology.org/D19-1275}.

\bibitem[Hewitt \& Manning(2019)Hewitt and Manning]{hewitt2019structural}
John Hewitt and Christopher~D Manning.
\newblock A structural probe for finding syntax in word representations.
\newblock In \emph{Proceedings of the 2019 Conference of the North American Chapter of the Association for Computational Linguistics: Human Language Technologies, Volume 1 (Long and Short Papers)}, pp.\  4129--4138, 2019.

\bibitem[Hewitt et~al.(2020)Hewitt, Hahn, Ganguli, Liang, and Manning]{hewitt2020rnns}
John Hewitt, Michael Hahn, Surya Ganguli, Percy Liang, and Christopher~D Manning.
\newblock Rnns can generate bounded hierarchical languages with optimal memory.
\newblock \emph{arXiv preprint arXiv:2010.07515}, 2020.

\bibitem[Htut et~al.(2019)Htut, Phang, Bordia, and Bowman]{htut2019attention}
Phu~Mon Htut, Jason Phang, Shikha Bordia, and Samuel~R. Bowman.
\newblock Do attention heads in bert track syntactic dependencies?, 2019.

\bibitem[Jain \& Wallace(2019)Jain and Wallace]{jain2019attention}
Sarthak Jain and Byron~C. Wallace.
\newblock {A}ttention is not {E}xplanation.
\newblock In \emph{Proceedings of the 2019 Conference of the North {A}merican Chapter of the Association for Computational Linguistics: Human Language Technologies, Volume 1 (Long and Short Papers)}, pp.\  3543--3556, Minneapolis, Minnesota, June 2019. Association for Computational Linguistics.
\newblock \doi{10.18653/v1/N19-1357}.
\newblock URL \url{https://aclanthology.org/N19-1357}.

\bibitem[Jelassi et~al.(2022)Jelassi, Sander, and Li]{jelassi2022vision}
Samy Jelassi, Michael~Eli Sander, and Yuanzhi Li.
\newblock Vision transformers provably learn spatial structure.
\newblock In Alice~H. Oh, Alekh Agarwal, Danielle Belgrave, and Kyunghyun Cho (eds.), \emph{Advances in Neural Information Processing Systems}, 2022.
\newblock URL \url{https://openreview.net/forum?id=eMW9AkXaREI}.

\bibitem[Jin et~al.(2018)Jin, Doshi-Velez, Miller, Schuler, and Schwartz]{jin2018unsupervised}
Lifeng Jin, Finale Doshi-Velez, Timothy Miller, William Schuler, and Lane Schwartz.
\newblock Unsupervised grammar induction with depth-bounded pcfg.
\newblock \emph{Transactions of the Association for Computational Linguistics}, 6:\penalty0 211--224, 2018.

\bibitem[Karlsson(2007)]{karlsson2007constraints}
Fred Karlsson.
\newblock Constraints on multiple center-embedding of clauses.
\newblock \emph{Journal of Linguistics}, 43\penalty0 (2):\penalty0 365--392, 2007.

\bibitem[Kobayashi et~al.(2020)Kobayashi, Kuribayashi, Yokoi, and Inui]{kobayashi2020attention}
Goro Kobayashi, Tatsuki Kuribayashi, Sho Yokoi, and Kentaro Inui.
\newblock Attention is not only a weight: Analyzing transformers with vector norms.
\newblock In \emph{Proceedings of the 2020 Conference on Empirical Methods in Natural Language Processing (EMNLP)}, pp.\  7057--7075, Online, November 2020. Association for Computational Linguistics.
\newblock \doi{10.18653/v1/2020.emnlp-main.574}.
\newblock URL \url{https://aclanthology.org/2020.emnlp-main.574}.

\bibitem[Kovaleva et~al.(2019)Kovaleva, Romanov, Rogers, and Rumshisky]{kovaleva2019revealing}
Olga Kovaleva, Alexey Romanov, Anna Rogers, and Anna Rumshisky.
\newblock Revealing the dark secrets of {BERT}.
\newblock In \emph{Proceedings of the 2019 Conference on Empirical Methods in Natural Language Processing and the 9th International Joint Conference on Natural Language Processing (EMNLP-IJCNLP)}, pp.\  4365--4374, Hong Kong, China, November 2019. Association for Computational Linguistics.
\newblock \doi{10.18653/v1/D19-1445}.
\newblock URL \url{https://aclanthology.org/D19-1445}.

\bibitem[Li et~al.(2016)Li, Chen, Hovy, and Jurafsky]{li2016visualizing}
Jiwei Li, Xinlei Chen, Eduard Hovy, and Dan Jurafsky.
\newblock Visualizing and understanding neural models in {NLP}.
\newblock In \emph{Proceedings of the 2016 Conference of the North {A}merican Chapter of the Association for Computational Linguistics: Human Language Technologies}, pp.\  681--691, San Diego, California, June 2016. Association for Computational Linguistics.
\newblock \doi{10.18653/v1/N16-1082}.
\newblock URL \url{https://aclanthology.org/N16-1082}.

\bibitem[Li \& Gong(2021)Li and Gong]{li2021robust}
Xian Li and Hongyu Gong.
\newblock Robust optimization for multilingual translation with imbalanced data.
\newblock In M.~Ranzato, A.~Beygelzimer, Y.~Dauphin, P.S. Liang, and J.~Wortman Vaughan (eds.), \emph{Advances in Neural Information Processing Systems}, volume~34, pp.\  25086--25099. Curran Associates, Inc., 2021.
\newblock URL \url{https://proceedings.neurips.cc/paper/2021/file/d324a0cc02881779dcda44a675fdcaaa-Paper.pdf}.

\bibitem[Li \& Risteski(2021)Li and Risteski]{li2021limitations}
Yuchen Li and Andrej Risteski.
\newblock The limitations of limited context for constituency parsing.
\newblock In \emph{Proceedings of the 59th Annual Meeting of the Association for Computational Linguistics and the 11th International Joint Conference on Natural Language Processing (Volume 1: Long Papers)}, pp.\  2675--2687, Online, August 2021. Association for Computational Linguistics.
\newblock \doi{10.18653/v1/2021.acl-long.208}.
\newblock URL \url{https://aclanthology.org/2021.acl-long.208}.

\bibitem[Li et~al.(2023)Li, Li, and Risteski]{li2023Transformers}
Yuchen Li, Yuanzhi Li, and Andrej Risteski.
\newblock How do transformers learn topic structure: Towards a mechanistic understanding.
\newblock In Andreas Krause, Emma Brunskill, Kyunghyun Cho, Barbara Engelhardt, Sivan Sabato, and Jonathan Scarlett (eds.), \emph{Proceedings of the 40th International Conference on Machine Learning}, volume 202 of \emph{Proceedings of Machine Learning Research}, pp.\  19689--19729. PMLR, 23--29 Jul 2023.
\newblock URL \url{https://proceedings.mlr.press/v202/li23p.html}.

\bibitem[Lin et~al.(2019)Lin, Tan, and Frank]{lin2019open}
Yongjie Lin, Yi~Chern Tan, and Robert Frank.
\newblock Open sesame: Getting inside {BERT}{'}s linguistic knowledge.
\newblock In \emph{Proceedings of the 2019 ACL Workshop BlackboxNLP: Analyzing and Interpreting Neural Networks for NLP}, pp.\  241--253, Florence, Italy, August 2019. Association for Computational Linguistics.
\newblock \doi{10.18653/v1/W19-4825}.
\newblock URL \url{https://aclanthology.org/W19-4825}.

\bibitem[Lipton(2017)]{lipton2017mythos}
Zachary~C. Lipton.
\newblock The mythos of model interpretability, 2017.

\bibitem[Liu et~al.(2022{\natexlab{a}})Liu, Hsu, Ravikumar, and Risteski]{liu2022masked}
Bingbin Liu, Daniel Hsu, Pradeep~Kumar Ravikumar, and Andrej Risteski.
\newblock Masked prediction: A parameter identifiability view.
\newblock In Alice~H. Oh, Alekh Agarwal, Danielle Belgrave, and Kyunghyun Cho (eds.), \emph{Advances in Neural Information Processing Systems}, 2022{\natexlab{a}}.
\newblock URL \url{https://openreview.net/forum?id=Hbvlb4D1aFC}.

\bibitem[Liu et~al.(2023)Liu, Ash, Goel, Krishnamurthy, and Zhang]{liu2023Transformers}
Bingbin Liu, Jordan~T. Ash, Surbhi Goel, Akshay Krishnamurthy, and Cyril Zhang.
\newblock Transformers learn shortcuts to automata.
\newblock In \emph{The Eleventh International Conference on Learning Representations}, 2023.
\newblock URL \url{https://openreview.net/forum?id=De4FYqjFueZ}.

\bibitem[Liu \& Neubig(2022)Liu and Neubig]{liu2022representations}
Emmy Liu and Graham Neubig.
\newblock Are representations built from the ground up? an empirical examination of local composition in language models.
\newblock In Yoav Goldberg, Zornitsa Kozareva, and Yue Zhang (eds.), \emph{Proceedings of the 2022 Conference on Empirical Methods in Natural Language Processing}, pp.\  9053--9073, Abu Dhabi, United Arab Emirates, December 2022. Association for Computational Linguistics.
\newblock \doi{10.18653/v1/2022.emnlp-main.617}.
\newblock URL \url{https://aclanthology.org/2022.emnlp-main.617}.

\bibitem[Liu et~al.(2022{\natexlab{b}})Liu, Xie, Li, and Ma]{liu2022same}
Hong Liu, Sang~Michael Xie, Zhiyuan Li, and Tengyu Ma.
\newblock Same pre-training loss, better downstream: Implicit bias matters for language models.
\newblock \emph{arXiv preprint arXiv:2210.14199}, 2022{\natexlab{b}}.

\bibitem[Liu et~al.(2020)Liu, Liu, Gao, Chen, and Han]{liu2020understanding}
Liyuan Liu, Xiaodong Liu, Jianfeng Gao, Weizhu Chen, and Jiawei Han.
\newblock Understanding the difficulty of training transformers.
\newblock In \emph{Proceedings of the 2020 Conference on Empirical Methods in Natural Language Processing (EMNLP)}, pp.\  5747--5763, Online, November 2020. Association for Computational Linguistics.
\newblock \doi{10.18653/v1/2020.emnlp-main.463}.
\newblock URL \url{https://aclanthology.org/2020.emnlp-main.463}.

\bibitem[Liu et~al.(2019)Liu, Gardner, Belinkov, Peters, and Smith]{liu2019linguistic}
Nelson~F. Liu, Matt Gardner, Yonatan Belinkov, Matthew~E. Peters, and Noah~A. Smith.
\newblock Linguistic knowledge and transferability of contextual representations.
\newblock In \emph{Proceedings of the 2019 Conference of the North {A}merican Chapter of the Association for Computational Linguistics: Human Language Technologies, Volume 1 (Long and Short Papers)}, pp.\  1073--1094, Minneapolis, Minnesota, June 2019. Association for Computational Linguistics.
\newblock \doi{10.18653/v1/N19-1112}.
\newblock URL \url{https://aclanthology.org/N19-1112}.

\bibitem[Lu et~al.(2021)Lu, Mao, and Nayak]{lu2021on}
Haoye Lu, Yongyi Mao, and Amiya Nayak.
\newblock On the dynamics of training attention models.
\newblock In \emph{International Conference on Learning Representations}, 2021.
\newblock URL \url{https://openreview.net/forum?id=1OCTOShAmqB}.

\bibitem[Malach et~al.(2020)Malach, Yehudai, Shalev-Schwartz, and Shamir]{malach2020proving}
Eran Malach, Gilad Yehudai, Shai Shalev-Schwartz, and Ohad Shamir.
\newblock Proving the lottery ticket hypothesis: Pruning is all you need.
\newblock In \emph{International Conference on Machine Learning}, pp.\  6682--6691. PMLR, 2020.

\bibitem[Meister et~al.(2021)Meister, Lazov, Augenstein, and Cotterell]{meister2021sparse}
Clara Meister, Stefan Lazov, Isabelle Augenstein, and Ryan Cotterell.
\newblock Is sparse attention more interpretable?
\newblock In \emph{Proceedings of the 59th Annual Meeting of the Association for Computational Linguistics and the 11th International Joint Conference on Natural Language Processing (Volume 2: Short Papers)}, pp.\  122--129, Online, August 2021. Association for Computational Linguistics.
\newblock \doi{10.18653/v1/2021.acl-short.17}.
\newblock URL \url{https://aclanthology.org/2021.acl-short.17}.

\bibitem[Merrill(2019)]{merrill2019sequential}
William Merrill.
\newblock Sequential neural networks as automata.
\newblock In \emph{Proceedings of the Workshop on Deep Learning and Formal Languages: Building Bridges}, pp.\  1--13, Florence, August 2019. Association for Computational Linguistics.
\newblock \doi{10.18653/v1/W19-3901}.
\newblock URL \url{https://www.aclweb.org/anthology/W19-3901}.

\bibitem[Michel et~al.(2019)Michel, Levy, and Neubig]{michel2019are}
Paul Michel, Omer Levy, and Graham Neubig.
\newblock Are sixteen heads really better than one?
\newblock In H.~Wallach, H.~Larochelle, A.~Beygelzimer, F.~d\textquotesingle Alch\'{e}-Buc, E.~Fox, and R.~Garnett (eds.), \emph{Advances in Neural Information Processing Systems}, volume~32. Curran Associates, Inc., 2019.
\newblock URL \url{https://proceedings.neurips.cc/paper_files/paper/2019/file/2c601ad9d2ff9bc8b282670cdd54f69f-Paper.pdf}.

\bibitem[Nanda et~al.(2023)Nanda, Chan, Lieberum, Smith, and Steinhardt]{nanda2023progress}
Neel Nanda, Lawrence Chan, Tom Lieberum, Jess Smith, and Jacob Steinhardt.
\newblock Progress measures for grokking via mechanistic interpretability.
\newblock In \emph{The Eleventh International Conference on Learning Representations}, 2023.
\newblock URL \url{https://openreview.net/forum?id=9XFSbDPmdW}.

\bibitem[Nguyen \& Salazar(2019)Nguyen and Salazar]{nguyen2019Transformers}
Toan~Q. Nguyen and Julian Salazar.
\newblock Transformers without tears: Improving the normalization of self-attention.
\newblock In \emph{Proceedings of the 16th International Conference on Spoken Language Translation}, Hong Kong, November 2-3 2019. Association for Computational Linguistics.
\newblock URL \url{https://aclanthology.org/2019.iwslt-1.17}.

\bibitem[Olsson et~al.(2022)Olsson, Elhage, Nanda, Joseph, DasSarma, Henighan, Mann, Askell, Bai, Chen, Conerly, Drain, Ganguli, Hatfield-Dodds, Hernandez, Johnston, Jones, Kernion, Lovitt, Ndousse, Amodei, Brown, Clark, Kaplan, McCandlish, and Olah]{olsson2022context}
Catherine Olsson, Nelson Elhage, Neel Nanda, Nicholas Joseph, Nova DasSarma, Tom Henighan, Ben Mann, Amanda Askell, Yuntao Bai, Anna Chen, Tom Conerly, Dawn Drain, Deep Ganguli, Zac Hatfield-Dodds, Danny Hernandez, Scott Johnston, Andy Jones, Jackson Kernion, Liane Lovitt, Kamal Ndousse, Dario Amodei, Tom Brown, Jack Clark, Jared Kaplan, Sam McCandlish, and Chris Olah.
\newblock In-context learning and induction heads.
\newblock \emph{Transformer Circuits Thread}, 2022.
\newblock https://Transformer-circuits.pub/2022/in-context-learning-and-induction-heads/index.html.

\bibitem[Pensia et~al.(2020)Pensia, Rajput, Nagle, Vishwakarma, and Papailiopoulos]{pensia2020optimal}
Ankit Pensia, Shashank Rajput, Alliot Nagle, Harit Vishwakarma, and Dimitris Papailiopoulos.
\newblock Optimal lottery tickets via subset sum: Logarithmic over-parameterization is sufficient.
\newblock \emph{Advances in neural information processing systems}, 33:\penalty0 2599--2610, 2020.

\bibitem[Perez et~al.(2021)Perez, Barcelo, and Marinkovic]{Perez21}
Jorge Perez, Pablo Barcelo, and Javier Marinkovic.
\newblock Attention is turing-complete.
\newblock \emph{Journal of Machine Learning Research}, 22\penalty0 (75):\penalty0 1--35, 2021.
\newblock URL \url{http://jmlr.org/papers/v22/20-302.html}.

\bibitem[Prasanna et~al.(2020)Prasanna, Rogers, and Rumshisky]{prasanna2020bert}
Sai Prasanna, Anna Rogers, and Anna Rumshisky.
\newblock {W}hen {BERT} {P}lays the {L}ottery, {A}ll {T}ickets {A}re {W}inning.
\newblock In \emph{Proceedings of the 2020 Conference on Empirical Methods in Natural Language Processing (EMNLP)}, pp.\  3208--3229, Online, November 2020. Association for Computational Linguistics.
\newblock \doi{10.18653/v1/2020.emnlp-main.259}.
\newblock URL \url{https://aclanthology.org/2020.emnlp-main.259}.

\bibitem[Raganato \& Tiedemann(2018)Raganato and Tiedemann]{raganato2018analysis}
Alessandro Raganato and J{\"o}rg Tiedemann.
\newblock An analysis of encoder representations in transformer-based machine translation.
\newblock In \emph{Proceedings of the 2018 {EMNLP} Workshop {B}lackbox{NLP}: Analyzing and Interpreting Neural Networks for {NLP}}, pp.\  287--297, Brussels, Belgium, November 2018. Association for Computational Linguistics.
\newblock \doi{10.18653/v1/W18-5431}.
\newblock URL \url{https://aclanthology.org/W18-5431}.

\bibitem[Rogers et~al.(2020)Rogers, Kovaleva, and Rumshisky]{rogers2020primer}
Anna Rogers, Olga Kovaleva, and Anna Rumshisky.
\newblock A primer in bertology: What we know about how bert works.
\newblock \emph{Transactions of the Association for Computational Linguistics}, 8:\penalty0 842--866, 2020.

\bibitem[Schützenberger(1963)]{SCHUTZENBERGER1963246}
M.P. Schützenberger.
\newblock On context-free languages and push-down automata.
\newblock \emph{Information and Control}, 6\penalty0 (3):\penalty0 246--264, 1963.
\newblock ISSN 0019-9958.
\newblock \doi{https://doi.org/10.1016/S0019-9958(63)90306-1}.
\newblock URL \url{https://www.sciencedirect.com/science/article/pii/S0019995863903061}.

\bibitem[Serrano \& Smith(2019)Serrano and Smith]{serrano2019attention}
Sofia Serrano and Noah~A. Smith.
\newblock Is attention interpretable?
\newblock In \emph{Proceedings of the 57th Annual Meeting of the Association for Computational Linguistics}, pp.\  2931--2951, Florence, Italy, July 2019. Association for Computational Linguistics.
\newblock \doi{10.18653/v1/P19-1282}.
\newblock URL \url{https://aclanthology.org/P19-1282}.

\bibitem[Siegelmann \& Sontag(1992)Siegelmann and Sontag]{siegelmann1992rnn}
Hava~T. Siegelmann and Eduardo~D. Sontag.
\newblock On the computational power of neural nets.
\newblock In \emph{Proceedings of the Fifth Annual Workshop on Computational Learning Theory}, COLT '92, pp.\  440–449, New York, NY, USA, 1992. Association for Computing Machinery.
\newblock ISBN 089791497X.
\newblock \doi{10.1145/130385.130432}.
\newblock URL \url{https://doi.org/10.1145/130385.130432}.

\bibitem[Sun \& Marasovi{\'c}(2021)Sun and Marasovi{\'c}]{sun2021effective}
Kaiser Sun and Ana Marasovi{\'c}.
\newblock Effective attention sheds light on interpretability.
\newblock In \emph{Findings of the Association for Computational Linguistics: ACL-IJCNLP 2021}, pp.\  4126--4135, Online, August 2021. Association for Computational Linguistics.
\newblock \doi{10.18653/v1/2021.findings-acl.361}.
\newblock URL \url{https://aclanthology.org/2021.findings-acl.361}.

\bibitem[Suzgun et~al.(2019)Suzgun, Belinkov, Shieber, and Gehrmann]{suzgun2019lstm}
Mirac Suzgun, Yonatan Belinkov, Stuart Shieber, and Sebastian Gehrmann.
\newblock {LSTM} networks can perform dynamic counting.
\newblock In \emph{Proceedings of the Workshop on Deep Learning and Formal Languages: Building Bridges}, pp.\  44--54, Florence, August 2019. Association for Computational Linguistics.
\newblock \doi{10.18653/v1/W19-3905}.
\newblock URL \url{https://www.aclweb.org/anthology/W19-3905}.

\bibitem[Tenney et~al.(2019)Tenney, Das, and Pavlick]{tenney2019bert}
Ian Tenney, Dipanjan Das, and Ellie Pavlick.
\newblock Bert rediscovers the classical nlp pipeline.
\newblock \emph{arXiv preprint arXiv:1905.05950}, 2019.

\bibitem[Vig \& Belinkov(2019)Vig and Belinkov]{vig2019analyzing}
Jesse Vig and Yonatan Belinkov.
\newblock Analyzing the structure of attention in a transformer language model.
\newblock In \emph{Proceedings of the 2019 ACL Workshop BlackboxNLP: Analyzing and Interpreting Neural Networks for NLP}, pp.\  63--76, Florence, Italy, August 2019. Association for Computational Linguistics.
\newblock \doi{10.18653/v1/W19-4808}.
\newblock URL \url{https://aclanthology.org/W19-4808}.

\bibitem[Voita et~al.(2019)Voita, Talbot, Moiseev, Sennrich, and Titov]{voita2019analyzing}
Elena Voita, David Talbot, Fedor Moiseev, Rico Sennrich, and Ivan Titov.
\newblock Analyzing multi-head self-attention: Specialized heads do the heavy lifting, the rest can be pruned.
\newblock In \emph{Proceedings of the 57th Annual Meeting of the Association for Computational Linguistics}, pp.\  5797--5808, Florence, Italy, July 2019. Association for Computational Linguistics.
\newblock \doi{10.18653/v1/P19-1580}.
\newblock URL \url{https://aclanthology.org/P19-1580}.

\bibitem[Wang et~al.(2023)Wang, Variengien, Conmy, Shlegeris, and Steinhardt]{wang2023interpretability}
Kevin~Ro Wang, Alexandre Variengien, Arthur Conmy, Buck Shlegeris, and Jacob Steinhardt.
\newblock Interpretability in the wild: a circuit for indirect object identification in {GPT}-2 small.
\newblock In \emph{International Conference on Learning Representations}, 2023.
\newblock URL \url{https://openreview.net/forum?id=NpsVSN6o4ul}.

\bibitem[Wei et~al.(2021)Wei, Chen, and Ma]{wei2021statistically}
Colin Wei, Yining Chen, and Tengyu Ma.
\newblock Statistically meaningful approximation: a case study on approximating turing machines with transformers, 2021.
\newblock URL \url{https://arxiv.org/abs/2107.13163}.

\bibitem[Weiss et~al.(2018)Weiss, Goldberg, and Yahav]{weiss2018practical}
Gail Weiss, Yoav Goldberg, and Eran Yahav.
\newblock On the practical computational power of finite precision rnns for language recognition.
\newblock \emph{arXiv preprint arXiv:1805.04908}, 2018.

\bibitem[Weiss et~al.(2021)Weiss, Goldberg, and Yahav]{weiss2021thinking}
Gail Weiss, Yoav Goldberg, and Eran Yahav.
\newblock Thinking like transformers.
\newblock In Marina Meila and Tong Zhang (eds.), \emph{Proceedings of the 38th International Conference on Machine Learning}, volume 139 of \emph{Proceedings of Machine Learning Research}, pp.\  11080--11090. PMLR, 18--24 Jul 2021.
\newblock URL \url{https://proceedings.mlr.press/v139/weiss21a.html}.

\bibitem[Wiegreffe \& Pinter(2019)Wiegreffe and Pinter]{wiegreffe2019attention}
Sarah Wiegreffe and Yuval Pinter.
\newblock Attention is not not explanation.
\newblock In \emph{Proceedings of the 2019 Conference on Empirical Methods in Natural Language Processing and the 9th International Joint Conference on Natural Language Processing (EMNLP-IJCNLP)}, pp.\  11--20, Hong Kong, China, November 2019. Association for Computational Linguistics.
\newblock \doi{10.18653/v1/D19-1002}.
\newblock URL \url{https://aclanthology.org/D19-1002}.

\bibitem[Wu et~al.(2020)Wu, Chen, Kao, and Liu]{wu2020perturbed}
Zhiyong Wu, Yun Chen, Ben Kao, and Qun Liu.
\newblock Perturbed masking: Parameter-free probing for analyzing and interpreting {BERT}.
\newblock In \emph{Proceedings of the 58th Annual Meeting of the Association for Computational Linguistics}, pp.\  4166--4176, Online, July 2020. Association for Computational Linguistics.
\newblock \doi{10.18653/v1/2020.acl-main.383}.
\newblock URL \url{https://aclanthology.org/2020.acl-main.383}.

\bibitem[Xiong et~al.(2020)Xiong, Yang, He, Zheng, Zheng, Xing, Zhang, Lan, Wang, and Liu]{xiong2020layer}
Ruibin Xiong, Yunchang Yang, Di~He, Kai Zheng, Shuxin Zheng, Chen Xing, Huishuai Zhang, Yanyan Lan, Liwei Wang, and Tie-Yan Liu.
\newblock On layer normalization in the transformer architecture.
\newblock In \emph{Proceedings of the 37th International Conference on Machine Learning}, ICML'20. JMLR.org, 2020.

\bibitem[Yao et~al.(2021)Yao, Peng, Papadimitriou, and Narasimhan]{yao2021self}
Shunyu Yao, Binghui Peng, Christos Papadimitriou, and Karthik Narasimhan.
\newblock Self-attention networks can process bounded hierarchical languages.
\newblock In \emph{Proceedings of the 59th Annual Meeting of the Association for Computational Linguistics and the 11th International Joint Conference on Natural Language Processing (Volume 1: Long Papers)}, pp.\  3770--3785, Online, August 2021. Association for Computational Linguistics.
\newblock \doi{10.18653/v1/2021.acl-long.292}.
\newblock URL \url{https://aclanthology.org/2021.acl-long.292}.

\bibitem[Yun et~al.(2020)Yun, Bhojanapalli, Rawat, Reddi, and Kumar]{yun2020are}
Chulhee Yun, Srinadh Bhojanapalli, Ankit~Singh Rawat, Sashank Reddi, and Sanjiv Kumar.
\newblock Are transformers universal approximators of sequence-to-sequence functions?
\newblock In \emph{International Conference on Learning Representations}, 2020.
\newblock URL \url{https://openreview.net/forum?id=ByxRM0Ntvr}.

\bibitem[Zhang et~al.(2020)Zhang, Marklund, Gupta, Levine, and Finn]{zhang2020adaptive}
Marvin Zhang, Henrik Marklund, Abhishek Gupta, Sergey Levine, and Chelsea Finn.
\newblock Adaptive risk minimization: A meta-learning approach for tackling group shift.
\newblock \emph{arXiv preprint arXiv:2007.02931}, 2020.

\bibitem[Zhang et~al.(2022)Zhang, Backurs, Bubeck, Eldan, Gunasekar, and Wagner]{zhang2022unveiling}
Yi~Zhang, Arturs Backurs, Sébastien Bubeck, Ronen Eldan, Suriya Gunasekar, and Tal Wagner.
\newblock Unveiling transformers with lego: a synthetic reasoning task, 2022.
\newblock URL \url{https://arxiv.org/abs/2206.04301}.

\bibitem[Zhao et~al.(2023)Zhao, Panigrahi, Ge, and Arora]{zhao2023Transformers}
Haoyu Zhao, Abhishek Panigrahi, Rong Ge, and Sanjeev Arora.
\newblock Do transformers parse while predicting the masked word?, 2023.

\bibitem[Zhong et~al.(2023)Zhong, Liu, Tegmark, and Andreas]{zhong2023clock}
Ziqian Zhong, Ziming Liu, Max Tegmark, and Jacob Andreas.
\newblock The clock and the pizza: Two stories in mechanistic explanation of neural networks.
\newblock \emph{arXiv preprint arXiv: 2306.17844}, 2023.

\end{thebibliography}
\bibliographystyle{format_ICLR/iclr2024_conference}

\newpage 
\appendix

\tableofcontents

\newpage
\section{Additional Related Work}
\label{appendix:related_work}

\paragraph{Interpreting Transformer solutions}
Prior empirical works show that Transformers trained on natural language data can produce representations that contain rich syntactic and semantic information,
by designing a wide range of ``probing" tasks  \citep{raganato2018analysis, liu2019linguistic, hewitt2019structural, clark2019bert, tenney2019bert, hewitt2019designing, kovaleva2019revealing, lin2019open, wu2020perturbed, belinkov2022probing, liu2022representations}
(or other approaches using the attention weights or parameters in neurons directly \citealp{vig2019analyzing, htut2019attention, sun2021effective, eldan2023tinystories}).
However, there is no canonical way to probe the model, partially due to the huge design space of probing tasks, 
and even a slight change in the setup may lead to very different (sometimes even seemingly contradictory) interpretations of the result \citep{hewitt2019designing}.
In this work, we tackle such ambiguity through a different perspective---by developing formal (theoretical) understanding of solutions learned by Transformers. 
Our results imply that it may be challenging to try to interpret Transformer solutions based on individual parameters \citep{li2016visualizing, dar2022analyzing}, 
or based on constructive proofs (unless the Transformer is specially trained to be aligned with a certain algorithm, as in \citealp{weiss2021thinking}).

\paragraph{Interpreting attention patterns}
Prior works \citep[][\emph{inter alia}]{jain2019attention, serrano2019attention,rogers2020primer,grimsley2020attention, brunner2020on,prasanna2020bert,meister2021sparse,bolukbasi2021interpretability, haab2023attention} present negative results on deriving explanations from attention weights using approaches by \citet[][\emph{inter alia}]{vig2019analyzing,kobayashi2020attention}.
However, \citet{wiegreffe2019attention} argues to the contrary by pointing out flaws in the experimental design and arguments of some of the prior works; they also call for theoretical analysis on the issue.
Hence, a takeaway from these prior works is that expositions on explainability based on attention requires clearly defining the notion of explainability adopted (often task-specific).
In our work, we restrict our main theoretical analysis to the fully defined data distribution of Dyck language (Definition~\ref{def:distribution}),
and define ``interpretable attention pattern" as the stack-like pattern proposed in prior theoretical \citep{yao2021self} and empirical \citep{ebrahimi2020self} works. 
These concrete settings and definitions allow us to mathematically state our results and provide theoretical reasons.

\paragraph{Theoretical understanding of representability}
Methodologically, our work joins a long line of prior works that characterize the solution of neural networks via the lens of simple synthetic data, from class results on RNN representability~\citep{siegelmann1992rnn, gers2001lstm, weiss2018practical, suzgun2019lstm, merrill2019sequential,hewitt2020rnns},
to the more recent Transformer results on parity~\citep{Hahn20}, Dyck~\citep{yao2021self},
topic model~\citep{li2023Transformers}, and formal grammars in general
\citep{bhattamishra2020ability,li2021limitations,zhang2022unveiling, liu2023Transformers, zhao2023Transformers}. 
Our work complements prior works by showing that although representational results can be obtained via intuitive ``constructive proofs'' that assign values to the weight matrices,
the model does not typically converge to those intuitive solutions in practice.
Similar messages are conveyed in \citet{liu2023Transformers}, which presents different types of constructions using different numbers of layers.
In contrast, we show that there exist multiple different constructions even when the number of layers is kept the same.

There are also theoretical results on Transformers in terms of Turing completeness \citep{bhattamishra2020computational,Perez21},
universal approximatability \citep{yun2020are},
and statistical sample complexity \citep{wei2021statistically, edelman2022inductive},
which are orthogonal to our work.

\paragraph{Transformer optimization}
Given multiple global optima, 
understanding Transformer solutions requires analyzing the training dynamics.
Recent works theoretically analyze the learning process of Transformers on simple data distributions, e.g. when the attention weights only depend on the position information~\citep{jelassi2022vision}, or only depend on the content~\citep{li2023Transformers}.
Our work studies a syntax-motivated setting in which both content and position are critical. 
We also highlight that Transformer solutions are very sensitive to detailed changes, such as positional encoding, layer norm, sharpness regularization \citep{foret2020sharpness}, or pre-training task \citep{liu2022masked}.
On a related topic but towards different goals,
a series of prior works aim to improve the training process of Transformers with algorithmic insights \citep[][\emph{inter alia}]{nguyen2019Transformers, xiong2020layer, liu2020understanding, zhang2020adaptive, li2021robust}.
An end-to-end theoretical characterization of the training dynamics remains an open problem; recent works that propose useful techniques towards this goal include \citealp{gao2023overparameterized, deng2023attention}.

\paragraph{Mechanistic interpretability}
It is worth noting that the challenges highlighted in our work do not contradict the line of prior works that aim to improve \emph{mechanistic interpretability} into a trained model or the training process \citep{cammarata2020thread, elhage2021mathematical, olsson2022context, nanda2023progress,chughtai2023toy,li2023Transformers,wang2023interpretability,zhong2023clock}:
although we prove that components (e.g. attention scores) of trained Transformers do not generally admit intuitive interpretations based on the data distribution, 
it is still possible to develop circuit-level understanding about a particular model, or measures that closely track the training process, following these prior works.

\paragraph{Interpretable machine learning}
In even broader contexts of Interpretable Machine Learning in general,
\citet{lipton2017mythos} outlined common pitfalls of interpretability claims,
\citet{chen2022interpretable} recommended reasonable paths forward,
and \citet{bilodeau2022impossibility} proved impossibility results on applying some common classes of simple feature attribution methods on rich model classes.

\newpage
\section{Are interpretable attention patterns useful?}
\label{appendix:discussion}

Our results Section~\ref{sec:theory} and Section~\ref{sec:experiment_attention} demonstrate that Transformers are sufficiently expressive that a (near-)optimal loss on Dyck languages can be achieved by a variety of attention patterns, many of which may not be interpretable.

However, multiple prior works have shown that for multi-layer multi-head Transformers trained on natural language datasets, it is often possible to locate attention heads that produce interpretable attention patterns~\citep{vig2019analyzing, htut2019attention, sun2021effective}.
Hence, it is also illustrative to consider the \emph{``converse question"} of (Q1): 
when some attention heads do learn to produce attention patterns that suggest intuitive interpretations,
what benefits can they bring?

We discuss this through two perspectives:
\begin{itemize}[leftmargin=*]
    \item \textbf{Reliability of interpretation:} Is the Transformer necessarily implementing a solution consistent with such interpretation based on the attention patterns? (Section~\ref{appendix:misleading})
    \item \textbf{Usefulness for task performance:} Are those interpretable attention heads more important for the task than other uninterpretable attention heads? (Section~\ref{appendix:important})
\end{itemize}

We present preliminary analysis on these questions, and motivate future works on the interpretability of attention patterns using rigorous theoretical analysis and carefully designed experiments. 

\subsection{Can interpretable attention patterns be misleading?}
\label{appendix:misleading}

We show through a simple argument that interpretations based on attention patterns can sometimes be misleading, as we formalize in the following proposition: 

\begin{proposition}
\label{prop:attn_misleading}
Consider an $L$-layer Transformer $\Transformer$ (Equation~\eqref{eq:Transformer}). 
For any $W_K^{(l)}, W_Q^{(l)} \in \R^{\embedDim_a \times \embedDim} \; (l \in [L])$,
there exist $W_{\mathrm{Head}} \in \R^{2k \times \width}$ and $b_{\mathrm{Head}} \in \R^{2k}$ such that 
$\Transformer(\mathcal{Z}) = 0, \forall \mathcal{Z}$.
\end{proposition}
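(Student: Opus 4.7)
The proof I would give is essentially immediate, and the value of the proposition lies not in depth but in what it rules out. The plan is to observe that the final linear read-out $W_{\mathrm{Head}}$ (and the bias $b_{\mathrm{Head}}$, when allowed) sits on the outside of the entire computation graph described in \Cref{eq:Transformer}, so it controls the output regardless of what happens inside the attention modules. In particular, setting $W_{\mathrm{Head}} = 0$ and $b_{\mathrm{Head}} = 0$ forces $\Transformer(\mathcal{Z}) = W_{\mathrm{Head}}\bigl[f_L(\cdots(f_1(W_E\mathcal{Z})))\bigr]_{1:2k,(N+1)} + b_{\mathrm{Head}} = 0$ for every input $\mathcal{Z}$, independent of the $W_K^{(l)}, W_Q^{(l)}$ chosen.

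The only step that warrants a sentence of justification is the quantifier order: we are asked to handle \emph{all} choices of key/query matrices with a \emph{single} $(W_{\mathrm{Head}}, b_{\mathrm{Head}})$ pair, so the construction must not depend on $W_K^{(l)}, W_Q^{(l)}$. The choice $W_{\mathrm{Head}} = 0$, $b_{\mathrm{Head}} = 0$ obviously satisfies this, since it never looks at the hidden states at all. One could equally well zero out any of the value matrices $W_V^{(l)}$ or the final-layer projection $\Proj^{(L)}$ to get a similar effect, but the head choice is cleanest because it does not interact with the residual stream.

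The reason this trivial proposition is worth stating is interpretive rather than technical: it shows that the attention patterns $\sigma(\mask + (W_K^{(l)} X)^\top (W_Q^{(l)} X))$ can be \emph{arbitrary}, including any ``stack-like'' pattern one might find visually compelling, while the network computes the constant zero function. Hence no inference about the function computed by the Transformer can be drawn from the attention matrices alone; one necessarily has to also inspect $W_V^{(l)}$, $\Proj^{(l)}$, and $W_{\mathrm{Head}}$. The main ``obstacle,'' if there is one, is simply to make sure this reading is not mistaken for a deeper claim; the proof itself requires no work beyond exhibiting the trivial choice above.
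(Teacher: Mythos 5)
Your proof is correct and is exactly the paper's argument: the paper likewise notes that setting $W_{\mathrm{Head}} = 0$ and $b_{\mathrm{Head}} = 0$ suffices, and draws the same interpretive conclusion that the attention patterns can be arbitrary while the network computes the zero function. Your remark on the quantifier order (one head choice working uniformly over all key/query matrices) is a fine, if unnecessary, addition.
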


While its proof is trivial (simply setting $W_{\mathrm{Head}} = 0$ and $b_{\mathrm{Head}} = 0$ suffices),
Proposition~\ref{prop:attn_misleading} implies that  
the solution represented by the Transformer could possibly be  independent of the attention patterns in all the layers ($1$ through $l$). 
Hence, it could be misleading to interpret Transformer solutions solely based on these attention patterns.

Empirically, Transformers trained on Dyck indeed sometimes produce misleading attention patterns.

We present one representative example in  Figure~\ref{fig:attn_pattern_copy_task_non_interpretable_4_layer}, and Figure~\ref{fig:attn_pattern_copy_task_non_interpretable_1_layer},
in which \emph{all interpretable attention patterns are misleading}.

We also present additional results in Figure~\ref{fig:attn_pattern_copy_task_mixed},
in which \emph{some interpretable attention patterns are misleading, and some are not}.

\begin{figure}[ht]
  \centering
  \begin{minipage}[b]{0.23\textwidth}
    \includegraphics[width=\textwidth]{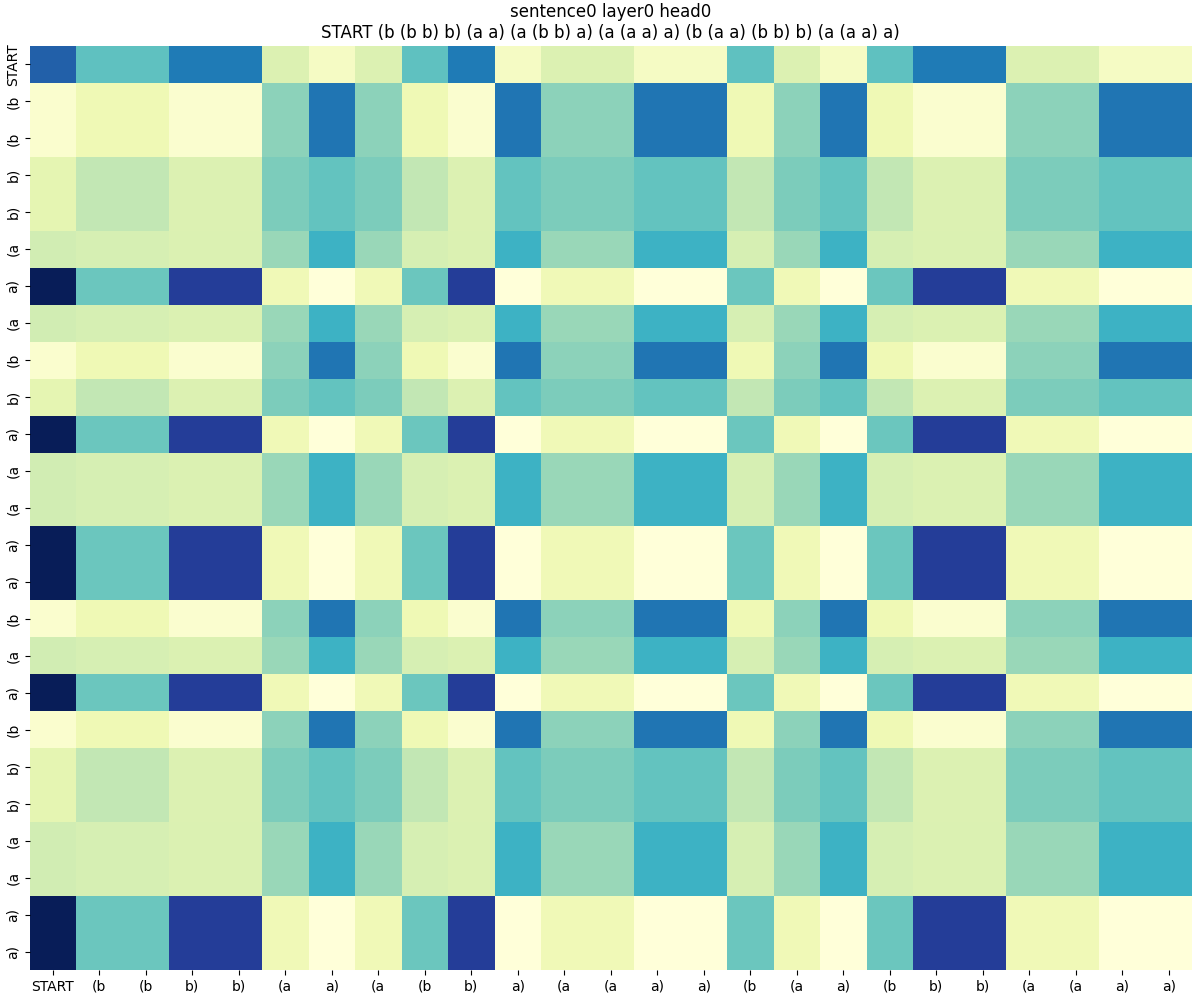}
  \end{minipage}
  \hfill
  \begin{minipage}[b]{0.23\textwidth}
    \includegraphics[width=\textwidth]{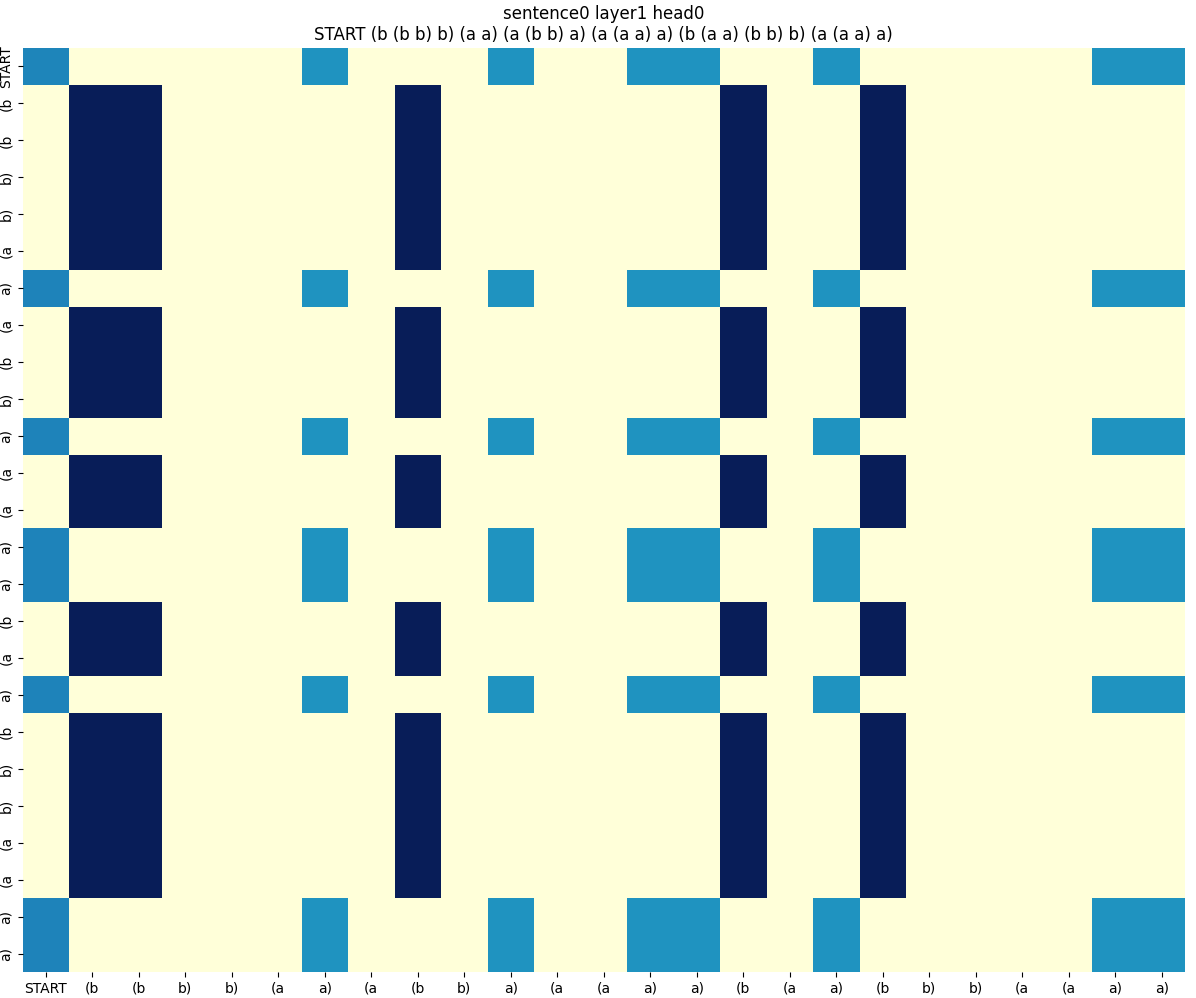}
  \end{minipage}
  \begin{minipage}[b]{0.23\textwidth}
    \includegraphics[width=\textwidth]{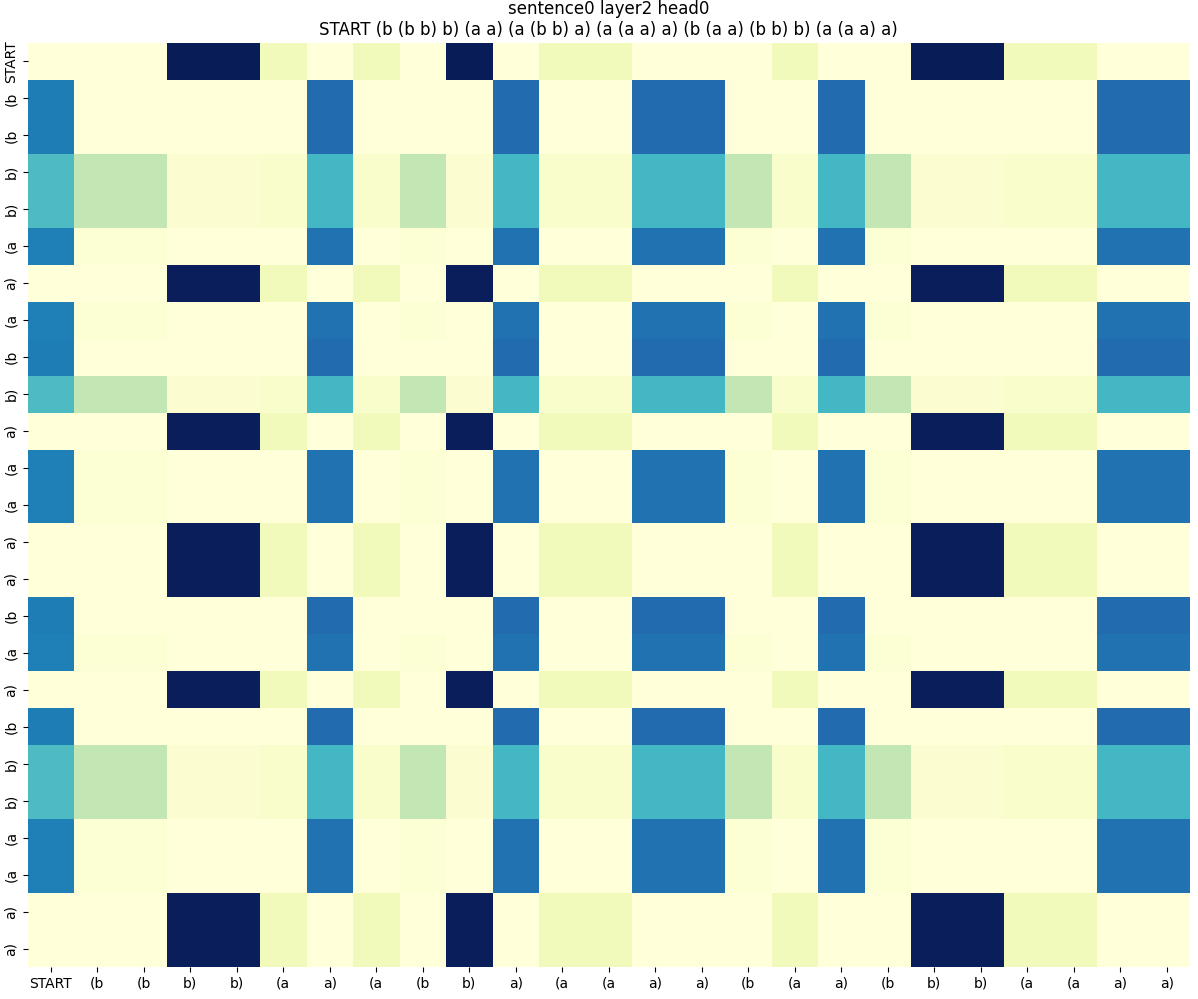}
  \end{minipage}
  \hfill
  \begin{minipage}[b]{0.23\textwidth}
    \includegraphics[width=\textwidth]{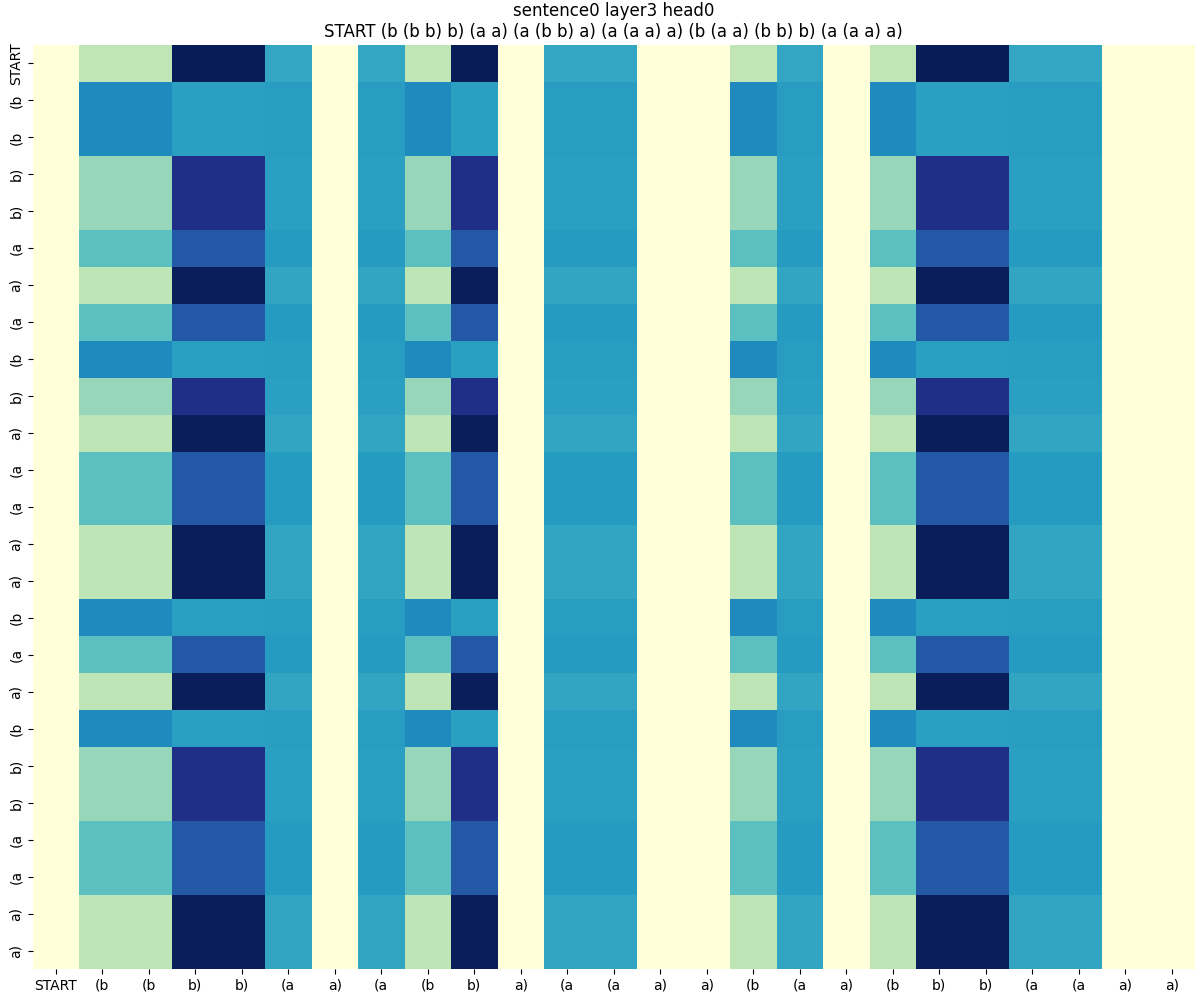}
  \end{minipage}
  \caption{\textbf{Even interpretable attention patterns can be misleading}:
    For a 4-layer Transformer trained on Dyck with the \emph{copying} task (with $>96\%$ validation accuracy), i.e. the output should be exactly the same as the input,
    the attention patterns in some layers seem interpretable: 
    (layer 2) attending to bracket type a) or (b;
    (layer 3) attending to closing bracketss;
    (layer 4) neve attending to bracket type a);
    However, none of them are informative of the copying task.
    This is possible because Transformers can use the residual connections (or weights MLPs or the value matrices) to solve copying, 
    bypassing the need of using attention.
  }
\label{fig:attn_pattern_copy_task_non_interpretable_4_layer}
\end{figure}

\begin{figure}[tbh]
    \centering
    \includegraphics[width=\textwidth]{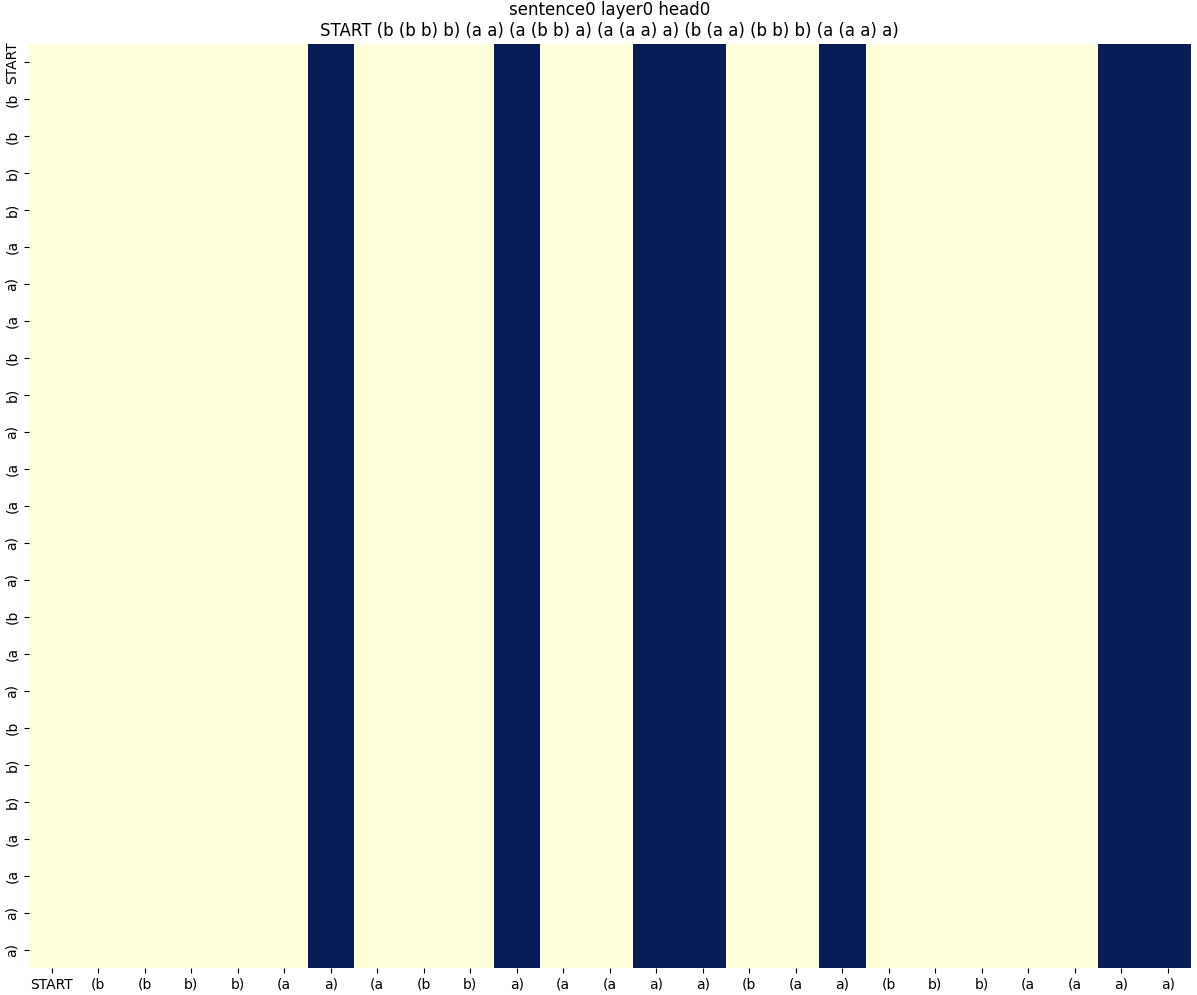}
    \caption{\textbf{Even interpretable attention patterns can be misleading}:
    For a 1-layer Transformer trained on Dyck with the \emph{copying} task (with $>90\%$ validation accuracy), i.e. the output should be exactly the same as the input,
    the attention pattern seems to be attending to closing brackets only, but that is not informative of the copying task.
    }
\label{fig:attn_pattern_copy_task_non_interpretable_1_layer}
\vspace{-0.3cm}
\end{figure}

\begin{figure}[tbh]
    \centering
    \begin{subfigure}[b]{0.35\textwidth}
      \includegraphics[width=\textwidth]{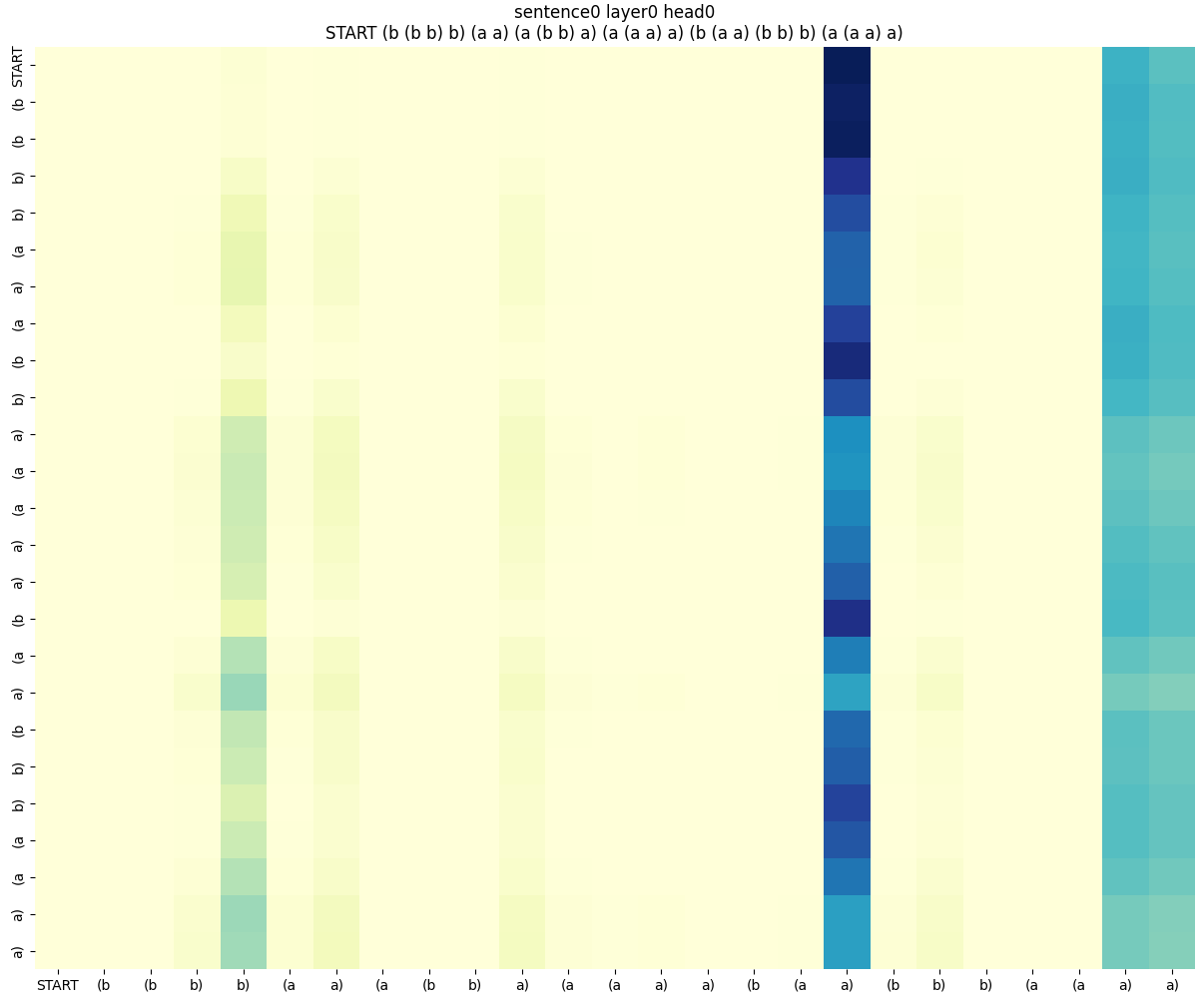}
      \caption{layer 1 of 4}
    \end{subfigure}
    \begin{subfigure}[b]{0.35\textwidth}
        \includegraphics[width=\textwidth]{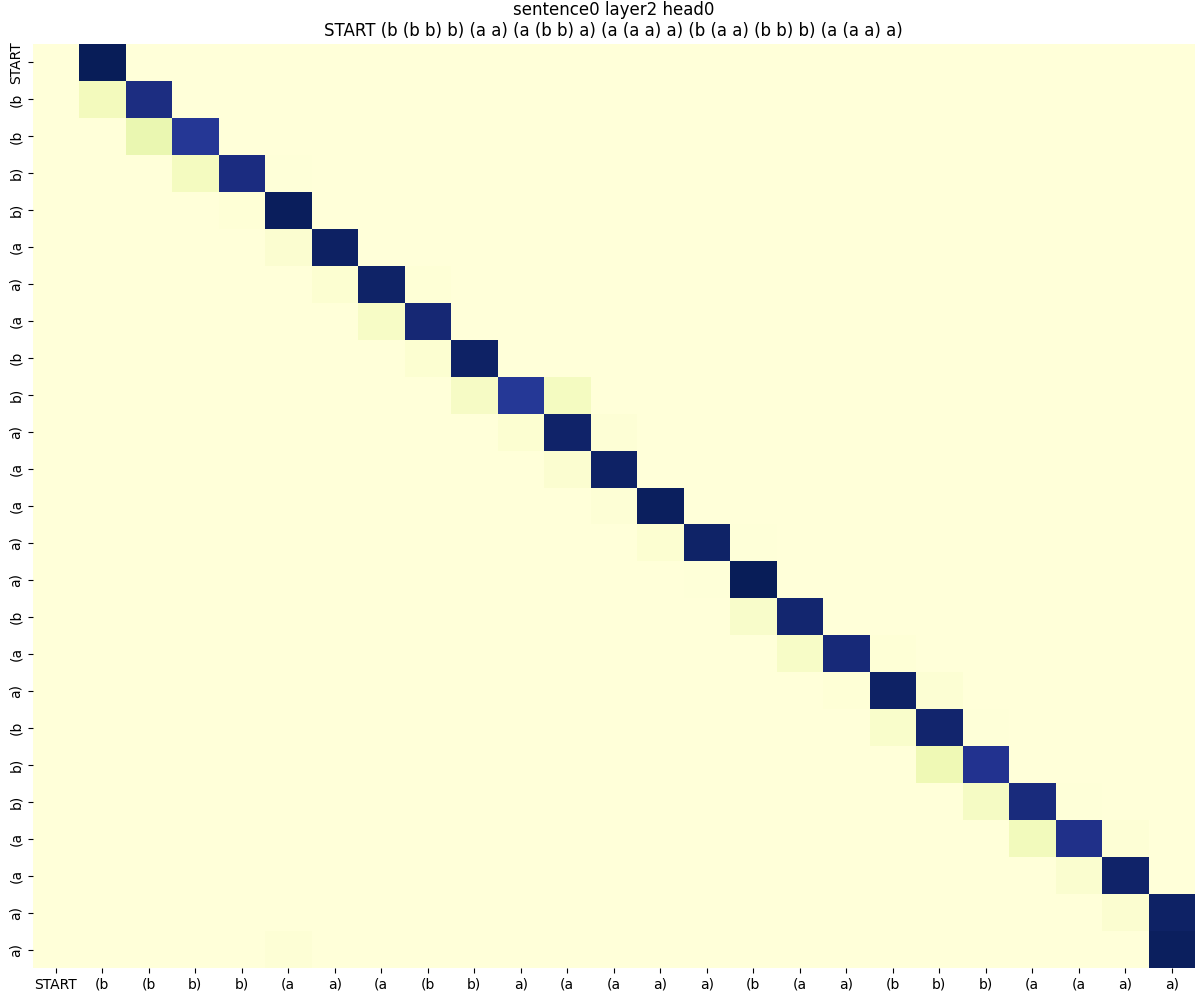}
        \caption{layer 3 of 4}
      \end{subfigure}

    \caption{\textbf{Even interpretable attention patterns can be misleading}:
    For a 4-layer Transformer trained on Dyck with the \emph{copying} task (with $>96\%$ validation accuracy), i.e. the output should be exactly the same as the input,
    both types of attention patterns are common: 
    (a) attending to closing bracketss, which is uninformative of the copying task;
    (b) attending to the current position, which solves the copying task.
    }
\label{fig:attn_pattern_copy_task_mixed}
\vspace{-0.3cm}
\end{figure}

Similar message has been conveyed in prior works \cite{bolukbasi2021interpretability}, 
and future works may aim to achieve the \emph{faithfulness}, \emph{completeness}, and \emph{minimality} conditions in \cite{wang2023interpretability}.

\clearpage
\subsection{Are attention heads with interpretable patterns more important?}
\label{appendix:important}

\citet{kovaleva2019revealing} observes that, when the ``importance'' of an attention head is defined as the performance drop the model suffers when the head is disabled, 
then for most tasks they test,
the most important attention head in each layer \emph{does not} tend to be interpretable.

However, experiments by \citet{voita2019analyzing} led to a seemingly contradictory observation: when attention heads are systematically pruned by finetuning the Transformer with a relaxation of $L_0$-penalty (i.e. encouraging the number of remaining attention heads to be small), 
most remaining attention heads that survive the pruning can be associated with certain functionalities such as positional, syntactic, or attending to rare tokens. 

These works seem to bring mixed conclusions to our question: are interpretable attention heads more important for a task than uninterpretable ones?
We interpret these results by conjecturing that the definition of ``importance" (reflected in their experimental design) plays a crucial role: 
\vspace{-0.8em}
\begin{itemize}[leftmargin=*]
    \item When the importance of an attention head is defined \emph{treating all other attention heads as fixed}, motivating experiments that prune/disable certain heads while keeping other heads unchanged \citep{michel2019are, kovaleva2019revealing}, the conclusion may be mostly pessimistic: mostly no strong connection between interpretability and importance.
    \item On the other hand, when the importance of an attention head is defined \emph{allowing all other attention heads to adapt to its change}, motivating experiments that jointly optimize all attention heads while penalizing the number of heads \citep{voita2019analyzing}, the conclusion may be more optimistic: the heads obtained as a result of this optimization tend to be interpretable.
\end{itemize}

We think the following trade-offs apply: 
\begin{itemize}[leftmargin=*]
    \item On one hand, the latter setting is more practical, since Transformers are typically not trained to explicitly ensure that the model performs well when a single attention head is individually disabled; rather, it would be more intuitive to think of a group of attention heads as jointly representing some transformation, so when one head is disabled, other heads should be fine-tuned to adapt to the change.
    \item On the other hand, when all other heads change too much during such fine-tuning, the resulting set of attention heads no longer admit an unambiguous one-to-one map with the original set of (unpruned) attention heads. As a result, the interpretability and importance obtained from the set of pruned heads do not necessarily imply those properties of the original heads.
\end{itemize}

A comprehensive study of this question involves multi-head extensions of our theoretical results (Section~\ref{sec:theory}), 
and carefully-designed experiments that take the above-mentioned trade-offs into consideration.
We think these directions are interesting future work.

\newpage
\section{Omitted Proofs in \texorpdfstring{\Cref{sec:theory}}{Section 3}}

\label{sec:app:theory}

\subsection{Proof of \texorpdfstring{\Cref{thm:balance_condition}}{Theorem 1}}
\label{sec:app:balance_condition}

The key step was outlined  in~\Cref{sec:theory}. We will restate the proof rigorously here.

\perfectbalance*

\begin{proof}

We prove the \textbf{sufficiency of the balanced condition} below; 
the proof for the \textit{necessity} has been given in~\Cref{sec:attention_not_enough}.

We will denote the dimension of $\ve(\token{t}{d})$ as $m$.

For any $i \in [k], \gdepth' \in [D]$, by~\Cref{eq:balance_condition}, we can assume that there exists $a_{i, \gdepth'} \in \R$ such that for all $j \in [k]$, $\gdepth \in [D]$, it holds that,
\begin{equation}
    \label{eq:def_a}
    a_{i, \gdepth'} \triangleq \left(\embed \left(\token{2i-1}{\gdepth'}\right) - \embed \left(\token{2i}{\gdepth'-1}\right) \right)^\top (W_K^{(2)})^\top W_Q^{(2)} \embed \left(\token{2j}{\gdepth}\right) .
\end{equation}

We will first define the possible index sets of $\token{t}{d}$ as $\mathcal{I} = \{(2t, d) \mid t \in [k], 0\ \le d \le D - 1\} \cup \{(2t - 1, d) \mid t \in [k], 1 \le d \le D\}$, and we will define the rank of $(t,d)$ as 
\begin{equation}
\label{eq:rank}
r(t,d) \triangleq \#\{(t_1, d_1) \mid t_1 < t \text{ or } t_1 = t , d_1 \le d, (t_1, d_1) \in \mathcal{I}\}    
\end{equation}
Then it is clear that $r(t, d)$ is a one-to-one mapping from $\mathcal{I}$ to $[2kD]$.
We will then define the collection of all $\ve(\token{t}{d})$ as $\mE$, satisfying that
$\mE_{:, r(t,d)} = \ve(\token{t}{d}), \mE_{:, 2kD + 1} = \ve(\start)$.

Because $\ve(\token{t}{d})$ are linearly independent, for any $(i,d) \neq (j, d') \in \mathcal{I}$, it holds that $\ve(\token{i}{d}) - \ve(\token{j}{d'}) \neq 0$.
Then based on~\Cref{lem:helper_subspace}, there exists a set of orthonormal vectors $\{\basis_i\}_{i \in [m - 2]}$,
such that for any $(i,d), (j, d') \in \mathcal{I}$,
it holds that
\begin{align}
    \sum_{i = 1}^{m - 2} \basis_i \basis_i^\top \left(\ve(\token{i}{d}) - \ve(\token{j}{d'})\right) &\neq (\ve(\token{i}{d}) - \ve(\token{j}{d'})  \label{eq:orthonormal_vectors_condition_1} \\
    \basis_i^\top 1^m &= 0  \label{eq:orthonormal_vectors_condition_2}
\end{align}

We will further construct the matrix $\mO$ as 
\footnote{Recall the definition of $r$ in \Cref{eq:rank}. 
Comparing $\mO_{:, r(2t, d - 1)}$ and $\mO_{:, r(2t - 1, d)}$: the idea is that a pair of matched brackets are represented by the same direction (i.e. the direction along $\basis_{tD + d}$), just with different norms.}
\begin{align}
    \label{eq:matrix_O}
    \mO_{:, r(2t, d - 1)} &= -\exp(a_{t, d}) \basis_{tD + d}, \nonumber \\
    \mO_{:, r(2t - 1, d)} &=  \basis_{tD + d}. \\
    \mO_{:, 2kD + 1} &= 0. \nonumber
\end{align}
for $t \in [k], d \in [D]$.

We can then choose $W_V^{(2)} \in \R^{m \times m}$ such that 
\begin{equation}
    \label{eq:Wv_construction}
    W_V^{(2)}\mE = \mO
\end{equation}
Such $W_V^{(2)}$ is guaranteed to exist, because $\mE$ is of full column rank by the linear independence assumption. 

Now based on this construction, we will show that the last column of unnormalized attention output (\Cref{eq:unnormalized}) depends only on the sequence of unmatched brackets when the last token is a closed bracket with depth $\gdepth$ greater than or equal to $1$. 
\footnote{
When depth $\gdepth = 0$, all brackets are matched, the groundtruth next-token distribution is the prior distribution over the open brackets. 
Because in~\Cref{eq:balance_condition} $d_1, d_2 \ge 1$,
we handle the depth $\gdepth = 0$ case separately in Case 2 ``t is even, $\gdepth = 0$" towards the end of this proof.
In the following, we focus on cases with depth $\gdepth \ge 1$.
}

For any valid Dyck prefix $p$ of length $n$ ending with a closed bracket $\token{2j}{\gdepth}$ satisfying $d  \ge 1$, suppose the list of unmatched open brackets in $p$ is $[\token{2j_1 - 1}{1}, \token{2j_2 - 1}{2}, \ldots, \token{2j_\gdepth - 1}{\gdepth}]$.
Then, the remaining tokens in $p$ are pairs of matching brackets. 
Denote them by $\token{2t_k - 1}{d_k}, \token{2t_k}{d_k - 1}$ for $k \in [K]$.
Then the input of the second layer of Transformer $X$, up to a permutation is
\begin{align*}
    XP = [\ve(\token{2t_1 - 1}{d_1}), \ve(\token{2t_1}{d_1 - 1}), \ldots, \ve(\token{2t_K - 1}{d_K}), \ve(\token{2t_K}{d_K - 1}), \ve(\token{2j_1 - 1}{1}), \ldots \ve(\token{2j_\gdepth - 1}{\gdepth}), \ve(\start)].
\end{align*}

We will focus on the last column of the unnormalized attention output 
\begin{align}
    \tilde a_2(X; \theta^{(2)})_{:,n + 1} &= \proj \Big[ W_V^{(2)} X \cdot  \tilde \sigma\Big(\mask \cdot {(W_K^{(2)}X)^\top (W_Q^{(2)} X)}\Big)\Big]_{:, n + 1} \nonumber\\
    &= \sum_{s = 1}^{n + 1} \proj (W_V^{(2)} X)_{:, s} \Big[ \tilde \sigma\Big(\mask \cdot {(W_K^{(2)}X)^\top (W_Q^{(2)} X)}\Big)\Big]_{s, n + 1} \nonumber\\
    &= \sum_{s = 1}^{n + 1} \proj (W_V^{(2)} X)_{:, s} \exp\left(\Big( {(W_K^{(2)}X)^\top (W_Q^{(2)} X)}\Big)_{s, n + 1}\right) \nonumber\\
    &= \sum_{s = 1}^{n + 1} \proj (W_V^{(2)} X)_{:, s} \exp\left( {(W_K^{(2)}X)_{:, s}^\top (W_Q^{(2)} X)_{:, n + 1}} \right)  \nonumber\\
    &= \sum_{k = 1}^K \left(u(\token{2t_k}{d_k - 1}, \token{2j}{\gdepth}) + u(\token{2t_k - 1}{d_k}, \token{2j}{\gdepth})\right) +  \sum_{s = 1}^{\gdepth} \update(\token{2j_s - 1}{s},\token{2j}{\gdepth})
    \label{eq:unnormalized_attention_output}
\end{align}
in which the last line is by definition of $\update(\cdot, \cdot)$ in \Cref{eq:update}.

For any indices $s, j_s, j, \gdepth$, we can simplify the expression for $\update(\token{2j_s - 1}{s},\token{2j}{\gdepth})$ by observing that
\begin{align}
    \update(\token{2j_s - 1}{s},\token{2j}{\gdepth}) &= \proj \exp\left(\embed \big(\token{2j_s - 1}{s}\big)^\top (W_K^{(2)})^\top W_Q^{(2)} \embed \big(\token{2j}{\gdepth}\big)\right) W_V^{(2)} \embed\big(\token{2j_s - 1}{s}\big) \, \text{by Eq \ref{eq:update}} \nonumber\\
    &= \proj \exp\left(\embed \big(\token{2j_s - 1}{s}\big)^\top (W_K^{(2)})^\top W_Q^{(2)} \embed \big(\token{2j}{\gdepth}\big)\right) \mO_{:, r(2j_s - 1, s)} \quad \text{by Eq \ref{eq:Wv_construction}} \nonumber\\
    &= \proj \exp\left(\embed \big(\token{2j_s - 1}{s}\big)^\top (W_K^{(2)})^\top W_Q^{(2)} \embed \big(\token{2j}{\gdepth}\big)\right)\basis_{j_sD + s} \quad \text{ by \Cref{eq:matrix_O}} \nonumber\\
    &= \exp\left(\embed \big(\token{2j_s - 1}{s}\big)^\top (W_K^{(2)})^\top W_Q^{(2)} \embed \big(\token{2j}{\gdepth}\big)\right)\basis_{j_sD + s} \quad \text{ by \Cref{eq:orthonormal_vectors_condition_2}}.
    \label{eq:update_expanded_open}
\end{align}
Likewise by \Cref{eq:update}, \Cref{eq:Wv_construction}, \Cref{eq:matrix_O}, \Cref{eq:orthonormal_vectors_condition_2}
\begin{equation}
    \label{eq:update_expanded_close}
    \update(\token{2j_s}{s-1},\token{2j}{\gdepth}) = - \exp\left(\embed \big(\token{2j_s}{s-1}\big)^\top (W_K^{(2)})^\top W_Q^{(2)} \embed \big(\token{2j}{\gdepth}\big)\right) \exp(a_{j_s, s}) \basis_{j_s D + s}
\end{equation}

By \Cref{eq:update_expanded_open} and \Cref{eq:update_expanded_close},
\begin{align}
    &\quad u(\token{2t_k}{d_k - 1}, \token{2j}{\gdepth}) + u(\token{2t_k - 1}{d_k}, \token{2j}{\gdepth}) \nonumber\\
    &= \exp\left(\embed \big(\token{2t_k - 1}{d_k}\big)^\top (W_K^{(2)})^\top W_Q^{(2)} \embed \big(\token{2j}{\gdepth}\big)\right)\basis_{t_k D + d_k} \nonumber\\
    &\quad - \exp\left(\embed \big(\token{2 t_k}{d_k-1}\big)^\top (W_K^{(2)})^\top W_Q^{(2)} \embed \big(\token{2j}{\gdepth}\big)\right) \exp(a_{t_k, d_k}) \basis_{t_k D + d_k} \nonumber\\
    &= \big[ \exp\left(\embed \big(\token{2t_k - 1}{d_k}\big)^\top (W_K^{(2)})^\top W_Q^{(2)} \embed \big(\token{2j}{\gdepth}\big)\right) \nonumber\\
    &\quad - \exp\left(\embed \big(\token{2 t_k}{d_k-1}\big)^\top (W_K^{(2)})^\top W_Q^{(2)} \embed \big(\token{2j}{\gdepth}\big) + a_{t_k, d_k} \right) \big] \basis_{t_k D + d_k} \nonumber\\
    &= 0 
    \label{eq:u_matched_sum}
\end{align}
in which the last line is because the terms inside $\big[ \cdots \big]$ cancel each other,
because by \Cref{eq:def_a} 
\[
\embed \big(\token{2t_k - 1}{d_k}\big)^\top (W_K^{(2)})^\top W_Q^{(2)} \embed \big(\token{2j}{\gdepth}\big)   
= \embed \big(\token{2t_k}{d_k - 1}\big)^\top (W_K^{(2)})^\top W_Q^{(2)} \embed \big(\token{2j}{\gdepth}\big) + a_{t_k,d_k}
\]

Plugging \Cref{eq:u_matched_sum} and \Cref{eq:update_expanded_open} into \Cref{eq:unnormalized_attention_output},
\begin{align}
    \tilde a_2(X; \theta^{(2)})_{:,n + 1} &=  \sum_{s = 1}^{\gdepth} \update(\token{2j_s - 1}{s},\token{2j}{\gdepth}) \notag \\
    &=  \sum_{s = 1}^{\gdepth} \exp\left(\embed \big(\token{2j_s - 1}{s}\big)^\top (W_K^{(2)})^\top W_Q^{(2)} \embed \big(\token{2j}{\gdepth}\big)\right)\basis_{j_sD + s} \label{eq:decompose}
\end{align}

Therefore, $\tilde a_2(X; \theta^{(2)})_{:,n + 1}$ lies in the span of $\{\basis_{j_sD + s}\}_{s \in [d]}$. We will from now on assume $\langle \LN(\tilde a_2(X; \theta^{(2)})_{:,n}), \basis_{j_sD + s} \rangle >  M$  for all possible choices of $p$ ending with a closed bracket with grammar depth at least $1$ for some constant $M \in (0,1)$. Here $M$ exists because

\begin{align*}
    \langle \LN(\tilde a_2(X; \theta^{(2)})_{:,n}), \basis_{j_sD + s} \rangle &= \frac{\exp\left(\embed \big(\token{2j_s - 1}{s}\big)^\top (W_K^{(2)})^\top W_Q^{(2)} \embed \big(\token{2j}{\gdepth}\big)\right)}{\sqrt{\sum_{s' = 1}^d \exp\left(2\embed \big(\token{2j_s' - 1}{s}\big)^\top (W_K^{(2)})^\top W_Q^{(2)} \embed \big(\token{2j}{\gdepth}\big)\right)}} > 0,
\end{align*}
for all possible combination of $j_k, k \in [d]$ and $s$, and there are only finite number of such combinations.

\paragraph{Constructing the projection function $\Proj^{(2)}$}
We will finally show there exists a 6-layer MLP $\Proj^{(2)}$ with width $O(D^2k^2)$, such that for any dyck prefix $q$ with $n$ being the length of $q$, $X$ being the input of the second layer given $q$ and $\prob(p)$ being the groundtruth next-token probability vector given $q$ \footnote{That is $\prob(q)_{t} = \prob(\text{The next token of $q$ has type $t$})$}, it holds that,
$\Proj^{(2)} \left(\LN(\tilde a_2(X; \theta^{(2)})_{:,n + 1}) + X_{:,n + 1} \right) = \prob(q)$. 

We will assume the last token of $q$ is $\token{t}{\gdepth}$.
Suppose that $\basis_{m - 1}, \basis_m$ is an orthonormal basis of the normal space of $\text{span}\{\basis_1, .., \basis_{m - 2}\}$, then we can first observe that for $U = \basis_m \basis_m^\top + \basis_{m -1} \basis_{m - 1}^\top$, it holds that 
\begin{align*}
U (\LN(\tilde a_2(X; \theta^{(2)})_{:,n + 1}) + X_{:,n + 1}) &= U \ve({\token{t}{\gdepth}}).
\end{align*}
is unique for every $t, \gdepth$. Then based on~\Cref{lem:interpolating_mlp}, there exists a 2-layer MLP with width $4kD$ that maps $U (\LN(\tilde a_2(X; \theta^{(2)})_{:,n + 1}) + X_{:,n + 1})$ to $(t, \gdepth)$. This implies that there exists a 2-layer MLP with width $4kD$ that maps $\LN((\tilde a_2(X; \theta^{(2)})_{:,n}) + X_{:,n}$ to $(t, \gdepth)$.

Further, let matrix $U' = \sum_{j = 1}^{Dk} \vo_j \basis_j^\top$ where $\vo_j$ is the $Dk$ dimension one-hot vector with the $j-$th entries being $1$. Then when $t$ is an even number and $\gdepth \ge 1$, based on~\Cref{eq:decompose} and the definition of $M$,
\begin{align*}
    U'(\LN(\tilde a_2(X; \theta^{(2)})_{:,n + 1}) + X_{:,n + 1})_{t'D + \gdepth'} & \begin{cases}
        = 0,&\token{2t' - 1}{\gdepth'} \text{ is not an unmatched open brackets in $p$}.\\
        > M,&\token{2t' - 1}{\gdepth'} \text{ is an unmatched open brackets in $p$}.
    \end{cases}
\end{align*}

Then based on~\Cref{lem:helper_arg_max}, there exists 2-layer MLP with width $kD$ that operates on $\left( U'(\LN(\tilde a_2(X; \theta^{(2)})_{:,n + 1}) + X_{:,n + 1})_{t'D + \gdepth'} \right)_{t' \in [k]}$ for a fixed $\gdepth'$ and output the nonzero index in it, if such index exists. Hence, we can choose the weight of the first and second layer of $\Proj^{(2)}$, such that the output of the second layer is $(t, d) \oplus x$, where $2x_{d'} - 1$ is the type of the unmatched open brackets with grammar depth $d'$ if $t$ is an even number, $d \ge d' \ge 1$. 

Now based on~\Cref{lem:helper_index}, we can choose the third and fourth layer of $\Proj^{(2)}$ to perform indexing and let the output of the fourth layer be $(t, d, y)$, where $y = x_d$ when $d \ge 1$. 
\footnote{When $d = 0$, $y$ does not matter since there is no unmatched open brackets.}
Notice that this triplet contains all the necessary information to infer $\prob(q)$ because it uniquely determines the type of last unmatched open bracket,
\begin{enumerate}[leftmargin=*]
    \item If $t$ is odd (i.e. the last bracket is open), and then the type of last unmatched open bracket is $t$.
    \item If $t$ is even and $d = 0$, then all the brackets is matched.
    \item If $t$ is even and $d \ge 1$, then the type of last unmatched bracket is $y$.
\end{enumerate}
One may finally construct a 2-layer MLP $f$ that maps $(t, d, y)$ to the corresponding probability vector.
As the input of $\Proj$ has bounded norm, 
\begin{align*}
    \| \LN(\tilde a_2(X; \theta^{(2)})_{:,n + 1}) + X_{:,n + 1} \|_2 \le 1 + \max_{t,d} \| \embed(\token{t}{d}) \|,
\end{align*}
the output of the constructed 4 layers also has a bounded norm. Hence, we can assume there exists constant $M' > 1$, such that $y \le M'$. Now we will discuss by the value of $t$,
\begin{enumerate}[leftmargin=*]
\item $t$ is odd, then one can neglect the third dimension and the correct probability is determined by $d$ and can be represented by a width-$2D$ network based on~\Cref{lem:interpolating_mlp}.
\item $t$ is even. When $d = 0$, one can construct a width-$1$ network mapping any $y$ to the correct probability distribution as it is unique. When $d \ge 1$, one can construct a width-$2K$ network mapping $x_d \in [K]$ to the correct probability distribution based on~\Cref{lem:interpolating_mlp}. Then by~\Cref{lem:helper_choose_function}, one can construct a width-$4KD$ network that maps $(d,y)$ to the corresponding probability distribution.
\end{enumerate}
Putting together and using~\Cref{lem:helper_choose_function} again, one can construct a width-$8K^2D$ network that maps $(t,d,y)$ to the correct next token probability prediction. The proof is then completed.

\end{proof}

\subsection{Implication of our results to larger models}
\label{sec:app:extension}

Recall that the main conclusion of our paper is that interpretability based on a single Transformer component (e.g. an attention pattern or an MLP block) can be unreliable, since the set of optimal solutions can give rise to a large set of attention patterns and pruned MLP weights. 
\Cref{sec:theory} has demonstrated this with simple two-layer Transformers.
The simplicity of this architecture choice is intentional,
since our theory on two-layer Transformers directly implies similar conclusions for larger models,
as we discuss in this section.

Intuitively, when moving to more complex architectures, the set of solutions can only grow and complicate interpretability further, hence our main conclusion still stands. 
For example, even though \Cref{thm:balance_condition} and \Cref{thm:lipschitz_balance} are stated for 2-layer Transformers only, the constructed solutions can be trivially extended to multiple layers by e.g. letting the higher layers perform the identity function,
or removing \Cref{assump:min_first_layer} and allowing the model to flexibly use or ignore positional information. 
More precisely:
\vspace{-0.5em}
\begin{itemize}[leftmargin=*]
    \item For Transformers with greater width, our \Cref{thm:balance_condition} applies directly, since the construction does not depend on the width. 

    \item For Transformers with greater depth, it suffices to show that additional layers can perform the identity function. 
    To this end, one can utilize the residue link in the Transformer layer and choose the value matrix to be zero and the FFN (with or without residue connection) to be identity. 
    This construction is implicitly assuming LayerNorm will map zero vector to zero vector, which is true for the common PyTorch implementation and for our paper. 
    Also, it is worth noting that this holds for both the architecture we considered in the paper and the standard GPT-2 architecture.
\end{itemize}

\subsection{Proof of \texorpdfstring{\Cref{cor:uniform_attention}}{Corollary 1}}

\coruniformattention*
\begin{proof}
    We will first construct a uniform attention first layer that can generate the embedding in ~\Cref{eq:embed_large}.
    Suppose $Z$ is the one-hot embeddings of a prefix $p$ of length $n$, where each token of type $t$ for $t \in [2k]$ is encoded as $\vo_{t}$ and the starting token is encoded as $\vo_{2k + 1}$.
    Then it holds that
    \begin{align}
        \Big[Z \sigma\Big(\mask \cdot {(W_K^{(1)}Z)^\top (W_Q^{(1)} Z)}\Big)\Big)\Big]_{:, n + 1} &=  \sum_{i = 1}^{2k} \#\{\text{token of type $t$ in $p$}\} \vo_t + \vo_{2k + 1}. \label{eq:attention_embedding_first_layer}
    \end{align}

    Then we can choose $W_V^{(1)}$ such that for $x \in \R^{2k+1}$,
    \begin{align*}
        (W_V^{(1)} x)_1 =& \sum_{i = 1}^k x_{2i - 1} - x_{2i},
        \\
        (W_V^{(1)} x)_2 =& x_{2k + 1},\\
        (W_V^{(1)}x)_i =& 0, \forall i \ge 3.
    \end{align*}

    Hence it holds That
    \begin{align*}
        \Big[W_V^{(1)}Z \sigma\Big(\mask \cdot {(W_K^{(1)}Z)^\top (W_Q^{(1)} Z)}\Big)\Big)\Big]_{:, {n+1}} = \#\{\text{depth of $p_n$}\} \vo_1 + \vo_2.
    \end{align*}

    It is then easy to check $\LN\left(\Big[W_V^{(1)}Z \sigma\Big(\mask \cdot {(W_K^{(1)}Z)^\top (W_Q^{(1)} Z)}\Big)\Big)\Big]_{:, {n+1}}\right) + Z_{:, n + 1}$ is uniquely determined by the type and depth of $p_n$ without repetition. Then by~\Cref{lem:interpolating_mlp}, there exists a 2-layer ReLU MLP with width $O(k^2D^2)$ that can map $\LN\left(\Big[W_V^{(1)}Z \sigma\Big(\mask \cdot {(W_K^{(1)}Z)^\top (W_Q^{(1)} Z)}\Big)\Big)\Big]_{:, {n+1}}\right) + Z_{:, n + 1}$ to the embedding in~\Cref{eq:embed_large}.
    It is then easy to see that the condition in~\Cref{thm:balance_condition} is satisfied as $W_K^{(2)} = W_Q^{(2)} = 0$. Hence the second layer can be constructed to let the Transformer to output the correct next token probability.
\end{proof}

\ifthenelse{\boolean{ArXiv}}
{\subsection{Proof of \texorpdfstring{\Cref{thm:lipschitz_balance}}{Theorem 2}}
\label{sec:app:lipschitz_balance}
}
{

}

\ifthenelse{\boolean{ArXiv}}
{
\approximatebalance*
}
{
\subsubsection{Proof of \texorpdfstring{\Cref{thm:lipschitz_balance}}{Theorem 2}}
\label{sec:app:lipschitz_balance}
}

\begin{proof}

The key idea is similar to the proof of necessity in~\Cref{thm:balance_condition}.
That is, we will construct two input sequences with different next-word distributions, and show that the approximate balance condition must hold so that inserting (a bounded number of) pairs of matching brackets does not collapse the two predicted distributions given by the Transformer.

\textbf{Constructing the input sequences.}

Let $\vt := \arg \min_{\tilde\vt \in [k]^{\gdepth-1} } \| Q(2j, \gdepth, \tilde\vt) \|_2$,
and let $\vt'$ denote the prefix that minimizes $\|Q(2j, d, \tilde{\vt})\|_2$ subject to the constraint that $\vt'$ must differ from $\vt$ in the last (i.e. ${(d - 1)}_{th}$) position,
i.e. 
\begin{align*}
    \vt' &= \arg \min_{\tilde\vt' \in [k]^{\gdepth-1}, \vt'_{d - 1} \neq \vt_{d - 1}} Q(2j, d, \tilde{\vt'}).
\end{align*}
The motivation for such choices of $\vt, \vt'$ is that since they differ at least by the last position which is an open bracket, they must lead to different next-word distributions.
Note also that $P_{\gdepth, j}[\bar\theta^{(2)}] = \|Q(2j, \gdepth, \vt')\|$.

With the above definition of $\vt, \vt'$,
consider two valid Dyck prefixes $p_1$ and $p_2$ with length no longer than $N$, defined as follows:
for any $\gdepth, \gdepth' \in [D], i, j \in [k]$,
consider a common prefix $p = \underbrace{\tokenNoDepth{2i - 1} \ldots \tokenNoDepth{2i - 1}}_{\gdepth' \text{ open brackets }} \underbrace{\tokenNoDepth{2i - 1} \tokenNoDepth{2i} \ldots \tokenNoDepth{2i -1} \tokenNoDepth{2i}}_{(\lfloor \frac{N}{2}\rfloor - \gdepth' - \gdepth - 1) \text{ pairs }} \underbrace{\tokenNoDepth{2i} \ldots \tokenNoDepth{2i}}_{\gdepth' \text{ closed brackets }}$,
where $\tokenNoDepth{i}$ denotes a token with type $i$ whose depth is implicit from the context.
Set $p_1, p_2$ as 
\begin{align*}
    p_1 &= p \concat \vt \concat \tokenNoDepth{2j -1} \tokenNoDepth{2j},
    \\
    p_2 &= p \concat \vt' \concat \tokenNoDepth{2j -1} \tokenNoDepth{2j}.
\end{align*}
That is, $p_1, p_2$ differ in the last unmatched open bracket.
In the following, we will show that the approximate balance condition must hold for the predictions on $p_1, p_2$ to be sufficiently different.

\textbf{Bounding the difference in Transformer outputs.}
For a Transformer $\Transformer$ with second layer parameters $\bar\theta_N^{(2)}$, with $\nexttoken(p)$ indicating the next token probability given a prefix $p$, 
{
\color{black}
by triangle inequality,
its outputs on $p_1, p_2$ satisfy
\begin{align}
    &\quad \| \Transformer[\bar\theta_N^{(2)}](p_1) - \Transformer[\bar\theta_N^{(2)}](p_2) \|_2  \nonumber \\
    &\geq \|\nexttoken(p_1) - \nexttoken(p_2)\|_2 -  \left( \|\Transformer[\bar\theta_N^{(2)}](p_1) - \nexttoken(p_1)\|_2 + \|\Transformer[\bar\theta_N^{(2)}](p_2) - \nexttoken(p_2)\|_2 \right) \label{eq:diff_T_p1_p2}
\end{align}

Bounding each term separately: 
\[
\|\nexttoken(p_1) - \nexttoken(p_2)\|_2 
\ge \frac{1}{\sqrt{2k}} \|\nexttoken(p_1) - \nexttoken(p_2)\|_1
= \frac{1}{\sqrt{2k}} \TV(p_1, p_2)
\]
where $\TV(p_1, p_2)$ denotes the TV distance in the next-word distributions from $p_1$ and $p_2$,
and 
\[
\|\Transformer[\bar\theta_N^{(2)}](p_1) - \nexttoken(p_1)\|_2
\le \sqrt{\epsilon}
\]
because $\Loss(\Transformer[\bar\theta_N^{(2)}], \distrib) \le \frac{q(1-q)}{k^2})^{N}\epsilon$ and the probability of sampling any prefix $p$ is greater than $(\frac{q(1-q)}{k^2})^{N}$, 
implying that the per sample next-token squared loss on prefix $p$ is no greater than $\epsilon$.
Likewise
\[
\|\Transformer[\bar\theta_N^{(2)}](p_2) - \nexttoken(p_2)\|_2
\le \sqrt{\epsilon}
\]

Plugging into \Cref{eq:diff_T_p1_p2},
\begin{align}
\label{eq:diff_output}
    \| \Transformer[\bar\theta_N^{(2)}](p_1) - \Transformer[\bar\theta_N^{(2)}](p_2) \|_2
    \geq \frac{1}{\sqrt{2k}} \TV(p_1, p_2)
    - 2 \sqrt{\epsilon}
\end{align}

}

Define by $A_p$ the contribution of $p$ to the attention output (before LayerNorm) of the last position of $p_1, p_2$:
\begin{align}
    A_p =& \sum_{1\leq \gdepth'' < \gdepth'} \left(  \update(\token{2j}{\gdepth-1}, \token{2i}{\gdepth''-1}) + \update(\token{2j}{\gdepth-1}, \token{2i-1}{\gdepth''})\right)
    \nonumber
    \\
    &+ \lfloor \frac{N - 2 \gdepth' - 2 \gdepth}{2}\rfloor \left(  \update(\token{2j}{\gdepth-1}, \token{2i}{\gdepth'}) + \update(\token{2j}{\gdepth-1}, \token{2i-1}{\gdepth'+1})\right).
\label{eq:Ap}
\end{align}
The attention outputs (before LayerNorm) of $p_1, p_2$, denoted by $A(p_1)$ and $A(p_2)$, satisfy that 
\begin{align}
\proj A(p_1) &= \proj (A_p + Q(2j, d, \vt)), \nonumber \\
\proj A(p_2) &= \proj (A_p + Q(2j, d, \vt')). \label{eq:proj_attn_output}
\end{align} 
Note that for any prefix $p'$,
\begin{align}
&\Transformer[\bar\theta_N^{(2)}](p')
    = \Proj^{(2)}\big(\LN_{\lnConst}(\proj A(p'))\big) + \ve(\token{2i}{\gdepth'})
    \\
    =& \Proj^{(2)}\Big(\frac{\proj A(p')}{\|\proj A(p')\|}\Big) + \ve(\token{2i}{\gdepth'}), \label{eq:T_and_g2}
\end{align}
where $\Proj^{(2)}$ is $\gamma$-Lipschitz.
Hence {\color{black} by \Cref{eq:T_and_g2} and \Cref{eq:diff_output}},
we have
\begin{align}
\label{eq:diff_attn_output}
     \Big\| \frac{\proj A(p_1)}{\| \proj A(p_1)\|_2} - \frac{ \proj A(p_2)}{\| \proj A(p_2)\|_2}\Big\|_2 
     \geq \frac{\TV(p_1, p_2)}{\sqrt{2k}\gamma} - \frac{2 \sqrt{\epsilon}}{\gamma}
     = \Omega_{\frac{1}{\gamma}, \sqrt\epsilon}(1).
\end{align}
Here the TV distance is lower bounded by a constant due to the construction of $p_1, p_2$, where $\vt, \vt'$ differ at the last open bracket.

We will then show that $A_p$ should not be too much larger in norm than $Q(2j, \gdepth, \vt)$ or $Q(2j, \gdepth, \vt')$.
First, let's state a helper lemma about the contrapositive: 
\begin{lemma}\label{lem:geometric}
    For any $\epsilon > 0$, there exists a constant $R_{\epsilon}$, such that for any $a,b \in \R^d$ and any $r \in \R^d$ such that $ \|r\|_2 \ge R_{\epsilon} \cdot \max\{\|a\|_2, \|b\|_2\}$,
    it holds that
    \begin{align*}
        \Big \|\frac{a+r}{\|a +r\|_2} - \frac{b + r}{\|b + r\|_2} \Big \|_2 \le \epsilon.
    \end{align*}
\end{lemma}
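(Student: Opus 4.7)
The plan is to prove the lemma by showing that both $\frac{a+r}{\|a+r\|_2}$ and $\frac{b+r}{\|b+r\|_2}$ are individually close to $\frac{r}{\|r\|_2}$ whenever $\|r\|_2$ dominates $\|a\|_2$ and $\|b\|_2$, and then combine via the triangle inequality. Intuitively, when the ``signal'' $r$ is much larger in norm than the perturbations $a,b$, both perturbed vectors point in nearly the same direction as $r$ itself, so their unit normalizations are nearly identical.

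The key quantitative step is the following explicit estimate: for any nonzero $r$ and any $a$ with $\|r\|_2 > \|a\|_2$,
\begin{align*}
    \frac{a+r}{\|a+r\|_2} - \frac{r}{\|r\|_2}
    = \frac{(a+r)\|r\|_2 - r\|a+r\|_2}{\|a+r\|_2 \,\|r\|_2}
    = \frac{a\|r\|_2 + r(\|r\|_2 - \|a+r\|_2)}{\|a+r\|_2 \,\|r\|_2}.
\end{align*}
Taking norms and applying the reverse triangle inequality $\bigl|\|r\|_2 - \|a+r\|_2\bigr| \le \|a\|_2$, together with the lower bound $\|a+r\|_2 \ge \|r\|_2 - \|a\|_2$ in the denominator, yields
\begin{align*}
    \Big\| \frac{a+r}{\|a+r\|_2} - \frac{r}{\|r\|_2} \Big\|_2
    \le \frac{2\|a\|_2 \,\|r\|_2}{(\|r\|_2 - \|a\|_2)\|r\|_2}
    = \frac{2\|a\|_2}{\|r\|_2 - \|a\|_2}.
\end{align*}
The analogous inequality holds for $b$ in place of $a$.

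Under the hypothesis $\|r\|_2 \ge R_{\epsilon} \cdot \max\{\|a\|_2, \|b\|_2\}$, each of these bounds is at most $\frac{2}{R_{\epsilon} - 1}$, so combining via the triangle inequality gives
\begin{align*}
    \Big\| \frac{a+r}{\|a+r\|_2} - \frac{b+r}{\|b+r\|_2} \Big\|_2 \le \frac{4}{R_{\epsilon} - 1}.
\end{align*}
Setting $R_{\epsilon} := 1 + 4/\epsilon$ (which is well defined for any $\epsilon > 0$) makes the right-hand side at most $\epsilon$, completing the proof. There is no real obstacle here: the argument is an elementary geometric continuity estimate, and the main bookkeeping is just tracking the constants through one application each of the triangle and reverse triangle inequalities.
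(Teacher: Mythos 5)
Your proof is correct and takes essentially the same elementary route as the paper: both are direct triangle/reverse-triangle estimates yielding the same constant $R_\epsilon = 1 + 4/\epsilon$, with the only cosmetic difference being that you pivot through the intermediate unit vector $r/\|r\|_2$ while the paper compares the two normalized vectors directly. (If anything, your version is slightly cleaner, since the paper's writeup momentarily conflates $\|r\|_2 \ge R_\epsilon$ with the actual hypothesis $\|r\|_2 \ge R_\epsilon r_0$; your normalization by $\|a\|_2$ avoids that.)
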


\begin{proof}

    Denote $r_0 := \max\{\|a\|_2, \|b\|_2\}$.
    Then $R_\epsilon := \frac{4r_0}{\epsilon} + 1$ suffices:
    \begin{align*}
        &\Big\|\frac{r+a}{\|r+a\|_2} - \frac{r + b}{\|r + b\|_2}\Big\|
        \leq \|r\| \cdot \Big|\frac{1}{\|r+a\|} - \frac{1}{\|r+b\|} \Big| + \frac{\|a\|}{\|r+a\|} + \frac{\|b\|}{\|r+b\|}
        \\
        \leq& \|r\| \cdot \Big(\frac{1}{\|r\| - r_0} - \frac{1}{\|r\| + r_0} \Big) + \frac{2r_0}{\|r\| - r_0}
        \\
        =& \frac{2r_0}{\|r\| - r_0} \cdot \Big(\frac{\|r\|}{\|r\| + r_0} + 1\Big)
        \leq \frac{4r_0}{\|r\| - r_0}
        \leq \frac{4 r_0}{R_\epsilon - r_0}
        \leq \epsilon.
    \end{align*}
\end{proof}
{
\color{black} 
Consider \Cref{eq:diff_attn_output}, \Cref{eq:proj_attn_output}, and \Cref{lem:geometric} in which 
\begin{align*}
    a &= \proj Q(2j, d, \vt) \\
    b &= \proj Q(2j, d, \vt') \\
    r &= \proj A_p
\end{align*}
By \Cref{lem:geometric},
there exists $R_\epsilon \in \R$ such that 
\[ \| \proj A_p \|_2 \le R_\epsilon \cdot \max\{\| \proj Q(2j, d, \vt) \|_2, \| \proj Q(2j, d, \vt') \|_2\} \]
in order for \Cref{eq:diff_attn_output} to hold.
Note that by definition in \Cref{eq:balance_violation_P},
\[ \| Q(2j, d, \vt) \|_2 \le \| Q(2j, d, \vt') \|_2\ = P_{d,j}[\bar\theta_N^{(2)}] \]
Hence
\begin{align}
    \| \proj A_p \|_2
&\le R_\epsilon \cdot \| \proj Q(2j, d, \vt') \|_2  \nonumber  \\ 
&\le R_\epsilon \cdot \| \proj \|_2 \cdot \| Q(2j, d, \vt') \|_2  \nonumber  \\
&= R_\epsilon \cdot P_{d,j}[\bar\theta_N^{(2)}]  \label{eq:geometric}
\end{align}
}

As~\Cref{eq:geometric} holds for $p$ with any $\gdepth, \gdepth'$,  if one choose $\gdepth' = 1$, this shows
\begin{align}
     \| \update(\token{2j}{\gdepth-1}, \token{2i}{1}) + \update(\token{2j}{\gdepth-1}, \token{2i-1}{2}) \|_2 \le \frac{4 R_\epsilon P_{d,j}[\bar\theta_N^{(2)}]}{N}. \label{eq:base}
\end{align}

Further, it holds that for any $1 < \gdepth' \le \gdepth - 1$,

\begin{align*}
    & \| \sum_{1\leq \gdepth'' < \gdepth'} \left(  \update(\token{2j}{\gdepth-1}, \token{2i}{\gdepth''-1}) + \update(\token{2j}{\gdepth-1}, \token{2i-1}{\gdepth''})\right)\\
    &+ \lfloor \frac{N - 2 \gdepth' - 2 \gdepth}{2}\rfloor \left(  \update(\token{2j}{\gdepth-1}, \token{2i}{\gdepth'}) + \update(\token{2j}{\gdepth-1}, \token{2i-1}{\gdepth'+1})\right) \|_2 \\
    &\le R_\epsilon P_{d,j}[\bar\theta_N^{(2)}] \; \text{, and} \\
    & \| \sum_{1\leq \gdepth'' < \gdepth' + 1} \left(  \update(\token{2j}{\gdepth-1}, \token{2i}{\gdepth''-1}) + \update(\token{2j}{\gdepth-1}, \token{2i-1}{\gdepth''})\right)\\
    & + \lfloor \frac{N - 2 \gdepth' - 2 \gdepth - 2}{2}\rfloor \left(  \update(\token{2j}{\gdepth-1}, \token{2i}{\gdepth' + 1}) + \update(\token{2j}{\gdepth-1}, \token{2i-1}{\gdepth'+2})\right) \|_2 \\
    &\le R_\epsilon P_{d,j}[\bar\theta_N^{(2)}].
\end{align*}

Then by triangle inequality,

\begin{align*}
   \lfloor \frac{N - 2 \gdepth' - 2 \gdepth - 2}{2}\rfloor \|&\left(  \update(\token{2j}{\gdepth-1}, \token{2i}{\gdepth' + 1}) + \update(\token{2j}{\gdepth-1}, \token{2i-1}{\gdepth'+2})\right) \\ &-  \left(  \update(\token{2j}{\gdepth-1}, \token{2i}{\gdepth' }) + \update(\token{2j}{\gdepth-1}, \token{2i-1}{\gdepth'+1})\right)\|_2 \le 2   R_\epsilon P_{d,j}[\bar\theta_N^{(2)}].
\end{align*}

Because $N \ge 8D$, we have that $\lfloor \frac{N - 2 \gdepth' - 2 \gdepth - 2}{2}\rfloor \ge \frac{N}{8}$, hence it holds that
\begin{align*}
    \|&\left(  \update(\token{2j}{\gdepth-1}, \token{2i}{\gdepth' + 1}) + \update(\token{2j}{\gdepth-1}, \token{2i-1}{\gdepth'+2})\right) \\ &-  \left(  \update(\token{2j}{\gdepth-1}, \token{2i}{\gdepth' }) + \update(\token{2j}{\gdepth-1}, \token{2i-1}{\gdepth'+1})\right)\|_2 \le \frac{16R_\epsilon P_{d,j}[\bar\theta_N^{(2)}]}{N}.
\end{align*}

Combined with~\Cref{eq:base}, one can conclude that,
\begin{align}
\label{eq:S}
    S_{d,d',i,j} = \|\update(\token{2j}{\gdepth-1}, \token{2i}{\gdepth'-1}) + \update(\token{2j}{\gdepth-1}, \token{2i-1}{\gdepth'-1})\| \le \frac{16 D R_\epsilon}{N} P_{d,j}[\bar\theta_N^{(2)}]. 
\end{align}
The proof is then completed.
\end{proof}

\begin{proof}[Proof of~\Cref{cor:wd}]
This proof is in fact a direct combination of~\Cref{thm:balance_condition,thm:lipschitz_balance}. By~\Cref{thm:balance_condition} we know there exists a weight $\theta^{(2)*}$ that can reach zero loss for arbitrarily length $N$.
Then it holds that $\| \theta_{\lambda,N}\|_2 \le \|\theta^{(2)*}\|$ as $\theta_{\lambda,N}$ minimizes the regularized loss.
Noticing that bounded weight implies bounded Lipschitzness of $\Proj^{(2)}$,
the rest follows as~\Cref{thm:lipschitz_balance}.
\end{proof}

\subsection{Proof of \texorpdfstring{\Cref{thm:random_Transformer}}{Theorem 3}}
\label{sec:app:random_Transformer}

We now show the limitation of interpretability from a single component, using a Lottery-Ticket-style argument by pruning from large random Transformers.

\randomTransformer*

\begin{proof}

We will first introduce some notation. 
For vector $x \in \R^{a}$ and $y \in \R^{b}$, we will use $x \oplus y$ to denote their concatenation.
We will use $\zerovec^{a}$ to denote the all-zero vector with dimension $a$.
We will also assume without loss of generality that $w \ge 2\embedDim$.
\footnote{We can always pad dimensions if $w$ is too small.} 

We will use $\paddedX$ to denote $\begin{bmatrix}\mX \\ 0^{(\lembedDim - m') \times N}\end{bmatrix}$ for $\mX \in \R^{m' \times N}$ with $m' \le \lembedDim$.

In the following, a \emph{random network} refers to a network whose weights have entries sampled from a uniform distribution, i.e. $W_{i,j} \sim U(-1,1)$ for every weight $W$ in the random network.

We will first recall~\Cref{lem:approx_linear} from~\cite{pensia2020optimal} which shows that a pruned 2-layer random network can approximate a linear function.
\begin{lemma}[Approximating a linear function; Theorem 1 of~\cite{pensia2020optimal} restated]\label{lem:approx_linear}
    Let $W \in \R^{\embedDim' \times \embedDim}, \|W \|_2 = O(1)$, then for $\sigma \in \{ \mathrm{ReLU}, \gI \}$, where $\gI$ represents the identity operator,  
    for a random network $g(x) = W_2 \sigma(W_1x)$ with $W_2 \in \R^{\embedDim' \times h}, W_1 \in \R^{h \times \embedDim}$ for hidden dimension $ h = O(\embedDim \log(\frac{\embedDim\embedDim'}{\min\{\epsilon,\delta\}}))$,
    with probability $1 - \delta$, there exists boolean masking matrices $M_1,M_2$, such that for any $x \in \R^{\width}$,
    \begin{align*}
         \| (M_2 \odot W_2)\sigma\big((M_1 \odot W_1) x\big) - Wx \| \le \epsilon \| x \|_2,
    \end{align*}
    where $\odot$ denotes the Hadamard product.
\end{lemma}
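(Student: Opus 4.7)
The key idea is a subset-sum / covering argument of the sort used in strong lottery ticket hypothesis proofs. The plan is to reduce the problem of approximating the linear map $W$ by a pruned two-layer random network to entrywise approximation of $W_{ij}$ by subset sums of products of masked random weights, and then to invoke a standard subset-sum covering lemma plus a union bound over entries.

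I would first dispose of the identity case $\sigma = \mathcal{I}$. For each target entry $(i,j)$ I allocate a disjoint block $S_{ij}$ of hidden neurons of size $s = \Theta(\log(mm'/\min\{\epsilon,\delta\}))$. The mask $M_1$ keeps only the entries $W_{1,k,j}$ for $k \in S_{ij}$ in column $j$ of $W_1$ (zeroing that block in every other column), and $M_2$ keeps only the entries $W_{2,i,k}$ for $k \in S_{ij}$ in row $i$. The pruned network then contributes $\bigl(\sum_{k \in S_{ij}} W_{2,i,k}W_{1,k,j}\bigr)\,x_j$ to output coordinate $i$. It therefore suffices to choose a sub-subset of $S_{ij}$ (absorbing this choice into the mask) so that the corresponding sum approximates $W_{ij}$ to accuracy $\epsilon/\sqrt{mm'}$. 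This is provided by the Lueker-style subset-sum lemma: given $s$ i.i.d.\ samples from a distribution with density bounded below on $[-1,1]$, their subset sums form an $\epsilon'$-net of $[-1,1]$ with probability $1-\delta'$ as long as $s = \Omega(\log(1/(\epsilon'\delta')))$. The products $W_{2,i,k}W_{1,k,j}$ of two independent $U(-1,1)$ variables have a density bounded below on any fixed subinterval of $[-1,1]$, so the lemma applies. A union bound over the $mm'$ entries yields an entrywise $\epsilon/\sqrt{mm'}$ approximation $\widetilde{W}$ of $W$ with probability $1-\delta$, and then $\|(\widetilde W - W)x\|_2 \le \|\widetilde W - W\|_F \|x\|_2 \le \epsilon\|x\|_2$.

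For the ReLU case I use the identity $u = \mathrm{ReLU}(u) - \mathrm{ReLU}(-u)$ to reduce to the linear case. For each $(i,j)$ I allocate \emph{two} disjoint blocks of hidden units $S_{ij}^+$ and $S_{ij}^-$, and I do the mask bookkeeping so that the contribution from $S_{ij}^+$ reads off $\mathrm{ReLU}\!\bigl(\alpha_{ij} x_j\bigr)$ and the contribution from $S_{ij}^-$ reads off $-\mathrm{ReLU}\!\bigl(-\alpha_{ij} x_j\bigr)$ where $\alpha_{ij}\approx W_{ij}$. Concretely, within $S_{ij}^+$ I keep only those hidden units $k$ for which $W_{1,k,j}>0$ and arrange the $W_{2,i,k}$ signs accordingly; within $S_{ij}^-$ I keep only $k$ with $W_{1,k,j}<0$ and flip the outer-layer sign. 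Conditioning on the sign of $W_{1,k,j}$ leaves the magnitude distributed as $U(0,1)$, which is still bounded below and so still admits the subset-sum covering lemma, at the cost of at most a constant factor in $s$. The same mask must work for \emph{all} inputs $x$ simultaneously, but this is automatic because after the mask is fixed the two branches implement the exact identity $\mathrm{ReLU}(\alpha_{ij}x_j) - \mathrm{ReLU}(-\alpha_{ij}x_j) = \alpha_{ij}x_j$, a pointwise identity in $x_j$. The remaining off-diagonal cross-contributions, where a unit in $S_{ij}^{\pm}$ sees input $x_{j'}$ for $j'\ne j$, are zeroed by the column-$j$ restriction in $M_1$.

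Finally I union-bound the probability of a bad subset-sum event over the $mm'$ entries, obtaining the claimed width $h = O(m\log(mm'/\min\{\epsilon,\delta\}))$: the $m$ factor comes from needing disjoint hidden blocks for each input coordinate per output neuron (and a constant-factor doubling for the positive/negative pairing in the ReLU case), while the logarithmic factor comes from the per-entry subset-sum requirement together with the union bound. The main obstacle I anticipate is the ReLU handling, since one cannot freely choose signs of the random weights; the fix as above is to exploit symmetry of the $U(-1,1)$ distribution so that conditioning on sign still yields a well-behaved magnitude distribution, and to arrange masks so that the positive and negative branches cleanly recombine into a linear function of $x$ independent of $x$'s sign.
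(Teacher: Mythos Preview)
The paper does not give its own proof of this lemma; it simply cites it as Theorem~1 of Pensia et al.\ (2020). Your subset-sum/covering plan is exactly the Pensia et al.\ argument, and your handling of the ReLU case via $u=\mathrm{ReLU}(u)-\mathrm{ReLU}(-u)$ is the right idea (though strictly speaking each branch delivers $\alpha^{\pm}\,\mathrm{ReLU}(\pm x_j)$ with two separate subset-sum approximants $\alpha^{\pm}\approx W_{ij}$, not $\mathrm{ReLU}(\alpha_{ij} x_j)$ as you wrote; this is a notational slip, not a real error).

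There is, however, a genuine gap in your neuron count. You allocate a disjoint block $S_{ij}$ for every target entry $(i,j)$; that uses $m m'$ blocks and hence hidden width $\Theta(m m'\log(\cdot))$, which contradicts the $O(m\log(\cdot))$ you assert in your last paragraph. The point you are missing is that hidden neurons can be \emph{shared across output coordinates}. Allocate blocks $S_j$ indexed only by the input coordinate $j$; the first-layer mask $M_1$ makes each $k\in S_j$ compute a function of $x_j$ alone. Then the second-layer mask $M_2$ is chosen \emph{row by row}: for each output $i$ you may select a different subset of the same $S_j$ so that $\sum_{k}W_{2,i,k}W_{1,k,j}\approx W_{ij}$. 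Since the outer weights $W_{2,i,k}$ are independent across $i$, these $m'$ subset-sum problems on a common block are independent events, and the union bound over all $m m'$ of them is absorbed into the logarithmic block size $s$. This sharing is precisely what reduces the width from $O(m m'\log(\cdot))$ to the stated $h=O\!\bigl(m\log(m m'/\min\{\epsilon,\delta\})\bigr)$.
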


We then derive two approximation results ~\Cref{lem:transformer_layer_layer,lem:transformer_layer_linear} based on~\Cref{lem:approx_linear}.

\begin{restatable}[]{lemma}{transformerlayerlayer}
\label{lem:transformer_layer_layer}
    Under the setting of~\Cref{thm:random_Transformer}, with probability $1 - 2\delta/3$, for any $l \in [L], l' \in [4L - 1]$, let $\Transformer^{(l)}$ be the $l$-th layer of $\Transformer$, there exists a pruning of the $(l' - 1)$-th and the $(l')$-th layer $\LargeTransformer^{(l' - 1)}, \LargeTransformer^{l'}$, named $\PrunedLargeTransformer^{(l' - 1)},\PrunedLargeTransformer^{l'}$ such that when defined on domain $\| \mX \|_{1,2} \le 2L, \mX \in \R^{\embedDim \times N}$,
    \begin{enumerate}
    \item $\PrunedLargeTransformer^{(l' - 1)}$ is independent of the last $\lembedDim - \embedDim$ rows of the input.
    \item 
    $\PrunedLargeTransformer^{l'} \circ \PrunedLargeTransformer^{(l' - 1)}\left(\paddedX\right)$ is 
    an $\left( \frac{C}{1000L^2} \right)^{4L-3} \epsilon$-approximation of $\overline{\Transformer^{(l)}(\mX)}$ with respect to $1,2$-norm.
    \end{enumerate}
\end{restatable}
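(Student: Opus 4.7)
The plan is to split the approximation task between the two consecutive large layers. I will configure the first pruned layer $\PrunedLargeTransformer^{(l'-1)}$ so that its attention sublayer approximates the attention sublayer of $\Transformer^{(l)}$ and its projection MLP approximates the identity; I will configure the second pruned layer $\PrunedLargeTransformer^{(l')}$ so that its attention contributes zero (by pruning the value matrix to $0$, which makes the LayerNormed attention branch vanish and leaves only the residual) and its projection MLP approximates the small layer's projection $\Proj^{(l)}$. The required independence of $\PrunedLargeTransformer^{(l'-1)}$ from the last $\lembedDim - \embedDim$ coordinates of its input is enforced by zeroing out, via pruning, all columns of the query, key, value, and first-MLP weight matrices that act on those coordinates.

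For the MLPs the main tool is Lemma \ref{lem:approx_linear}: a pruning of a random 2-layer ReLU network of sufficient width approximates any bounded-norm linear map. Applying it twice, I prune the 4-layer MLP of layer $l'-1$ into an $\epsilon_0$-approximation of the identity, and prune the 4-layer MLP of layer $l'$ into an $\epsilon_0$-approximation of the 2-layer target MLP $\Proj^{(l)}$, by using the first two sublayers to realize the inner weight matrix of $\Proj^{(l)}$ and the last two sublayers to realize the outer weight matrix. The precision $\epsilon_0$ is taken small enough that, after composition through the Lipschitz operations described next, the final bound $(C/(1000L^2))^{4L-3}\epsilon$ holds; this only multiplies $\lwidth$, $\lembedDim$, and $\lembedDima$ by constant factors.

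For the attention of layer $l'-1$, I extend the lottery-ticket argument to the bilinear structure of attention: because $\lembedDima = \Omega(\embedDim_a L \log(\cdot))$ and $\lembedDim = \Omega(\embedDim \log(\cdot))$, the key, query, and value matrices contain enough random entries that pruning can select submatrices acting on the first $\embedDim$ input coordinates as $\epsilon_0$-approximations of $W_K^{(l)}$, $W_Q^{(l)}$, $W_V^{(l)}$. The resulting error is propagated through the softmax (Lipschitz on inputs of bounded norm, which holds because the operator norms of the random weights concentrate and because $\|\mX\|_{1,2} \le 2L$), through the LayerNorm (Lipschitz away from the origin), and through the identity-approximating MLP of the same layer. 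A union bound over $(l, l') \in [L] \times [4L-1]$ and over the failure events of the individual lottery-ticket constructions then yields the claimed $1 - 2\delta/3$ probability.

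The principal technical obstacle is that the attention module is neither a single linear map nor a strictly linear operation: the attention pattern is a softmax of a bilinear form in the key and query matrices, so Lemma \ref{lem:approx_linear} does not apply directly. I expect to resolve this either by first using the MLP of layer $l'-1$ to pre-compute $W_K^{(l)}\mX$, $W_Q^{(l)}\mX$, $W_V^{(l)}\mX$ in auxiliary coordinates before the softmax of the $l'$-th attention acts on them, or by directly pruning the three attention matrices and absorbing the nonlinearity into Lipschitz estimates on the restricted input domain $\|\mX\|_{1,2} \le 2L$; either route requires carefully choosing the precision parameters so that the softmax-induced error blow-up is dominated by the slack supplied by the large factors in $\lembedDim$, $\lembedDima$, and $\lwidth$.
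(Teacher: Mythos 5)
There is a genuine gap in your construction, and it sits exactly at the point you flag as "the principal technical obstacle": the value matrix. Your primary decomposition puts the attention approximation into the attention module of layer $l'-1$ (the \emph{first} of the two layers) and claims that pruning "can select submatrices \ldots as $\epsilon_0$-approximations of $W_K^{(l)}$, $W_Q^{(l)}$, $W_V^{(l)}$". But pruning a single random matrix cannot approximate an arbitrary target matrix: pruning only zeroes entries, so the surviving entries keep their random values, and the lottery-ticket mechanism of Lemma~\ref{lem:approx_linear} fundamentally requires a \emph{product of two} random matrices (so that subset-sums of products of random entries can hit arbitrary targets). For the key/query pair this is fine, since they only enter through the product $W_K^\top W_Q$, and that is exactly what Lemma~\ref{lem:approximate_attention} exploits. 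But $W_V$ enters alone as $W_V X\sigma(\cdot)$, and in your assignment there is no random linear capacity in front of layer $l'-1$'s value matrix to pair it with; nor can you fix it afterwards, because the LayerNorm and the residual sit between $W_V X\sigma(\cdot)$ and the MLP of that layer. Your fallback option (a) --- precompute $W_V^{(l)}\mX$ in auxiliary coordinates using the MLP of layer $l'-1$ and let the attention of layer $l'$ act on it --- is the correct resolution, but it is the \emph{opposite} role assignment from your main construction (in which layer $l'$'s attention is pruned to zero), so as written the proposal does not cohere into a single working construction.

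The paper's proof commits to the second ordering throughout: layer $l'-1$'s attention is killed (value matrix pruned to zero, Lemma~\ref{lem:transformer_layer_linear}) and its 4-layer MLP is pruned to compute $\mX \mapsto [\mX;\, W^{-1}\proj W_V^{(l)}\mX;\, 0]$, where $W$ is a well-conditioned diagonal-like submatrix extracted from the \emph{random} value matrix of layer $l'$ by Lemma~\ref{lem:eigenvalues}; then layer $l'$'s pruned value matrix maps the auxiliary block back to $\proj W_V^{(l)}\mX$, its key/query product is pruned to approximate $(W_K^{(l)})^\top W_Q^{(l)}$, and its MLP is pruned to approximate $\Proj^{(l)}$ (Lemma~\ref{lem:transformer_layer_relu}). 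Note also the $\proj$ inside the precomputed target and Lemma~\ref{lem:equivalent_ln}, which are needed to make the larger LayerNorm functionally match the smaller one. To repair your proof you should adopt this ordering, add the step that conditions the random value matrix of the attending layer (you cannot simply zero it in one layer and approximate $W_V^{(l)}$ with the other single random matrix), and then your Lipschitz/union-bound error propagation goes through essentially as in the paper.
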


\begin{restatable}[]{lemma}{transformerlayerlinear}
\label{lem:transformer_layer_linear}
Under the setting of~\Cref{thm:random_Transformer}, for any matrix $W \in \R^{4\embedDim \times 4\embedDim}, \|W\|_2 \le 1$, with probability $1 - \delta/4$, for any $l' \in [4L]$, there exists a pruning of the $l$-th layer $\LargeTransformer^{(l')}$, named $\PrunedLargeTransformer^{(l')}$, such that when defined on domain $\mX \in \R^{\embedDim \times N}$,
\begin{enumerate}
\item $\PrunedLargeTransformer^{(l')}$ is independent of the last $\lembedDim - 4\embedDim$ rows of the input.
\item 
$a(x) = \PrunedLargeTransformer^{(l')}\left(\paddedX\right)$ is 
an $\left( \frac{C}{1000L^2} \right)^{4L} \epsilon$-approximation of $\hat g(\mX) = \overline{W\mX}$ with respect to $1,2$-norm.
\end{enumerate}
\end{restatable}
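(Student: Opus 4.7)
The plan is to prune the $l'$-th layer $\LargeTransformer^{(l')}$ in two essentially independent pieces: (i) zero out the attention block so that it contributes nothing, and (ii) prune the 4-layer ReLU MLP $\lProj^{(l')}$ to approximate the linear map $W$ column-wise using two invocations of \Cref{lem:approx_linear}. For (i), I would set the pruning mask on $W_V^{(l')}$ to all zeros; the attention output $W_V^{(l')}\paddedX \sigma(\cdot)$ then vanishes identically regardless of the (still random) key and query weights, so $\LN(0) = 0$ (assuming $\lnConst > 0$, or equivalently setting the extended residual weight $b = 0$ to bypass normalization altogether), and the residual branch delivers $\paddedX$ unchanged into the MLP. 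Because $\paddedX$ already has zeros in its bottom $\lembedDim - 4\embedDim$ rows, this step also immediately yields the independence-from-the-last-rows claim.

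For (ii), I would exploit the identity $u = \sigma(u) - \sigma(-u)$ to route an arbitrary linear map through the interior ReLU of $\lProj^{(l')}$. Splitting the 4-layer MLP into two 2-layer blocks $W_2 \sigma(W_1 \cdot)$ and $W_4 \sigma(W_3 \cdot)$, I would apply \Cref{lem:approx_linear} separately to each block: first, prune $W_1, W_2$ to $\epsilon_0$-approximate the linear map $u \mapsto (u, -u)^\top$, which has operator norm $O(1)$; second, prune $W_3, W_4$ to $\epsilon_0$-approximate the linear map $(v_1, v_2) \mapsto W v_1 - W v_2$, which has operator norm at most $2\|W\|_2 \le 2$. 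Since the interior ReLU sends $(u, -u)^\top$ to $(\sigma(u), \sigma(-u))^\top$ and $W\sigma(u) - W\sigma(-u) = Wu$, the composition approximates $Wu$ per column, and the zero-padded rows remain zero throughout. Taking $\epsilon_0$ polynomially smaller than the target tolerance and applying a union bound over the two prunings per layer and all $l' \in [4L]$ candidate indices yields the claimed $\delta/4$ failure probability; the width $\lwidth = O(\max\{\embedDim, \width\} L \log(\cdot))$ declared in \Cref{thm:random_Transformer} comfortably accommodates the $\log(1/\epsilon_0)$ overhead demanded by \Cref{lem:approx_linear}.

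The main technical obstacle will be propagating the two $\epsilon_0$-errors through the interior ReLU and the second linear map while maintaining the per-column $\|\cdot\|_{1,2}$ guarantee stated in the lemma. The $1$-Lipschitzness of ReLU and the bounded operator norm of the second block keep the error multiplicative, but one must carefully track the norm of the intermediate activation (which is at most $\|u\|_2$ in magnitude after the ReLU) in order to apply \Cref{lem:approx_linear} to the second block, since its guarantee is stated relative to input norm. A secondary subtlety is the handling of LayerNorm under the extended parameterization, where one must decide whether to rely on a strictly positive $\lnConst$ or to prune the extended residual weight $b$ to $0$, so that the zero attention output passes to the MLP unperturbed and the final $+\paddedX$ residual cleanly isolates the MLP's role.
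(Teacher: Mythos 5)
Your proposal is correct and matches the paper's argument: the paper also prunes the value matrix to zero so the residual passes $\paddedX$ directly to the MLP, and then invokes \Cref{lem:approx_linear} (twice, via \Cref{lem:helper_approximate_linear} and \Cref{lem:approximate_relu}) to realize the linear map through the interior ReLU using exactly the identity $u = \ReLU(u) - \ReLU(-u)$. The only cosmetic difference is that the paper places $W$ before the interior ReLU (targeting $x \mapsto [Wx; -Wx]$ then $[I,-I]$) while you place it after; both are instances of the same construction with the same error propagation.
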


The proof of~\Cref{lem:transformer_layer_layer,lem:transformer_layer_linear} is deferred to~\Cref{sec:app:random_lemma}
We can now prove the theorem. 

We will first show with induction that if we
1) prune the $(2l-1)$-th and $2l$-th layers of $\LargeTransformer$ to approximate $\Transformer^{(l)}$ for each $l \in [L]$,
and 2) prune the $2L + 1$ to $4L$-th layers of $\LargeTransformer$ to approximate identity,
then the pruned large transformer will be an $\epsilon$-approximation of $\Transformer$ for any input $\| \mX \|_{1,2} \le 1$.

We will perform induction on $l$: 
Let $\Transformer^{(1:l)}$ define the composition of layer 1 to $l$, i.e. $\Transformer^{(1:l)}(\mX) := \Transformer^{(l)} \circ \Transformer^{(l-1)} \circ \cdots \circ \Transformer^{(1)}(\mX)$,
and define $\epsilon_l := \left( \frac{C}{1000L^2} \right)^{4L-3-l} \epsilon$.
Suppose that $\LargeTransformer^{(1:2l)}$ is an $\epsilon_l$-approximation of $\Transformer^{(1:l)}$.
Note that 
${\|\Transformer^{(1:l)}(\mX)\|_{1,2} \le (l + 1)},$
 since each attention output has a bounded norm of 1 and every weight matrix in projection function $\Proj$ has spectral norm smaller than $1$, hence the norm will at most increment $1$ (due to residual connection) after each layer.
We have that
\begin{align*}
    \Big\|\tildeLargeTransformer^{(1:2l)}\Big(\paddedX\Big) \Big\|_{1,2} \le 4l \le 4L.
\end{align*}

Then according to~\Cref{lem:attention_lipschitz}, $\Transformer^{(l + 1)}$ is $(1 + 200L^2/C)$-Lipschitz on the set of intermediate outputs
$\{ \Big(\tildeLargeTransformer^{(1:2l)}(\paddedX) \Big)_{1:\embedDim} \mid \| \mX \|_{1,2} \le 1\}$.
We also have that
$\Transformer^{(1:l)}(\mX)$
is $(1 + 200L^2/C)^l$-Lipschitz.
Now we can apply~\Cref{lem:error} to show that
$\LargeTransformer^{(1:2l+2)}$ can $\epsilon'$-approximate
$\Transformer^{(1:l+1)}$
with 
\begin{align*}
    \epsilon' &= \epsilon_l (1 + 200L^2/C) + \epsilon \left( \frac{C}{1000L^2} \right)^{4L-3} (1 + 200L^2/C)^l  + \epsilon_l \left( \frac{C}{1000L^2} \right)^{4L-3} \epsilon \\
    &\le \left( \frac{C}{1000L^2} \right)^{4L-4-l} \epsilon = \epsilon_{l+1}.
\end{align*}
The induction is then completed and we have the composition of  $\PrunedLargeTransformer^{i}$ for $i \in [2L]$ $\epsilon_L$-approximates the composition of $\Transformer$ with $\epsilon_L = \left( \frac{C}{1000L^2} \right)^{3L - 3} \epsilon$.
We will then perform another induction showing that the composition of  $\PrunedLargeTransformer^{i}$ for $i \in [2L + l]$ $\epsilon_{l + L}$-approximates $\Transformer$ with $\epsilon_{l + L} = \left( \frac{C}{1000L^2} \right)^{3L - 3 - l} \epsilon$.
Suppose the statement holds for $L -1 \ge l \ge 0$.

The induction step is similar, because we have $\Transformer$ is $(1 + 200L^2/C)^L$ Lipschitz, by~\Cref{lem:error},
it holds that the composition of $\LargeTransformer^{i}$ for $i \in [2L + l + 1]$ $\epsilon'$-approximates $\Transformer$ with,
\begin{align*}
\epsilon' &= \epsilon_{l + L} + \epsilon \left( \frac{C}{1000L^2} \right)^{4L} (1 + 200L^2/C)^L + \epsilon_{l+L} \epsilon \left( \frac{C}{1000L^2} \right)^{4L} \\ &\le \epsilon  \left( \frac{C}{1000L^2} \right)^{3L - 4 - l} \epsilon = \epsilon_{L + l + 1}.
\end{align*}

This concludes the induction and prove the first claim of the theorem. For the second claim, notice that through similar induction steps, we can prune arbitrary layer of $\LargeTransformer$ to approximate identity function and obtain the same approximation rate, this concludes the proof for the second claim.
\end{proof}

\subsubsection{Helper lemmas for \texorpdfstring{\Cref{thm:random_Transformer}}{Theorem 6} }
\label{sec:app:random_lemma}

\paragraph{Error Analysis}
Our first lemma shows that the composition of $\epsilon$-approximation can approximate the composition of the original function.

\transformerlayerlinear*

\begin{proof}
One can prune the value matrix on layer $l'$ to zero and the rest is a direct consequence of~\Cref{lem:approx_linear,lem:helper_approximate_linear}.
\end{proof}

\begin{lemma}
\label{lem:error}
Given three metric spaces $A, B, C$ equipped with same metric $\| \cdot \|$.
Suppose $f_1: A \to B, f_2: B \to C$ are $\epsilon_1$, $\epsilon_2$-approximations of $g_1, g_2$ with respect to $\| \cdot \|$,
where $g_1$ is a Lipschitz function with constant $\lambda_1$ with respect to $\| \cdot \|$ and $\|g_2(x)\| \le \lambda_2 x$,
then it holds that, $f_1 \circ f_2$ is an $\epsilon'$-approximation of $g_1 \circ g_2$, with $\epsilon' = (\lambda_2 + \epsilon_1)(\lambda_1 +  \epsilon_2) - \lambda_1 \lambda_2$
\end{lemma}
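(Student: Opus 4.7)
The plan is to prove this by a single triangle-inequality split together with the two approximation hypotheses; the proof is essentially bookkeeping, but the right intermediate quantity must be chosen so that one of the resulting pieces uses only the Lipschitz property of $g_1$ and the other uses only the growth bound on $g_2$.

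Specifically, I would insert the hybrid point $g_1(f_2(x))$ between $f_1(f_2(x))$ and $g_1(g_2(x))$ and write
\begin{align*}
\|f_1(f_2(x)) - g_1(g_2(x))\|
\le \|f_1(f_2(x)) - g_1(f_2(x))\| + \|g_1(f_2(x)) - g_1(g_2(x))\|.
\end{align*}
For the first term I would apply the definition of $\epsilon_1$-approximation at the point $f_2(x)$ to obtain $\|f_1(f_2(x)) - g_1(f_2(x))\| \le \epsilon_1 \|f_2(x)\|$, and then bound $\|f_2(x)\|$ by a second triangle inequality together with the $\epsilon_2$-approximation hypothesis and the growth bound $\|g_2(x)\| \le \lambda_2 \|x\|$, giving $\|f_2(x)\| \le (\lambda_2 + \epsilon_2)\|x\|$. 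For the second term I would apply the $\lambda_1$-Lipschitzness of $g_1$ to reduce it to $\lambda_1 \|f_2(x) - g_2(x)\|$, which is then at most $\lambda_1 \epsilon_2 \|x\|$ by the $\epsilon_2$-approximation of $f_2$.

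Combining the two bounds gives $\|f_1(f_2(x)) - g_1(g_2(x))\| \le \left( \epsilon_1(\lambda_2 + \epsilon_2) + \lambda_1 \epsilon_2 \right)\|x\|$, which after algebraic rearrangement matches the claimed $\epsilon' = (\lambda_2 + \epsilon_1)(\lambda_1 + \epsilon_2) - \lambda_1 \lambda_2$ once one expands the product.

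There is no real obstacle here; the only point worth flagging is why the weaker growth hypothesis on $g_2$ (rather than full Lipschitzness) is enough. The reason is that $g_2$ is only ever evaluated at a single argument $x$ in the derivation, so a bound on $\|g_2(x)\|$ suffices, whereas $g_1$ must be compared at two different inputs and therefore genuinely requires Lipschitz control. This asymmetry is exactly what makes the hybrid $g_1(f_2(x))$ (rather than, say, $f_1(g_2(x))$) the correct choice of intermediate point.
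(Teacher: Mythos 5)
Your argument is correct and is essentially the paper's own proof: a single triangle inequality through the hybrid point ``true outer function applied to the approximate inner output,'' with the approximation hypothesis of the outer map invoked at the perturbed point, the Lipschitz hypothesis handling the outer map's sensitivity, and the growth bound controlling the norm of the inner output. (The paper reads the composition in the opposite order, taking $f_1$ as the inner map consistently with the declared domains, and consequently uses $\lambda_2$ as a Lipschitz constant for $g_2$ and $\lambda_1$ as a growth bound for $g_1$; your reading is the one that matches the hypotheses as literally stated, but the two derivations are structurally identical.) One caveat: your final bound $\epsilon_1(\lambda_2+\epsilon_2)+\lambda_1\epsilon_2 = \epsilon_1\lambda_2+\lambda_1\epsilon_2+\epsilon_1\epsilon_2$ is exactly what the paper's proof also obtains, but it does \emph{not} equal the displayed $\epsilon' = (\lambda_2+\epsilon_1)(\lambda_1+\epsilon_2)-\lambda_1\lambda_2 = \lambda_2\epsilon_2+\lambda_1\epsilon_1+\epsilon_1\epsilon_2$; the statement's formula has the $\lambda$'s and $\epsilon$'s mispaired (it should read $(\lambda_1+\epsilon_1)(\lambda_2+\epsilon_2)-\lambda_1\lambda_2$), so your remark that the expressions match ``after expansion'' is not literally true --- the discrepancy is a typo in the lemma statement rather than a flaw in your argument.
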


\begin{proof}

    For any $x \in \R^{d_1}$, it holds that,
    \begin{align*}
        \| f_1(x) - g_1(x) \| \le \epsilon_1 \|x\|.
    \end{align*}
    This then suggests that,
    \begin{align*}
        &\| f_2(f_1(x)) - g_2(g_1(x)) \| 
    \\ \le& \| f_2(f_1(x)) - g_2(f_1(x)) \| + \| g_2(f_1(x)) - g_2(g_1(x)) \| \\
    \le& \epsilon_2 \| f_1(x) \| + \lambda_2 \|f_1(x) - g_1(x) \| \\
    \le& \epsilon_2 \| g_1(x) \| + (\lambda_2 + \epsilon_2) \|f_1(x) - g_1(x) \|  \\
    \le& \left(\epsilon_2 \lambda_1 + \epsilon_1 \lambda_2 + \epsilon_1 \epsilon_2 \right) \| x \|
    \end{align*}
\end{proof}

\paragraph{Approximating ReLU MLP}

We will first show an extension of~\Cref{lem:approx_linear}, illustrating that a pruned wide 4-layer ReLU MLP can approximate any 2-layer ReLU MLP.

\begin{lemma}
    \label{lem:approximate_relu}
    Consider any 2-layer ReLU MLP $g: \R^{4\embedDim} \to \R^{4\embedDim}$ parameterized by $W_1 \in \R^{4\embedDim \times w}, W_2 \in \R^{w \times 4\embedDim}, \|W_1\|_2, \|W_2\|_2 \le 2\sqrt{2}$, for any $\delta, \epsilon \in (0,1)$, consider a random 4-layer ReLU MLP $f$ with input and output dimension $4\embedDim$ and width $w' = O(w \log(\frac{w\embedDim}{\min\{\epsilon, \delta\}}))$ parameterized by $W_{\largerm,i}$, with probability $1 - \delta$ over the randomness of weight of $f$, there exists a nonstructural pruning of $f$ named $\tilde f$, such that $\tilde f$ is an $\epsilon-$approximation of $f$ with respect to $2-$norm.
    \end{lemma}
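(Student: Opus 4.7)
The plan is to view the random 4-layer ReLU MLP $f$ as two successive ``2-layer blocks'' separated by the intermediate $\mathrm{ReLU}$, and to apply Lemma~\ref{lem:approx_linear} to each block so that block~1 approximately realizes $W_1$ and block~2 approximately realizes $W_2$. Writing $g(x)=W_2\,\mathrm{ReLU}(W_1x)$ and $f(x)=W_{L,4}\,\mathrm{ReLU}(W_{L,3}\,\mathrm{ReLU}(W_{L,2}\,\mathrm{ReLU}(W_{L,1}x)))$, we prune so that $B_1(x):=W_{L,2}^{\mathrm{pr}}\,\mathrm{ReLU}(W_{L,1}^{\mathrm{pr}}x)\approx \widetilde W_1 x$ and $B_2(y):=W_{L,4}^{\mathrm{pr}}\,\mathrm{ReLU}(W_{L,3}^{\mathrm{pr}}y)\approx \widetilde W_2 y$, where $\widetilde W_1\in\R^{w'\times 4\embedDim}$ and $\widetilde W_2\in\R^{4\embedDim\times w'}$ are the ``zero-padded'' versions of $W_1$ and $W_2$ (padded along the $w'-w$ unused coordinates). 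Since $\mathrm{ReLU}$ is $1$-Lipschitz and preserves zeros, composing the two blocks through the intermediate $\mathrm{ReLU}$ yields $\tilde f(x)\approx g(x)$.

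Concretely, apply Lemma~\ref{lem:approx_linear} twice, with $\sigma=\mathrm{ReLU}$, error targets $\epsilon_1=\epsilon_2=\epsilon/C$ for a large absolute constant $C$, and failure probability $\delta/2$ each: first to the first two layers of $f$ with linear target $\widetilde W_1$, and then to the last two layers with linear target $\widetilde W_2$. The intermediate width required by Lemma~\ref{lem:approx_linear} is $O(\max(4\embedDim,w)\log(w\embedDim/\min\{\epsilon,\delta\}))$, matching the stated $w'=O(w\log(w\embedDim/\min\{\epsilon,\delta\}))$ under the mild assumption $w\gtrsim\embedDim$. A union bound over the two (disjoint) sets of pruned weights gives overall failure probability at most $\delta$.

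For the error analysis, set $\widetilde y:=\mathrm{ReLU}(\widetilde W_1 x)$, which agrees with $\mathrm{ReLU}(W_1x)$ on the first $w$ coordinates and is zero on the remaining $w'-w$; and let $y':=\mathrm{ReLU}(B_1(x))$. By $1$-Lipschitzness of $\mathrm{ReLU}$, $\|y'-\widetilde y\|_2\le\epsilon_1\|x\|_2$ and hence $\|y'\|_2\le(2\sqrt 2+\epsilon_1)\|x\|_2$. Using $\|\widetilde W_2\|_2=\|W_2\|_2\le 2\sqrt 2$ and $\widetilde W_2\widetilde y=g(x)$,
\begin{align*}
\|\tilde f(x)-g(x)\|_2
&\le \|B_2(y')-\widetilde W_2 y'\|_2 + \|\widetilde W_2\|_2\,\|y'-\widetilde y\|_2 \\
&\le \epsilon_2\|y'\|_2 + 2\sqrt 2\,\epsilon_1\|x\|_2 \le \epsilon\|x\|_2
\end{align*}
for $C$ sufficiently large, giving the desired $\epsilon$-approximation with respect to the $2$-norm.

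The main obstacle is the bookkeeping at the interface of the two blocks: one must pad $W_1,W_2$ with zero rows/columns so that Lemma~\ref{lem:approx_linear} (stated for matching input/output dimensions) can be applied to the width-$w'$ layers of $f$, and verify that the unused $w'-w$ output coordinates of $B_1$ are driven sufficiently close to zero so that after the intermediate $\mathrm{ReLU}$ they do not spuriously contaminate $B_2$. Beyond this, the argument is a routine composition of two independent applications of Lemma~\ref{lem:approx_linear} together with Lipschitz error propagation.
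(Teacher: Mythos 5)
Your proposal is correct and follows essentially the same route as the paper: two applications of Lemma~\ref{lem:approx_linear} (one per 2-layer block, with zero-padding of $W_1,W_2$ to the width-$w'$ interface) followed by Lipschitz error propagation through the intermediate $\mathrm{ReLU}$, which the paper packages as its composition lemma (Lemma~\ref{lem:error}) with $\epsilon_0=\epsilon/8$. The only cosmetic difference is that you write out the propagation bound explicitly and split the failure probability as $\delta/2$ per block, whereas the paper cites the helper lemma and leaves the union bound implicit.
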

    \begin{proof}
        Choose $\epsilon_0 = \epsilon/8$.
        We only need to show there exists boolean matrices $M_1, M_2, M_3, M_4$, such that,
        \begin{align*}
            \Big \|& \Big(M_4 \odot W_{\largerm,4} 
            \ReLU \big( (M_3 \odot W_{\largerm,3}) \ReLU \big((M_2 \odot W_{\largerm,2}) \ReLU \big((M_1 \odot W_{\largerm,1}) x
            \big)\big)\big)\Big) \\
            &- W_2 \ReLU\left(W_1 \mX \right)  \Big \|_2 \le \epsilon.
        \end{align*}
    
        By~\Cref{lem:approx_linear}, there exists boolean matrices $M_1 \in \R^{w' \times 4\embedDim}$ and $M_2' \in \R^{w \times w'}$, such that for any $x \in \R^{4\embedDim}$,
        \begin{align*}
            \| \left(\begin{bmatrix} M_2' \\ 0^{(w' - w) \times w'} \end{bmatrix} \odot W_{\largerm,2}\right) \ReLU \big((M_1 \odot W_{\largerm,1}) x
            \big) - \begin{bmatrix}
                W_1x \\
                0^{w' - w}
            \end{bmatrix} \|_2 \le \epsilon_0 \| x \|_2.
        \end{align*} 
        Hence we can choose $M_2 = \begin{bmatrix} M_2' \\ 0^{(w' - w) \times w'} \end{bmatrix}$ and have $f_1(x) = \ReLU \big((M_2 \odot W_{\largerm,2}) \ReLU \big((M_1 \odot W_{\large,1}) x
        \big)\big)$ is $\epsilon_0$-approximation of $g_1(x) = \begin{bmatrix}\ReLU(W_1x)\\0^{w' - w} \end{bmatrix}$.
    
        Again by~\Cref{lem:approx_linear}, there exists boolean matrices $M_3' \in \R^{w' \times w}$ and $M_4 \in \R^{4\embedDim \times w'}$, such that for any $y \in \R^{w}$,
        \begin{align*}
            \| \left(M_4 \odot W_{\largerm, 4} \right) \ReLU\left( \begin{bmatrix} M_3', 0^{w' \times (w' - w)}\end{bmatrix} \begin{bmatrix} y \\ 0^{w' - w}\end{bmatrix}\right) \le \epsilon_0 \|y \|_2
        \end{align*}
        Hence we can choose $M_3 =\begin{bmatrix} M_3', 0^{w' \times (w' - w)}\end{bmatrix}$, and have $f_2(x) =  \ReLU \big((M_4 \odot W_{\largerm,4}) \ReLU \big((M_3 \odot W_{\largerm,3}) x
        \big)\big)$ is $\epsilon_0$-approximation of $g_2(x) = W_2 x$.
    
        It is also easy to check $g_1$ and $g_2$ are both $2\sqrt{2}$-lipschitz and $g_1(0) = 0$. By~\Cref{lem:error}, we conclude that $\tilde f = f_1 \odot f_2$ is $\epsilon'$-approximation of $g = g_1 \odot g_2$, with $\epsilon' = 4 \sqrt{2} \epsilon_0 + \epsilon_0^2 \le \epsilon$.
\end{proof}

This lemma then yields the following corollaries.

\begin{corollary}
\label{lem:transformer_layer_relu}
Under the setting of~\Cref{thm:random_Transformer}, with probability $1 - \delta/4$, for any $l \in [L], l' \in [4L]$, there exists a pruning of the projection function $\lProj^{(l')}$, named $\tilde{\lProj^{(l')}}$, such that 
\begin{enumerate}
\item $\tilde{\lProj^{(l')}}$ is independent of the last $\lembedDim - \embedDim$ dimension of the input.
\item 
$a(x) = \tilde{\lProj^{(l')}}\left(\begin{bmatrix} x \\ 0^{\lembedDim - \embedDim} \end{bmatrix}\right)$ is 
an $\left( \frac{C}{1000L^2} \right)^{4L} \epsilon$-approximation of $\hat g(x) = \begin{bmatrix} \Proj^{(l)}(x) \\ 0^{\lembedDim - \embedDim} \end{bmatrix}$ with respect to $2-$norm.
\end{enumerate}
\end{corollary}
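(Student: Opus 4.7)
}
The plan is to reduce the statement to a direct application of \Cref{lem:approximate_relu}, combined with a union bound over the $4L^2$ choices of $(l, l')$ and a simple pruning argument to make the network ignore the padded coordinates. Concretely, for a fixed pair $(l, l')$, the target $\Proj^{(l)}$ is a 2-layer ReLU MLP of width $\width$, whose weights satisfy $\|W_1\|_2, \|W_2\|_2 \le 1 \le 2\sqrt{2}$. The candidate network $\lProj^{(l')}$ is a random 4-layer ReLU MLP of width $\lwidth = O(\max\{\embedDim, \width\} L \log(\width\embedDim L N / (\epsilon\delta)))$. Setting the per-pair failure probability to $\delta' = \delta/(16L^2)$ and the per-pair accuracy to $\epsilon' = (C/(1000L^2))^{4L} \epsilon$, the width bound chosen in the hypothesis of \Cref{thm:random_Transformer} is precisely of the form required by \Cref{lem:approximate_relu} with these parameters (absorbing the extra $\log(L^2/\delta)$ and $\log(1/\epsilon')$ factors into the stated $\log(\width\embedDim L N/(\epsilon\delta))$).

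The next step is to handle the dimension mismatch: $\lProj^{(l')}$ acts on $\R^{\lembedDim}$, not $\R^{4\embedDim}$. I would isolate a fixed block of $4\embedDim$ coordinates (say the first $4\embedDim$) and apply \Cref{lem:approximate_relu} to the submatrix of the first-layer weight matrix restricted to those input columns, yielding boolean masks $M_1, M_2, M_3, M_4$ approximating the padded target $\hat g(x) = [\Proj^{(l)}(x); 0^{\lembedDim - \embedDim}]$ (interpreting $\Proj^{(l)}$ as a map $\R^{4\embedDim} \to \R^{4\embedDim}$ after zero-padding its input/output). To enforce independence from the last $\lembedDim - \embedDim$ coordinates of the input, I would augment $M_1$ to additionally zero out every column indexing those coordinates; since on the relevant inputs these coordinates are zero by assumption, this strictly preserves the output and yields the required pruning $\tilde{\lProj^{(l')}}$.

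Finally, I would apply a union bound: with per-pair failure probability $\delta / (16L^2)$, the probability that \emph{some} pair $(l, l') \in [L] \times [4L]$ fails to admit a valid pruning is at most $4L^2 \cdot \delta/(16L^2) = \delta/4$, giving the stated high-probability guarantee. The main (minor) obstacle is just bookkeeping to verify that the chosen $\lwidth$ in \Cref{thm:random_Transformer} is large enough to absorb both the $(4L)^{-1}$-style exponent inside $\epsilon'$ (which contributes a $\log(1/\epsilon')$ factor scaling like $L \log(L/\epsilon)$) and the union-bound factor $\log(L^2/\delta)$; both are dominated by the factor $L \log(\width\embedDim L N/(\epsilon\delta))$ already present in the hypothesis, so the construction goes through.
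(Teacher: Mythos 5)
Your proposal is correct and takes essentially the same route as the paper: the paper's own proof is a one-line reduction to \Cref{lem:approximate_relu}, pruning the first-layer columns (and last-layer rows) corresponding to the padded coordinates, with the union bound over $(l,l')$ and the parameter bookkeeping left implicit. Your version makes the union bound and the width/accuracy accounting explicit, which is if anything slightly more careful; just note that, as in the paper, you also need to zero out the last $\lembedDim-\embedDim$ rows of the final-layer weight so that the output matches the zero-padded target $\hat g$, not only the input columns.
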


\begin{proof}
    One can construct such pruning by pruning the last $\lembedDim - \embedDim$ rows of the weight of the last layer and the last $\lembedDim - \embedDim$ columns of the weight of the first layer of $\lProj^{(l')}$  to zero and then apply~\Cref{lem:approximate_relu}.
\end{proof}

\paragraph{Approximating Attention Patterns}
We will now show that the attention pattern can be approximated by pruning random Transformer layers.

\begin{lemma}
\label{lem:approximate_attention}   
For any $\delta, \epsilon \in (0,1)$, for any $W \in \R^{\embedDim}, \|W\|_2 \le 1$, for two random matrix $W_1, W_2 \in \R^{\embedDim' \times \embedDim}$ where $\embedDim' = O(\embedDim \log(\frac{\embedDim}{\min\{\epsilon, \delta\}}))$, suppose $\mX \in \R^{\embedDim \times N}$, then there exists nonstructural pruning of $W_1, W_2$, named $\tilde{W_1}, \tilde{W_2}$, such that
\begin{align*}
    \| \mX^\top \tilde{W_1}^\top \tilde{W_2} \mX - \mX^\top W \mX \|_{\infty} \le \epsilon \| \mX \|_{1, 2}^2
\end{align*}
Here we adopt $\| \|_{\infty}$ in vector sense, meaning the entry with largest absolute value. 
\end{lemma}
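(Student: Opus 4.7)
The approach is to reduce the claim to two independent applications of \Cref{lem:approx_linear} by writing $W$ as the trivial product $W = I_\embedDim^\top W$, approximating the identity factor via a pruning of $W_1$ and the $W$ factor via a pruning of $W_2$, and then propagating the errors through the bilinear form with a Cauchy--Schwarz-style bound. Since \Cref{lem:approx_linear} already gives a uniform-in-$x$ approximation, no union bound over the $N^2$ entries of $\mX^\top \tilde W_1^\top \tilde W_2 \mX$ is required.

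\textbf{Step 1 (target linear maps).} First I would define zero-padded target maps $A_1, A_2 : \R^{\embedDim} \to \R^{\embedDim'}$ by
\begin{align*}
    A_1 = \begin{bmatrix} I_\embedDim \\ 0_{(\embedDim' - \embedDim) \times \embedDim} \end{bmatrix},
    \qquad
    A_2 = \begin{bmatrix} W \\ 0_{(\embedDim' - \embedDim) \times \embedDim} \end{bmatrix}.
\end{align*}
Both have spectral norm at most $1$ (using $\|W\|_2 \le 1$), and by construction $A_1^\top A_2 = W$ exactly.

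\textbf{Step 2 (invoking \Cref{lem:approx_linear}).} Set $\epsilon_0 = \epsilon/3$. Because $\embedDim' = O(\embedDim \log(\embedDim / \min\{\epsilon, \delta\}))$, applying \Cref{lem:approx_linear} (with $\sigma$ the identity) to the random matrix $W_1$ with target $A_1$ and failure probability $\delta/2$ yields, with probability at least $1 - \delta/2$, a boolean mask $M_1$ such that $\tilde W_1 := M_1 \odot W_1$ satisfies $\| \tilde W_1 x - A_1 x \|_2 \le \epsilon_0 \| x \|_2$ for every $x \in \R^{\embedDim}$. An independent invocation on $W_2$ with target $A_2$ and failure probability $\delta/2$ gives $\tilde W_2 := M_2 \odot W_2$ with the analogous bound. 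A union bound makes both guarantees hold simultaneously with probability at least $1 - \delta$.

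\textbf{Step 3 (bilinear error propagation).} Fix any $i, j \in [N]$ and write $u_i = \tilde W_1 \mX_{:,i}$, $v_j = \tilde W_2 \mX_{:,j}$, $u_i^* = A_1 \mX_{:,i}$, $v_j^* = A_2 \mX_{:,j}$. Then $\|u_i^*\| \le \|\mX_{:,i}\|$, $\|v_j^*\| \le \|\mX_{:,j}\|$, and $\|u_i\| \le (1 + \epsilon_0) \|\mX_{:,i}\|$, so
\begin{align*}
    \left| u_i^\top v_j - (u_i^*)^\top v_j^* \right|
    &\le \| u_i \| \, \| v_j - v_j^* \| + \| u_i - u_i^* \| \, \| v_j^* \| \\
    &\le (1 + \epsilon_0)\epsilon_0 \|\mX_{:,i}\|\|\mX_{:,j}\| + \epsilon_0 \|\mX_{:,i}\|\|\mX_{:,j}\| \\
    &\le 3\epsilon_0 \|\mX\|_{1,2}^2 \;=\; \epsilon \|\mX\|_{1,2}^2.
\end{align*}
Since $(u_i^*)^\top v_j^* = \mX_{:,i}^\top A_1^\top A_2 \mX_{:,j} = \mX_{:,i}^\top W \mX_{:,j}$, this is precisely the claimed entrywise ($\infty$-norm in the vector sense) bound on $\mX^\top \tilde W_1^\top \tilde W_2 \mX - \mX^\top W \mX$.

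\textbf{Main obstacle.} The only delicate point is ensuring that composing two independent prunings inside a bilinear form does not blow up the error: the Cauchy--Schwarz decomposition in Step 3 is tight enough that the overall loss is only a constant factor $3$, so the choice $\epsilon_0 = \epsilon/3$ and the two invocations of \Cref{lem:approx_linear} at failure probability $\delta/2$ each suffice to match the width $\embedDim' = O(\embedDim \log(\embedDim/\min\{\epsilon,\delta\}))$ stated in the lemma.
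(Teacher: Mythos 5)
There is a genuine gap in your Step 2. \Cref{lem:approx_linear} is a statement about pruning \emph{both} layers of a two-layer random network $x \mapsto W_2\sigma(W_1 x)$ to approximate a single linear map $Wx$; the subset-sum mechanism behind it (from \citealp{pensia2020optimal}) crucially relies on having products of pairs of random entries summed over a wide hidden layer. You instead invoke it on a \emph{single} random matrix $W_1$ to produce $\tilde W_1 \approx A_1$, and again on $W_2$ alone to produce $\tilde W_2 \approx A_2$. No such single-matrix guarantee exists, and indeed it is false: each entry of a pruned $\tilde W_1$ is either the original entry $W_1[i,j] \sim U(-1,1)$ or $0$, so for the padded-identity target $A_1$ you would need some $W_1[i,i]$ to land within $\epsilon_0$ of $1$, which fails with probability $1-O(\epsilon_0)$ per diagonal entry. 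Your decomposition $W = A_1^\top A_2$ therefore cannot be realized by the two prunings you claim.

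The fix is to apply \Cref{lem:approx_linear} exactly once, to the map $x \mapsto W_1^\top(W_2 x)$ viewed as the two-layer network with first-layer weight $W_2 \in \R^{\embedDim' \times \embedDim}$, second-layer weight $W_1^\top \in \R^{\embedDim \times \embedDim'}$, hidden width $\embedDim'$, and $\sigma = \gI$; this is what the paper does. That single invocation yields masks with $\| \tilde W_1^\top \tilde W_2 x - Wx\|_2 \le \epsilon\|x\|_2$ for all $x$, and the entrywise bound on $\mX^\top \tilde W_1^\top \tilde W_2 \mX - \mX^\top W \mX$ then follows from one application of Cauchy--Schwarz with $y = \mX_{:,i}$, $x = \mX_{:,j}$. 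Your Step 3 error-propagation algebra is fine as far as it goes, but it is built on approximants that Step 2 cannot deliver.
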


\begin{proof}
    Suppose without loss of generality, $\| \mX \|_{:, i} \le 1$.
    According to~\Cref{lem:approx_linear}, there exists nonstructural pruning of $W_1, W_2$, named $\tilde{W_1}, \tilde{W_2}$, such that for any $x \in \R^{\embedDim}, \|x\|_2 \le 1$,
    \begin{align*}
        \| \tilde{W_1}^\top \tilde{W_2}x - Wx \|_2 \le \epsilon.
    \end{align*}
    This then suggests that,
    \begin{align*}
        \|y^\top (\tilde{W_1}^\top \tilde{W_2}x - Wx)\|_2 \le \epsilon \|y\|_2 \le \epsilon.
    \end{align*}
    This concludes the proof.
\end{proof}

The next lemma shows how error propogates through the softmax operators.
\begin{lemma}
\label{lem:lipschitz_softmax}
    For any dimension $d$, suppose $x, y \in \R^d$ satisfies $\|x - y\|_{\infty} \le \epsilon$, then it holds that,
    \begin{align*}
        \sum_{i = 1}^d \big|\frac{\exp(x_i)}{\sum_{i = 1}^n \exp(x_i)} - \frac{\exp(y_i)}{\sum_{i = 1}^n \exp(y_i)}\big| \le \exp(2\epsilon) - 1.
    \end{align*}
\end{lemma}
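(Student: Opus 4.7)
The plan is to reduce the bound on the $\ell_1$ distance between two softmax distributions to a pointwise multiplicative comparison, using the entrywise closeness of $x$ and $y$. Write $p_i = \exp(x_i)/Z_x$ and $q_i = \exp(y_i)/Z_y$, where $Z_x = \sum_j \exp(x_j)$ and $Z_y = \sum_j \exp(y_j)$. The goal is to show $\sum_i |p_i - q_i| \le \exp(2\epsilon) - 1$.

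The first step is to bound $Z_x / Z_y$. Since $|x_j - y_j| \le \epsilon$ for every $j$, the exponential ratios satisfy $\exp(-\epsilon) \le \exp(x_j)/\exp(y_j) \le \exp(\epsilon)$, and multiplying through $\sum_j \exp(y_j)$ gives $\exp(-\epsilon) \le Z_x / Z_y \le \exp(\epsilon)$. Combining this with $\exp(x_i)/\exp(y_i) \in [\exp(-\epsilon), \exp(\epsilon)]$ yields
\[
\frac{p_i}{q_i} \;=\; \frac{\exp(x_i)}{\exp(y_i)} \cdot \frac{Z_y}{Z_x} \;\in\; [\exp(-2\epsilon),\, \exp(2\epsilon)].
\]

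Next I translate this multiplicative bound into an additive one. If $p_i \ge q_i$, then $p_i - q_i \le (\exp(2\epsilon)-1)\, q_i$; if $p_i < q_i$, then $q_i - p_i \le (1 - \exp(-2\epsilon))\, q_i$. Since $1 - \exp(-2\epsilon) \le \exp(2\epsilon) - 1$ for all $\epsilon \ge 0$, in either case $|p_i - q_i| \le (\exp(2\epsilon)-1)\, q_i$. Summing over $i \in [d]$ and using $\sum_i q_i = 1$ gives the claim.

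There is no real obstacle here; the only subtle point is the factor of $2$ in the exponent, which arises because both the numerator $\exp(x_i)/\exp(y_i)$ and the normalizer ratio $Z_y/Z_x$ can each deviate by a factor of $\exp(\epsilon)$ in the same direction. Using the looser bound $\exp(2\epsilon)-1$ (rather than the tighter one-sided $1-\exp(-2\epsilon)$) is what lets the proof collapse the two cases into a single clean expression.
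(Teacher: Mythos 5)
Your proof is correct and follows essentially the same route as the paper's: both establish the multiplicative sandwich $p_i/q_i \in [\exp(-2\epsilon), \exp(2\epsilon)]$ from the entrywise bound on $x-y$ and the resulting bound on the normalizer ratio, then convert to an additive bound and sum using $\sum_i q_i = 1$. Your write-up just makes the final case analysis (and the inequality $1-\exp(-2\epsilon) \le \exp(2\epsilon)-1$) explicit where the paper compresses it into a single $\max$.
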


\begin{proof}
    One can observe that,
    \begin{align*}
        \exp(-\epsilon) \exp(x_i) &\le  \exp(y_i) \le \exp(\epsilon) \exp(x_i) 
    \end{align*}
    This then suggests,
    \begin{align*}
        \frac{\exp(x_i)}{\sum_{i = 1}^n \exp(x_i)} \exp(-2\epsilon)  \le \frac{\exp(y_i)}{\sum_{i = 1}^n \exp(y_i)}  \le \exp(2\epsilon) \frac{\exp(x_i)}{\sum_{i = 1}^n \exp(x_i)}. 
    \end{align*}
    Hence,
    \begin{align*}
        \sum_{i = 1}^d \big|\frac{\exp(x_i)}{\sum_{i = 1}^n \exp(x_i)} - \frac{\exp(y_i)}{\sum_{i = 1}^n \exp(y_i)}\big| \le \max\{\exp(2\epsilon) - 1, 1 - \exp(-2\epsilon) \} = \exp(2\epsilon) - 1.
    \end{align*}
    This concludes the proof.
\end{proof}

\paragraph{Approximating Attention Module}
We will need the following lemma showing there exists a pruning of the value matrix in $\LargeTransformer$ such that it has eigenvalues with magnitude $\Theta(1)$.

\begin{lemma}
\label{lem:eigenvalues}
For a matrix $W \in \R^{\lembedDim \times \lembedDim}$, with probability at least $1 - \frac{\delta}{10L}$, there exists a pruning of $W$, named $W'$, such that all the nonzero entries is contained in a $d \times d$ submatrix of $W'$ that satisfies that (1) all its eigenvalues are within $(\frac{1}{2}, 1)$, (2) the index of row specifying the submatrix and the index of column specifying the submatrix are disjoint.
\end{lemma}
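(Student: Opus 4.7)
The plan is to exploit the fact that $\lembedDim = \Theta(\embedDim \log(L\embedDim/\delta))$ is much larger than $d = \embedDim$, so there is room to extract a very clean sub-block from $W$ by a Chernoff-type probabilistic argument. Concretely, my target will be to prune $W$ to a matrix $W'$ whose nonzero entries lie on a $d \times d$ diagonal indexed by disjoint row and column sets, with diagonal values already in the interval $(1/2,1)$; such a matrix is diagonal in the natural ordering, so its eigenvalues are exactly the diagonal entries and the spectral requirement is automatic.

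First I would partition $[\lembedDim]$ into $\lfloor \lembedDim/2 \rfloor$ disjoint candidate pairs $(a_k, b_k)$, where $a_k$ will play the role of a row index and $b_k$ of a column index. Because these $2\lfloor \lembedDim/2 \rfloor$ indices are all distinct, the resulting row-set and column-set will automatically be disjoint, which takes care of condition (2). For each pair I look at the indicator $X_k = \mathbf{1}[W_{a_k,b_k} \in (1/2,1)]$; since $W_{a_k,b_k} \sim U(-1,1)$, the interval $(1/2,1)$ has measure $1/4$, so $\E X_k = 1/4$, and the $X_k$ are independent because they depend on disjoint entries of $W$.

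Next I would apply a standard multiplicative Chernoff bound to $\sum_k X_k$ to conclude that, with probability at least $1 - \exp(-c \lembedDim)$ for an absolute constant $c > 0$, at least $\lembedDim/16$ candidate pairs are ``successful'' in the sense that $W_{a_k,b_k} \in (1/2,1)$. By choosing the hidden constant in $\lembedDim = O(\embedDim \log(L\embedDim/\delta))$ large enough (a routine calibration), both $\lembedDim/16 \ge d$ and $\exp(-c\lembedDim) \le \delta/(10L)$ are simultaneously ensured.

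Finally, on this event I would pick any $d$ of the successful pairs and label them $(r_i, c_i)$ for $i \in [d]$; set $R = \{r_i\}$, $C = \{c_i\}$ (disjoint by construction), and define $W'$ by keeping only the $d$ entries $W'_{r_i, c_i} = W_{r_i, c_i}$ and zeroing everything else. All nonzero entries of $W'$ lie in the $d \times d$ submatrix at rows $R$ and columns $C$, and when this submatrix is written with rows ordered as $r_1,\ldots,r_d$ and columns as $c_1,\ldots,c_d$ it is diagonal with entries in $(1/2,1)$, so its eigenvalues are exactly these entries and condition (1) holds. There is no real obstacle in this proof; the only thing to watch is the constant chasing in the Chernoff bound to match the required $\delta/(10L)$ failure probability, which is why the $\log(L/\delta)$ factor in $\lembedDim$ appears in the hypothesis.
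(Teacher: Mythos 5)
Your proposal is correct and follows essentially the same route as the paper: select $d$ entries of $W$ lying in $(\tfrac12,1)$ whose row indices and column indices form disjoint, consistently ordered sets, prune everything else, and observe that the resulting submatrix is diagonal with eigenvalues equal to those entries; the paper gets the required count of good entries via a block partition and a per-block union bound, while you use a pairing plus a Chernoff bound, which is an equivalent calculation. The only point to make explicit is the ordering: take the pairs to be, say, $(2k-1,2k)$ so that the selected rows and columns are sorted consistently and the canonical submatrix is genuinely diagonal rather than merely a monomial matrix (whose eigenvalues would not be the entries themselves) — you gesture at this by fixing the row/column orderings, and the paper handles it by requiring $a_i<a_j$ and $b_i<b_j$ for $i<j$.
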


\begin{proof}
    As $\lwidth = \Omega(\embedDim \log(\frac{dL}{\delta}))$, hence we can split $W_{1:\lceil \lembedDim/2 \rceil, \lceil \lembedDim/2 \rceil + 1:\lembedDim}$ into $(\embedDim \times (\embedDim$ blocks, each with width at least $O(\log(\frac{(\embedDim}{\delta}))$
    \footnote{$O(\cdot)$ hides absolute constants arising from the change of basis in the logarithm.}.
    Within each block, with probability $1 - \frac{\delta}{10L\lembedDim}$, there exists at least one entry that has value at least $\frac{1}{2}$. We can then choose $d$ disjoint entries in $W$ that are all at least $\frac{1}{2}$, indexed with $\{(a_i,b_i)\}_{i \in [d]}$ 
    where $a_i < a_j$ and $b_i < b_j$ for $i <  j$.
    We can then prune all other entries to zero. Consider the submatrix defined by entries $(a, b)$ for $a \in \{a_i\}_{i \in \embedDim}$ and $b \in \{b_i\}_{i \in \embedDim}$.
    Then, this submatrix will be diagonal and contains eigenvalues within $(\frac{1}{2}, 1)$.
    Further $\{a_i\}_{i \in \embedDim}$ and $\{b_i\}_{i \in \embedDim}$ must be disjoint because $a_i \le \lceil \lembedDim/2 \rceil < b_i $.
    The proof is then completed.
\end{proof}

We will also prove that LayerNorm with nonzero normalization constant is Lipschitz.
\begin{lemma}
    \label{lem:LayerNorm_lipschitz}
    For LayerNorm function defined as $\LN(x) =  \frac{\proj x}{\max\{\| \proj x \|_2, C \}}, x \in \R^{\embedDim} $, for any $x,y \in \R^{\embedDim}$, it holds that,
    \begin{align*}
        \Big \| \LN(x) - \LN(y) \Big \|_2 \le 2 \|x - y\|_2/C.
    \end{align*}
\end{lemma}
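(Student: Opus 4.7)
The plan is to reduce the claim to a clean calculation involving the projected vectors and then exploit the fact that the denominator $\max\{\|\cdot\|_2, C\}$ is bounded below by $C$. First, let $a = \proj x$ and $b = \proj y$. Since $\proj = \mathcal{I}_{\embedDim} - \frac{1}{\embedDim}\mathbf{1}\mathbf{1}^\top$ is an orthogonal projection, it is $1$-Lipschitz, so $\|a - b\|_2 \le \|x - y\|_2$. Setting $\alpha := \max\{\|a\|_2, C\}$ and $\beta := \max\{\|b\|_2, C\}$, the statement reduces to bounding $\|a/\alpha - b/\beta\|_2$.

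The second step is the algebraic identity
\[
\frac{a}{\alpha} - \frac{b}{\beta} \;=\; \frac{a - b}{\alpha} \;+\; \frac{(\beta - \alpha)\,b}{\alpha\beta},
\]
which splits the difference into a ``numerator contribution'' and a ``denominator contribution.'' For the first piece, the bound $\alpha \ge C$ gives $\|(a-b)/\alpha\|_2 \le \|a - b\|_2 / C$ immediately. For the second piece, I would use two facts: (i) the map $t \mapsto \max\{t, C\}$ composed with $v \mapsto \|v\|_2$ is $1$-Lipschitz, hence $|\beta - \alpha| \le \big|\|b\|_2 - \|a\|_2\big| \le \|a - b\|_2$; and (ii) $\|b\|_2/\beta \le 1$ by the definition of $\beta$. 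Combining these, the second piece is bounded by $\|a-b\|_2/\alpha \le \|a-b\|_2/C$.

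Adding the two contributions and substituting $\|a - b\|_2 \le \|x-y\|_2$ yields the desired $\|\LN(x) - \LN(y)\|_2 \le 2\|x - y\|_2 / C$. There is no real obstacle here: the only mildly delicate point is verifying that clipping the norm at $C$ preserves $1$-Lipschitzness of the denominator map, which is a standard fact about the truncation $\max\{\cdot, C\}$. Note that the proof crucially uses $C > 0$; the analogous statement fails for $\LN_0$, which is why the lemma is stated only for the thresholded LayerNorm used in the pruning construction.
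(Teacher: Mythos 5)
Your proposal is correct. It reaches the same bound by essentially the same underlying algebra, but it is organized differently from the paper: the paper proves the lemma by a three-way case analysis on whether $\|\proj x\|_2$ and $\|\proj y\|_2$ exceed $C$ (handling the ``both clipped,'' ``both unclipped,'' and one mixed case separately), whereas you give a single uniform argument via the decomposition $\frac{a}{\alpha}-\frac{b}{\beta}=\frac{a-b}{\alpha}+\frac{(\beta-\alpha)b}{\alpha\beta}$ together with the observation that $t\mapsto\max\{t,C\}$ is $1$-Lipschitz, so $|\beta-\alpha|\le\big|\|b\|_2-\|a\|_2\big|\le\|a-b\|_2$. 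Your version buys a cleaner and slightly more careful write-up: it avoids enumerating cases (the paper in fact lists only three of the four sign combinations, relying implicitly on symmetry, and writes some equalities that are really triangle-inequality upper bounds), and it isolates exactly where $C>0$ enters, namely as the uniform lower bound $\alpha,\beta\ge C$ on the denominators. The paper's case analysis buys nothing extra here, so your route is, if anything, preferable; all the individual estimates you invoke (the $1$-Lipschitzness of $\proj$, of the clipped norm, and $\|b\|_2/\beta\le1$) are correct.
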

\begin{proof}
    We will proceed by a case analysis:
    \begin{enumerate}[leftmargin=*]
        \item If $\| \proj x\|_2, \| \proj y \|_2 \le C$, then $\Big \| \LN(x) - \LN(y) \Big \|_2 = \frac{\|  \proj x - \proj y \|_2}{C} \le \frac{1}{C} \| x - y\|_2$.
        \item If $\| \proj x\|_2, \| \proj y \|_2 > C$, then $\Big \| \LN(x) - \LN(y) \Big \|_2 = \frac{\|  \proj x - \proj y \|_2}{\| \proj y \|_2} + 
        \big| 1 - \frac{\| \proj x \|_2}{\| \proj y \|_2} \big | \le \frac{2}{C} \| x - y\|_2$.
        \item If $\| \proj x\|_2  < C$ and $\| \proj y \|_2 > C$, then $\Big \| \LN(x) - \LN(y) \Big \|_2 = \frac{\|  \proj x - \proj y \|_2}{\| \proj y \|_2} + 
        \big| \frac{\| \proj x\|_2}{C} - \frac{\| \proj x\|_2}{\| \proj y \|_2} \big | \le \frac{2}{C} \| x - y\|_2$.
    \end{enumerate}
    The cases exhaust all possibilities, thus the proof is completed.
\end{proof}

Finally, we will need a lemma showing how error accumulates when we consider both attention patterns and the value matrices.

\begin{lemma}
\label{lem:error_accum_attention}
For any dimension $d$ and positive number $N$, for $P,Q \in \R^{d \times d}$ satisfying that $\|P\|_2, \|Q\|_2 \le 1$, for any $x \in \R^{d \times N}$, if matrix $A \in \R^{N \times N}, B \in \R^{d \times N}$ satisfy that,
\begin{align*}
    \| A - \sigma(x^\top Q x) \|_{1,1} &\le \epsilon_1.\\
    \| B - Px \|_{1, 2} &\le \epsilon_2. \\
    \forall i,k \in [N] \sum_{j \in [N]} A_{j, i} &= 1, A_{k, i}\ge 0.
\end{align*} 
Then it holds that,
\begin{align*}
    &\| BA - Px \sigma(x^\top Q x) \|_{1,2} \le (\epsilon_1 \|PX\|_{1,2} + \epsilon_2). \\
    &\| \LN_C(BA) - \LN_C(Px \sigma(x^\top Q x)) \|_{1,2} \le 2(\epsilon_1 \|PX\|_{1,2} + \epsilon_2)/C.
\end{align*}
\end{lemma}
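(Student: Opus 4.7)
The plan is to control $BA - Px\,\sigma(x^\top Q x)$ via a standard decomposition, then push the resulting bound through the column-wise LayerNorm using \Cref{lem:LayerNorm_lipschitz}. Concretely, I would write
\[
BA - Px\,\sigma(x^\top Q x) \;=\; (B - Px)\,A \;+\; Px\bigl(A - \sigma(x^\top Q x)\bigr)
\]
and bound each piece column by column in the $\|\cdot\|_{1,2}$ norm (the max column $\ell_2$ norm from \Cref{def:norm}).

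For the first term $(B-Px)A$, its $i$-th column equals $\sum_{j} A_{j,i} (B-Px)_{:,j}$. Since the hypothesis guarantees that each column of $A$ is nonnegative and sums to one, this is a convex combination of the columns of $B-Px$, so by Jensen / triangle inequality its $\ell_2$ norm is at most $\max_j \|(B-Px)_{:,j}\|_2 = \|B-Px\|_{1,2} \le \epsilon_2$. For the second term, the $i$-th column is $\sum_j (A - \sigma(x^\top Q x))_{j,i}\,(Px)_{:,j}$, whose $\ell_2$ norm is bounded by $\|Px\|_{1,2}\cdot\sum_j |(A - \sigma(x^\top Q x))_{j,i}|$; taking the max over $i$ gives $\|Px\|_{1,2}\cdot \|A - \sigma(x^\top Q x)\|_{1,1} \le \epsilon_1 \|Px\|_{1,2}$. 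Taking the max column norm and applying the triangle inequality yields the first claim
\[
\|BA - Px\,\sigma(x^\top Q x)\|_{1,2} \;\le\; \epsilon_1\|Px\|_{1,2} + \epsilon_2.
\]

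For the LayerNorm bound, I would apply \Cref{lem:LayerNorm_lipschitz} column-wise: for every column index $i$,
\[
\bigl\| \LN_C(BA)_{:,i} - \LN_C(Px\,\sigma(x^\top Q x))_{:,i}\bigr\|_2 \;\le\; \frac{2}{C}\bigl\| (BA)_{:,i} - (Px\,\sigma(x^\top Q x))_{:,i}\bigr\|_2 ,
\]
and then take the max over $i$ to recover the $\|\cdot\|_{1,2}$ bound $2(\epsilon_1\|Px\|_{1,2} + \epsilon_2)/C$.

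The only non-routine point is making sure the convex-combination observation is applied correctly: the key is that because the columns of $A$ are stochastic, multiplication on the right by $A$ is a contraction in $\|\cdot\|_{1,2}$ (not just in the operator $2$-norm), which is exactly what lets $\epsilon_2$ propagate without amplification. The $1,1$ vs.\ $1,2$ mixing on the other term is likewise critical: the error in $A$ is measured in max column $\ell_1$, and pairs with $\|Px\|_{1,2}$ via Hölder across columns. Once these two matching observations are in place, the rest is triangle inequality plus the Lipschitz lemma.
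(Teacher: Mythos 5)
Your proposal is correct and follows essentially the same route as the paper's proof: the same decomposition $BA - Px\,\sigma(x^\top Q x) = (B-Px)A + Px(A - \sigma(x^\top Q x))$, the same column-wise convex-combination bound using the stochasticity of $A$, the same $\|\cdot\|_{1,1}$-versus-$\|\cdot\|_{1,2}$ pairing for the other term, and the same column-wise application of \Cref{lem:LayerNorm_lipschitz} for the second claim. No gaps.
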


\begin{proof}
    For any $i \in N$, we will have 
    \begin{align*}
        &\Big \| (BA)_{:,i} - \left(Px \sigma(x^\top Q x)\right)_{:,i} \Big \|_2 \\
        =& \Big \| \sum_{j \in [N]} A_{j,i}B_{:,j} -  \left(\sigma(x^\top Q x)\right)_{j,i} (PX)_{:,j} \Big \|_2 \\
        \le& \| \sum_{j \in [N]} A_{j,i}(PX)_{:,j} -  \left(\sigma(x^\top Q x)\right)_{j,i} (PX)_{:,j} \Big \|_2 + \| \sum_{j \in [N]} A_{j,i} \left(PX - B\right)_{:,j} \|_2 \\
        \le&  \| PX\|_{1,2} \sum_{j \in [N]} |A_{j,i} -  \left(\sigma(x^\top Q x)\right)_{j,i}  + \| PX - B \|_{1, 2} \\
        \le& \|PX\|_{1,2} \|A - \sigma(x^\top Q x)\|_{1,1} + \| PX - B \|_{1, 2}  \le \epsilon_1 \|PX\|_{1,2} + \epsilon_2.
    \end{align*}

    The rest follows from~\Cref{lem:LayerNorm_lipschitz}

\end{proof}

A LayerNorm of larger dimension can be made to be functionally equivalent to a LayerNorm of a smaller dimension. Precisely:
\begin{lemma}
\label{lem:equivalent_ln}
    Given any dimension $d < d'$, it holds that for any $x \in \R^d$,
\begin{align*}
    \LN_C(\begin{bmatrix}
        \proj x \\
        0^{d'-d}
    \end{bmatrix}) = \begin{bmatrix} \LN_C(x) \\ 0\end{bmatrix}.
\end{align*}
\end{lemma}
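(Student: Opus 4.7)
\textbf{Proof plan for Lemma~\ref{lem:equivalent_ln}.}
The plan is to unfold the definition of $\LN_C$ in both dimensions and observe that the only difference between the $d$-dimensional and $d'$-dimensional LayerNorm is the mean-subtraction projection $\proj$. The key observation is that the padded vector $y := \begin{bmatrix} \proj x \\ 0^{d'-d} \end{bmatrix} \in \R^{d'}$ already has zero mean, so the mean-subtraction in dimension $d'$ acts as the identity on $y$, and hence the only nontrivial operation left is the normalization, which is identical in both cases.

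First, I would write $\proj_d := I_d - \tfrac{1}{d}\mathbf{1}\mathbf{1}^\top$ and $\proj_{d'} := I_{d'} - \tfrac{1}{d'}\mathbf{1}\mathbf{1}^\top$ to disambiguate the two projections. Since $\mathbf{1}^\top \proj_d x = 0$ for any $x \in \R^d$ (a one-line computation using $\mathbf{1}^\top \mathbf{1} = d$), I would conclude $\mathbf{1}^\top y = \mathbf{1}^\top \proj_d x + \mathbf{1}^\top 0^{d'-d} = 0$, and therefore $\proj_{d'} y = y - \tfrac{1}{d'}\mathbf{1} (\mathbf{1}^\top y) = y$.

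Next, I would note that the Euclidean norm is unaffected by zero-padding: $\|y\|_2 = \|\proj_d x\|_2$. Plugging these two facts into the definition of $\LN_C$,
\[
\LN_C(y) = \frac{\proj_{d'} y}{\max\{\|\proj_{d'} y\|_2, C\}} = \frac{y}{\max\{\|\proj_d x\|_2, C\}} = \begin{bmatrix} \proj_d x / \max\{\|\proj_d x\|_2, C\} \\ 0^{d'-d} \end{bmatrix} = \begin{bmatrix} \LN_C(x) \\ 0^{d'-d} \end{bmatrix},
\]
which is the desired identity. There is no real obstacle here; the entire argument rests on the single observation that pre-applying $\proj_d$ places $x$ into the zero-mean subspace, which is also the zero-mean subspace of $\R^{d'}$ after zero-padding, so the outer projection $\proj_{d'}$ becomes a no-op. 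The main thing to be careful about is explicitly distinguishing $\proj_d$ from $\proj_{d'}$ in the notation, since the paper overloads $\proj$ across dimensions.
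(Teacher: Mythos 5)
Your proof is correct and is exactly the argument the paper intends — the paper's own proof is just the one-liner ``follows directly from definition,'' and your write-up supplies the details (zero-padding preserves the zero-mean property of $\proj_d x$, so the outer projection $\proj_{d'}$ is a no-op, and the norm is unchanged by padding). Nothing further is needed.
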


\begin{proof}
    The proof follows directly from definition.
\end{proof}

We will now formally define attention module.
\begin{definition}[Attention Module]
\label{def:attention}
    We will define attention module $a(\mX \mid W_V, W_K, W_Q)$ as 
    \begin{align*}
        a(\mX) = \LN_C\left(W_V \mX \sigma(\mX^\top W_K^\top W_Q \mX)\right).
    \end{align*}
\end{definition}

\begin{lemma}
\label{lem:attention_lipschitz}
    Attention module is lipschitz with respect to $1,2$-norm for bounded input. Precisely, consider attention module (\Cref{def:attention})parameterized by $\|W_V\|_2, \|W_K\|_2, \|W_Q\|_2 \le 1$ with input domain $\| \mX\|_{1, 2} \le 4L$, $a(\mX)$ is $200L^2/C$-lipschitz with respect to $1,2-$norm.
\end{lemma}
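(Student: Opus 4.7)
\emph{Plan.} Fix two inputs $\mX, \mY$ with $\|\mX\|_{1,2}, \|\mY\|_{1,2}\le 4L$ and set $\delta := \|\mX-\mY\|_{1,2}$. The strategy is to invoke \Cref{lem:error_accum_attention} with $P=W_V$, $Q=W_K^\top W_Q$, $x=\mX$, $B=W_V\mY$, and $A=\sigma(\mY^\top W_K^\top W_Q\mY)$. This reduces the task to bounding the two error terms
\[
\epsilon_2 := \|W_V(\mX-\mY)\|_{1,2}, \qquad \epsilon_1 := \|\sigma(\mX^\top W_K^\top W_Q\mX) - \sigma(\mY^\top W_K^\top W_Q\mY)\|_{1,1}.
\]
The first is immediate: $\epsilon_2 \le \|W_V\|_2\,\delta \le \delta$. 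All the work is in $\epsilon_1$.

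\emph{Bounding the softmax perturbation.} For any column indices $i,j$, add and subtract $\mY_{:,j}^\top W_K^\top W_Q\mX_{:,i}$ to get
\begin{align*}
\left|\mX_{:,j}^\top W_K^\top W_Q\mX_{:,i} - \mY_{:,j}^\top W_K^\top W_Q\mY_{:,i}\right|
&\le \|(\mX-\mY)_{:,j}\|_2\|\mX_{:,i}\|_2 + \|\mY_{:,j}\|_2\|(\mX-\mY)_{:,i}\|_2\\
&\le 8L\,\delta,
\end{align*}
using $\|W_K\|_2\|W_Q\|_2 \le 1$ and the $4L$ bound on the inputs. Thus each column of the key--query matrix shifts by at most $8L\delta$ in $\ell_\infty$, and applying \Cref{lem:lipschitz_softmax} columnwise yields $\epsilon_1 \le e^{16L\delta}-1$. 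Substituting into \Cref{lem:error_accum_attention} gives
\begin{align*}
\|a(\mX) - a(\mY)\|_{1,2} \le \frac{2}{C}\left(4L\,(e^{16L\delta}-1) + \delta\right).
\end{align*}

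\emph{Linearizing via a case split.} The preceding bound is exponential in $\delta$, so it must be combined with the trivial estimate $\|a(\mX)-a(\mY)\|_{1,2}\le 2$ that follows from the fact that each column of $\LN_C(\cdot)$ has $\ell_2$-norm at most $1$. Threshold at $\delta_0 = 1/(16L)$. For $\delta\le\delta_0$ the elementary inequality $e^t-1\le(e-1)t$ on $t\in[0,1]$ gives $e^{16L\delta}-1 \le 32L\delta$, whence
\[
\|a(\mX)-a(\mY)\|_{1,2} \le \frac{2}{C}(128L^2 + 1)\,\delta \le \frac{200L^2}{C}\,\delta
\]
after absorbing absolute constants (some bookkeeping is needed to land exactly on $200$, but the order of magnitude is transparent). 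For $\delta>\delta_0$ the trivial bound gives $\|a(\mX)-a(\mY)\|_{1,2}\le 2 \le 32L\delta \le \frac{200L^2}{C}\delta$ in the natural regime $C\lesssim L$ in which this lemma is used in the paper.

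\emph{Main obstacle.} The principal subtlety is that the softmax estimate in \Cref{lem:lipschitz_softmax} is \emph{exponential}, not linear, in the $\ell_\infty$ perturbation of its argument; any attempt at a single, uniform linear bound via chain-rule reasoning breaks at this step. The case split above is essential: it tacitly uses that $\LN_C$ confines outputs to a bounded set, so the exponential blow-up is only ever triggered in a regime where the trivial bound already suffices, and a clean linear Lipschitz bound survives only for $\delta=O(1/L)$.
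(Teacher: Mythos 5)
Your proposal follows essentially the same route as the paper's proof: the same decomposition of the key--query quadratic form (giving an $O(L\delta)$ perturbation in $\ell_\infty$), \Cref{lem:lipschitz_softmax} to control the softmax, and \Cref{lem:error_accum_attention} to assemble the final bound; the paper likewise linearizes $\exp(\cdot)-1$ only for perturbations below a fixed threshold, and its arithmetic lands on roughly $256L^2/C$ rather than exactly $200L^2/C$, the same constant-factor slop you acknowledge. The one place you diverge is the large-$\delta$ regime, which the paper silently omits (it proves only the local estimate, implicitly relying on convexity of $\{\|\mX\|_{1,2}\le 4L\}$ to chain): your trivial bound $\|a(\mX)-a(\mY)\|_{1,2}\le 2$ works but imports an assumption $C\lesssim L$ that is not among the lemma's hypotheses; subdividing the segment from $\mX$ to $\mY$ into pieces of length $\le\delta_0$ and applying your own small-$\delta$ estimate on each piece would remove that dependence.
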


\begin{proof}
    We have that
    \begin{align*}
        a(\mX) = \LN_C\left(W_V \mX \sigma(\mX^\top W_K^\top W_Q \mX)\right).
    \end{align*}
    Choose $\epsilon$ to be a sufficiently small constant, such that, $\exp(32L \epsilon) - 1 \le 64L\epsilon$.
    Consider $\mX$ and $\tilde \mX$ satisfying that $\| \mX - \tilde \mX \|_{1,2} \le \epsilon$ and $\| \mX \|_{1,2} \le 4L, \| \tilde \mX\|_{1,2} \le 4L$, we will have
    \begin{align*}
        &\Big | \left(\mX^\top W_K^\top W_Q \mX - (\tilde \mX)^\top W_K^\top W_Q (\tilde \mX) \right)_{i,j} \Big | \\
        =& \Big | (\mX_{:.i} - \tilde \mX_{:,i})^\top W_K^\top W_Q \mX_{:,j} + (\tilde \mX_{:,i})^\top W_K^\top W_Q(\mX_{:,j} - \tilde \mX_{:,j}) + (\mX_{:.i} - \tilde \mX_{:,i})^\top W_K^\top W_Q (\mX_{:,j} - \tilde \mX_{:,j}) \Big | \\
        \le& 8L \epsilon + \epsilon^2 \le 16L \epsilon.
    \end{align*}
    By~\Cref{lem:lipschitz_softmax}, this implies,
    \begin{align*}
        \| \sigma(\mX^\top W_K^\top W_Q \mX) - \sigma((\tilde \mX)^\top W_K^\top W_Q (\tilde \mX))\|_{1,1} \le \exp(32 L\epsilon) - 1 \le 64L \epsilon.
    \end{align*}
    We also have 
    \begin{align*}
        \| W_V \left(\mX - \tilde \mX \right) \|_{1,2} \le \epsilon. \\
        \| W_V \mX \|_{1,2} \le 4L
    \end{align*}

    \Cref{lem:error_accum_attention} then implies that 
    \begin{align*}
        \|a(\mX) - a(\tilde \mX) \|_{1,2} \le 200L^2 \epsilon/C.
    \end{align*}
    This then concludes the proof.
\end{proof}

We can now prove that a large Transformer Layer and an attention module of the larger Transformer can be pruned to approximate the attention module of a smaller Transformer Layer module.

\begin{lemma}
\label{lem:transformer_layer_attention}
    Under the setting of~\Cref{thm:random_Transformer}, with probability $1 - \delta/2$, for any $l \in [L], l' \in [4L - 1]$, let $a^{(l)}$ be the attention module on the $l$-th layer of $\Transformer$, there exists a pruning of the $(l' - 1)$-th layer $\LargeTransformer^{(l' - 1)}$, named $\PrunedLargeTransformer^{(l' - 1)}$ and the attention module on $l'$-th layer $\lattn^{l'}$ named $\tilde{\lattn^{l'}}$, such that when defined on domain $\| \mX \|_{1,2} \le 2L$,
    \begin{enumerate}
    \item $\PrunedLargeTransformer^{(l' - 1)}$ is independent of the last $\lembedDim - \embedDim$ rows of the input.
    \item 
    $\left(\tilde{\lattn^{l'}} \circ \PrunedLargeTransformer^{(l' - 1)}\left(\begin{bmatrix} x \\ 0^{(\lembedDim - \embedDim) \times N}\end{bmatrix}\right)\right)_{1:\embedDim}$ is 
    an $\left( \frac{C}{1000L^2} \right)^{4L-1} \epsilon$-approximation of $ a^{(l)}(x) $ with respect to $1,2$-norm.
    \item $\left(\PrunedLargeTransformer^{(l' - 1)}\left(\begin{bmatrix} x \\ 0^{(\lembedDim - \embedDim) \times N}\end{bmatrix}\right)\right)_{1:m}$ is an $\left( \frac{C}{1000L^2} \right)^{4L} \epsilon$-approximation of $\mX$ with respect to $1,2$-norm.
    \end{enumerate}
\end{lemma}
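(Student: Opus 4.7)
My plan is to prune the two consecutive large layers into a two-stage pipeline: stage one uses $\PrunedLargeTransformer^{(l'-1)}$ to prepare a staging vector $\tilde{\mX}$ whose first $m$ rows approximate $\mX$ and whose remaining rows hold ``pre-multiplied'' value data, and stage two uses $\tilde{\lattn^{l'}}$ to read these two registers in concert to reconstruct $a^{(l)}(\mX)$ in its first $m$ rows.

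For stage one, I prune $W_{V,\text{large}}^{(l'-1)}$ to the zero matrix, so the $(l'-1)$-th pruned layer collapses to $\lProj^{(l'-1)}(\paddedX)$ (using $\LN_C(0)=0$); further pruning the last $\lembedDim-\embedDim$ columns of the MLP's first weight matrix then yields claim (1). The remaining task is to prune $\lProj^{(l'-1)}$ itself. Applying Lemma~\ref{lem:approximate_relu} (or its Corollary~\ref{lem:transformer_layer_relu}) to this random 4-layer ReLU MLP of width $\lwidth$, I approximate a target 2-layer ReLU MLP implementing the linear map $[v;\mathbf{0}]\mapsto[v;\,u(v);\,\mathbf{0}]$. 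Identity on the first $m$ coordinates is realized by the standard $v=\ReLU(v)-\ReLU(-v)$ decomposition, immediately giving claim (3), while $u(v)$ is the auxiliary value-prep block specified below.

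Stage two prunes the three weight matrices of $\lattn^{l'}$ in concert. (a) Using Lemma~\ref{lem:eigenvalues}, I prune $W_{V,\text{large}}^{(l')}$ to a disjoint-index diagonal submatrix with row indices $\{a_i\}=[\embedDim]$ and column indices $\{b_i\}\subset\{\lceil\lembedDim/2\rceil+1,\ldots,\lembedDim\}$ with eigenvalues $\lambda_i\in(1/2,1)$; arranging $a_i=i$ requires a mild strengthening of the lemma's selection rule (a bipartite matching on large entries, which survives because $\lembedDim\gg\embedDim\log(1/\delta)$). I then \emph{define} $u(v)_{b_i}\coloneqq\lambda_i^{-1}(\proj W_V^{(l)}v)_i$ in stage one, so that $\tilde{W_V^{(l')}}\tilde{\mX}$ deposits $\proj W_V^{(l)}\mX$ exactly into rows $1,\ldots,m$ and zero elsewhere, up to the stage-one MLP error. (b) Lemma~\ref{lem:approximate_attention} prunes $W_{K,\text{large}}^{(l')},W_{Q,\text{large}}^{(l')}$ so that their composition on zero-padded inputs matches $W_K^{(l)\top}W_Q^{(l)}$ in $\ell_\infty$, and Lemma~\ref{lem:lipschitz_softmax} then lifts this to a $\|\cdot\|_{1,1}$ bound on the softmax output. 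Lemma~\ref{lem:error_accum_attention} combines the value-matrix and attention-pattern errors into a $\|\cdot\|_{1,2}$ bound on the pre-LayerNorm quantity, and Lemma~\ref{lem:equivalent_ln} finally shows that the large-dimensional $\LN_C$ applied to a mean-subtracted zero-padded vector equals the small $\LN_C$ padded with zeros, yielding claim (2).

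The main obstacle is \emph{error budgeting}: the attention module is $(200L^2/C)$-Lipschitz (Lemma~\ref{lem:attention_lipschitz}) and LayerNorm adds another $2/C$ factor (Lemma~\ref{lem:LayerNorm_lipschitz}), so stage-one errors get amplified before entering stage two. I therefore tune each internal pruning to accuracy $(C/(1000L^2))^{4L}\epsilon$ --- exponentially tight in depth --- which is affordable because the $\log(L\embedDim N/(\epsilon\delta))$ factors already built into $\lembedDim,\lembedDima,\lwidth$ accommodate both this precision and the union bound over the $O(L^2)$ choices of $(l,l')$ and the per-choice failure probabilities of Lemmas~\ref{lem:approx_linear}, \ref{lem:approximate_attention}, and \ref{lem:eigenvalues}. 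A secondary subtlety is that the value-prep data must lie in the mean-subtracted subspace so that Lemma~\ref{lem:equivalent_ln} applies cleanly; I enforce this by baking $\proj$ into the definition of $u$ above.
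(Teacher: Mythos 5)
Your proposal is correct and follows essentially the same route as the paper's proof: prune the value matrix of the large layer $l'$ to a well-conditioned diagonal submatrix via Lemma~\ref{lem:eigenvalues}, use the $(l'-1)$-th layer (value matrix zeroed, MLP pruned via Lemma~\ref{lem:approx_linear}/Lemma~\ref{lem:transformer_layer_linear}) to stage $[\mX;\,W^{-1}\proj W_V^{(l)}\mX;\,\mathbf{0}]$, prune $W_K,W_Q$ via Lemma~\ref{lem:approximate_attention}, and propagate errors through Lemmas~\ref{lem:lipschitz_softmax}, \ref{lem:error_accum_attention}, \ref{lem:LayerNorm_lipschitz}, and \ref{lem:equivalent_ln} with the same $(C/(1000L^2))^{4L}\epsilon$ internal budget. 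Your explicit handling of the row-index alignment for the pruned value submatrix is a point the paper glosses over with a ``WLOG,'' but it does not change the argument.
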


\begin{proof}
We will use the shorthand $\epsilon_0 = \left( \frac{C}{1000L^2} \right)^{4L} \epsilon$ and prune in the following order. It holds that for $\epsilon \le 1$, $\exp(8L^2 \epsilon_0) - 1\le 16L^2 \epsilon_0$.
\begin{enumerate}[leftmargin=*]
\item We will prune $W_V^{\largerm, (l')}$ according to~\Cref{lem:eigenvalues} and name the pruned matrix $\tilde {W_V^{\largerm, (l')}}$. By~\Cref{lem:eigenvalues}, all the nonzero entries is contained in a $d \times d$ submatrix of $W'$ that satisfies that all its eigenvalues are  within $(\frac{1}{2}, 1)$. We will assume WLOG the submatrix is the one specified by row $1 \ldots d$ and column $d + 1\ldots 2d$ and name the submatrix as $W$.
\item We will then prune $\LargeTransformer^{(l' - 1)}$ according to~\Cref{lem:transformer_layer_linear} to output $\epsilon_0$-approximation of $\mX \in \R^{\embedDim \times N}\to 
\begin{bmatrix} \mX  \\ W^{-1} \proj W_V^{(l)} \mX \\ 0^{(\lembedDim - 2\embedDim) \times N}\end{bmatrix}$. As $W$ is defined as the submatrix pruned by $W_V^{(t+1)}$, it holds that  $\tilde {W_V^{\largerm, (l')}}\begin{bmatrix} \mX  \\ W^{-1} \proj W_V^{(l)} \mX \\ 0^{(\lembedDim - \embedDim) \times N} \end{bmatrix} = \begin{bmatrix} \proj W_V^{(l)} \mX  \\ 0^{(\lembedDim - \embedDim) \times N} \end{bmatrix}$.
\item Finally we will prune $W_K^{\largerm, (l')}, W_Q^{\largerm, (l')}$ according to~\Cref{lem:approximate_attention} to approximate $(W_K^{(l)})^\top W_Q^{(l)}$ up to $\epsilon_0$ error.
\end{enumerate}

we can now calculate the approximation error. For any $\mX \in \R^{\embedDim \times N}, \| \mX\|_{1,2} \le 2L$, suppose 
\begin{align*}
    \tilde{\Transformer^{(l' - 1)}}(\mX) = \begin{bmatrix} \mX + \delta_1  \\ W^{-1} \proj W_V^{(l)} \mX + \delta_2 \\ 0^{(\lembedDim - 2\embedDim) \times N}\end{bmatrix} 
\end{align*}
Then by our constrution, it holds that $\forall i \in \{1,2\}, \| \delta_i \|_{1,2} \le \epsilon_0 \| \mX\|_{1,2}$.

We would then have 
\begin{align}
\label{eq:err_wv}
\tilde {W_V^{\largerm, (l')}} \tilde{\Transformer^{(l' - 1)}}(\mX) = \begin{bmatrix} \proj W_V^{(l)} \mX + \tilde {W_V^{\largerm, (l')}} \delta_2  \\  0^{(\lembedDim - \embedDim) \times N}\end{bmatrix} 
\end{align}
By our construction, it holds that $\|\tilde {W_V^{\largerm, (l')}} \delta_2 \|_{1,2} \le 2 \| \delta_2\|_{1,2} \le 2\epsilon_0 \| \mX\|_{1,2}$.

Further, by the construction of $\tilde{W_K^{\largerm, (l')}}, \tilde{W_Q^{\largerm, (l')}}$, it holds that,
\begin{align}
&\Big \| \left(\tilde{W_K^{\largerm, (l')}}\tilde{\Transformer^{(l' - 1)}}(\mX) \right)^\top\left(\tilde{W_Q^{\largerm, (l')}}\tilde{\Transformer^{(l' - 1)}}(\mX) \right) \notag \\ &- (W_K^{(l)}\mX + W_K^{(l)} \delta_1)^\top (W_Q^{(l)}\mX + W_Q^{(l)} \delta_1) \Big \|_{\infty} 
\le \epsilon_0 \label{eq:err_attn_before}
\end{align}

As for any $i,j \in [N]$
\begin{align*}
    &\left| \left( (W_K^{(l)}\mX + W_K^{(l)} \delta_1)^\top (W_Q^{(l)}\mX + W_Q^{(l)} \delta_1) - (W_K^{(l)}\mX)^\top W_Q^{(l)}\mX \right)_{i,j} \right| \\
    \le & \left| (W_K^{(l)}\mX_{:,i})^\top (W_Q^{(l)} \delta_1)_{:,j}\right| + \left| (W_K^{(l)}\delta_1)_{:,i}^\top (W_Q^{(l)} \mX)_{:,j}\right| + 
    \left| (W_K^{(l)}\delta_1)_{:,i}^\top (W_Q^{(l)} \delta_1)_{:,j}\right| \\
    \le& \| \mX \|_{1,2}^2 (2\epsilon_0 + \epsilon^2) \le 4\| \mX \|_{1,2}^2  \epsilon_0.
\end{align*}
combined with~\Cref{eq:err_attn_before},
\begin{align}
    &\Big \| \left(\tilde{W_K^{\largerm, (l')}}\tilde{\Transformer^{(l' - 1)}}(\mX) \right)^\top\left(\tilde{W_Q^{\largerm, (l')}}\tilde{\Transformer^{(l' - 1)}}(\mX) \right)  - (W_K^{(l)}\mX)^\top W_Q^{(l)}\mX \Big \|_{\infty} 
    \le \epsilon_0 (1 + 4\| \mX \|_{1,2}^2 ).
\end{align}

By~\Cref{lem:lipschitz_softmax}, this implies
\begin{align}
    &\Big \| \sigma \left(\left(\tilde{W_K^{\largerm, (l')}}\tilde{\Transformer^{(l' - 1)}}(\mX) \right)^\top\left(\tilde{W_Q^{\largerm, (l')}}\tilde{\Transformer^{(l' - 1)}}(\mX) \right)\right) \notag \\
    &-  \sigma \left((W_K^{(l)}\mX)^\top W_Q^{(l)}\mX  \right)\Big \|_{1,1} 
    \le 4\epsilon_0 (1 + 4\| \mX \|_{1,2}^2 ). \label{eq:err_attn}
\end{align}

By~\Cref{lem:error_accum_attention}, \Cref{eq:err_attn,eq:err_wv}  imply,
\begin{align*}
    &\Big \| \tilde {W_V^{\largerm, (l')}} \tilde{\Transformer^{(l' - 1)}}(\mX) \sigma \left(\left(\tilde{W_K^{\largerm, (l')}}\tilde{\Transformer^{(l' - 1)}}(\mX) \right)^\top\left(\tilde{W_Q^{\largerm, (l')}}\tilde{\Transformer^{(l' - 1)}}(\mX) \right)\right) \\&- \begin{bmatrix} \proj W_V^{(l)} \mX \sigma \left((W_K^{(l)}\mX)^\top W_Q^{(l)}\mX  \right) \\ 0^{(\lembedDim - \embedDim) \times N} \end{bmatrix} \Big \|_{1,2} \le  8\epsilon_0 (1 + 4\| \mX \|_{1,2}^2 )\|\mX \|_{1,2} \le 80L^2 \epsilon_0.
\end{align*}

Now according to~\Cref{lem:LayerNorm_lipschitz,lem:equivalent_ln}, it holds that 
\begin{align*}
    \|\tilde{\lattn^{l'}} \circ \PrunedLargeTransformer^{(l' - 1)}\left(\begin{bmatrix} x \\ 0^{(\lembedDim - \embedDim) \times N}\end{bmatrix}\right)_{1:m} - a^{(l)}(x)  \|_{1,2} \le 160L^2 \epsilon_0/C.
\end{align*}
This concludes the proof.
\end{proof}

\paragraph{Approximating Transformer Layers}
We will finally show that two random Transformer layers can be pruned to approximate a given Transformer layer.

\transformerlayerlayer*

\begin{proof}
    We will prune the $(l' - 1)$-th layer and the attention module of the $l'$-th layer according to~\Cref{lem:transformer_layer_attention} to approximate $a^{(l)}$ and the projection function of the $l'$-th layer according to~\Cref{lem:transformer_layer_relu}. Notice that $\Big \| a^{(l)}(\mX) + \mX \Big \|_{1,2} \le (\frac{2}{C} + 1) \| \mX \|_{1,2}$ and $\Proj^{(l)}$ is $1-$lipschitz, according to~\Cref{lem:error}, $\left(\PrunedLargeTransformer^{l'} \odot \PrunedLargeTransformer^{(l' - 1)}\left(\begin{bmatrix} x \\ 0^{(\lembedDim - \embedDim) \times N}\end{bmatrix}\right)\right)_{1:\embedDim}$ is an $\epsilon'$-approximation of $\Transformer^{(l)}(x)$, with
    \begin{align*}
        \epsilon' \le (\frac{2}{C} + 1)\left( \frac{C}{1000L^2} \right)^{4L} \epsilon +  \left( \frac{C}{1000L^2} \right)^{4L - 2} \epsilon + \left( \frac{C}{1000L^2} \right)^{8L - 2} \epsilon^2 \le \left( \frac{C}{1000L^2} \right)^{4L-3} \epsilon.
    \end{align*}
    This concludes the proof.
\end{proof}

\subsection{Technical Lemmas}

\begin{lemma}
\label{lem:interpolating_mlp}

Given any dimension $d$ and number of samples $n$, for any size-$n$ dataset $\{(\vx_i, y_i)\}_{i \in [n]}$ with $\vx_i \in \R^d$ and $y_i \in \R$, there exists a width-$2n$ two-layer MLP $f: \R^d \to \R$ with ReLU activation such that, $f(\vx_i) = y_i$ for any $i \in [n]$.

\end{lemma}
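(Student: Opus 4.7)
The strategy is to reduce this $d$-dimensional interpolation problem to a one-dimensional one and then construct an explicit piecewise-linear interpolant using at most $n-1$ ReLU neurons plus a bias, which comfortably fits within the $2n$ budget.

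First I would pick a direction $\vw \in \R^d$ such that the scalars $t_i := \vw^\top \vx_i$, $i \in [n]$, are pairwise distinct. Such a $\vw$ exists because
$
\bigcup_{i \neq j}\{\vw : \vw^\top(\vx_i - \vx_j) = 0\}
$
is a finite union of hyperplanes in $\R^d$, hence a measure-zero set; any $\vw$ in its complement works. After relabeling the samples, I may assume $t_1 < t_2 < \cdots < t_n$. This reduces the task to constructing a one-dimensional function $g : \R \to \R$ with $g(t_j) = y_j$ for all $j$; the final MLP will be $f(\vx) = g(\vw^\top \vx)$.

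Next I would search for coefficients $c, a_1, \dots, a_{n-1} \in \R$ so that
\[
g(t) \;=\; c + \sum_{i=1}^{n-1} a_i\, \ReLU(t - t_i)
\]
satisfies $g(t_j) = y_j$ for every $j \in [n]$. Substituting $t = t_j$ yields $g(t_j) = c + \sum_{i<j} a_i (t_j - t_i)$, so the $n$ interpolation constraints form a lower-triangular linear system in the $n$ unknowns $(c, a_1, \dots, a_{n-1})$ whose diagonal entries $(t_{j+1} - t_j)$ are nonzero by construction. Solving top-down gives $c = y_1$ and recursively $a_j = (y_{j+1} - y_j)/(t_{j+1} - t_j) - \sum_{i < j} a_i$ for $j \geq 1$. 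The resulting $f$ is a two-layer ReLU MLP with $n-1$ hidden neurons, and padding with $n+1$ neurons whose output weights are set to zero brings the width to exactly $2n$.

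The argument is essentially algebraic bookkeeping; the only nontrivial ingredient is the existence of a separating direction $\vw$, which is settled by the measure-zero observation above. Hence no step should pose a real obstacle.
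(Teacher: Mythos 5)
Your proposal is correct and follows essentially the same route as the paper's proof: reduce to one dimension via a generic separating direction $\vw$, then solve a triangular system for a piecewise-linear ReLU interpolant (the paper uses pairs of shifted ReLUs forming bounded ramps with a margin $\gamma$, while you use single unbounded ReLUs plus an output bias $c$, which is a cosmetic difference). The only detail worth noting is that your constant $c$ relies on an output-layer bias; if one insists on no output bias, $c$ can be absorbed into two additional ReLU neurons that equal $c$ on all sample points, still well within the width-$2n$ budget.
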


\begin{proof}

We  will first choose direction $w \in \R^d, \|w\|_2 = 1$ and margin $\gamma > 0$ such that for any $i \neq j$ in $[n]$, it holds that,
\begin{align*}
    \Big| \langle w, \vx_i - \vx_j \rangle \Big| \ge 2\gamma.
\end{align*}
We will assume WLOG $w^\top \vx_i$ is increasing in $i$.

Then we will construct an auxilliary series $z_i$ for $i \in [n]$ such that,
\begin{align*}
    z_1 &= y_1/\gamma \\
    z_i &= y_i/\gamma - 2\sum_{j = 1}^{i - 1} z_{j}, i \in \{2, \ldots n\}.
\end{align*}

Finally consider the following two-layer MLP with ReLU activation,
\begin{align*}
    f(\vx) &= \sum_{i = 1}^n z_i \relu\left(\langle w, \vx - \vx_i \rangle + \gamma\right) - z_i \relu\left(\langle w, \vx - \vx_i \rangle - \gamma\right), 
\end{align*}
we will show that $f(\vx_i) = y_i$ for any $i \in [n]$. Notice that
\begin{align*}
    z_j \relu\left(\langle w, \vx_i - \vx_j \rangle + \gamma\right) - z_j \relu\left(\langle w, \vx_i - \vx_j \rangle - \gamma\right) = \begin{cases} 
        0,& j > i, \\
        \gamma z_i,& j = i, \\
        2\gamma z_j,& j < i.
    \end{cases}
\end{align*}
Thus it holds,
\begin{align*}
    f(\vx_i) &= \sum_{j = 1}^n z_j \relu\left(\langle w, \vx_i - \vx_j \rangle + \gamma\right) - z_j \relu\left(\langle w, \vx_i - \vx_j \rangle - \gamma\right) \\
    &= \sum_{j = 1}^{i - 1} 2\gamma z_j + \gamma z_i = y_i.
\end{align*}
\end{proof}

\begin{lemma}
\label{lem:helper_subspace}
Given any sets $\{x_i\}_{i \in m}$ satisfying that $x_i \in \R^n$ and $x_i \neq 0$, there exists a set of orthonormal vectors $\{u_j\}_{j \in [n-2]}$ of $\R^n$ such that (1) $u_j^\top 1^n = 0$ for any $j \in [n-2]$ and (2) $ \sum_{j \in [n -2]} u_j^\top x_i u_j \neq x_i$ for any $i \in [m]$.
\end{lemma}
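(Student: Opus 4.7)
The plan is to reformulate the problem geometrically. The condition $\sum_{j\in[n-2]} (u_j^\top x_i) u_j \neq x_i$ means that $x_i$ does not lie in the subspace $V := \mathrm{span}\{u_j\}_{j\in[n-2]}$. Requirement (1) says $V$ lies in the hyperplane $H := \{v \in \mathbb{R}^n : v^\top \mathbf{1}^n = 0\}$, which has dimension $n-1$. Since $V$ has dimension $n-2$, it is a codimension-$1$ subspace of $H$. So the task reduces to: find a codimension-$1$ subspace $V \subset H$ such that $x_i \notin V$ for every $i \in [m]$.

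Every codimension-$1$ subspace of $H$ can be written as $V_w := \{v \in H : v^\top w = 0\}$ for some nonzero $w \in H$. Hence I will look for a nonzero $w \in H$ such that $x_i \notin V_w$ for all $i \in [m]$, and then take $\{u_j\}_{j\in[n-2]}$ to be any orthonormal basis of $V_w$.

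For each $i \in [m]$, I would split into two cases. If $x_i \notin H$, i.e., $x_i^\top \mathbf{1}^n \neq 0$, then $x_i \notin V_w$ automatically for every $w \in H$, since $V_w \subseteq H$. If $x_i \in H$, then $x_i \neq 0$ implies that $W_i := \{w \in H : w^\top x_i = 0\}$ is a proper linear subspace of $H$ (of dimension $n-2$), and $x_i \in V_w$ iff $w \in W_i$. So the set of ``bad'' $w$ is contained in the finite union $\bigcup_{i : x_i \in H} W_i$ of proper subspaces of $H$.

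Since a finite union of proper subspaces of $H$ cannot cover $H$ (by a standard argument, e.g., a linear functional nonzero on each $W_i^\perp \cap H$ exists; or over $\mathbb{R}$, the union has measure zero in $H$), there exists $w \in H$ outside every $W_i$. Normalizing $w$ and extending an orthonormal basis of the $(n-2)$-dimensional space $V_w \subset H$ yields the desired $\{u_j\}_{j\in[n-2]}$. There is no real obstacle here; the only subtlety is keeping the two cases (whether $x_i \in H$ or not) separate, so that the ``bad'' set is genuinely a union of \emph{proper} subspaces of $H$, which is guaranteed precisely by the hypothesis $x_i \neq 0$ whenever $x_i \in H$.
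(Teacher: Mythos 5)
Your proof is correct and takes essentially the same route as the paper: both arguments observe that $\sum_j (u_j^\top x_i)u_j = x_i$ iff $x_i$ lies in $\mathrm{span}\{u_j\}$, and then choose the span as the orthogonal complement of $\mathbf{1}^n$ together with a direction non-orthogonal to the relevant $x_i$'s, which exists because a finite union of proper subspaces cannot cover the ambient space. The paper's version is terser (it picks $v$ with $v^\top x_i \neq 0$ for all $i$ and takes the normal space of $\mathrm{span}(v,\mathbf{1}^n)$), while yours spells out the case split between $x_i \in H$ and $x_i \notin H$ more carefully; both are valid.
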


\begin{proof}

There exists a vector $v \in \R^n$ such that $v^\top x_i \neq 0$ for any $i \in [m]$. We can then construct an orthonormal basis $\{u_j\}_{j \in [n-2]}$ of $\R^n$ as the basis of the normal space of $\text{span}(v, 1^n)$. Then the lemma holds.
\end{proof}

\begin{lemma}
\label{lem:helper_index}
Given any dimension $n$ and constant $M$, there exists a 2-layer width-$2n$ ReLU network $f: \R^{n + 1} \to \R$ such that for any $x \in [0,M]^n, y \in [n]$, $f(x \oplus y) = x_y$.
\end{lemma}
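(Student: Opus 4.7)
I will represent $x_y$ as a telescoping sum and realize each term with one pair of ReLU units, yielding exactly $2n$ hidden units. Setting $x_0 := 0$, for integer $y = j \in [n]$ one has the identity
\[
    x_j \;=\; \sum_{i=1}^{j} (x_i - x_{i-1}) \;=\; \sum_{i=1}^{n} (x_i - x_{i-1})\,\mathbbm{1}[y \ge i].
\]
So it suffices to implement each summand $(x_i - x_{i-1})\,\mathbbm{1}[y \ge i]$ using two ReLU units whose pre-activations are linear in $(x,y)$.

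Concretely, for each $i \in [n]$ I would introduce the pair
\[
    h_i^1 \;=\; \mathrm{ReLU}\!\left(x_i - x_{i-1} + K\!\left(y - i + \tfrac{1}{2}\right)\right), \qquad h_i^2 \;=\; \mathrm{ReLU}\!\left(K\!\left(y - i + \tfrac{1}{2}\right)\right),
\]
with $K$ chosen large enough relative to $M$ (e.g.\ $K \ge 4M$) that the linear-in-$y$ offset of magnitude at least $K/2$ dominates $|x_i - x_{i-1}| \le M$. A short case analysis on integer $y = j$ then shows: if $j \ge i$ the pre-activations are both positive and $h_i^1 - h_i^2 = x_i - x_{i-1}$, while if $j < i$ both ReLUs clip to zero and $h_i^1 - h_i^2 = 0$. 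Hence $h_i^1 - h_i^2 = (x_i - x_{i-1})\,\mathbbm{1}[y \ge i]$, and the output $f(x, y) := \sum_{i=1}^{n} (h_i^1 - h_i^2)$ is a $2$-layer ReLU network of width $2n$ that equals $x_y$ by the telescoping identity.

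The main obstacle is conceptual rather than technical: because the output layer is linear in the hidden units, a $2$-layer ReLU network cannot multiply an input $x_i$ by a separately computed indicator of $y$, so the natural ``triangular bump at $i$ times $x_i$'' construction does not fit into two layers with only two units per coordinate. The trick is to absorb the coefficient $x_i - x_{i-1}$ directly into the argument of one ReLU and pair it with a partner ReLU that cancels the large shared offset $K(y - i + \tfrac{1}{2})$, so that a single scalar difference simultaneously carries both the value $x_i - x_{i-1}$ and the gate $\mathbbm{1}[y \ge i]$. After this gadget is set up the remaining verification is just the case analysis above and a telescoping sum.
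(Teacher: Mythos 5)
Your construction is correct: the case analysis goes through (for integer $y=j$ and $|x_i - x_{i-1}|\le M$, the offset $K(y-i+\tfrac12)$ with $K\ge 4M$ indeed forces both units of the $i$-th pair on when $j\ge i$ and off when $j<i$), and the telescoping identity then yields $x_y$ with exactly $2n$ hidden units. The paper uses the same core gadget — a pair of ReLUs whose shared, large, linear-in-$y$ offset dominates the bounded data term and acts as a gate — but a different decomposition: it gates each coordinate $x_i$ directly via the pair $\ReLU(x_i+M(y-i))-\ReLU(x_i+M(y-i-1))$, which evaluates to $M$ for $i\le y-1$, to $x_i$ for $i=y$, and to $0$ for $i\ge y+1$, and then subtracts the accumulated surplus $M(y-1)$ in the readout. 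Your telescoped version gates the increments $x_i-x_{i-1}$ by step functions $\mathbbm{1}[y\ge i]$ instead, so no correction term is needed; this is a minor but real advantage, since the paper's $-M(y-1)$ is linear in the input $y$ rather than a constant bias, and strictly speaking would cost an extra hidden unit (or a skip connection) to realize within a width-$2n$ two-layer ReLU network of the form $a^\top\ReLU(Wz+b)+c$. Your output, by contrast, is a pure linear combination of the $2n$ hidden units.
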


\begin{proof}
The construction is as followed, we will choose $f$ as 
\begin{align*}
    f(x \oplus y) &= \sum_{i = 1}^n \ReLU(x_i + M(y - i)) - \sum_{i = 1}^n \ReLU(x_i + M(y - i - 1)) - M(y - 1).
\end{align*}
Then as we have 
\begin{align*}
    \ReLU(x_i + M(y - i)) - \sum_{i = 1}^n \ReLU(x_i + M(y - i - 1)) &= \begin{cases}
        M ,& i \le y - 1;\\
        x_i,& i = y;\\
        0, & i \ge y + 1.
    \end{cases}
\end{align*}
The proof is completed.
\end{proof}

\begin{lemma}
\label{lem:helper_arg_max}
Given any dimension $n$ and constant $M > 0$, there exists a 2-layer width-$2n$ ReLU network $f: \R^n \to \R$ such that for any $x \in \R^n$ satisfying there exists $i \in [n]$, $x_i > M$ and $\forall j \neq i, x_j = 0$, it holds that $f(x) = i$.
\end{lemma}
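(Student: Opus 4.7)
The plan is to give an explicit construction. The key idea is that the assumption on $x$ guarantees every coordinate is either $0$ or exceeds $M$, so we can build a ReLU ``indicator'' $g_i(x_i)$ that takes value $0$ at $x_i=0$ and value $1$ whenever $x_i \ge M$, and then output $f(x) = \sum_{i=1}^n i\, g_i(x_i)$. Because only one coordinate is nonzero, exactly one term contributes, and it contributes its index.

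Concretely, I would take
\[
 f(x) \;=\; \sum_{i=1}^n \frac{i}{M}\Bigl(\ReLU(x_i) - \ReLU(x_i - M)\Bigr).
\]
This is a 2-layer ReLU network: the first (hidden) layer has exactly $2n$ units, two per coordinate, with weight vectors $\pm\, e_i^\top$ and biases $0$ and $-M$ respectively (each hidden unit reads only a single coordinate of $x$). The second layer is the linear combination with scalars $\tfrac{i}{M}$ and $-\tfrac{i}{M}$. Thus the width is $2n$ as required.

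It remains to verify correctness. For each coordinate $j \neq i$ we have $x_j = 0$, so $\ReLU(x_j) - \ReLU(x_j - M) = 0 - 0 = 0$. For the distinguished coordinate $i$ we have $x_i > M > 0$, so $\ReLU(x_i) - \ReLU(x_i - M) = x_i - (x_i - M) = M$, and therefore the $i$-th summand contributes $\tfrac{i}{M}\cdot M = i$. Summing, $f(x) = i$.

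There is no real obstacle to this proof: once one writes down the indicator-via-two-ReLUs gadget and the linear combination weighted by the indices, correctness is a one-line case check using only the two values $x_i = 0$ and $x_i > M$ that the hypothesis allows. The width-$2n$ budget is tight but exactly met by the construction.
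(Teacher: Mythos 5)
Your construction is correct and is essentially the same as the paper's: two ReLU units per coordinate forming a clipped-identity gadget $\ReLU(x_i)-\ReLU(x_i-M)$, combined linearly with weights $i/M$. In fact your version is the cleaner one — the paper's displayed formula carries a spurious $+M$ inside the summand, which as written would add $\sum_{j\ne i} j$ to the output; dropping it, as you do, gives exactly the intended (and correct) network.
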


\begin{proof}
The construction is as followed, we will choose $f$ as 
\begin{align*}
    f(x) &= \sum_{i = 1}^m i\left(\ReLU(x_i) - \ReLU(x_i - M) + M\right)/M.
\end{align*}
The proof is completed.
\end{proof}

\begin{lemma}
\label{lem:helper_choose_function}
Given any dimension $n$ and natural numbers $K,m, M$, if there exists $K$ different 2-layer width-$m$ ReLU networks $f_k: \R^n \to \R$, then there exists a 2-layer width-$2Km$ ReLU network $f: \R^{n + 1} \to \R$, such that $f(\begin{bmatrix}
    k \\x
\end{bmatrix}) = f_k(x)$ when $x \in [0, M]^n$.
\end{lemma}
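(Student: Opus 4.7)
The plan is to extend the running-sum construction of Lemma~\ref{lem:helper_index}, replacing the scalar coordinates $x_i$ by the $\ReLU$ subunits of each subnetwork $f_k$. Write the $i$-th hidden unit of $f_k$ as $a_{k,i}\ReLU(w_{k,i}^\top x+b_{k,i})$ and let $z$ denote the first input coordinate. For each pair $(k,i)\in[K]\times[m]$ I introduce two hidden neurons of $f$: a \emph{main} neuron $\ReLU(w_{k,i}^\top x+b_{k,i}+L(z-k))$ with output weight $+a_{k,i}$, and a \emph{correction} neuron $\ReLU(L(z-k))$ with output weight $-a_{k,i}$, where $L$ is chosen strictly larger than the (finite) quantity $\sup_{x\in[0,M]^n,k,i}|w_{k,i}^\top x+b_{k,i}|$. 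A direct case analysis at integer $z\in[K]$ shows each such pair contributes $a_{k,i}\ReLU(w_{k,i}^\top x+b_{k,i})$ when $z=k$, the affine term $a_{k,i}(w_{k,i}^\top x+b_{k,i})$ when $z>k$, and $0$ when $z<k$.

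Summing over all $(k,i)$, at input $(k^*,x)$ the output equals $f_{k^*}(x)+R(k^*,x)$, where $R(k^*,x)=\sum_{k<k^*}(V_k^\top x+D_k)$ with $V_k:=\sum_i a_{k,i}w_{k,i}$ and $D_k:=\sum_i a_{k,i}b_{k,i}$ fixed by the input networks. Since $R(\cdot,x)$ is piecewise-constant in $k^*$ and affine in $x$, it can be subtracted on the integer grid by a second, smaller batch of the same ``shifted-$\ReLU$ minus linear'' gadgets, this time applied to the aggregated vectors $V_k$ and scalars $D_k$, together with a linear-in-$z$ compensation term analogous to the ``$-M(y-1)$'' summand of Lemma~\ref{lem:helper_index}. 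These $O(K)$ extra neurons fit inside the $2Km$ budget once one notices that the correction neuron $\ReLU(L(z-k))$ is independent of $i$ and may be shared across the $m$ main neurons of a given block, which frees up $Km-K$ slots.

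The main obstacle is the bookkeeping at the transition integers $z=k$ and $z=k\pm 1$, where the main neuron for block $k$ sits precisely at the $\ReLU$ kink and risks leaking a $\ReLU$-nonlinear term into neighbouring blocks. Choosing $L$ strictly greater than the uniform bound on $|w_{k,i}^\top x+b_{k,i}|$ pushes every non-aligned neuron into either the linear regime, in which main/correction differences telescope to the clean affine expressions above, or the zero regime, at every $z\in[K]\setminus\{k\}$. The identity $f([k;x])=f_k(x)$ at $z=k$ then follows from the direct case analysis, and the residual-cancellation gadget provides a term-by-term verification for every $k\in[K]$ and every $x\in[0,M]^n$.
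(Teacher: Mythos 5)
Your construction is correct in substance and belongs to the same family as the paper's (gating each hidden unit by a term $L(z-k)$ with $L$ large so that off-block neurons are either dead or saturated), but the two proofs diverge in what "saturated" means and hence in how the leakage is cancelled. The paper pairs each unit with a second copy shifted by $-M$, i.e.\ $\ReLU(v+M(y-k))-\ReLU(v+M(y-k-1))$, so that for $y>k$ the pair leaks the \emph{constant} $M$, which is then removed by the ramp neurons $c_{k,i}\ReLU(y+1-k)$ via the triangular system defining $c_{k,i}$; this implicitly requires the preactivations $w_{k,i}^\top x+b_{k,i}$ to lie in $[0,M]$. Your main-minus-correction pair $\ReLU(v+L(z-k))-\ReLU(L(z-k))$ instead leaks the \emph{affine} quantity $v$ for $z>k$, which you cancel with a second batch of gadgets built from the aggregated $V_k, D_k$; because you choose $L$ from the actual bound $\sup_{x\in[0,M]^n}|w_{k,i}^\top x+b_{k,i}|$, your version only needs compactness of the domain and so avoids the paper's unstated range assumption. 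Two small points to tighten: the cancellation gadgets must have their kink strictly between consecutive integers (e.g.\ centered at $k+\tfrac12$), since centering at $k+1$ would output $\ReLU(V_k^\top x+D_k)$ rather than the needed linear value at $z=k+1$; and your neuron count $Km+K+O(K)$ only fits under $2Km$ for $m\ge 3$ (though the paper's own count is $2Km+K$, so neither is exactly within budget for the stated width). Neither issue affects the validity of the argument.
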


\begin{proof}
    Suppose that 
    \begin{align*}
        f_k(x) = \sum_{i = 1}^m a_{k,i} \ReLU(w_{k,i}^\top x + b_{k,i}) + b_k.
    \end{align*}
    Then we can construct 
    \begin{align*}
        f(\begin{bmatrix}
    y \\x
\end{bmatrix}) =& \sum_{k = 1}^K \sum_{i = 1}^m a_{k,i} \ReLU(w_{k,i}^\top x + b_{k,i} + M(y - k)) - a_{k,i} \ReLU(w_{k,i}^\top x + b_{k,i} + M(y - k - 1))\\
         &+ b_k - c_{k,i}\ReLU(y + 1 - k),
    \end{align*}
    where $c_{k,i}$ satisfies
    \begin{align*}
        \forall i, k', \sum_{k = 1}^{k'} c_{k, i}(k' + 1 - k)  = M \sum_{k = 1}^{k' - 1} a_{k,i}.
    \end{align*}
    The proof is then completed.
\end{proof}

\begin{lemma}
\label{lem:helper_approximate_linear}
Given any dimension $n$ and $W \in \R^{n \times n}, \|W\|_2 \le 2$, there exists a 2-layer width-$2n$ ReLU network $f: \R^n \to \R$ such that for any $x \in \R^n$, it holds that $f(x) = Wx$ and both weight matrices parameterizing $f$ has spectral norm less than $2\sqrt{2}$.
\end{lemma}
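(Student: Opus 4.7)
The plan is to use the standard identity $x = \mathrm{ReLU}(x) - \mathrm{ReLU}(-x)$ applied coordinatewise, which lets us recover any linear map through a single ReLU layer of twice the input width. Specifically, I will construct $f$ by setting the first-layer weight matrix to
\begin{equation*}
W_1 = \begin{bmatrix} I_n \\ -I_n \end{bmatrix} \in \R^{2n \times n},
\end{equation*}
and the second-layer weight matrix to
\begin{equation*}
W_2 = \begin{bmatrix} W & -W \end{bmatrix} \in \R^{n \times 2n},
\end{equation*}
with zero biases, and define $f(x) = W_2 \,\mathrm{ReLU}(W_1 x)$.

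To verify correctness, observe that for any $x \in \R^n$, $W_1 x = \begin{bmatrix} x \\ -x \end{bmatrix}$, so $\mathrm{ReLU}(W_1 x) = \begin{bmatrix} \mathrm{ReLU}(x) \\ \mathrm{ReLU}(-x) \end{bmatrix}$, and therefore
\begin{equation*}
f(x) = W\,\mathrm{ReLU}(x) - W\,\mathrm{ReLU}(-x) = W\bigl(\mathrm{ReLU}(x) - \mathrm{ReLU}(-x)\bigr) = Wx,
\end{equation*}
as required.

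It remains to check the spectral norm bounds. For $W_1$, we have $W_1^\top W_1 = I_n + I_n = 2 I_n$, hence $\|W_1\|_2 = \sqrt{2} \le 2\sqrt{2}$. For $W_2$, we have $W_2 W_2^\top = W W^\top + W W^\top = 2 W W^\top$, so $\|W_2\|_2 = \sqrt{2}\,\|W\|_2 \le 2\sqrt{2}$ by the assumption $\|W\|_2 \le 2$. There is no real obstacle here, since the construction is explicit and the norm computations are elementary; the only subtlety is choosing the factor-$\sqrt{2}$ stacking so that both matrices land within the $2\sqrt{2}$ bound simultaneously, which is automatic from the identity/negative-identity structure.
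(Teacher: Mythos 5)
Your proposal is correct and uses essentially the same idea as the paper: the coordinatewise identity $y = \mathrm{ReLU}(y) - \mathrm{ReLU}(-y)$ realized by stacking a matrix with its negation. The only (immaterial) difference is that the paper places $W$ in the first layer, computing $[I_n, -I_n]\,\mathrm{ReLU}\bigl(\begin{bmatrix} W \\ -W\end{bmatrix}x\bigr)$, while you place it in the second; both satisfy the $2\sqrt{2}$ spectral norm bound, and your norm computations are correct.
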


\begin{proof}
The construction is straightforward, one can choose
\begin{align*}
    f(x) = [I_n, -I_n]^\top  \ReLU \left(\begin{bmatrix} Wx \\ -Wx \end{bmatrix}\right).
\end{align*} 

\end{proof}

\subsection{Discussion on Architecture Choices}

The reader may notice that~\Cref{eq:layer} is not the same as the standard GPT architecture,
\begin{align}
\label{eq:gpt_layer}
       f_l(X; \theta^{(l)}) =& \Proj^{(l)}\Big(\LN\Big(W_V^{(l)} X {\sigma\Big(\mask + {(W_K^{(l)}X)^\top (W_Q^{(l)} X)}\Big)} + X \Big)\Big).
\end{align}
We will shortly discuss the impact of considering~\Cref{eq:gpt_layer} here.

With similar arguments to~\Cref{thm:lipschitz_balance} and the necessity part of~\Cref{thm:balance_condition}, one can prove that similar balance conditions should also hold for a transformer with a layer specified by~\Cref{eq:gpt_layer} and a minimal first layer that can nearly perfectly generate bounded Dyck languages. 

However, the sufficiency part of~\Cref{thm:balance_condition} no longer holds,
when the balance condition holds, the last column of the term $W_V^{(2)} X \sigma\Big(\mask + {(W_K^{(2)}X)^\top (W_Q^{(2)} X)}\Big)$ will converge to zero when input length converges to infinity. Hence, if not all $\embed(\token{t}{d})$ where $\token{t}{d}$ is a closed bracket aligns with $1^m$, then it is impossible for the model to perfectly generate Dyck for arbitrary length. Although it remains possible to refine a sharper condition for standard GPT architecture to perfectly generate Dyck Language, we find considering~\Cref{eq:layer} more elegant in theory. We also verify with experiments that our architecture with standard training can learn bounded Dyck language to more than $97\%$ accuracy. Also, the learned attention patterns are also similarly not interpretable as standard architectures.

\newpage

\section{Experiments}
\label{sec:app:experiments}

\subsection{Training Details}

For~\Cref{fig:dyck_attn_examples}, we train 2-layer standard GPT on~$\dyck_{2,4}$ with sequence length no longer than $28$. For $(a)$, we train with hidden dimension and network width $200$ and learning rate 3e-4. For $(b),(c),(d)$, we train with hidden dimension and FFN width $50$ and learning rate 3e-3.

For~\Cref{fig:dyck_attn_examples_minimal_first_layer}, for $(a)$, we train 1-layer transformer without residual link, FFN and the final LayerNorm before the linear head.
The hidden dimensions and FFN widths are fixed as $500$.
For $(a)$, we train the network with learning rate 1e-2 and for $(b),(c),(d)$ we train the network with learning rate 3e-3.

\subsection{Additional Results on Dyck Prefix}

In the experiment presented in the main text, we perform experiments on complete Dyck sequences, which is a special case of Dyck prefixes.
In this section, we present additional experiments on Dyck prefixes $\dyck_{2,4,28}$.

\paragraph{Attention Patterns}

We first perform experiments on attention patterns. The qualitative results are shown in~\Cref{fig:dyck_attn_examples_prefix,fig:dyck_attn_examples_minimal_first_layer_prefix}. We can observe that the attention patterns are still diverse and do not commonly show stack-like patterns.
We also calculate the \emph{attention variation}
\footnote{Recall from \Cref{sec:experiment_attention} that the attention variation between two attention patterns $A_1,A_2 \in \R^{N \times N}$ is defined as $\variation(A_1, A_2) = \| A_1 - A_2 \|_F^2.$
},
and find that the attention variation is $0.34$, based on $30$ models with a minimal first layer and different random seeds.
In contrast, for models with a standard first layer and without position encodings, the attention variation is surprisingly high, reaching $14.51$. The high value is caused by the large distance between attention patterns like~\Cref{fig:dyck_attn_examples_prefix} (c) and (d); that is, between patterns that attend more to the current positions, and patterns that attend more heavily to the initial position.
The difference is even increased when we consider longer sequence (\Cref{fig:dyck_attn_examples_prefix_long}). Similarly, the variation is also high for models with linear position embedding, reaching $11.92$.
This shows that the attention patterns are still diverse and do not commonly show stack-like patterns. 

\begin{figure}[h]
    \centering
    \begin{subfigure}[b]{0.24\textwidth}
      \includegraphics[width=\textwidth]{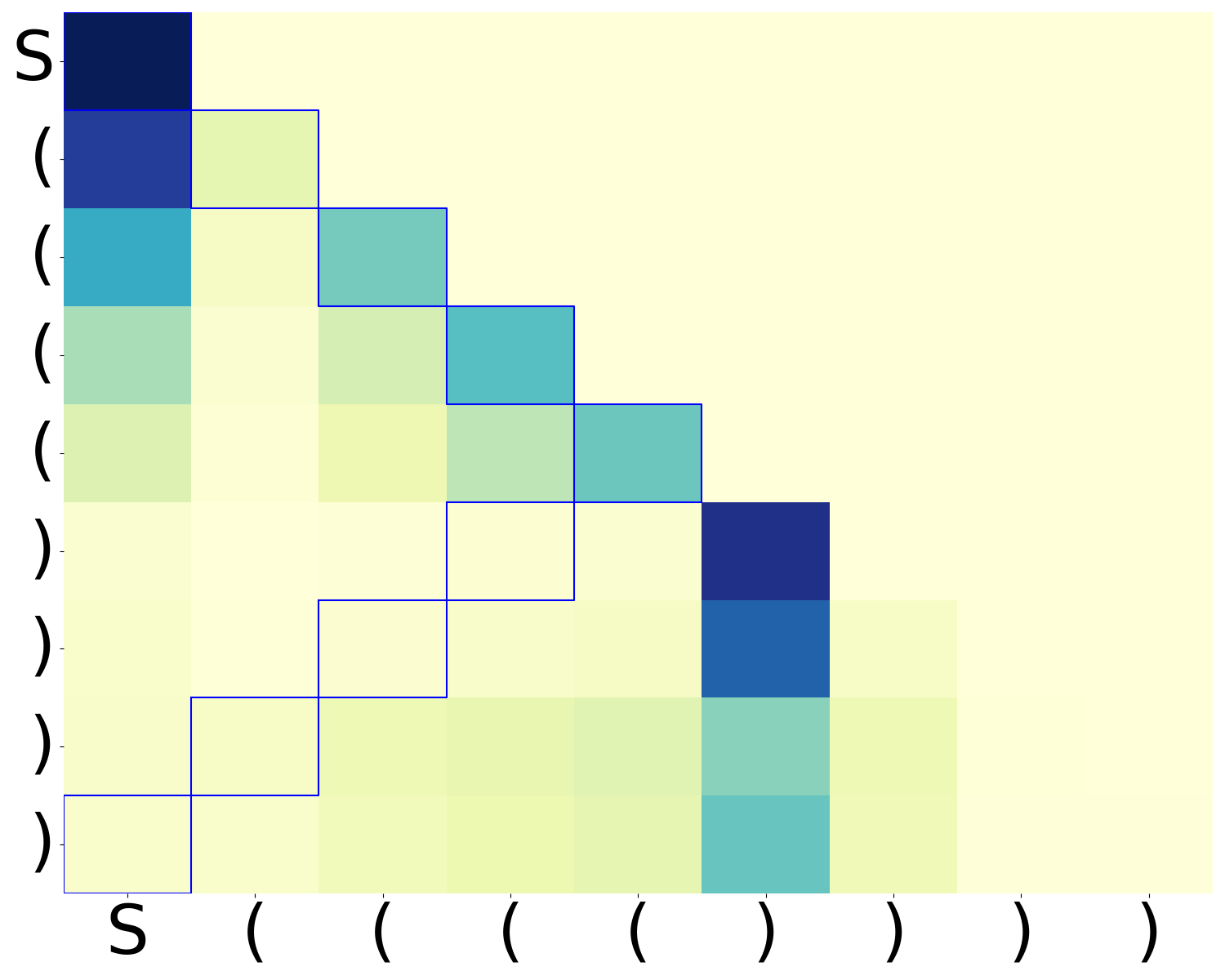}
      \caption{With Position\\ Embedding}
    \end{subfigure}
    \begin{subfigure}[b]{0.24\textwidth}
      \includegraphics[width=\textwidth]{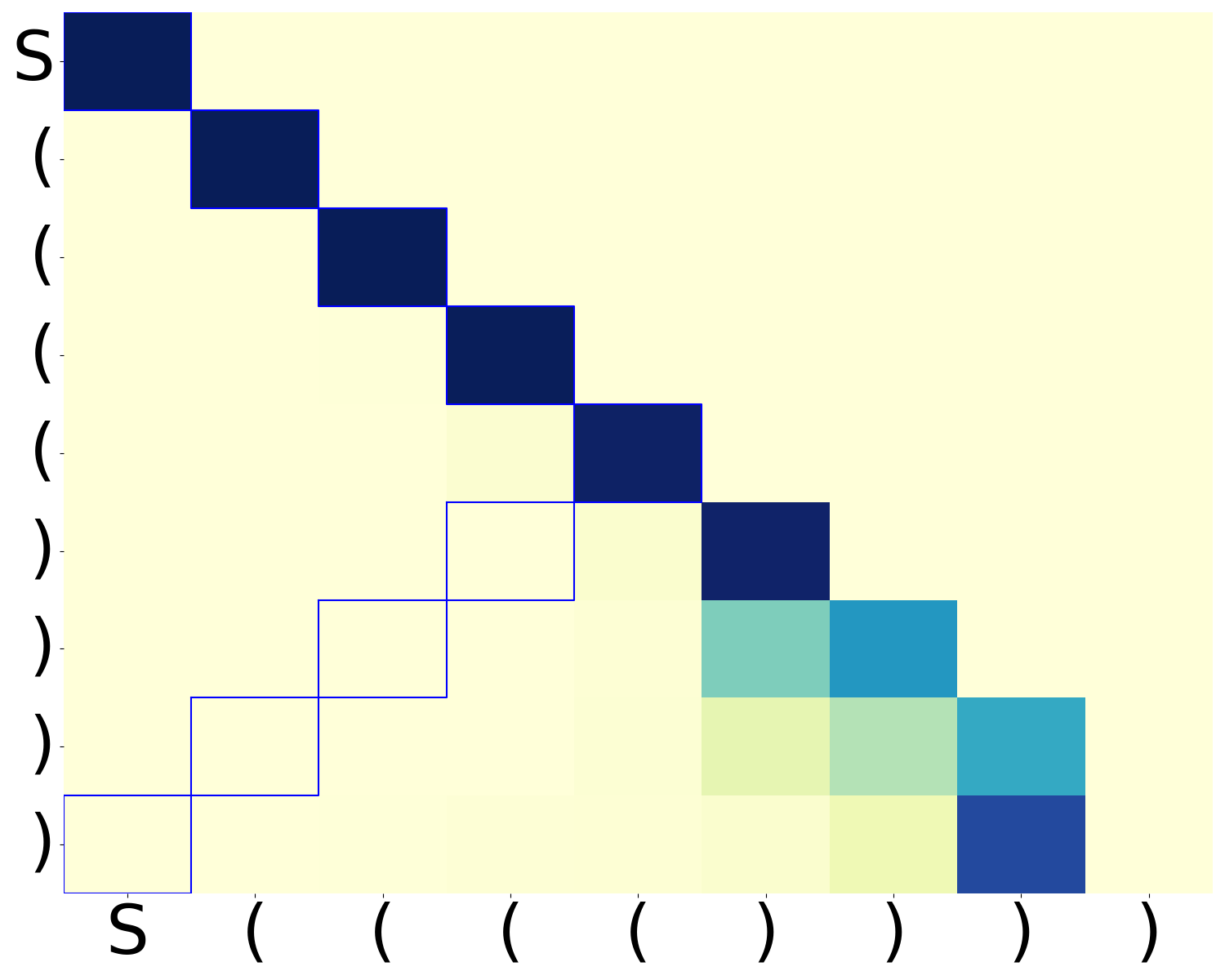}
      \caption{With Position\\ Embedding}
    \end{subfigure}
    \begin{subfigure}[b]{0.24\textwidth}
      \includegraphics[width=\textwidth]{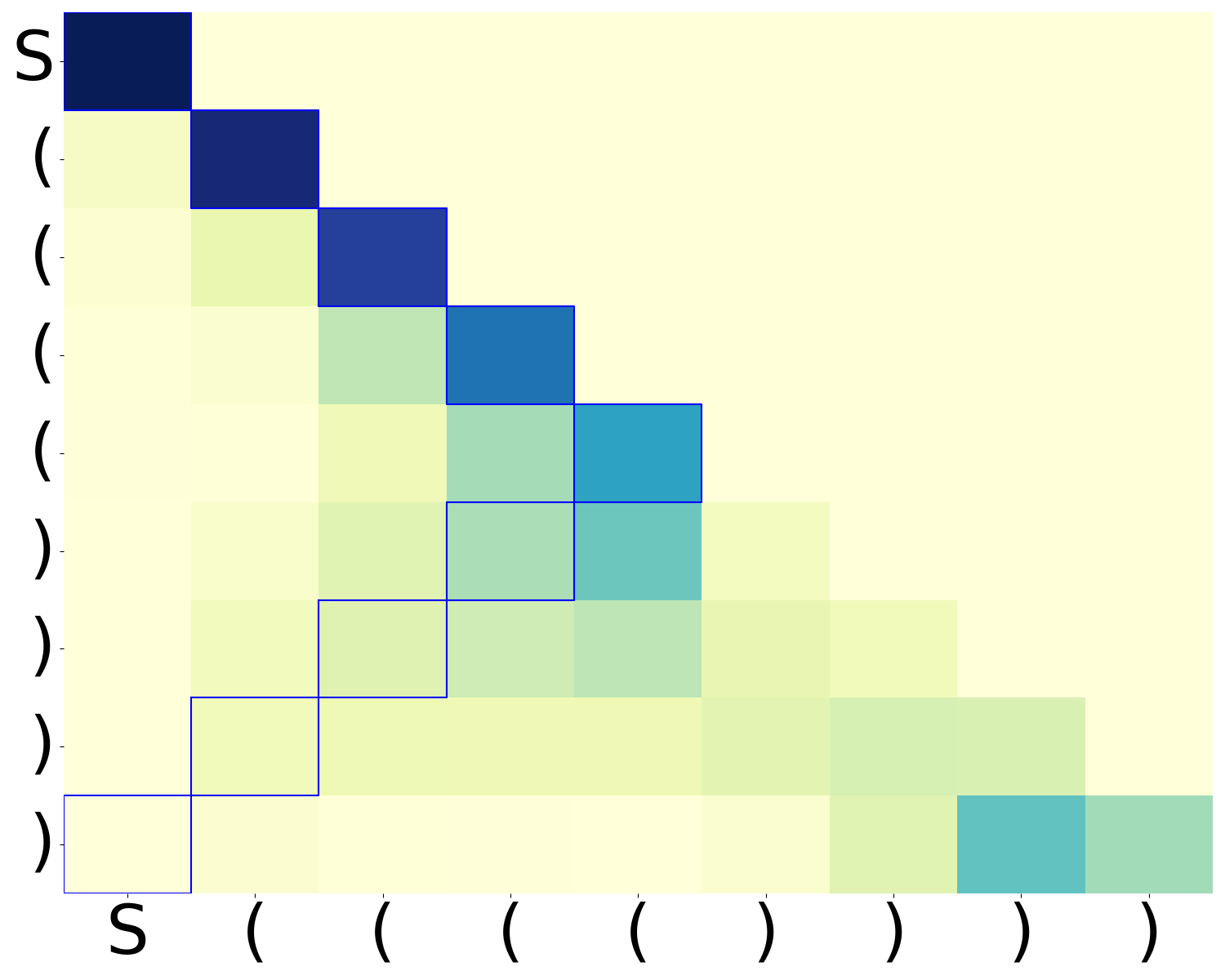}
      \caption{Without Position\\ Embedding}
    \end{subfigure}
    \begin{subfigure}[b]{0.24\textwidth}
      \includegraphics[width=\textwidth]{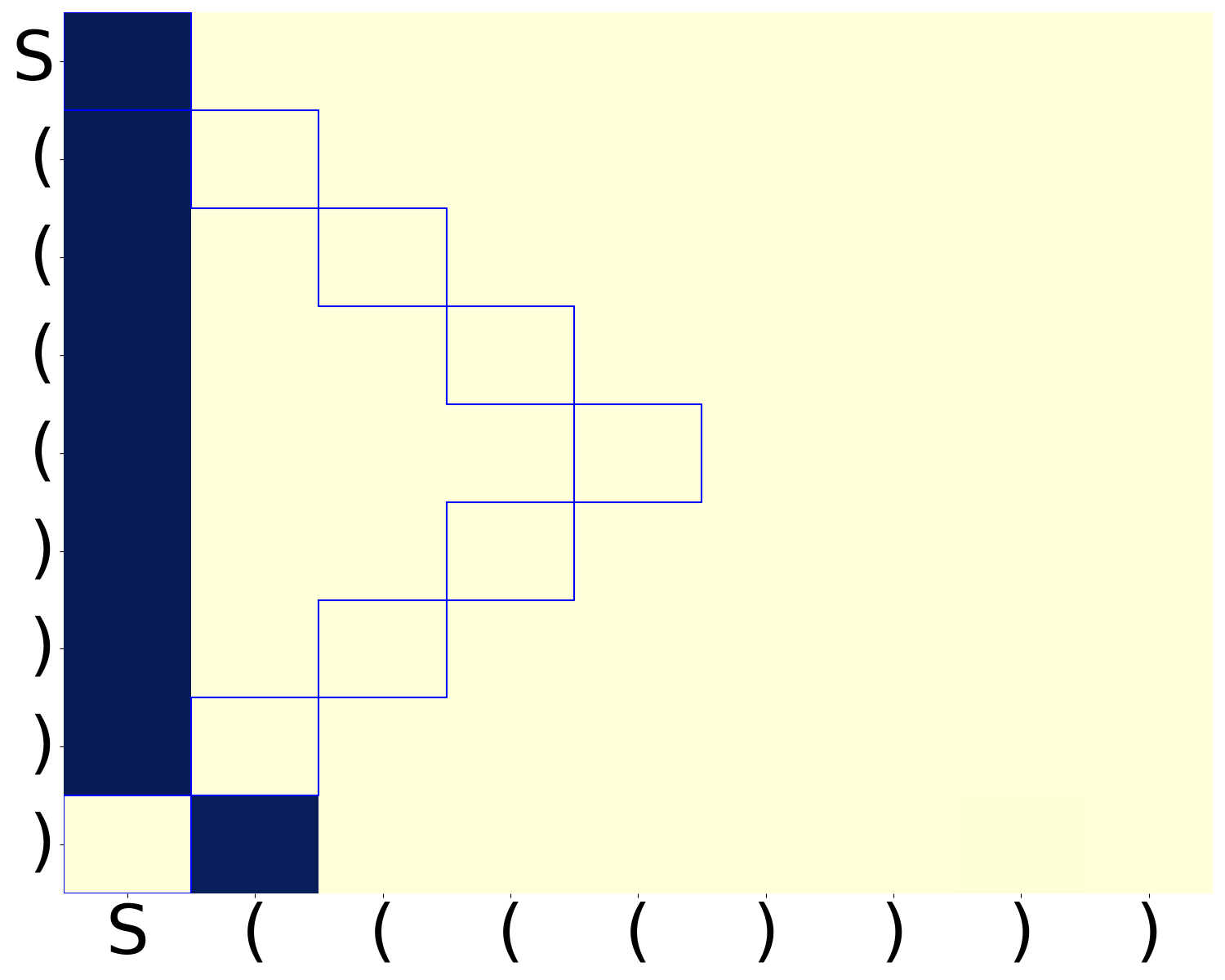}
      \caption{Without Position\\ Embedding}
    \end{subfigure}
    \caption{ \textbf{Second-layer attention patterns of two-layer Transformers on Dyck Prefix}:
    Models for (a),(b) are under the same setup but different random seeds; similarly for (c),(d).
    All models reach $\geq 97\%$ accuracy (defined in~\Cref{sec:experiment_attention}). In the heatmap, darker color indicates larger value. As we can observe, the attention patterns still show much variance.
    }
  \label{fig:dyck_attn_examples_prefix}
  \end{figure}

\begin{figure}[h]
  \centering
  \begin{subfigure}[b]{0.24\textwidth}
    \includegraphics[width=\textwidth]{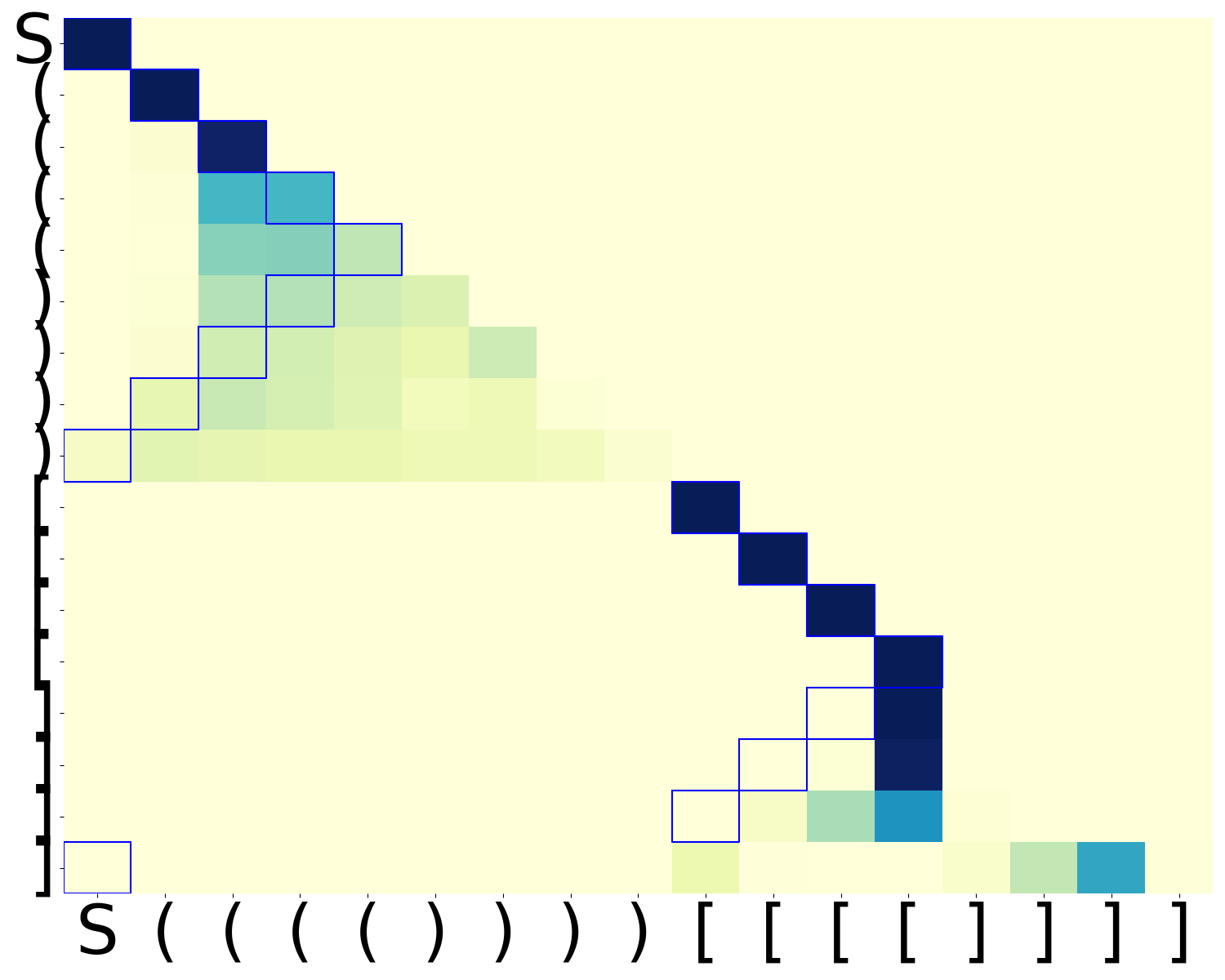}
    \caption{With Position\\ Embedding Run 1}
  \end{subfigure}
  \begin{subfigure}[b]{0.24\textwidth}
    \includegraphics[width=\textwidth]{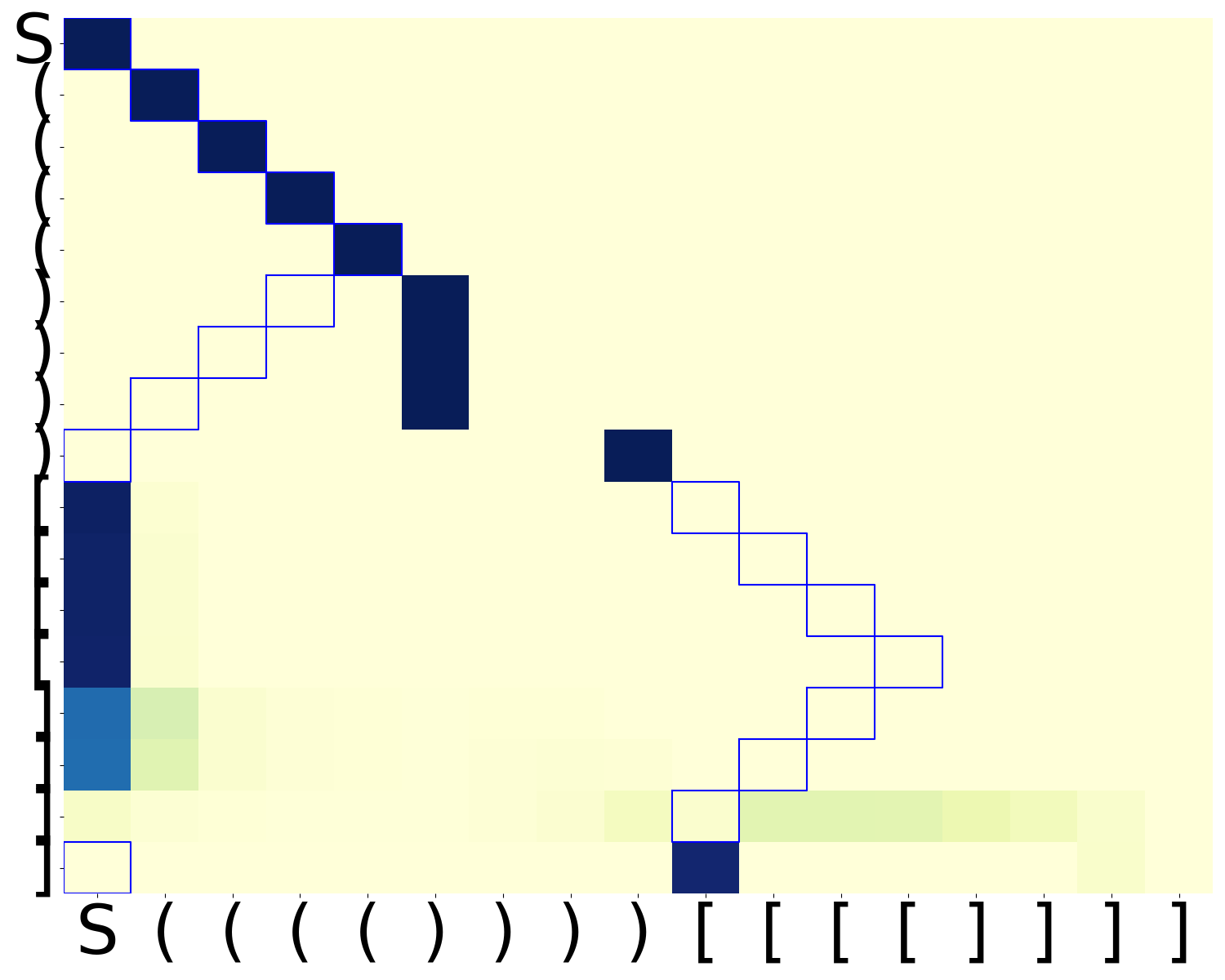}
    \caption{With Position\\ Embedding Run 2}
  \end{subfigure}
  \begin{subfigure}[b]{0.24\textwidth}
    \includegraphics[width=\textwidth]{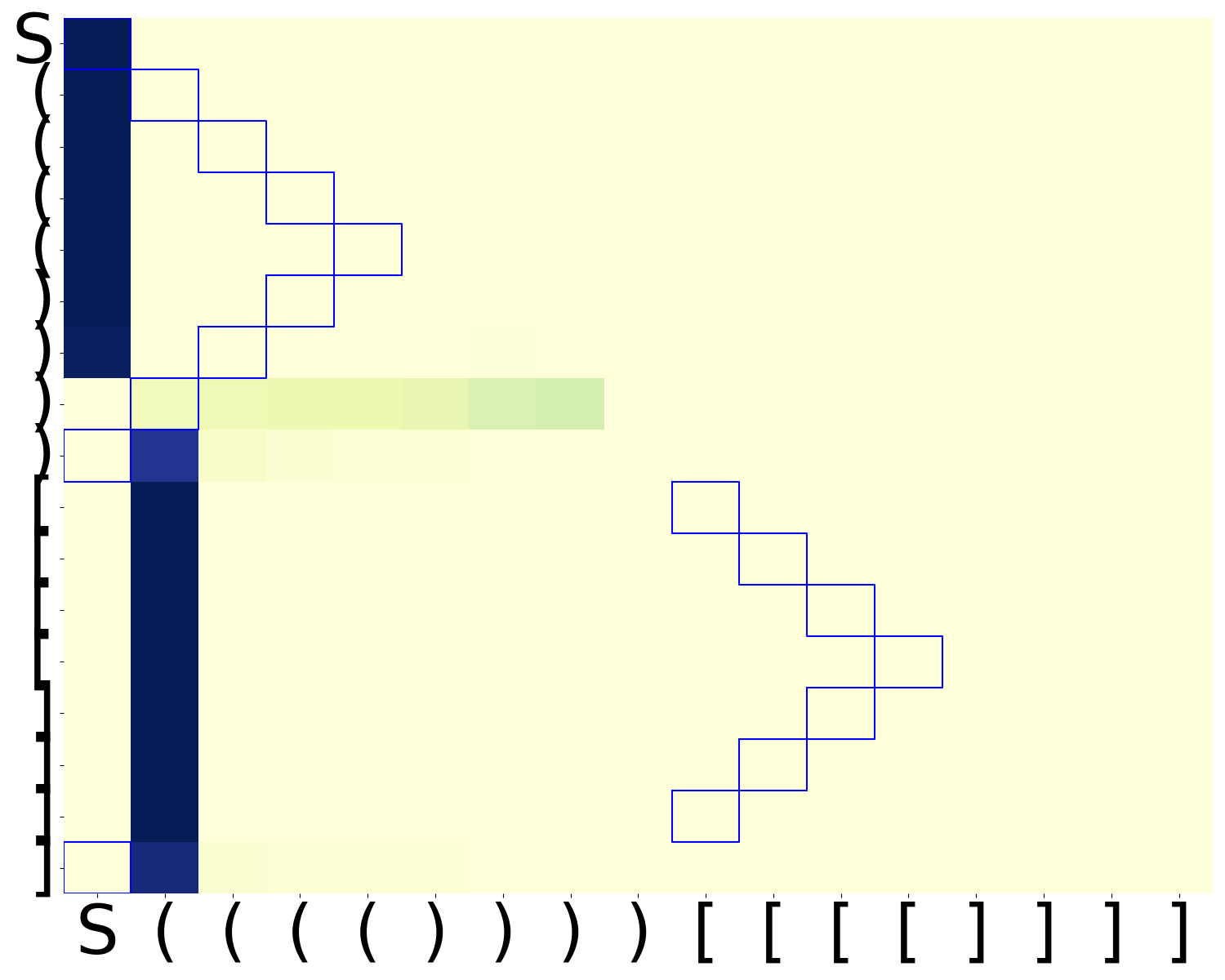}
    \caption{Without Position\\ Embedding Run 1}
  \end{subfigure}
  \begin{subfigure}[b]{0.24\textwidth}
    \includegraphics[width=\textwidth]{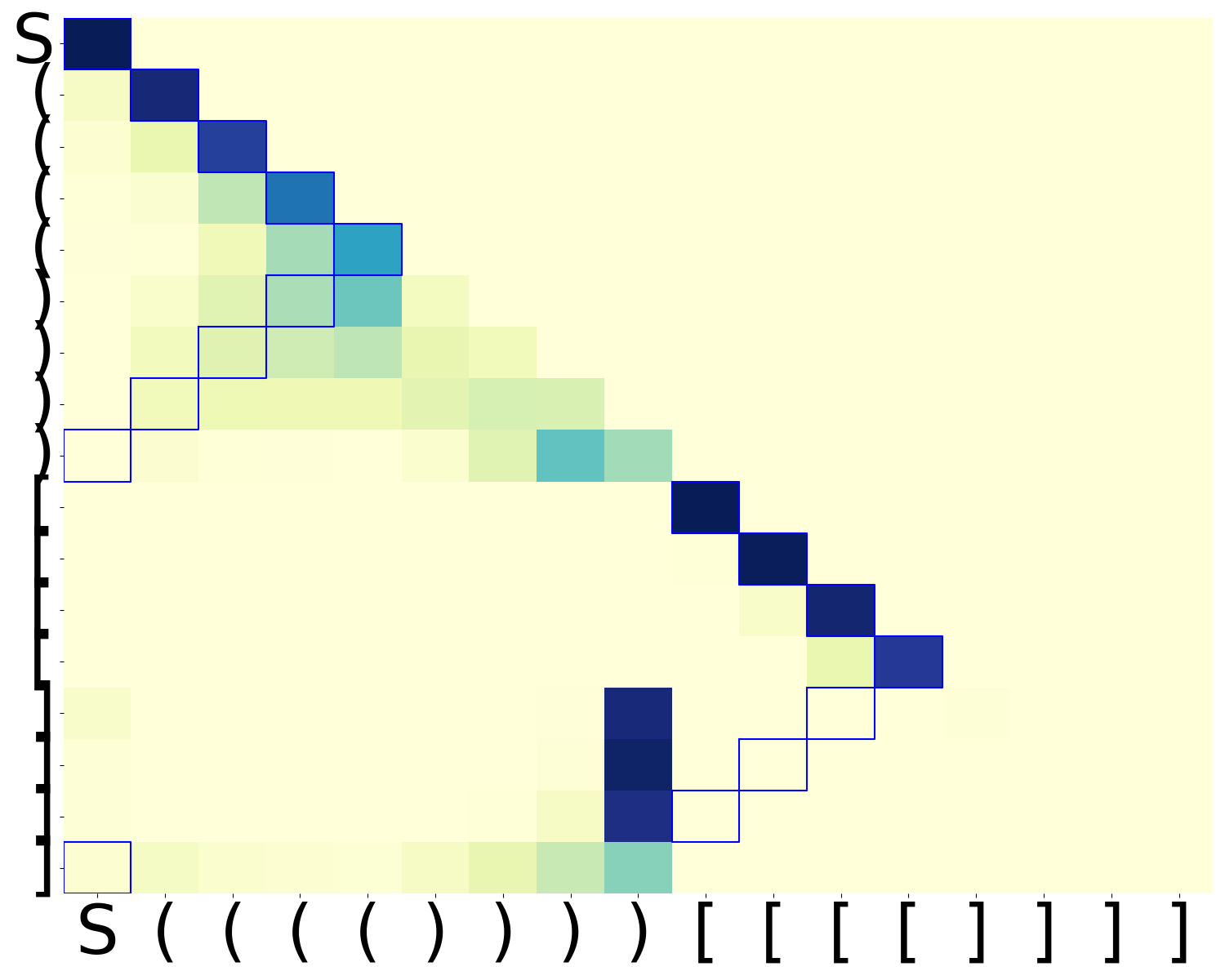}
    \caption{Without Position\\ Embedding Run 2}
  \end{subfigure}
  \caption{ \textbf{Second-layer attention patterns of two-layer Transformers on Longer Dyck Prefix}:
  Models for (a),(b) are under the same setup but different random seeds.
  All models reach $\geq 97\%$ accuracy (defined in~\Cref{sec:experiment_attention}). In the heatmap, darker color indicates larger value. 
  }
\label{fig:dyck_attn_examples_prefix_long}
\end{figure}

\begin{figure}[b]
  \centering
  \begin{subfigure}[b]{0.24\textwidth}
    \includegraphics[width=\textwidth]{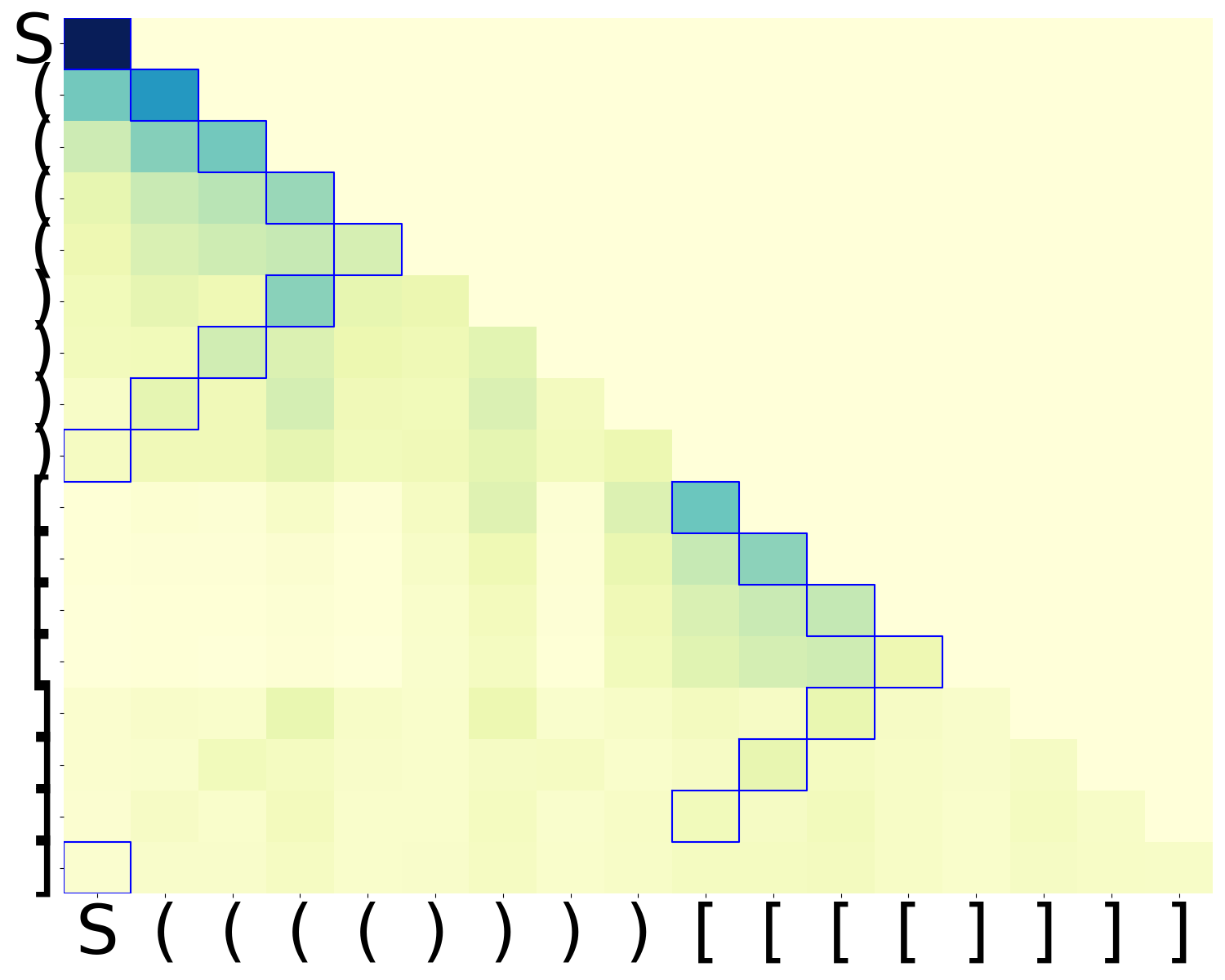}
    \caption{Embedding~\ref{eq:embed_large}, run 1}
  \end{subfigure}
  \begin{subfigure}[b]{0.24\textwidth}
    \includegraphics[width=\textwidth]{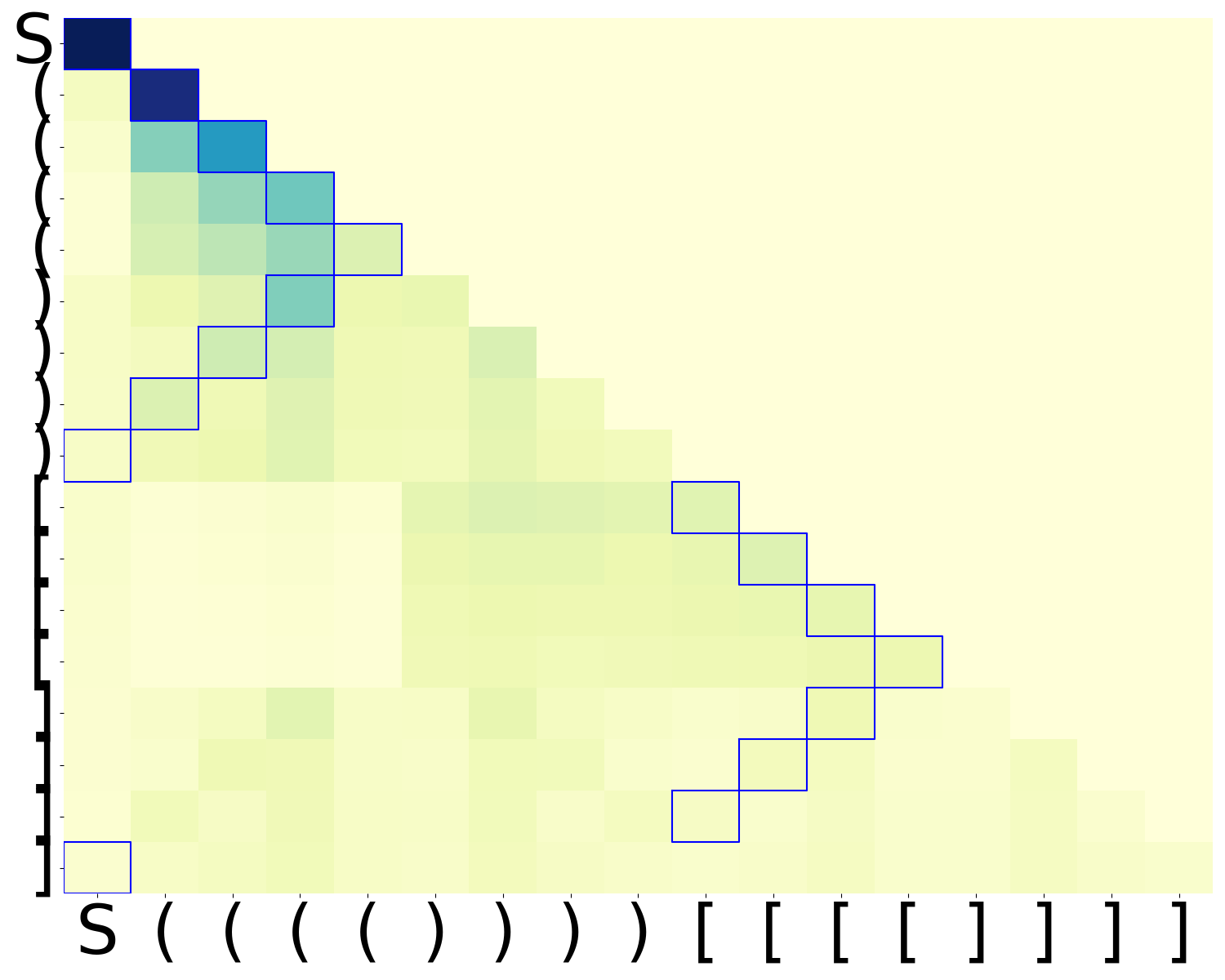}
    \caption{Embedding~\ref{eq:embed_large}, run 2}
  \end{subfigure}
  \begin{subfigure}[b]{0.24\textwidth}
      \includegraphics[width=\textwidth]{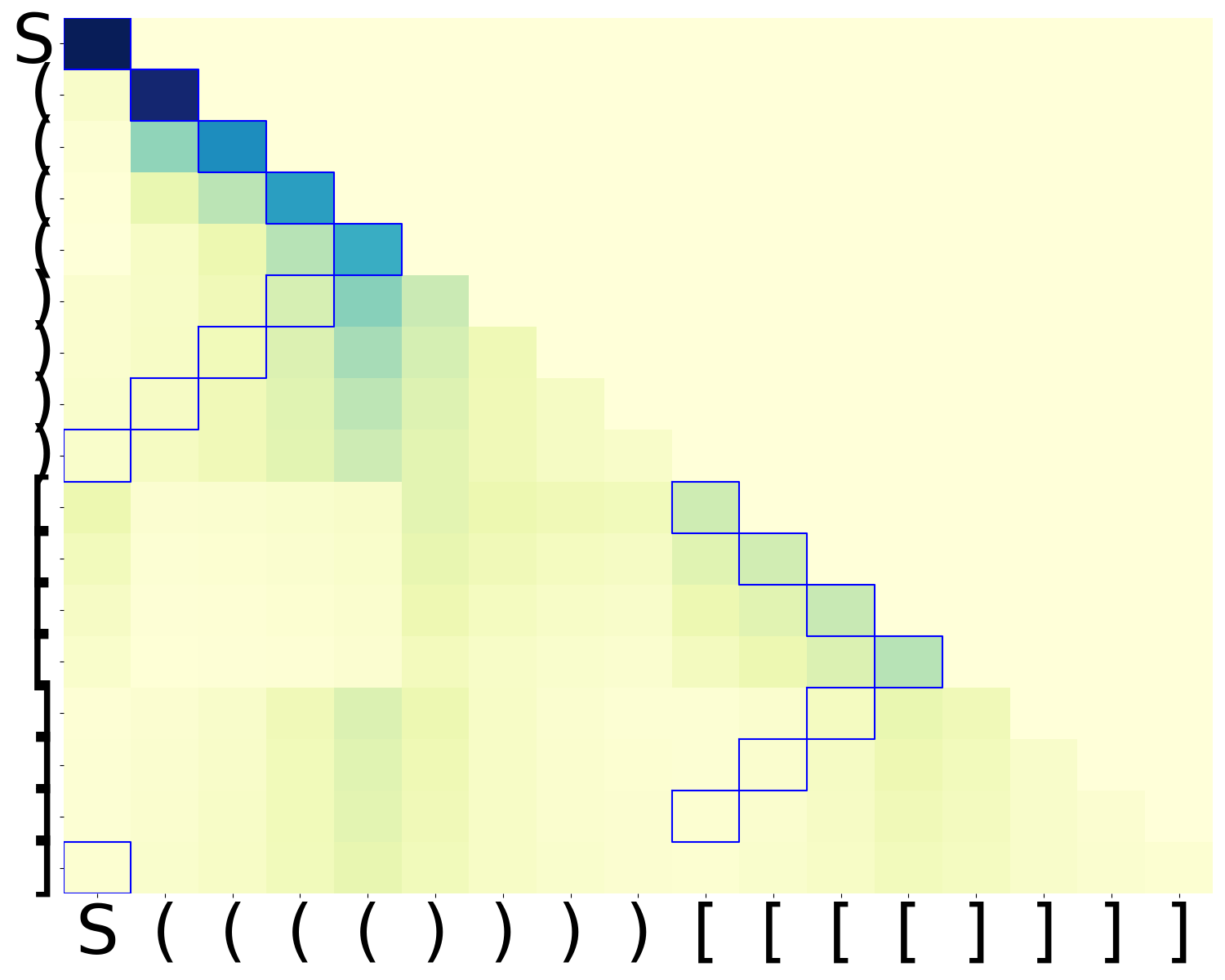}
      \caption{Embedding~\ref{eq:embed_default}}
    \end{subfigure}
    \begin{subfigure}[b]{0.24\textwidth}
      \includegraphics[width=\textwidth]{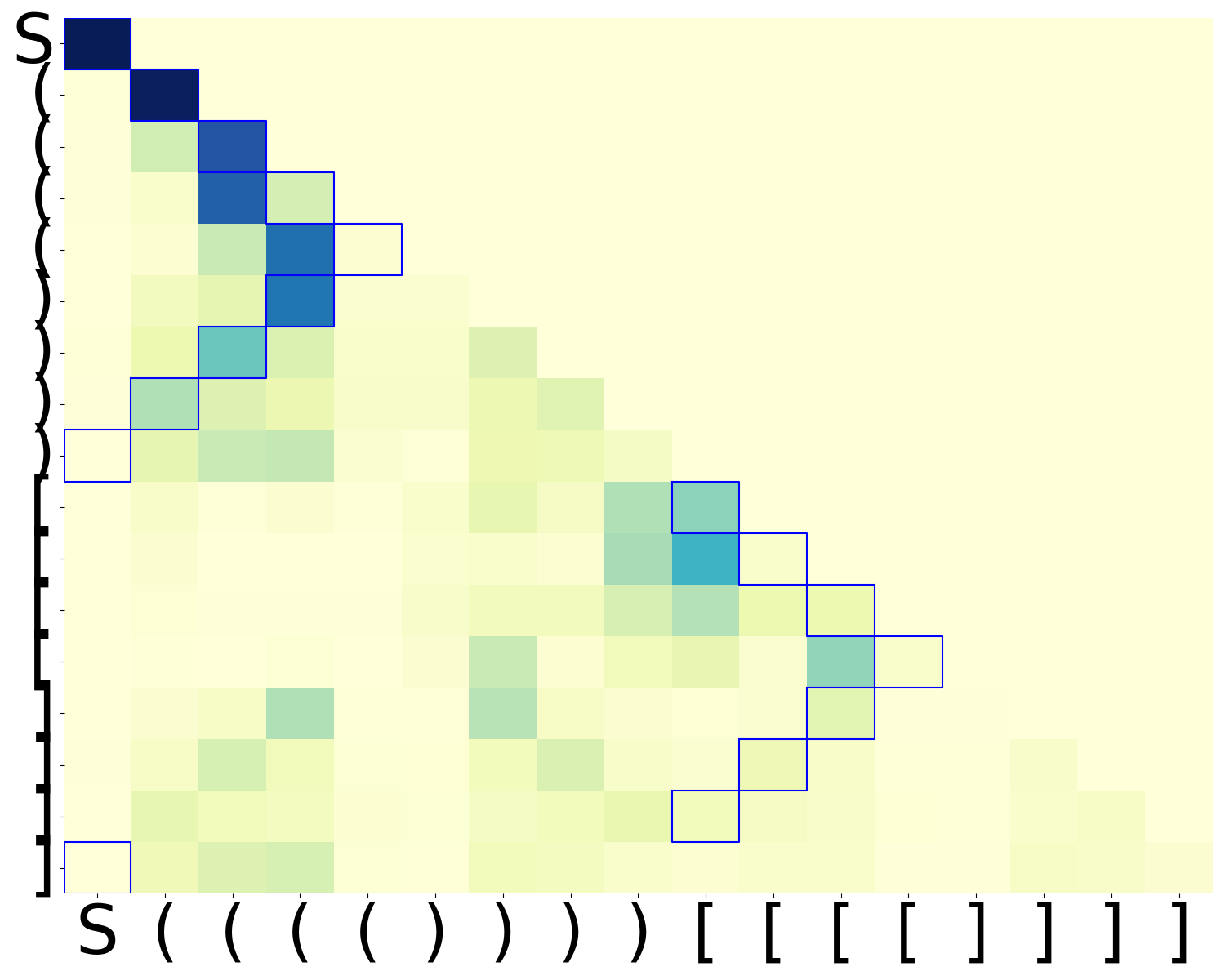}
      \caption{Embedding~\ref{eq:embed_theory}}
    \end{subfigure} 

  \caption{\textbf{Second-layer attention patterns of two-layer Transformers with a minimal first layer}:
  (a), (b) are based on embedding \ref{eq:embed_large} with different random seeds.
  (c), (d) are based on  embedding \ref{eq:embed_default} and \ref{eq:embed_theory}.
  Different embedding functions lead to diverse attention patterns, most of which are not stack-like.}
\label{fig:dyck_attn_examples_minimal_first_layer_prefix}
\end{figure}

\begin{figure}[h]
  \centering
  \begin{minipage}[c]{0.45\textwidth}
      \centering
      \includegraphics[width=1.1\textwidth]{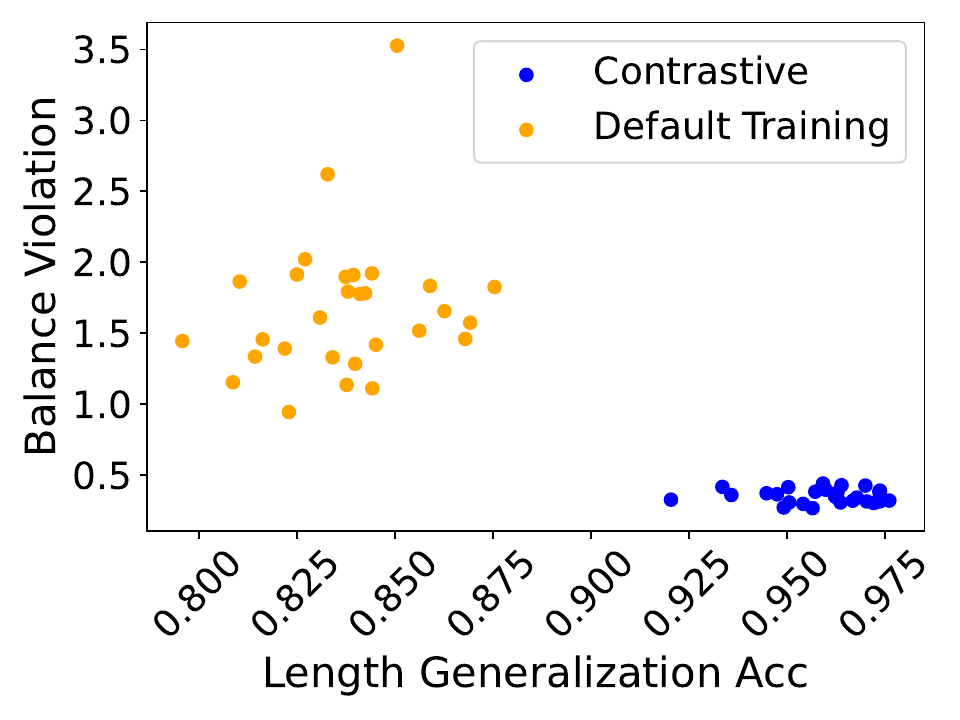}
  \end{minipage}
  \quad
  \begin{minipage}[c]{0.450\textwidth}
     \centering
     \caption{\textbf{Relationship Between Balance Violation and Length Generalization.}
     Accuracy from Transformers with minimal first layer with embedding~\ref{eq:embed_large}, using both standard training and contrastive regularization (\Cref{eq:contrastive_loss}). We again observe that contrastive regularization helps reduce the balance violation and improve the length generalization performance.
     }
     \label{fig:balance_acc_prefix} 
 \end{minipage}
\end{figure}

\paragraph{Balanced Violations}

We also test the relationship with the balance violation with length generalization on Dyck prefixes, similar to \Cref{fig:balance_acc}.
We observe that although the negative correlation is not presented as in the case of Dyck sequences, contrastive regularization still helps reduce the balance violation and significantly improve the length generalization performance.
This shows that for Dyck prefixes, while the balance violation may not be predictive of the length generalization performance, it is still possible to reduce the balance violation and improve the length generalization performance.
The results are shown in~\Cref{fig:balance_acc_prefix}.

\newpage
\subsection{Extended Experiments}
\label{sec:extended}

We include more experiments on the attention variation of different Dyck languages and architectures. The results are summarized in~\Cref{tab:extended}.

\newcommand{\result}[2]{$#1_{(#2)}$}

\begin{table}[h]
\centering
\begin{tabular}{|l|l|l|l|l|l|}
\hline
\#types $k$ & Grammar depth $m$ & \#Layers $l$ & Layer 1 & Layer 2 & Layer 3 \\ \hline
2                 & 4                 & 2                &  \result{0.047}{0.006} & \result{7.721}{0.908}                &         \\ \hline
2                 & 4                 & 3                &     \result{0.070}{0.013} & \result{5.072}{0.645} & \result{24.063}{1.166}         \\ \hline
2                 & 8                 & 2                &        \result{0.087}{0.012} & \result{7.583}{0.961}    &    \\ \hline
2                 & 8                 & 3                &     \result{0.059}{0.011} & \result{5.560}{0.714} & \result{23.590}{0.829}        \\ \hline
3                 & 4                 & 2                &  \result{0.182}{0.024} & \result{9.313}{0.815}      &         \\ \hline
3                 & 4                 & 3                &  \result{0.225}{0.032}
 & \result{8.426}{0.877}
 & \result{25.749}{0.897}        \\ \hline
3                 & 8                 & 2      &          \result{0.178}{0.028} &
\result{7.000}{0.884}      &         \\ \hline
3                 & 8                 & 3   &             \result{0.154}{0.036}  &
\result{6.280}{0.711} &
\result{25.451}{0.871}    \\ \hline
\end{tabular}
\caption{\textbf{Extended attention variation.} ``Layer $i$'' shows the mean (and standard deviation) of the attention variation on layer $i$, calculated on $40$ sentences.
The embedding width and FFN width are fixed as $50$ in the experiments.
We train using sentences from $\dyck_{k,m}$ of length less than 28 and test the variation on 40 randomly sampled sentences with length $19$ (the sampled sentence is fixed across different architectures).
The random attention variation baseline here is 3.33.
The numbers in this table are different from previous discussion,
since the results here are from a slightly different architecture than the standard GPT-2 architecture: a residue link is appended after the LayerNorm to match our theory better.
The models are trained to convergence and have in-distribution accuracy higher than $97\%$. }
\label{tab:extended}
\end{table}
\vspace{-0.3in}
\paragraph{Attention pattern visualization for three-layer experiments.} The first-layer attention is close to uniform, while the higher-layer attention shows no clear patterns.

\begin{figure}[h]
    \centering
    \begin{subfigure}[b]{0.24\textwidth}
      \includegraphics[width=\textwidth]{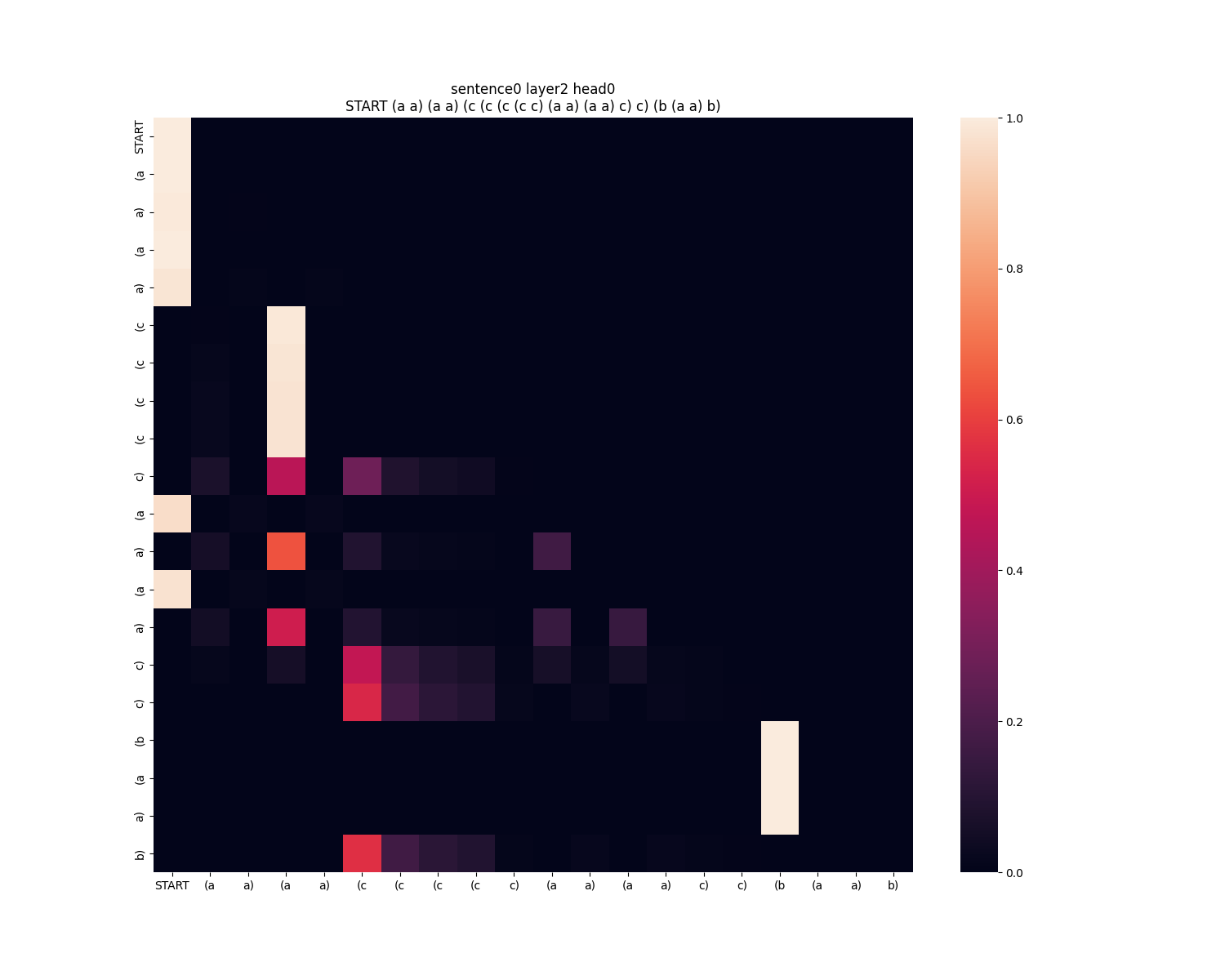}
      \caption{$seed = 0$}
    \end{subfigure}
    \begin{subfigure}[b]{0.24\textwidth}
      \includegraphics[width=\textwidth]{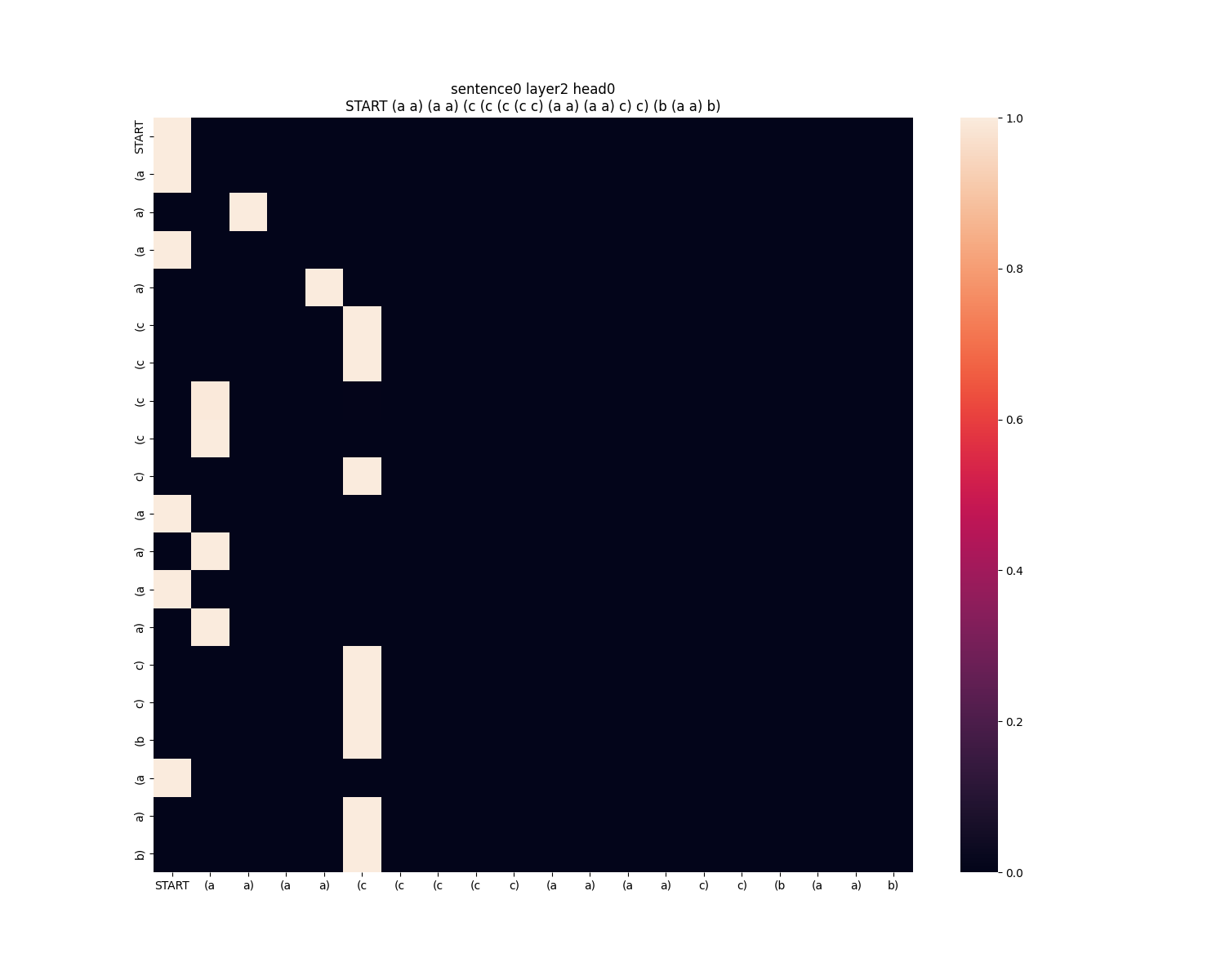}
      \caption{$seed = 1$}
    \end{subfigure}
    \begin{subfigure}[b]{0.24\textwidth}
        \includegraphics[width=\textwidth]{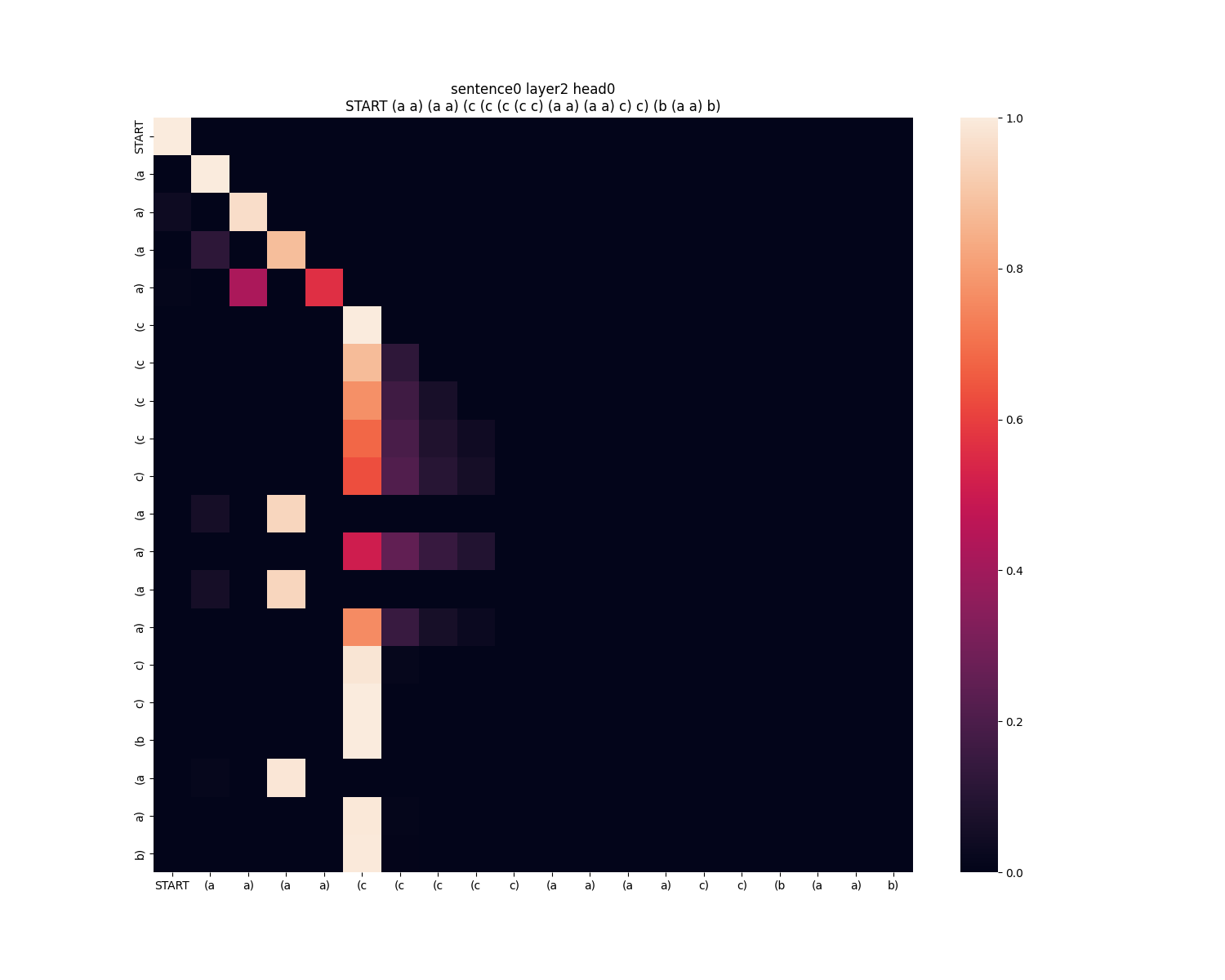}
        \caption{$seed = 2$}
      \end{subfigure}
      \begin{subfigure}[b]{0.24\textwidth}
        \includegraphics[width=\textwidth]{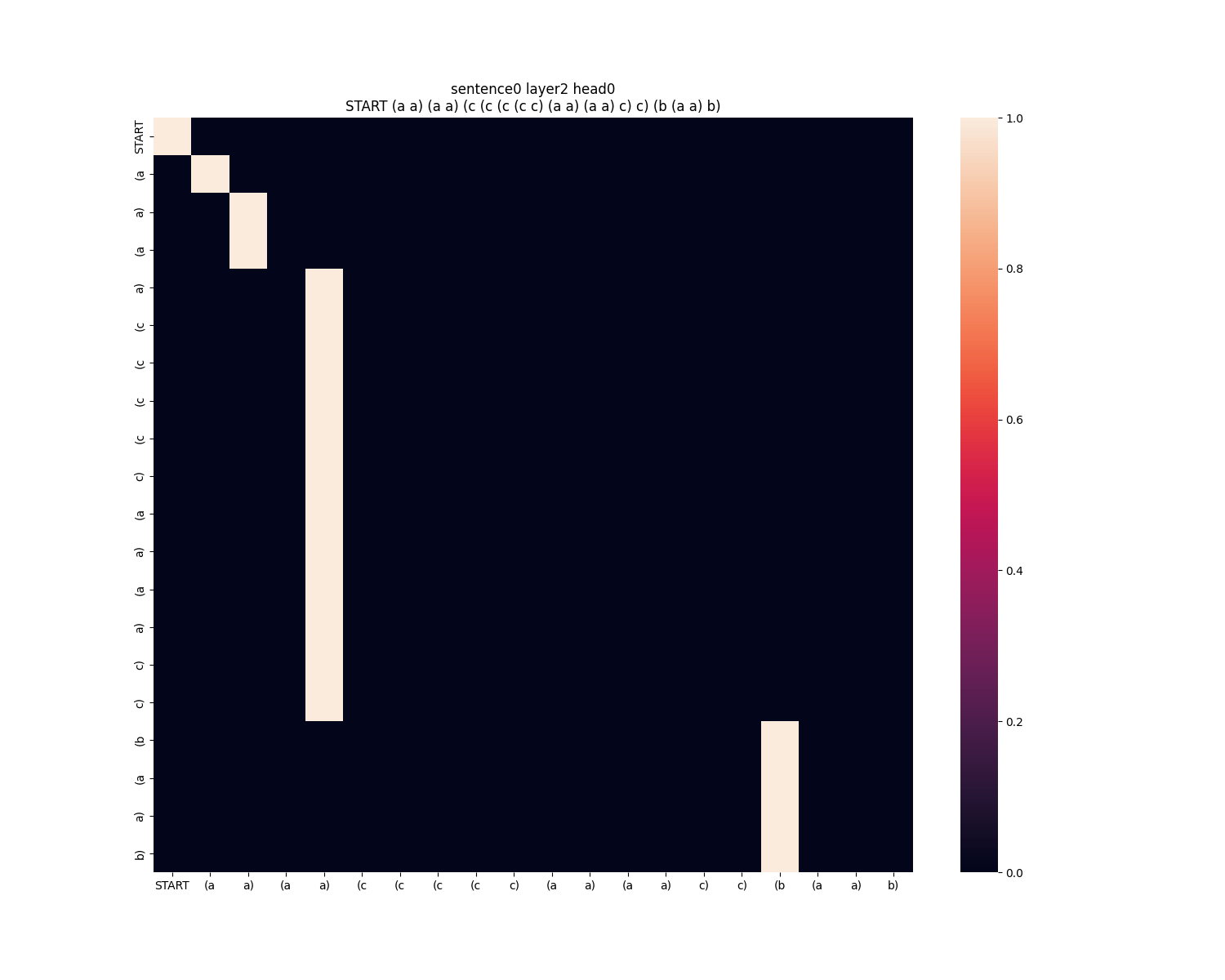}
        \caption{$seed = 3$}
      \end{subfigure} 
    \caption{\textbf{Third-layer attention.} The test sentence is fixed and the attention patterns learned by different 3-layer models with the same architectures on the same dataset show large variation visually.}
\label{fig:dyck_attn_examples_third_layer}
\end{figure}

\end{document}